\documentclass{article}

\usepackage{hyperref}
\usepackage{url}
\usepackage{microtype}
\usepackage{graphicx}
\usepackage{subfigure}
\usepackage{booktabs} % for professional tables
\usepackage{amsfonts}
\usepackage{nicefrac}
\usepackage{microtype}
\usepackage{comment}
\usepackage{color}
\usepackage{tikz}   
\usepackage{bm}
\usepackage{wrapfig} 
\usepackage{makecell}
\usepackage{amsmath}
\usepackage{amssymb}
\usepackage{mathtools}
\usepackage{amsthm}
\usepackage{enumitem}
\usepackage{caption}
\usepackage[capitalize,noabbrev]{cleveref}

\theoremstyle{plain}
\newtheorem{thm}{Theorem}[section] 
\newtheorem{cor}[thm]{Corollary}
\newtheorem{lem}[thm]{Lemma} 
\newtheorem{prop}[thm]{Proposition}

\newtheorem {assu}[thm]{Assumption}

\usepackage{hyperref}

\definecolor{mydarkblue}{rgb}{0,0.08,0.45}
\definecolor{antiquegold}{RGB}{205,127,50} 
\definecolor{deepskyblue}{RGB}{0,191,255} 
\definecolor{deepgreen}{RGB}{34, 139, 34}
\definecolor{deepred}{RGB}{178, 34, 34}
\definecolor{customBlue}{RGB}{24, 116, 205}
\definecolor{SteelBlue}{RGB}{59, 107, 146}
\definecolor{LinkColor}{RGB}{102, 179, 194}

\definecolor{Adam_DA}{RGB}{24, 49, 92}
\definecolor{CAdam}{RGB}{5, 49, 84}
\definecolor{Sign}{RGB}{165, 13, 113}

\hypersetup{
  colorlinks=true,
  linkcolor=mydarkblue ,    % 链接颜色
  filecolor=mydarkblue ,  % 文件颜色
  urlcolor=mydarkblue ,     % URL颜色
  citecolor=SteelBlue       % 引用颜色
}

    \newcommand{\BC}{{\mathbb {C}}}

     \newcommand{\BR}{{\mathbb {R}}}

    \newcommand{\CA}{{\mathcal {A}}}

    \newcommand{\CI}{{\mathcal {I}}} \newcommand{\CJ}{{\mathcal {J}}}
    \newcommand{\CK}{{\mathcal {K}}} 
    \newcommand{\CM}{{\mathcal {M}}} 
    \newcommand{\CO}{{\mathcal {O}}} 
     \newcommand{\CR}{{\mathcal {R}}}
    \newcommand{\CS}{{\mathcal {S}}} 
    \newcommand{\CU}{{\mathcal {U}}} \newcommand{\CV}{{\mathcal {V}}}

    \newcommand{\tm}{{\bm{m}}} 
     
    \newcommand{\tu}{{\bm{u}}} \newcommand{\tv}{{\bm{v}}}
     \newcommand{\tx}{{\bm{x}}}
    \newcommand{\ty}{{\bm{y}}} \newcommand{\tz}{{\bm{z}}}

\newcommand{\diag}{\mathop{\mathrm{Diag}}}

\usepackage{microtype}
\usepackage{graphicx}
\usepackage{subcaption}
\usepackage{booktabs} % for professional tables

\usepackage{hyperref}

\usepackage[accepted]{icml2026}

\usepackage{amsmath}
\usepackage{amssymb}
\usepackage{mathtools}
\usepackage{amsthm}

\usepackage[capitalize,noabbrev]{cleveref}

\theoremstyle{plain}
\newtheorem{theorem}{Theorem}[section]

\theoremstyle{definition}

\theoremstyle{remark}
\newtheorem{remark}[theorem]{Remark}

\usepackage[textsize=tiny]{todonotes}

\icmltitlerunning{Understanding Dynamics of Adam in Zero-Sum Games: An ODE Approach}

\begin{document}

\twocolumn[
  \icmltitle{Understanding Dynamics of Adam in Zero-Sum Games: An ODE Approach}

  % It is OKAY to include author information, even for blind submissions: the
  % style file will automatically remove it for you unless you've provided
  % the [accepted] option to the icml2026 package.

  % List of affiliations: The first argument should be a (short) identifier you
  % will use later to specify author affiliations Academic affiliations
  % should list Department, University, City, Region, Country Industry
  % affiliations should list Company, City, Region, Country

  % You can specify symbols, otherwise they are numbered in order. Ideally, you
  % should not use this facility. Affiliations will be numbered in order of
  % appearance and this is the preferred way.
  \icmlsetsymbol{equal}{*}
  \begin{icmlauthorlist}
    \icmlauthor{Yi Feng}{equal,yyy}
    \icmlauthor{Weiming Ou}{equal,zzz}
    \icmlauthor{Xiao Wang}{equal,xxx}
    %\icmlauthor{}{sch}
    %\icmlauthor{}{sch}
  \end{icmlauthorlist}

%   \icmlaffiliation{xxx}{MoE Key Laboratory of Interdisciplinary Research of
% Computation and Economics, Shanghai University of Finance and Economics, Shanghai, China}
  \icmlaffiliation{yyy}{Aarhus University, Aarhus, Denmark.}
  \icmlaffiliation{xxx}{MoE Key Laboratory of Interdisciplinary Research of
Computation and Economics, Shanghai University of Finance and Economics, Shanghai, China}
\icmlaffiliation{zzz}{Shanghai University of Finance and Economics, Shanghai, China.}

  \icmlcorrespondingauthor{Xiao Wang}{wangxiao@sufe.edu.cn}

  % You may provide any keywords that you find helpful for describing your
  % paper; these are used to populate the "keywords" metadata in the PDF but
  % will not be shown in the document
  \icmlkeywords{Machine Learning, ICML}

  \vskip 0.3in
]

% this must go after the closing bracket ] following \twocolumn[ ...

% This command actually creates the footnote in the first column listing the
% affiliations and the copyright notice. The command takes one argument, which
% is text to display at the start of the footnote. The \icmlEqualContribution
% command is standard text for equal contribution. Remove it (just {}) if you
% do not need this facility.

% Use ONE of the following lines. DO NOT remove the command.
% If you have no special notice, KEEP empty braces:
\printAffiliationsAndNotice{\textsuperscript{*}Authors are listed in alphabetical order.}  % no special notice (required even if empty)
% Or, if applicable, use the standard equal contribution text:
% \printAffiliationsAndNotice{\icmlEqualContribution}

\begin{abstract}
The remarkable success of the Adam in training neural networks has naturally led to the widespread use of its descent-ascent counterpart, Adam-DA, for solving zero-sum games. Despite its popularity in practice, a rigorous theoretical understanding of Adam-DA still lags behind. In this paper, we derive ordinary differential equations (ODEs) that serve as continuous-time limits of the Adam-DA. These ODEs closely approximate the discrete-time dynamics of Adam-DA, providing a tractable analytical framework for understanding its behavior in zero-sum games. Using this ODE approach, we investigate two fundamental aspects of Adam-DA: local convergence and implicit gradient regularization. Our analysis reveals that the roles of the first- and second-order momentum parameters in zero-sum games are exactly the opposite of their well-documented effects in minimization problems. We validate these predictions through GAN experiments across multiple architectures and datasets, demonstrating the practical implications of this reversed momentum effect.
\end{abstract}

\section{Introduction}

Zero-sum games lie at the core of many modern machine learning tasks, including GANs \citep{goodfellow2014generative} and adversarial training \citep{goodfellow2014explaining}. While a rich body of theory has been developed for solving zero-sum games, particularly for Gradient Descent-Ascent (GDA) and its variants \citep{daskalakis2017training,mokhtari2020unified,fasoulakis2022forward}, in practice Adam Descent-Ascent (Adam-DA), the descent-ascent counterpart of Adam \citep{kingma2014adam}, remains the optimizer of choice. For instance, most GANs studies employ Adam-DA for training \citep{ arjovsky2017wasserstein,zhao2021improved,sauer2023stylegan}.  Despite this ubiquity, the theoretical understanding of Adam in zero-sum games remains far less developed compared to its  extensive studied behavior in minimization.

When studying momentum-based algorithms in games, such as Adam-DA, one might expect that, although some game-specific adaptations are necessary, the principles derived from theoretical analyses in minimization largely remain applicable to zero-sum games. For instance, the recent work of \cite{huang2022new}, \cite{bot2023relaxed} and \cite{lotidisaccelerated} study the benefits of incorporating Nesterov or Heavy-ball momentum, a core component of Adam, in game-solving algorithms. All of these works primarily focus on \textit{positive} momentum parameters, reflecting the standard choice in optimization algorithms for minimization. However, practical experience in GANs training--a prominent application of zero-sum games--suggests that \textit{negative} momentum often improves stability and performance \citep{gidel2019negative}. This gap between theory and practice motivates the following questions:

%However, previous studies suggest that this intuition may not hold. For instance, \cite{gidel2019negative} showed that Adam with a negative first-order momentum can achieve better training performance in GANs, whereas in minimization the first-order momentum of Adam is typically chosen to be nonnegative. Recently, \cite{feng2025continuoustime} discovered distinct behaviors of Heavy-ball momentum, a key component of Adam, in zero-sum games compared to minimization. These observations motivate the following questions:

\textit{How can we theoretically understand Adam’s behavior in zero-sum games? In particular, in which aspects does Adam behave differently in zero-sum games compared to standard minimization?}

In this work, we try to answer above questions by deriving ordinary differential equations (ODEs) which are the continuous time limit of Adam-DA. The derived ODEs accurately approximate the underlying algorithm. This approach of using ODEs to study optimization algorithms dates back to Polyak's seminal work
\citep{polyak1964some} for analyzing momentum method in minimization problems and has been further developed in recent years to analyze the dynamics of various
algorithms \citep{su2016differential,wibisono2016variational,shi2022understanding}. 

% In this work, we try to answer above questions in a \emph{deterministic} setting. Although Adam is
% typically used with stochastic gradients, deterministic analyses often yield valuable insights and act as an initial step. In
% minimization, they have uncovered phenomena such as the edge of stability
% \citep{cohen2021gradient,cohen2025understanding} and implicit regularization effects
% \citep{wang2022does,zhang2024implicit}. Since Adam in min--max games remains poorly understood, we take
% deterministic Adam as a first step. Methodologically, we focus on an ordinary differential equations (ODEs) approach, which yields tractable continuous-time ODEs that
% accurately approximate the underlying algorithms. This approach dates back to Polyak's seminal work
% \citep{polyak1964some} and has been further developed in recent years to analyze the dynamics of various
% algorithms in optimization \citep{su2016differential,wibisono2016variational}.

In particular, we study two aspects of Adam in zero-sum games: \textbf{local convergence} and \textbf{implicit gradient regularization}. These two aspects are natural targets: local convergence characterizes the behavior of the trajectory near an equilibrium, whereas implicit gradient regularization provides a more global perspective by describing how the trajectory interacts with the geometry (e.g., flatness) of the loss landscape. The two viewpoints are complementary and, taken together, offer a more complete picture of Adam’s dynamics in zero-sum games.
%Our  results are twofold, addressing two complementary aspects of Adam in zero-sum games: \textbf{local convergence} and \textbf{implicit gradient regularization}. Local convergence characterizes the local behavior of the algorithm when its trajectory is close to an equilibrium, whereas implicit gradient regularization takes a global perspective, describing how the algorithm's trajectories interact with the geometry of the loss landscape. Our main results and contribution can be summarized as follows.

\subsection{Summary of Contributions and Limitations}

\paragraph{Contributions on Local Convergence:} We provide quantitative local convergence results for both the ODEs of Adam-DA (Theorem~\ref{Corollary: Continuous Local convergence for general case}) and the Adam-DA algorithm (Theorem~\ref{Thm: Discrete Local convergence for general case}). We find that, in typical zero-sum games, a \textit{smaller} first-order momentum improves local convergence over a broader range of step sizes (Corollary~\ref{lmss}). This contrasts with Adam's behavior in minimization, where achieving a similar effect typically requires a \textit{larger} first-order momentum. As a further corollary, we show that Adam-DA always diverges on bilinear objectives, regardless of parameter choices (Corollary~\ref{Corollary: Continuous Local convergence for bilinear case}). This is again opposite to the minimization setting, where Adam can converge to a neighborhood of a local minimum under suitable parameters \citep{zhang2022adam}. These results substantially extend prior separations between game dynamics and minimization, which were previously established mainly for simpler methods without adaptivity and momentum \cite{bailey2018multiplicative,bailey2020finite}.

\paragraph{Contributions on Implicit Gradient Regularization:} We provide qualitative descriptions of how the parameters of algorithm influence the interaction between the algorithm's trajectories and the flatness of min-max loss landscapes through the analysis of purposed ODEs. We find that a \textit{smaller} first-order momentum $\beta$ and a \textit{larger} second-order momentum $\rho$ guide the trajectories toward flatter regions of the loss landscape. Again, this behavior is exactly the opposite of the implicit gradient regularization effect observed in minimization, where achieving a similar effect requires a \textit{larger} $\beta$ and a \textit{smaller} $\rho$ \citep{pmlr-v235-cattaneo24a}. We further validate these predictions empirically on GAN training across common architectures and datasets, a representative class of min--max games captured by our theory.

\paragraph{Limitations.} A limitation of this work is that we primarily study Adam’s dynamics in a deterministic setting. Nevertheless, our findings are informative for the stochastic regime. In particular, the experiments in Section~\ref{IGR} use stochastic Adam for GAN training, and the observed behavior is consistent with our theoretical predictions. We also note that, in minimization, deterministic analyses of Adam have been an important step and have yielded key insights, including the edge-of-stability phenomenon \citep{cohen2021gradient,cohen2025understanding} and implicit regularization effects \citep{wang2021implicit,wang2022does,zhang2024implicit,xie2024implicit}. Since Adam in zero-sum games is substantially less understood than in minimization, we view the deterministic analysis here as a natural first step toward a complete theory.

\subsection{Related Works}
%\textcolor{blue}{The most close works in the literature to the current work are \cite{rosca2021discretization} and \cite{feng2025continuoustime}, which studied the dynamics of (deterministic) gradient descent-ascent and heavy-ball momentum methods in zero-sum games through the lens of continuous-time analysis. The current work represent a nature extension of their results to the Adam algorithms. In particular, both of \cite{feng2025continuoustime} and the current work choose to focus on understanding the local convergence and implicit regularization aspects. Future compare between these works are provided in the Appendix \ref{Appendix: detailed comparison with feng}. In the following, we present more related works around the continuous-time analysis, local convergence in zero-sum games, and implicit gradient regularization.}

\paragraph{ODEs in Optimization.} 
ODEs method was introduced to study momentum in minimization problems, starting with the seminal work of \cite{polyak1964some}. Recently, inspired by optimization algorithms that combine momentum and adaptivity such as Adam, ODEs or their stochastic generalization has also been applied to study adaptive methods in minimization, e.g., \citep{gadat2022asymptotic,ma2022qualitative,compagnoniadaptive}. For zero-sum games,  \cite{suh2023continuous} employed this methodology to study the anchor acceleration methods.  \cite{compagnoni2024sdes} developed stochastic differential equations for algorithms like Extra-gradient in zero-sum games under a stochastic setting.  Two recent works, \cite{rosca2021discretization} and \cite{feng2025continuoustime} provide continuous-time analyses of vanilla Gradient Descent-Ascent and Heavy-ball momentum in zero-sum games, respectively. The current work generalizes these results to the Adam algorithm, which is more complex and remains poorly understood in game-theoretic settings. A more detailed comparison is provided in Appendix~\ref{Appendix: detailed comparison with feng}.

\paragraph{Local Convergence in zero-sum Games.} %Motivated  by observations that although global convergence usually fails for zero-sum algorithms in real-world applications like GANs training, local convergence is still achievable \citep{nagarajan2017gradient,mescheder2017numerics}, 
Local convergence of learning algorithms in zero-sum games has received great attention in recent years. \cite{pmlr-v89-liang19b} analyzed the local convergence of the Gradient Descent-Ascent (GDA) algorithm and its variants, showing the importance of the interaction between players on the dynamics of the algorithms. \cite{fiez2021local} provide a local convergence analysis of GDA under finite timescale separation. \cite{li2022convergence} studied the local convergence to a Stackelberg Equilibrium for GDA-based learning algorithms. \cite{pmlr-v151-zhang22e} studied the local convergence of alternating GDA methods and demonstrated its near-optimal property. Recently, \citet{wang2024local} proved the surprising fact that partial curvature generically suffices for the local convergence of GDA. 
%Local convergence results have also been extended to different zero-sum game settings, such as probability distribution spaces \citep{wang2022exponentially} and Riemannian manifolds \citep{zhang2025local}. 

%\textcolor{blue}{Besides this, recent work has explored adaptive time-scale strategies: \cite{litiada} proposed Tiada, a time-scale adaptive method for nonconvex minimax problems, and \cite{yang2022nest} developed nested adaptive schemes that achieve parameter-agnostic convergence—complementary approaches that adaptively tune per-variable step sizes and thereby mitigate the need for manual time-scale separation.}

\paragraph{Implicit Gradient Regularization.}

The implicit gradient regularization (IGR) effect was first developed for minimization problems. \citet{barrett2020implicit} studied IGR for the gradient descent algorithm. It was later extended to momentum methods by \citet{ghosh2023implicit}, which shows that large momentum parameters usually make algorithms find flatter minima. Recently, \citet{pmlr-v235-cattaneo24a} has extended these results to Adam.\footnote{The detailed discussion on Adam is in Appendix~\ref{Appendix: detailed comparison with feng}.} For zero-sum games, \cite{rosca2021discretization} first derived IGR for GDA algorithms, and \cite{feng2025continuoustime} extended this approach to momentum methods. Inspired by these findings, several recent algorithms that explicitly incorporate gradient regularization to enhance performance have been proposed \citep{zhang2023flatness,zhang2023gradient}.

%Implicit gradient regularization, a surprising phenomenon, which describes gradient-based algorithms implicitly regularize models by penalizing the trajectories that have large gradient norms in the loss surface. This drives the trajectories of gradient-based algorithms toward the local minimum in flatter regions which has smaller gradient norms. These flatter regions usually has lower test error and better generalization performance. To explain the implicit gradient regularization of Gradient Descent (GD), \cite{barrett2020implicit} employ backward error analysis in numerical integration to derive the modified gradient flow which induced by the loss function plus a gradient norm term. They proved that GD is closer than the modified gradient flow, rather than the original gradient flow. The gradient norm term in the modified continuous time flow provides an explanation for the implicit gradient regularization. \cite{ghosh2023implicit} and \cite{feng2025continuoustime} extended backward error analysis and investigated implicit gradient regularization in minimization and zero-sum games respectively. \cite{ma2022qualitative} proved Adam and RMSprop are closer to SignGD, and \cite{pmlr-v235-cattaneo24a} pointed out that Adam and RMSprop may implicitly regularize model by adding a $\ell_1$ norm term into the modified flows.

\section{Preliminaries}\label{PRE}
\textbf{Notations.} For matrix $\CM$, $\mathrm{Sp}(\CM)$ denotes the set of its eigenvalues in $\BC$. For $\lambda \in \mathrm{Sp}(\CM)$, $\Re(\lambda)$ and $\Im(\lambda)$ represent the real and imaginary parts of $\lambda$. The notation $\CM \preccurlyeq \textbf{0}$ or $\CM \succcurlyeq \textbf{0}$ means that $\CM$ is a negative or positive semi-definite matrix. We use $\mathrm{EigVec}(\CM)$ to denote the eigenspace of $\CM$, and $\mathrm{Ker}(\CM)$ to represent its kernel space, i.e., $\mathrm{Ker}(\CM) = \{ \tz \in \BC^d \mid \CM \tz = \textbf{0}\}$. $\CI_d$ denotes the $d$-dimension identity matrix. $\diag\{\tv\}$ denotes a diagonal matrix with diagonal elements $\{\tv_i\}_{i=1}^d$. Without specialization, we use the component-wise multiplication and division of vectors, as well as component-wise addition.
%In this section, we present some background of this paper. For two vectors $\bm{u}, \bm{v}$, operations such as $\bm{u}^2$, $\bm{u}/\tv$, and $\sqrt{\bm{u}}$ are understood to be element-wise.

\textbf{Zero-Sum Games.}  A zero-sum game with smooth loss function $f(\tx,\ty)$ can be formulated as
\begin{align}\label{minmaxprel}
    \min_{\tx \in \BR^{d_1}} \max_{\ty \in \BR^{d_2}} f(\tx,\ty)\tag{Zero-Sum Games}
\end{align}
If a pair of strategies $(\tx^*,\ty^*)$ satisfies $\forall \tx \in \CU,\ f(\tx,\ty^*) \ge f(\tx^*,\ty^*)$  and $\forall \ty \in \CV,\ f(\tx^*,\ty^*) \ge f(\tx^*,\ty)$ for some $\tx^*$'s neighborhood $\CU \subseteq \BR^{d_1}$ and $\ty^*$'s neighborhood $\CV \subseteq \BR^{d_2}$, then $(\tx^*,\ty^*)$ is called a \textit{local Nash equilibrium}. This is one of the most widely used solution concepts in zero-sum games, and is the focus of this work.

\textbf{Adam in Zero-Sum Games.} In zero-sum games, the $x$-player (resp. $y$-player) aims to minimize (resp. maximize) the objective function $f(\tx, \ty)$. Accordingly, the Adam algorithm must be adapted, which result in the following Adam Descent-Ascent (Adam-DA) algorithm. In particular, the x-player updates according to
\begin{align}
& \tilde{\tv}_{n+1} = \rho \tilde{\tv}_{n} + (1-\rho) \left(\nabla_x f(\tx_n,\ty_n)\right)^2,\nonumber \\
&\tilde{\tm}_{n+1} = \beta \tilde{\tm}_{n} + (1-\beta) \nabla_x f(\tx_n,\ty_n),\nonumber \\
&\tx_{n+1} = \tx_n - h \frac{\tilde{\tm}_{n+1}/(1-\beta^{n+1})}{\sqrt{\tilde{\tv}_{n+1}/(1-\rho^{n+1}) + \epsilon}},\nonumber 
\end{align}
and the y-players updates according to
\begin{align}\label{Adam}
& \hat{\tv}_{n+1} = \rho \hat{\tv}_{n} + (1-\rho) \left(\nabla_y f(\tx_n,\ty_n)\right)^2,\nonumber \\
&\hat{\tm}_{n+1} = \beta \hat{\tm}_{n} + (1-\beta) \nabla_y f(\tx_n,\ty_n), \nonumber\\
&\ty_{n+1} = \ty_n + h \frac{\hat{\tm}_{n+1}/(1-\beta^{n+1})}{\sqrt{\hat{\tv}_{n+1}/(1-\rho^{n+1}) + \epsilon}} \tag{\textcolor{Adam_DA}{Adam-DA}}
\end{align}
Here $h>0$ is the step size, $\epsilon>0$ is the numerical stability parameter, $\beta \in (-1,1)$ is the first-order momentum factor and $\rho \in (0,1)$ is the second-order momentum factor. 
%We use the term \textit{smaller momentum} to indicate that the value of $\beta$ is smaller, not its absolute value.

\textbf{Local Behaviors of Dynamical Systems.} For a system of differential equations $\dot{\tx}(t)=g(x)$ where $g:\mathbb{R}^d\rightarrow\mathbb{R}^d$ is a differentiable function, let $\tilde{\tx} \in \BR^d$ satisfy $g(\tilde{\tx}) = 0$. Then the local behavior of the system near $\tilde{\tx}$ is determined by the eigenvalues of Jacobian $\CJ_g(\tilde{\tx}) = \left( \frac{\partial g_i}{\partial x_j} (\tilde{\tx})\right)_{i,j}$: 
\begin{prop}\citep{khalil2002nonlinear}\label{Jacobiancr}
Suppose that $g$ is continuously differentiable. If $\alpha = \max_{\lambda \in \mathrm{Sp}(\CJ_g)} \Re(\lambda) < 0$, then there exist constants $\delta > 0$ and $C > 0$ such that for all initial conditions satisfying $\lVert \tx(0) - \tilde{\tx} \rVert \le \delta$, we have $\lVert \tx(t) - \tilde{\tx} \rVert \le C e^{t \alpha}, \forall t>0.$
\end{prop}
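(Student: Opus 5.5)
The plan is to reduce the nonlinear system to its linearization at $\tilde{\tx}$ and control the remainder with a quadratic Lyapunov function. Write $\tz(t)=\tx(t)-\tilde{\tx}$ and $A=\CJ_g(\tilde{\tx})$. Since $g(\tilde{\tx})=0$ and $g$ is $C^1$, we have
\[
\dot{\tz}=A\tz+r(\tz), \qquad \lVert r(\tz)\rVert\le \omega(\lVert \tz\rVert)\,\lVert \tz\rVert,
\]
where $\omega(s)\to 0$ as $s\to 0$; this is just the definition of the derivative together with continuity of $\CJ_g$.

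\textbf{Step 1 (the rate).} I expect to prove the bound with rate $\alpha+\eta$ for an arbitrarily small $\eta>0$, rather than with exactly $\alpha$. The reason is that when $A$ has a nontrivial Jordan block for an eigenvalue with real part $\alpha$, the linear flow already behaves like $t^{k}e^{\alpha t}$, and this cannot be bounded by $Ce^{\alpha t}$. So I read the proposition as "for every $\gamma\in(\alpha,0)$, with $\delta,C$ depending on $\gamma$", or as exactly $\alpha$ when the eigenvalues of real part $\alpha$ are semisimple. This weaker form is all that is needed for the local convergence statements later in the paper.

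\textbf{Step 2 (Lyapunov function).} Fix $\gamma\in(\alpha,0)$ and set $B=A-\gamma\CI_d$. Every eigenvalue of $B$ has negative real part, so the Lyapunov equation $B^\top P+PB=-\CI_d$ has a unique solution
\[
P=\int_0^\infty e^{B^\top s}e^{Bs}\,ds \succ \textbf{0}.
\]
Take $V(\tz)=\tz^\top P\tz$, so that $\lambda_{\min}(P)\lVert\tz\rVert^2\le V\le\lambda_{\max}(P)\lVert\tz\rVert^2$. Along trajectories,
\[
\dot V=\tz^\top(A^\top P+PA)\tz+2\tz^\top P r(\tz)=2\gamma V-\lVert\tz\rVert^2+2\tz^\top P r(\tz).
\]

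\textbf{Step 3 (absorbing the remainder).} Choose a radius $\delta_0$ with $2\lVert P\rVert\,\omega(s)\le 1$ for all $s\le\delta_0$. Then $\dot V\le 2\gamma V$ whenever $\lVert\tz\rVert\le\delta_0$. Next choose $\delta$ so that the sublevel set $\{V\le\lambda_{\max}(P)\delta^2\}$ lies inside the ball of radius $\delta_0$, for instance $\delta=\delta_0\sqrt{\lambda_{\min}(P)/\lambda_{\max}(P)}$.

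\textbf{Step 4 (conclusion).} For $\lVert\tz(0)\rVert\le\delta$, the trajectory starts in this sublevel set. Because $V$ is nonincreasing there (recall $\gamma<0$), the trajectory never leaves it, which is the usual first-exit-time contradiction argument. This also gives existence of the solution for all $t>0$, since it stays in a compact set. Grönwall's inequality then yields $V(t)\le e^{2\gamma t}V(0)$, and hence
\[
\lVert\tz(t)\rVert\le\sqrt{\lambda_{\max}(P)/\lambda_{\min}(P)}\;\lVert\tz(0)\rVert\,e^{\gamma t}.
\]
This is the claimed bound with $C=\delta\sqrt{\kappa(P)}$, where $\kappa(P)$ is the condition number of $P$.

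\textbf{Main obstacle.} The only delicate point is the exact exponent $\alpha$ discussed in Step 1. If the statement must be kept with exactly $\alpha$, I would first put $A$ into real Jordan form and treat the semisimple case by a direct change of basis, in which $P$ can be built with $\gamma=\alpha$. Otherwise the $\eta$-loss is unavoidable.
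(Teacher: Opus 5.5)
Your proof is correct. It is the standard Lyapunov indirect-method argument behind the result the paper cites from Khalil; the paper itself gives no proof, only the citation. Your shift $B=A-\gamma\CI_d$ is what sharpens ``some exponential rate'' to any $\gamma\in(\alpha,0)$.

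You are also right that the statement as printed, with exactly $e^{t\alpha}$, is false in general: a $2\times 2$ Jordan block at $\alpha$ already gives $t e^{\alpha t}$. The $\gamma$-version is all that Theorem~\ref{Corollary: Continuous Local convergence for general case} needs.

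The one thing to fix is your closing aside. In the semisimple case you cannot ``build $P$ with $\gamma=\alpha$'' and rerun Step~3. The matrix $A-\alpha\CI_d$ then has eigenvalues on the imaginary axis, so the Lyapunov equation with right-hand side $-\CI_d$ has no positive definite solution. The margin $-\lVert\tz\rVert^2$ that absorbed $2\tz^\top P r(\tz)$ is therefore gone.

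For merely $C^1$ $g$, the exact rate in fact fails even in one dimension. Take $g(z)=-z+z/\log(1/\lvert z\rvert)$ with $g(0)=0$. It is $C^1$ near $0$ with $g'(0)=-1$. Setting $u=\log(1/z)$ gives $\dot u=1-1/u$, so $u+\log(u-1)=t+c$ and $z(t)\sim e^{-c}\,t\,e^{-t}$.

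To get exactly $\alpha$ when the eigenvalues with real part $\alpha$ are semisimple, you need three things:
\begin{itemize}
\item a quantitative remainder, e.g.\ $\lVert r(\tz)\rVert\le K\lVert\tz\rVert^2$ from a locally Lipschitz Jacobian (available here since $f$ is smooth);
\item a matrix $P\succ\textbf{0}$ with $A^\top P+PA\preccurlyeq 2\alpha P$;
\item a comparison argument for $\dot V\le 2\alpha V+cV^{3/2}$, in place of absorption.
\end{itemize}
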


% \textbf{Local Behaviors of Discrete Dynamical Systems.}
% \begin{lem}\label{Lemma: local stability of dynamical system}[Corollary 4.35 in \cite{elaydi2005introduction} and Theorem II.1 in \cite{bock2021local}] Consider $\bar{T}: M\rightarrow M$ with a fixed point $w^*$ and $\bar{T}$ continuously differentiable in an open disk $B_{\delta}(x^*)\subset M$ with radius $\delta$. Assume $\varrho(\text{Jac}(\bar{T}_{w^*}))<1$, then there exists $0<\delta_0<\delta$ and $0\leq c<1$ such that for all $w_0$ with $\|w_0-w^*\|<\epsilon$ and for all $t\in \mathbb{N}$,
%     \[
%     \|w(t;w_0)-w^*\|\leq c^t\|w_0-w^*\|.
%     \]
        
%     \end{lem}

% \textbf{Notations.} For matrix $\CM$, $\mathrm{Sp}(\CM)$ denotes the set of its eigenvalues in $\BC$. For $\lambda \in \mathrm{Sp}(\CM)$, $\Re(\lambda)$ and $\Im(\lambda)$ represent the real and imaginary parts of $\lambda$. The notation $\CM \preccurlyeq \textbf{0}$ or $\CM \succcurlyeq \textbf{0}$ means that $\CM$ is a negative or positive semi-definite matrix. We use $\mathrm{EigVec}(\CM)$ to denote the eigenspace of $\CM$, and $\mathrm{Ker}(\CM)$ to represent its kernel space, i.e., $\mathrm{Ker}(\CM) = \{ \tz \in \BC^d \mid \CM \tz = \textbf{0}\}$. $\CI_d$ denotes the $d$-dimension identity matrix. $\diag\{\tv\}$ denotes a diagonal matrix with diagonal elements $\{\tv_i\}_{i=1}^d$. Without specialization, we use the component-wise multiplication and division of vectors, as well as component-wise addition of vectors.

\section{Continuous-Time Model}\label{CTM}
\begin{figure*}[t]
    \centering
    \subfigure{
        \includegraphics[width=2.0in]{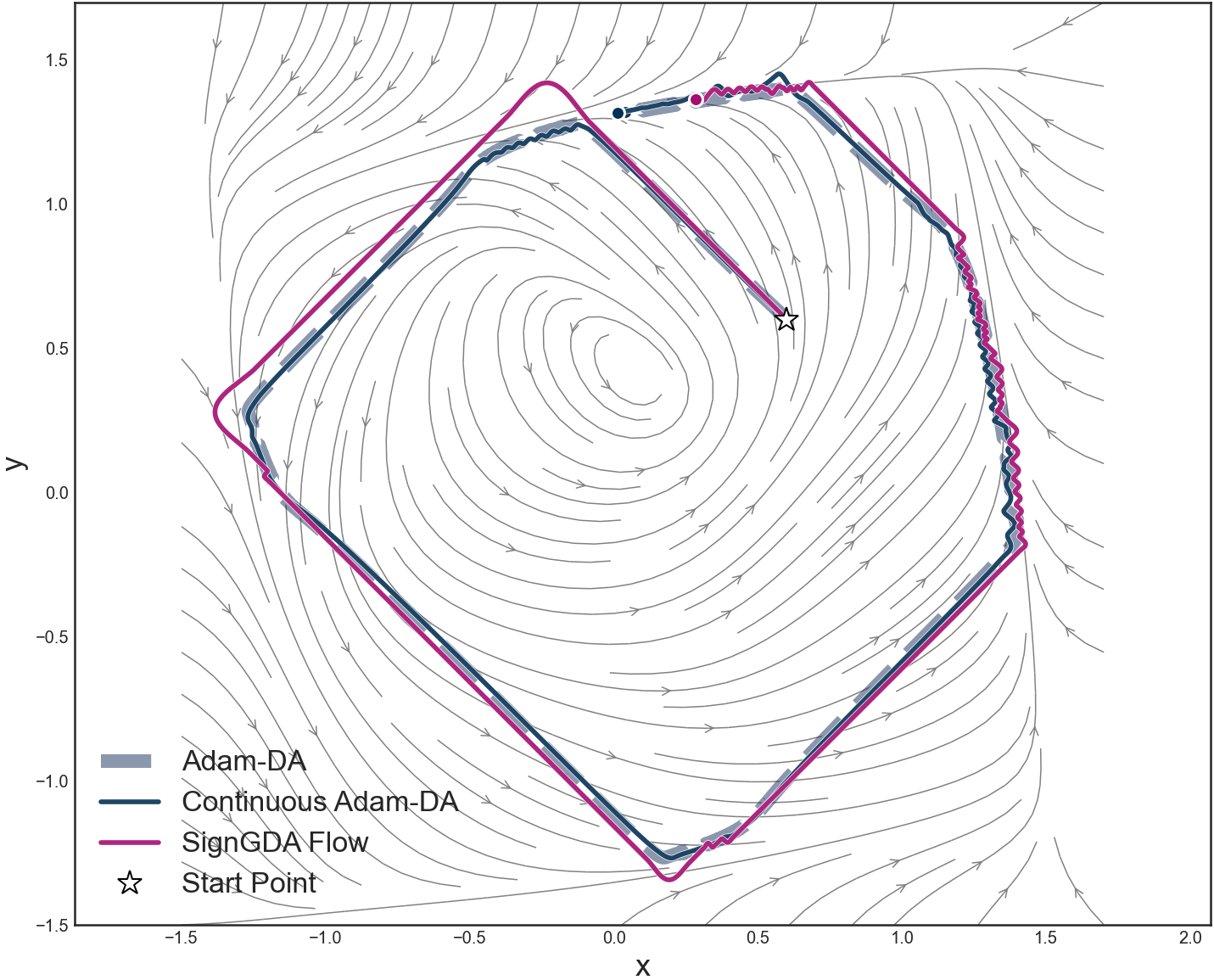}
        \label{p1}
    }
    \subfigure{
        \includegraphics[width=2.0in]{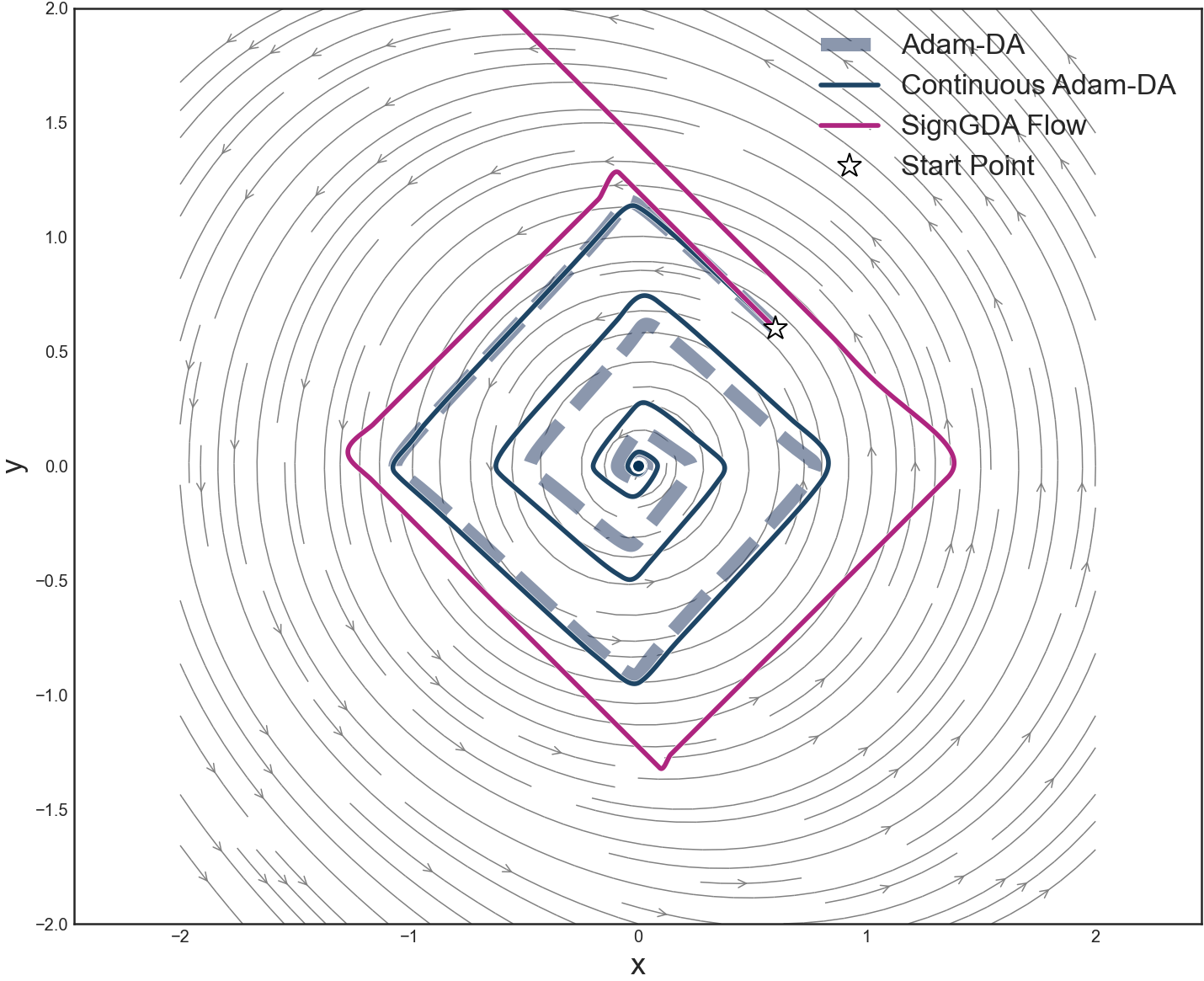}
        \label{p2}
    }
    \subfigure{
        \includegraphics[width=2.0in]{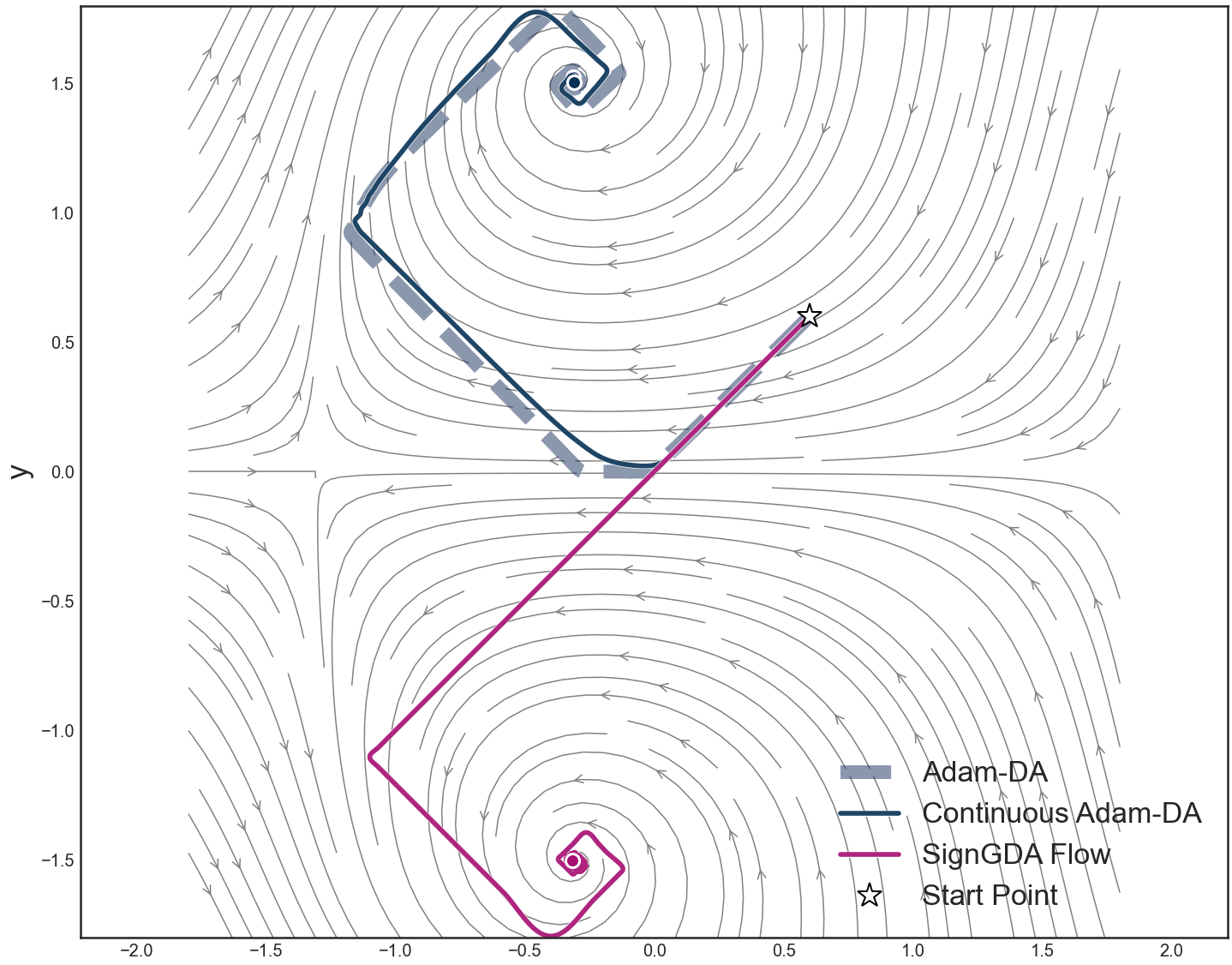}
        \label{p3}
    }
 %    \subfigure[Distance Curves of $f_1$]{
	% \includegraphics[width=1.6in]{Pictures/oder_dist.png}
 %        \label{p4}
 %    }
 %    \subfigure[Distance Curves of $f_2$]{
	% \includegraphics[width=1.6in]{Pictures/oder_dist2.png}
 %        \label{p5}
 %    }
 %    \subfigure[Distance Curves of $f_3$]{
	% \includegraphics[width=1.6in]{Pictures/oder_dist3.png}
 %        \label{p6}
 %    }
    \caption{\textit{Trajectories of \ref{Adam}, \ref{CADAM}, and \ref{sign} on three test functions from \cite{compagnoni2024sdes}.  \ref{CADAM} closely approximates \ref{Adam}. Especially in \ref{p2} and \ref{p3}, where \ref{sign} either diverges or approaches to a different equilibrium, while the trajectories of the other two methods remain similar. More details are provided in Appendix \ref{FDoF}.}}
    \label{comparewithsign}
\end{figure*}

%[Trajectories of $f_2$]

% Figures \ref{p4}, \ref{p5}, and \ref{p6} show the distances of two continuous-time models between Adam, with results averaged over 30 random initial conditions. Future details are provided in Appendix \ref{FDoF}.

% \begin{itemize}
% \item Theorem statement (give the equation, and assumption, and error bound $\CO(h^3)$, add pictures to compare the trajectories of our continuous-time model, $\CO(h^2)$-model \textcolor{red}{(SignGD?)} and discrete adam)

% % \item Compatiable with previous equations: Adam in minimization, Heavy-ball momentum in min-max games

% \item Argue the properties of continuous-time model are very related with the relation between $\epsilon$ and gradient norm. This motive us to divide the following discussion into two parts: 

% 1. when $\epsilon$ is smaller than gradient norm, we consider local convergence   2. when $\epsilon$ is larger than gradient norm, we consider implicit regularization
% \end{itemize}

%%%%%%%%%%%%%%%%%%%%%%%%%%%%%%%%%%%%%%%%%%%%%%%%%%%%%%%%%%%%%%%%%%%%%%%%%%%%%%%%%%%%%%%%%%%%%%%%%%%%%%%%%%%%%%%%%%%%%%%%%%%%%%%%%%%%%%%%%%%%%%%%%%%%%%%%%%%%%%

In this section, we present our continuous-time models for \ref{Adam}. First, we establish an error bound between the trajectories of discrete-time algorithms and continuous-time models. Then, we compare our model with SignGDA-flow, the min-max adaptation of the continuous-time model proposed by \cite{ma2022qualitative} for Adam in minimization. 

The continuous-time model we purposed for \ref{Adam} in zero-sum games is the following:
\begin{align}\label{CADAM}
 \dot{\tx}(t)
= & - \mu_{\epsilon}(\tx,\ty) \Biggl(
    \nabla_x f(\tx,\ty) + \frac{h}{2}\,\CM^{\mu}_{\beta,\rho,\epsilon}(\tx,\ty)\,\cdot\,\nonumber \\
&\nabla_{x}\Bigl(
  \lVert \nabla_x f(\tx,\ty) \rVert_{1,\epsilon}
        - \lVert \nabla_y f(\tx,\ty) \rVert_{1,\epsilon}
    \Bigr)
\Biggr), \nonumber\\
 \dot{\ty}(t)
=& \nu_{\epsilon}(\tx,\ty)\Biggl(
    \nabla_y f(\tx,\ty)  + \frac{h}{2}\,\CM^{\nu}_{\beta,\rho,\epsilon}(\tx,\ty)\,\cdot\,\nonumber \\
&  \nabla_{y}\Bigl(
    \lVert \nabla_x f(\tx,\ty) \rVert_{1,\epsilon}
    - \lVert \nabla_y f(\tx,\ty) \rVert_{1,\epsilon}
    \Bigr)
\Biggr).\tag{\textcolor{CAdam}{Continuous Adam-DA}}
\end{align}

Here the perturbed $\ell_1$ norm\footnote{Strictly speaking, $\lVert \cdot \rVert_{1,\epsilon}$ is not a norm. The terminology used here follows \cite{pmlr-v235-cattaneo24a}.} is defined as $\| \tv \|_{1,\epsilon} := \sum^d_{i=1} \sqrt{\tv^2_i + \epsilon}$.  Other terms are defined as:
\begin{itemize}
\item $\mu_{\epsilon}(\mathbf{x}, \mathbf{y}) := \mathrm{Diag} \left\{ \left( \| \partial_{x_j} f(\mathbf{x}, \mathbf{y}) \|_{1,\epsilon}^{-1} \right)_{j=1}^{d_1} \right\}$
\item $\mathcal{M}^{\mu}_{\beta,\rho,\epsilon} := \mathcal{K}(\beta,\rho) \mathcal{I}_{d_1} + \frac{\epsilon(1+\rho)}{1-\rho} \mu_{\epsilon}^2(\mathbf{x}, \mathbf{y})$   
\item $\nu_{\epsilon}(\mathbf{x}, \mathbf{y})  := \mathrm{Diag} \left\{ \left( \| \partial_{y_i} f(\mathbf{x}, \mathbf{y}) \|_{1,\epsilon}^{-1} \right)_{i=1}^{d_2} \right\}$
\item $\mathcal{M}^{\nu}_{\beta,\rho,\epsilon} := \mathcal{K}(\beta,\rho) \mathcal{I}_{d_2} + \frac{\epsilon(1+\rho)}{1-\rho} \nu_{\epsilon}^2(\mathbf{x}, \mathbf{y})$
\end{itemize}
where $ \CK(\beta,\rho) = (1+\beta)/(1-\beta) - (1+\rho)/(1-\rho).$
%Now we state the main theorem of this section:
\begin{thm}\label{error_ana} Let $f(\tx,\ty)$ be a smooth function with bounded  derivatives up to fourth order. Then for any given finite time horizon, the solution trajectories $\left(\tx(t),\ty(t) \right)$ of \ref{CADAM} is locally $\CO(h^3)$-close to the trajectories of \ref{Adam} after $ \max\{\frac{2\log h}{\log|\beta|}, \frac{2\log h}{\log\rho}\}$ steps. 
\end{thm}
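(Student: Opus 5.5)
We follow the backward error analysis route of \citet{barrett2020implicit}, \citet{ghosh2023implicit} and \citet{pmlr-v235-cattaneo24a}. We work coordinatewise, since every operation in \ref{Adam} is componentwise. The aim is a modified flow whose time-$h$ map agrees with one step of \ref{Adam} up to a local error $\CO(h^3)$.

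\textbf{Step 1: remove the transients.} After $n\ge \max\{\frac{2\log h}{\log|\beta|},\frac{2\log h}{\log\rho}\}$ steps we have $|\beta|^{n}\le h^2$ and $\rho^{n}\le h^2$. The bias corrections $1/(1-\beta^{n+1})$ and $1/(1-\rho^{n+1})$ are therefore $1+\CO(h^2)$. Likewise, the contribution of the initial values $\tilde{\tm}_0,\tilde{\tv}_0$ (and their hatted versions) is multiplied by $\beta^{n}$ or $\rho^{n}$, so it is $\CO(h^2)$. Since this enters the increment multiplied by $h$, it gives an $\CO(h^3)$ perturbation. We may thus replace the moments by the infinite geometric sums
\[
\tilde{\tm}_{n+1}\approx(1-\beta)\sum_{k\ge0}\beta^k g_{n-k},\qquad \tilde{\tv}_{n+1}\approx(1-\rho)\sum_{k\ge0}\rho^k g_{n-k}^2,
\]
where $g_j=\nabla_x f(\tx_j,\ty_j)$. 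The same replacement is made for the $y$-player.

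\textbf{Step 2: expand the past gradients along the ansatz.} We posit a flow $\dot{\tz}=F_0(\tz)+hF_1(\tz)$ with $\tz=(\tx,\ty)$. For the ansatz to be consistent it must satisfy $\tz_{n-k}=\tz(t_n-kh)+\CO(h^2)$ uniformly in the relevant $k$; this is justified because the geometric weights make the tails with $k\gtrsim \log h/\log|\beta|$ negligible. Taylor expansion then gives
\[
g_{n-k}=g_n-kh\,\nabla_z(\nabla_x f)\,F_0+\CO(k^2h^2).
\]
The moment sums $\sum_k k\beta^k(1-\beta)=\beta/(1-\beta)$ and $\sum_k k\rho^k(1-\rho)=\rho/(1-\rho)$ produce the first-order corrections to $m$ and $v$. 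Expanding $m/\sqrt{v+\epsilon}$ to first order in $h$ yields a zeroth-order term and an $h$-term. The zeroth-order term is $F_0=(-\mu_\epsilon\nabla_x f,\ \nu_\epsilon\nabla_y f)$, using $\sqrt{g^2+\epsilon}=\lVert g\rVert_{1,\epsilon}$ coordinatewise. The $h$-term is proportional to $\frac{\beta}{1-\beta}\dot g-\frac{\rho}{1-\rho}\frac{g^2}{g^2+\epsilon}\dot g$ divided by $\sqrt{g^2+\epsilon}$.

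\textbf{Step 3: match against the flow's time-$h$ map.} We compute the Taylor expansion of the flow,
\[
\tz(t+h)=\tz+hF_0+h^2\bigl(F_1+\tfrac12 \nabla F_0\, F_0\bigr)+\CO(h^3),
\]
and set the $h^2$ coefficients equal. This determines $F_1$. The identity
\[
\frac{g^2}{g^2+\epsilon}=1-\epsilon\,\mu_\epsilon^2
\]
is used to rewrite $F_1$. The constants then collect as $\frac{\beta}{1-\beta}+\frac12-\bigl(\frac{\rho}{1-\rho}+\frac12\bigr)(1-\epsilon\mu_\epsilon^2)$. Doubling gives $\CK(\beta,\rho)+\frac{\epsilon(1+\rho)}{1-\rho}\mu_\epsilon^2$, which is exactly $\CM^{\mu}_{\beta,\rho,\epsilon}$.

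\textbf{Step 4: identify the gradient structure.} This is the main obstacle. The $x$-component contains a term $\dot g_x$, where
\[
\dot g_x=\nabla^2_{xx}f\,\dot{\tx}+\nabla^2_{xy}f\,\dot{\ty}=-\nabla^2_{xx}f\,\mu_\epsilon\nabla_x f+\nabla^2_{xy}f\,\nu_\epsilon\nabla_y f.
\]
We then use $\nabla_x\lVert\nabla_x f\rVert_{1,\epsilon}=\nabla^2_{xx}f\,\mu_\epsilon\nabla_x f$ and $\nabla_x\lVert\nabla_y f\rVert_{1,\epsilon}=\nabla^2_{xy}f\,\nu_\epsilon\nabla_y f$. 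With these, the term becomes $-\nabla_x(\lVert\nabla_x f\rVert_{1,\epsilon}-\lVert\nabla_y f\rVert_{1,\epsilon})$, and after prefactors this gives the displayed form. The $y$-player is handled in the same way, with the sign flips from ascent. Here the care lies in tracking signs and the diagonal preconditioner $\mu_\epsilon$, which does not commute with the Hessians.

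\textbf{Step 5: control the remainder.} Bounded derivatives up to fourth order are used to bound all remainders, including the $\CO(k^2h^2)$ terms. They are summable against $|\beta|^k$ and $\rho^k$, with constants depending on $1/(1-|\beta|)$, $1/(1-\rho)$ and $\epsilon^{-1/2}$. Together these give a per-step local error of $\CO(h^3)$ on the finite horizon, which is the claimed local closeness.
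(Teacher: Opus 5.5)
Your proposal is correct, and its core is the same backward error analysis the paper carries out. You Taylor-expand the past gradients along the flow, evaluate the weighted moment sums, expand $1/\sqrt{v+\epsilon}$ to first order, and match the $h$ and $h^2$ coefficients with the time-$h$ expansion of the flow. Your constant bookkeeping and the identification $\dot g_x=-\nabla_x\bigl(\lVert\nabla_x f\rVert_{1,\epsilon}-\lVert\nabla_y f\rVert_{1,\epsilon}\bigr)$ give exactly the paper's coefficient $\frac{1+\beta}{1-\beta}-\frac{1+\rho}{1-\rho}+\frac{\epsilon}{g^2+\epsilon}\frac{1+\rho}{1-\rho}$.

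Where you differ is the treatment of the bias-correction transients. The paper keeps them. It first derives an $n$-dependent family of modified equations on each $[t_n,t_{n+1})$, with coefficients like $\frac{\beta}{1-\beta}-\frac{(n+1)\beta^{n+1}}{1-\beta^{n+1}}$. It then proves a separate comparison lemma: past the threshold, this family and the Continuous Adam-DA flow have time-$h$ maps agreeing to $\CO(h^3)$. You instead discard the transients first and derive the stationary equation in one pass. That is shorter and avoids comparing two flows, while the paper's route also gives an explicit modified equation during the early, bias-corrected phase.

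Two details to tighten. First, in your Step 1, the tail you drop from the weighted sum $\sum_k k(1-\beta)\beta^k$ is of order $n|\beta|^{n}$, not $|\beta|^n$. It is therefore only $\CO(h)$, via $|\beta|^n\le h^2$ together with the finite horizon $n\le T_0/h$. That still suffices, because this sum enters the increment at order $h^2$, and it is exactly where the paper's comparison lemma uses the horizon. Second, the condition $\tz_{n-k}=\tz(t_n-kh)+\CO(h^2)$ is part of the definition of local error for a method with memory; the paper likewise substitutes the ODE trajectory into the update. It is not something the geometric weights justify, since they only tell you which $k$ matter.
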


\begin{remark}
The bounded derivatives condition in Theorem \ref{error_ana} is necessary to prove rigorous results about the errors between ODEs and algorithms. It is widely used in related literature \citep{rosca2021discretization,ghosh2023implicit,compagnoni2024adaptive}. Indeed, as remarked by \cite{pmlr-v235-cattaneo24a}, such assumptions are explicitly or implicitly present in all previous work on the backward error analysis of gradient-based machine learning algorithms.
\end{remark}

The proof of Theorem \ref{error_ana} relies on the backward error analysis from the numerical analysis literature \citep{haier2006geometric}, and is inspired by \cite{pmlr-v235-cattaneo24a}, which derived ODEs for the Adam algorithm in the minimization setting to study its implicit regularization effect. However, our equations differ significantly due to the presence of two interacting players inherent to zero-sum games. Detailed proof of Theorem \ref{error_ana} is provided in Appendix \ref{proofofthm1}.

Theorem \ref{error_ana} shows that \ref{CADAM} approximates \ref{Adam} with a $\mathcal{O}(h^3)$-local error. To highlight this advantage, we compare it with \ref{sign}, a min-max adaptation of the ODEs proposed by \cite{ma2022qualitative}, which approximates Adam with a $\mathcal{O}(h^2)$-local error:
\begin{align}\label{sign}
&\dot{\tx}(t) =  - \mu_{\epsilon}(\tx,\ty) \nabla_xf(\tx,\ty),\nonumber\\
&\dot{\ty}(t) = \nu_{\epsilon}(\tx,\ty) \nabla_yf(\tx,\ty)\tag{SignGDA-flow}
\end{align}
The name of \ref{sign} comes from the fact that when $\epsilon \thickapprox 0$, \ref{sign} depends only on the signs of the partial derivatives. In Figure \ref{comparewithsign}, we present numerical examples from \cite{compagnoni2024sdes} to compare these three methods. It can be observed that \ref{CADAM} better approximates \ref{Adam}, thus highlighting the benefits of $\CO(h^3)$-local error.

\section{Local Convergence}\label{LC}
Unlike minimization algorithms, which almost always converge to a local minimum \citep{LP19}, algorithms for zero-sum games often fail to converge and may exhibit complex behaviors, such as cycles and chaos \citep{bailey2020finite,CP2020}. However, recent studies show that if the initial strategy pair is sufficiently close to a local equilibrium, convergence can still be achieved \citep{li2022convergence,wang2024local,zhang2025local}, which is known as \textit{local convergence}. Building upon these findings, we are motivated to investigate the following question:

 \textit{How do the parameters in \ref{CADAM} and \ref{Adam} influence their local convergence?} 
 
In this section, we answer the above question. In Section \ref{SJ}, we describe the Jacobian structure of \ref{CADAM} at a local equilibrium, which is our main tool to study local convergence. The main results are presented in Section \ref{LC of continuous model}. Most proofs are deferred to Appendix \ref{Appendix_S4}.

\subsection{Jacobian Structure of Adam-DA}\label{SJ}
From dynamical system theory, the Jacobian matrix of a dynamical system completely determines its local behavior around an equilibrium. We first introduce the Jacobian of the following ODE for Gradient Decent-Ascent (GDA), which serves as a fundamental tool for analyzing \ref{CADAM} and \ref{Adam}:
\begin{align*}\label{GDA-Flow}
    &\dot{\tx}(t)=-\nabla_{x} f(\tx, \ty), \\
    &\dot{\ty}(t)=\nabla_{y}f(\tx, \ty)\tag{Continuous GDA}
\end{align*}
\ref{GDA-Flow} is the ODE for GDA \citep{wang2024local}. The Jacobian of \ref{GDA-Flow} at a local Nash equilibrium $(\tx^*,\ty^*)$ is defined by
\begin{align}\label{jacgda}
    \CJ=\begin{bmatrix}
    -\nabla_{x}^2f(\tx^*, \ty^*)& -\nabla_{xy}f(\tx^*, \ty^*)\\
    \\
    \nabla_{yx}f(\tx^*, \ty^*)& \nabla_{y}^2f(\tx^*, \ty^*)
\end{bmatrix} \tag{Jacobian}
\end{align}
Moreover, by the definition of local Nash equilibrium, we have
$\nabla^2_{x}f(\tx^*,\ty^*) \succcurlyeq \textbf{0},\  \nabla^2_{y}f(\tx^*,\ty^*) \preccurlyeq \textbf{0}.$

It may seem that \ref{GDA-Flow} is an over-simplified version of \ref{CADAM}. Surprisingly, their Jacobian are closely related, as shown in the following proposition:
\begin{prop}\label{jacobianadam}
Let $\CJ_{\mathrm{Adam}}$ be the Jacobian of \ref{CADAM} at the local equilibrium $(\tx^*,\ty^*)$, then $\CJ_{\text{Adam}}$ is given by a quadratic polynomial of $\CJ$:
\begin{align*}
    \CJ_{\mathrm{Adam}} = \frac{1}{\sqrt{\epsilon}} \left( \CI - \frac{h(1+\beta)}{2\sqrt{\epsilon}(1-\beta)}\CJ \right) \CJ.
\end{align*}
\end{prop}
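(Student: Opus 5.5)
The plan is to compute the Jacobian of the right-hand side of \ref{CADAM} directly at $\tz^*=(\tx^*,\ty^*)$. The key observation is that the vector field has the form ``state-dependent matrix $\times$ bracket'', and the bracket vanishes at $\tz^*$. This kills every term in which the derivative falls on $\mu_\epsilon,\nu_\epsilon,\CM^{\mu},\CM^{\nu}$.

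\textbf{Step 1 (evaluating the coefficient matrices at $\tz^*$).} Since $\nabla f(\tz^*)=\mathbf{0}$, each scalar perturbed norm equals $\sqrt{0+\epsilon}=\sqrt{\epsilon}$. Hence $\mu_\epsilon(\tz^*)=\epsilon^{-1/2}\CI_{d_1}$ and $\nu_\epsilon(\tz^*)=\epsilon^{-1/2}\CI_{d_2}$. Substituting into $\CM^\mu$ gives
\[
\CM^\mu_{\beta,\rho,\epsilon}(\tz^*)=\Bigl(\CK(\beta,\rho)+\tfrac{1+\rho}{1-\rho}\Bigr)\CI_{d_1}=\tfrac{1+\beta}{1-\beta}\CI_{d_1}=:c\,\CI_{d_1},
\]
and likewise $\CM^\nu_{\beta,\rho,\epsilon}(\tz^*)=c\,\CI_{d_2}$. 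This is where $\rho$ drops out.

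\textbf{Step 2 (the bracket vanishes).} Let $\Phi=\lVert\nabla_x f\rVert_{1,\epsilon}-\lVert\nabla_y f\rVert_{1,\epsilon}$, and write $g=\nabla_x f$. Then $\nabla_z\sqrt{g_i^2+\epsilon}=g_i\nabla_z g_i/\sqrt{g_i^2+\epsilon}$, which is zero at $\tz^*$. So both $\nabla f$ and $\nabla\Phi$ vanish there, and the whole bracket is zero. By the product rule, the Jacobian of the vector field at $\tz^*$ is the coefficient matrix at $\tz^*$ (Step 1) times the Jacobian of the bracket.

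With $S=\diag\{(-\mathbf{1}_{d_1},\mathbf{1}_{d_2})\}$ and $H=\nabla^2 f(\tz^*)$, this gives
\[
\CJ_{\mathrm{Adam}}=\tfrac{1}{\sqrt\epsilon}\,S\Bigl(H+\tfrac{hc}{2}\nabla^2\Phi(\tz^*)\Bigr),
\]
and by \ref{jacgda} we have $\CJ=SH$.

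\textbf{Step 3 (Hessian of $\Phi$, the main computation).} Differentiating $g_i\nabla g_i/\sqrt{g_i^2+\epsilon}$ once more and setting $g=0$ gives $\nabla^2_z\lVert\nabla_x f\rVert_{1,\epsilon}(\tz^*)=\epsilon^{-1/2}A^\top A$, where $A=[\nabla^2_x f,\ \nabla_{xy}f]$ is the $x$-block row of $H$. Similarly $\nabla^2_z\lVert\nabla_y f\rVert_{1,\epsilon}(\tz^*)=\epsilon^{-1/2}B^\top B$ with $B=[\nabla_{yx}f,\ \nabla^2_y f]$.

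Since $H$ is symmetric and $H=\begin{bmatrix}A\\ B\end{bmatrix}$, we get $A^\top A-B^\top B=H^\top(-S)H=-HSH$. Therefore
\[
S\,\nabla^2\Phi(\tz^*)=-\epsilon^{-1/2}(SH)(SH)=-\epsilon^{-1/2}\CJ^2 .
\]
Combining this with Step 2 yields
\[
\CJ_{\mathrm{Adam}}=\tfrac{1}{\sqrt\epsilon}\Bigl(\CJ-\tfrac{h(1+\beta)}{2\sqrt\epsilon(1-\beta)}\CJ^2\Bigr),
\]
which is the claimed factored form.

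The only delicate points are the sign bookkeeping in identifying $A^\top A-B^\top B$ with $-HSH$, and justifying that derivatives of the non-smooth-looking diagonal factors do not contribute. The latter is settled because $\epsilon>0$ makes every factor smooth, and each such derivative multiplies a bracket that vanishes at $\tz^*$.
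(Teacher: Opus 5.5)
Your proof is correct and follows the same route as the paper's (Lemmas \ref{Lemma1Appendix_S42} and \ref{fghepir}). In both, the product rule together with the vanishing bracket at $(\tx^*,\ty^*)$ removes every derivative of $\mu_\epsilon,\nu_\epsilon,\CM^{\mu},\CM^{\nu}$, one uses $\CM^{\mu}(\tx^*,\ty^*)=\frac{1+\beta}{1-\beta}\CI$, and the Hessians of the perturbed $\ell_1$ terms reduce to $\epsilon^{-1/2}$ times products of the second-derivative blocks of $f$. The only difference is presentational: your full-space identity $A^\top A-B^\top B=-HSH$ produces $\CJ^2$ in one step, whereas the paper computes the four blocks separately (in detail only for the $(x,x)$ block) and then assembles them.
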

This simple yet powerful relationship allows us to provide a detailed analysis of the local behavior of 
\ref{CADAM} in the following section.

Finally, we recall the work of \cite{JMLR:v20:19-008} introduced a decomposition of $\CJ$ as a summation of its symmetry part and anti-symmetry part $\CJ = \CS + \CA$, where 
\begin{align*}
\CS = \begin{bmatrix}
 -\nabla^2_{x}f(\tx^*,\ty^*) & \textbf{0}\\
 \\
 \textbf{0}^{\top} & \nabla^2_{y}f(\tx^*,\ty^*)
\end{bmatrix}
\end{align*}
and
\begin{align*}
\CA = \begin{bmatrix}
\textbf{0} & -\nabla_{xy}f(\tx^*,\ty^*)  \\
\\
\nabla_{yx}f(\tx^*,\ty^*)  & \textbf{0}^{\top}
\end{bmatrix}.
\end{align*}
In this decomposition, the symmetric part $\CS$ describes the part of the dynamics where players independently optimize their own loss functions. The anti-symmetric part $\CA$ describes the part where players interact adversarially. In typical zero-sum games, the magnitude of $\CA$ should dominate $\CS$; otherwise, the game reduces to a minimization problem where both players minimize their losses independently.

\subsection{Local Convergence of Adam in Zero-Sum Games}\label{LC of continuous model}
\begin{figure*}[t]
    \centering
    \subfigure[Continuous Adam-DA]{
        \includegraphics[width=2in]{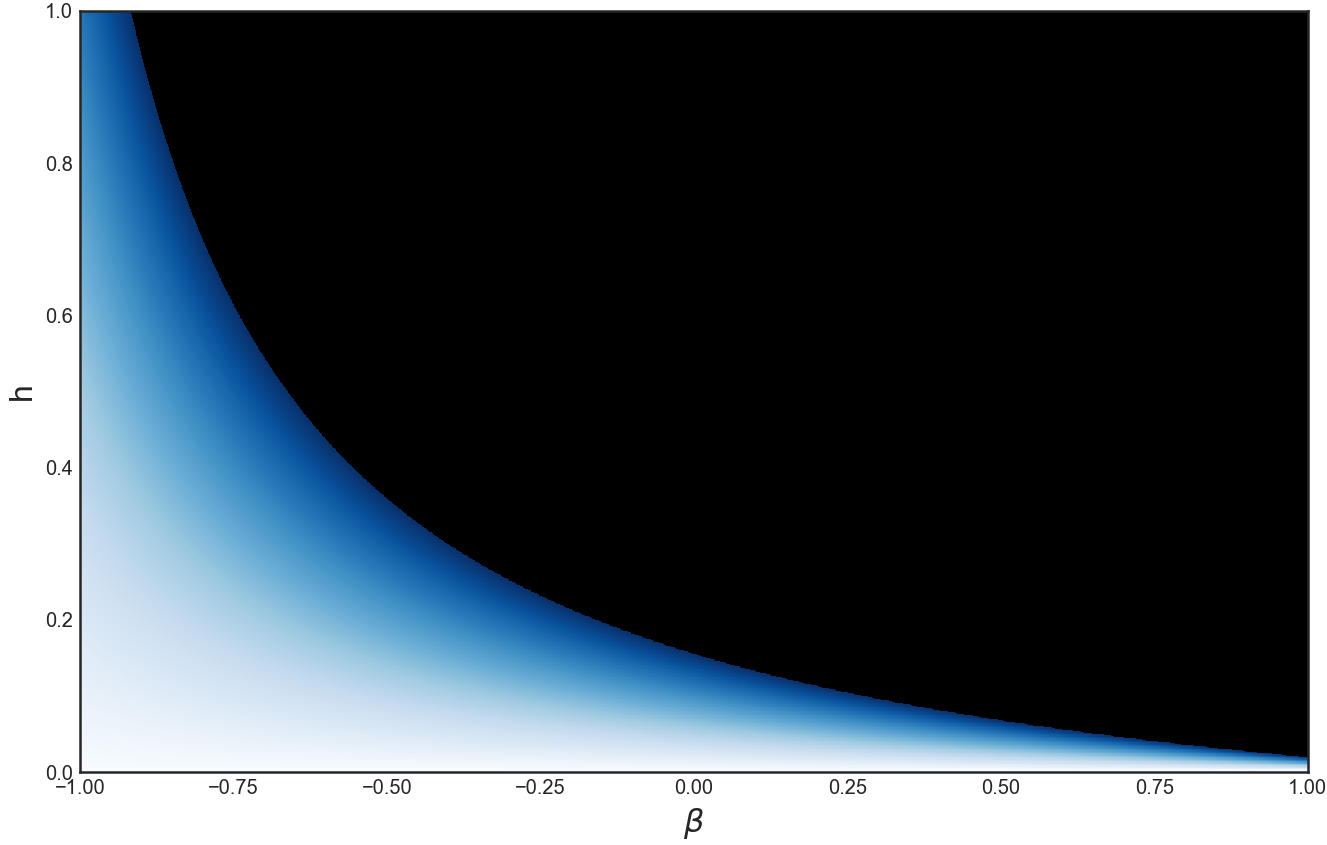}
        \label{lp1}
    }
    \subfigure[Adam-DA]{
        \includegraphics[width=2in]{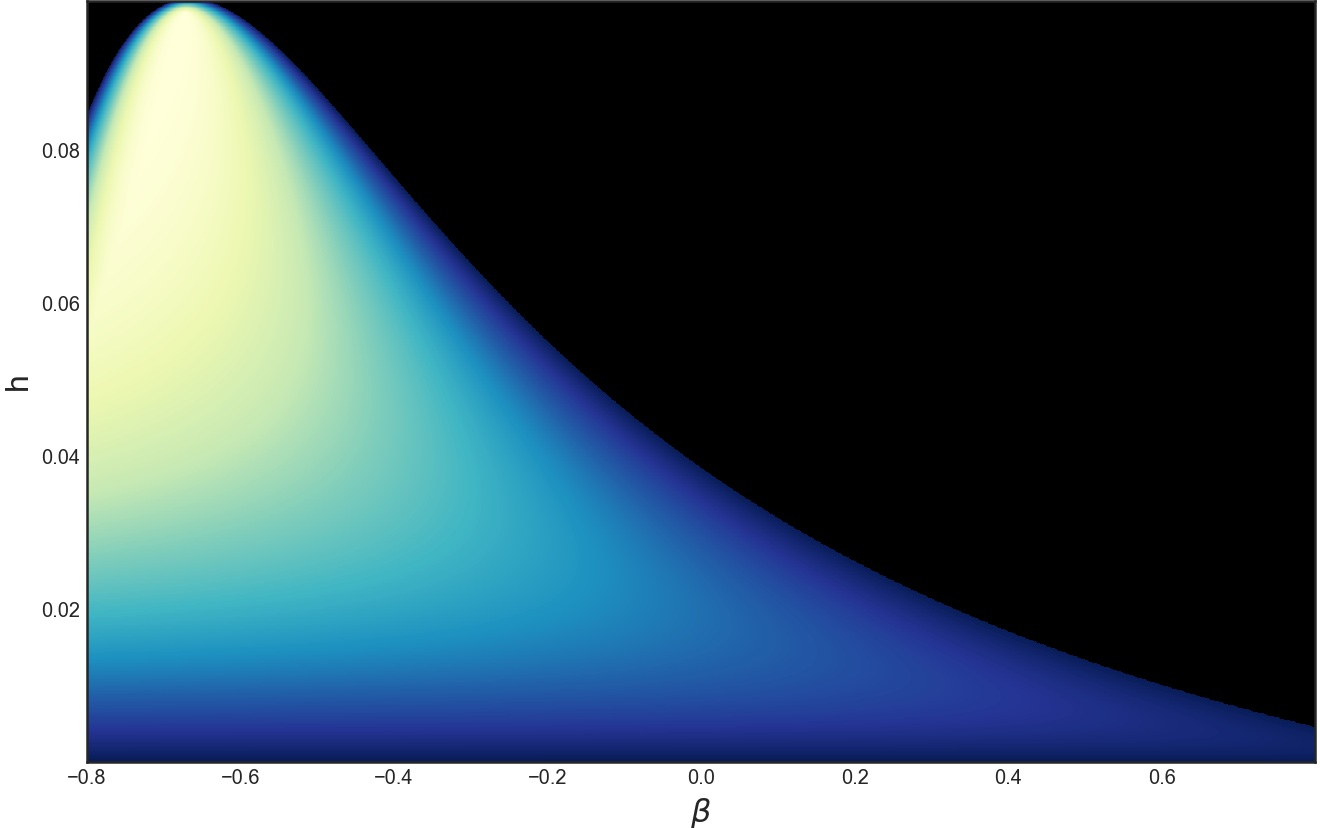}
        \label{lp2}
    }
    \subfigure[Adam in minimization]{
        \includegraphics[width=2in]{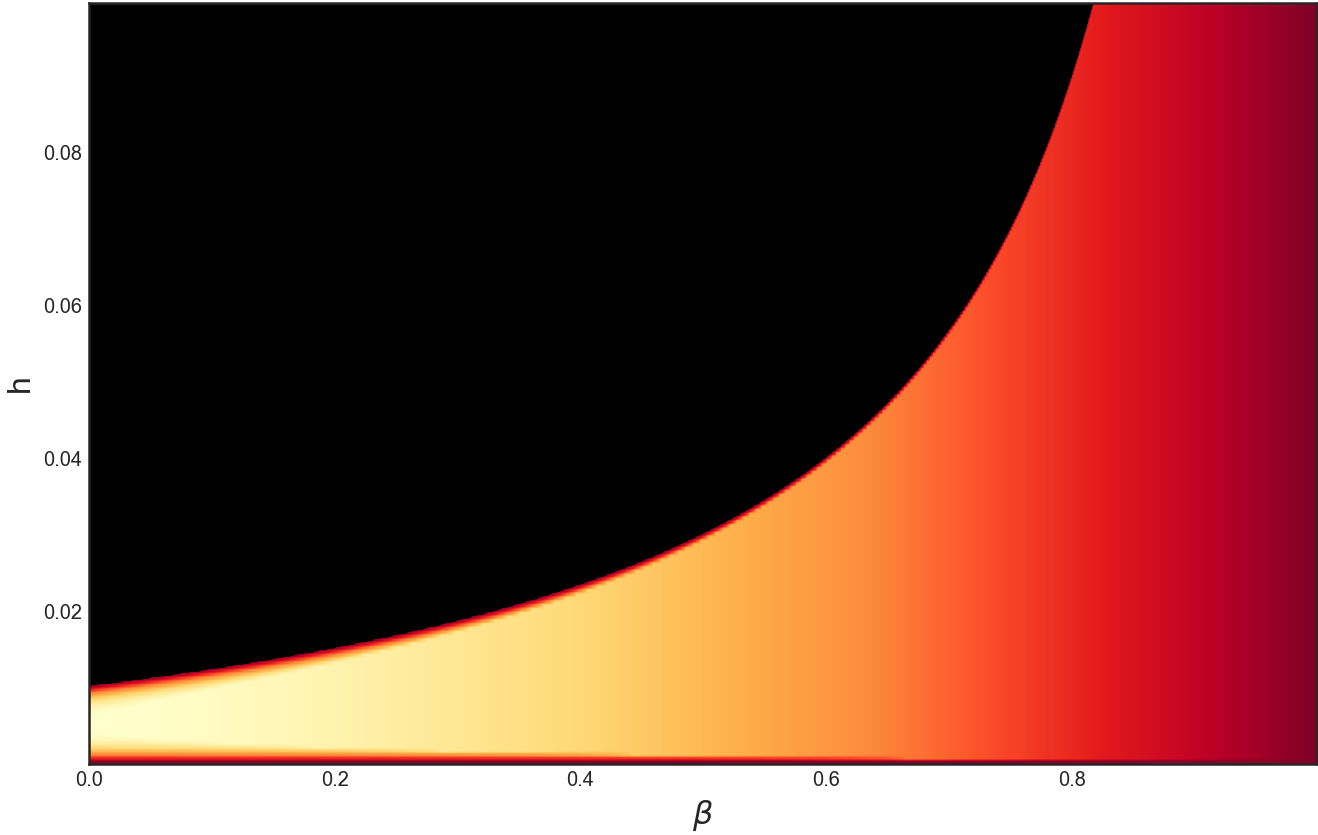}
        \label{lp3}
    }
 %    \subfigure[Continuous Adam-DA]{
 %        \includegraphics[width=1.7in]{Pictures/epsilon_h_cont2.png}
 %        \label{lp4}
 %    }
 %    \subfigure[Adam-DA]{
	% \includegraphics[width=1.7in]{Pictures/epsilon_h_disc2.png}
 %        \label{lp5}
 %    }
 %    \subfigure[Adam in minimization]{
	% \includegraphics[width=1.7in]{Pictures/epsilon_h_mini2.png}
 %        \label{lp6}
 %    }
    \caption{\textit{Numerical experiments on quadratic test functions for the local convergence of \ref{CADAM}, \ref{Adam} in zero-sum games, and Adam in minimization. The \textbf{black} regions indicate divergence, while lighter regions indicate faster convergence. The X-axis is $\beta$ and Y-axis is $h$. \textbf{Smaller} $\beta$ values allow \ref{CADAM} and \ref{Adam} to converge over a wider range of step sizes, while \textbf{larger} $\beta$ allow larger $h$ in minimization. These results support Corollary \ref{lmss} and highlight fundamental differences in the role of momentum $\beta$ between min-max and minimization problems. The non-decreasing behavior of $\beta$ for extremely small $\beta$ in \ref{lp2} aligns with Corollary \ref{lmss}.}}
    \label{lcpicture}
\end{figure*}

% In \ref{lp4}, \ref{lp5}, and \ref{lp6}, the X-axis is $\epsilon$ and Y-axis is $h$. Larger $\epsilon$ values expand the convergence range for all three methods, supporting Theorem \ref{Corollary: Continuous Local convergence for general case} and \ref{Thm: Discrete Local convergence for general case}. 

In this section, we present our findings regarding the local convergence. We offer results for both \ref{CADAM} and \ref{Adam}, demonstrating that they display similar behavior. First, we present the following assumption in the literature that characterizes the class of zero-sum games that we are interested in within this work.

\begin{assu}\label{ibr}\citep{wang2024local} There exist at least one eigenvalue $\lambda \in \mathrm{Sp}(\CJ)$ such that $\lvert\Im(\lambda)\lvert > \lvert\Re(\lambda)\lvert$, we denote the set of all such eigenvalues as $\widetilde{\mathrm{Sp}(\CJ)}$. We also assume $\mathrm{EigVec} \left( \CA \right)  \cap \mathrm{Ker} \left( \CS \right) = \{\textbf{0}\}.$
\end{assu}

Note that $\mathrm{EigVec} \left( \mathcal{A} \right) \cap \mathrm{Ker} \left( \mathcal{S} \right) = \{\mathbf{0}\}$ holds \textbf{generically} in the following sense: for any fixed $\mathcal{S} \ne \mathbf{0}$, if $\mathcal{A}$ is sampled independently from an absolutely continuous distribution, then this condition holds with probability one. Thus it is a mild assumption. 

The eigenvalue part of Assumption~\ref{ibr} is satisfied for zero-sum games in which the $\CA$ in \eqref{jacgda} dominates $\CS$ in magnitude. This regime corresponds to a strong coupling between the two players’ local dynamics near the equilibrium. Note that these are also the most representative zero-sum games in the literature \cite{pmlr-v89-liang19b, wang2024local}, in contrast to games that can essentially be treated as minimization problems, e.g., $\min_{\tx} \max_{\ty} f(\tx) + g(\ty)$, where players do not interact. 

\begin{thm}\label{Corollary: Continuous Local convergence for general case}
 Under Assumption \ref{ibr}, let $\beta\in (-1, 1)$ and $\rho\in(0,1)$. Then \ref{CADAM} achieves local convergence to $(\tx^*,\ty^*)$ with an exponential rate
 iff the step-size $h$ satisfies:
\begin{align*}
    0< h < \min_{\lambda \in \widetilde{\mathrm{Sp}(\CJ)}}\frac{2\sqrt{\epsilon} (1-\beta)\lvert \Re (\lambda) \lvert}{(1+\beta)(\Im(\lambda)^2 - \Re(\lambda)^2)}
\end{align*}
\end{thm}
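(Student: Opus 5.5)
We use Proposition~\ref{jacobianadam} to turn the spectrum of $\CJ_{\mathrm{Adam}}$ into an explicit function of $\mathrm{Sp}(\CJ)$. Since $\CJ_{\mathrm{Adam}} = p(\CJ)$ with
\[
p(z) = \frac{1}{\sqrt{\epsilon}}\Bigl(z - c z^2\Bigr), \qquad c := \frac{h(1+\beta)}{2\sqrt{\epsilon}(1-\beta)} > 0,
\]
the spectral mapping theorem (for instance via the Jordan form of $\CJ$) gives $\mathrm{Sp}(\CJ_{\mathrm{Adam}}) = \{p(\lambda) : \lambda \in \mathrm{Sp}(\CJ)\}$. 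Writing $\lambda = a + ib$, we get
\[
\Re\, p(\lambda) = \frac{1}{\sqrt{\epsilon}}\bigl(a - c(a^2 - b^2)\bigr).
\]
For sufficiency, Proposition~\ref{Jacobiancr} says it suffices that $\Re\, p(\lambda) < 0$ for every $\lambda \in \mathrm{Sp}(\CJ)$. For necessity, we invoke the standard converse: if some eigenvalue of $\CJ_{\mathrm{Adam}}$ has nonnegative real part, then exponential convergence from all nearby initial points fails. An unstable eigenvalue gives an unstable manifold. A purely imaginary or zero eigenvalue gives a linearization with a non-decaying mode, and this rules out an exponential rate in a full neighborhood.

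The next step is to locate the eigenvalues of $\CJ$. Because $\nabla_x^2 f \succcurlyeq 0$ and $\nabla_y^2 f \preccurlyeq 0$, the matrix $\CS \preccurlyeq 0$ and $\CA$ is skew-adjoint. For a unit eigenvector $\tz$ of $\CJ$ we have $a = \Re(\tz^*\CJ\tz) = \tz^*\CS\tz \le 0$. We claim $a < 0$ strictly. If $a = 0$, then $\tz^*\CS\tz = 0$ with $\CS$ negative semidefinite, so $\CS\tz = 0$. Then $\CJ\tz = \CA\tz = \lambda \tz$, so $\tz \in \mathrm{EigVec}(\CA) \cap \mathrm{Ker}(\CS)$, which contradicts Assumption~\ref{ibr}. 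Hence every eigenvalue satisfies $a = -|a| < 0$.

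Now we split according to $|b|$ versus $|a|$.
\begin{itemize}
\item If $|b| \le |a|$, then $a^2 - b^2 \ge 0$, so $\Re\, p(\lambda) = \frac{1}{\sqrt\epsilon}\bigl(-|a| - c(a^2-b^2)\bigr) < 0$ for every $h > 0$. These eigenvalues impose no constraint.
\item If $\lambda \in \widetilde{\mathrm{Sp}(\CJ)}$, then $b^2 - a^2 > 0$, and $\Re\, p(\lambda) < 0$ is equivalent to $c(b^2 - a^2) < |a|$. Substituting the value of $c$, this reads $h < \frac{2\sqrt{\epsilon}(1-\beta)|a|}{(1+\beta)(b^2-a^2)}$.
\end{itemize}
Taking the minimum over the finite set $\widetilde{\mathrm{Sp}(\CJ)}$, which is nonempty by Assumption~\ref{ibr}, gives exactly the stated threshold. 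At the boundary value of $h$, some $\Re\, p(\lambda) = 0$, and for larger $h$ it is positive. In both cases exponential local convergence fails, which yields the "only if" direction.

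The main obstacle is the necessity direction at the boundary $\Re\, p(\lambda) = 0$, since linearization is inconclusive there. We handle it by noting that "exponential rate" as characterized through Proposition~\ref{Jacobiancr} is the standard notion of exponential stability. For a $C^1$ system, exponential stability is equivalent to the linearization being Hurwitz (the converse Lyapunov theorem, Khalil Thm.~4.15). Citing this equivalence gives the iff cleanly.

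A secondary point is to check that $(\tx^*,\ty^*)$ is an equilibrium of \ref{CADAM}. At that point the gradients vanish, so the gradient terms are zero. The gradient of $\lVert\cdot\rVert_{1,\epsilon}$ involves $\nabla f/\sqrt{(\nabla f)^2+\epsilon}$ times a Hessian, which also vanishes there. This is implicitly already covered by Proposition~\ref{jacobianadam}.
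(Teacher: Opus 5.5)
Your proposal is correct and follows essentially the same route as the paper. You use Proposition~\ref{jacobianadam} together with the spectral mapping theorem to reduce everything to the sign of $\Re(\lambda) - c\,(\Re(\lambda)^2-\Im(\lambda)^2)$, and the case split on $|\Im(\lambda)|$ versus $|\Re(\lambda)|$ then gives exactly the stated threshold.

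The differences are refinements rather than a different method. You prove $\Re(\lambda)<0$ directly via the Rayleigh-quotient/kernel argument, where the paper cites Lemma~\ref{wangresult}. You also make the ``only if'' direction rigorous, including the boundary case $\Re\,p(\lambda)=0$, through the converse linearization theorem for exponential stability. The paper leaves that step implicit.
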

Under Assumption \ref{ibr}, $\Im(\lambda)^2 - \Re(\lambda)^2 > 0$, so the upper bound is positive. By Proposition \ref{Jacobiancr}, the local convergence of \ref{CADAM} is governed by the eigenvalues of $\CJ_{\mathrm{Adam}}$, and the convergence rate equals the largest real part of these eigenvalues. The polynomial relationship between $\CJ_{\mathrm{Adam}}$ and $\CJ$ in Proposition \ref{jacobianadam} is key to the proof. The full proof is in Appendix \ref{Appendix_S42}.

% \begin{remark}
% Our proof of Theorem~\ref{Corollary: Continuous Local convergence for general case} also yields a bound for games violating Assumption~\ref{ibr}. Yet in such cases, \ref{CADAM} behaves more like a minimization algorithm than a zero-sum game solver, which contrary to this work’s core motivation. We therefore choose not to discuss them.
% \end{remark}

%The proof of Theorem \ref{Corollary: Continuous Local convergence for general case} relies on the analysis of the locations of eigenvalues of $\CJ_{\text{Adam}}$ . During the analysis, the polynomial relation between $\CJ_{\text{Adam}}$ and $\CJ$ in Proposition \ref{jacobianadam} plays an important role. The detailed proof of this theorem is deferred to Appendix \ref{Appendix_S42}.

It is interesting to ask how well the above theorem predicts the behavior of the original discrete-time \ref{Adam}. We establish the following local convergence result for \ref{Adam}, and the detailed proof is provided in Appendix \ref{Appendix :proof of local convergence of Adam}:  

\begin{thm}\label{Thm: Discrete Local convergence for general case}
Under Assumption \ref{ibr}, let $\beta\in (-1, 1)$ and $\rho\in(0,1)$.  Then \ref{Adam} achieves local convergence to the local equilibrium $(\tx^*,\ty^*)$ with an exponential rate iff the step-size $h$ satisfies: 
\begin{align}\label{h22}
    0<h<\min_{\lambda \in \mathrm{Sp}(\CJ)}\frac{2\sqrt{\epsilon}(1-\beta^2)|\Re(\lambda)|}{(1+\beta^2)|\lambda|^2+2\beta\left(\Im(\lambda)^2-\Re(\lambda)^2\right)}
\end{align}
\end{thm}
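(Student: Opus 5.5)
We will linearize the discrete map of \ref{Adam} at the fixed point $(\tx^*,\ty^*,\textbf{0},\textbf{0})$, where both momentum states vanish. Near the equilibrium the gradients are small, so $\tilde{\tv}_n,\hat{\tv}_n$ converge geometrically to $O(\|\nabla f\|^2)$. Consequently $\sqrt{\tilde{\tv}/(1-\rho^{n+1})+\epsilon}=\sqrt{\epsilon}+O(\|\tz-\tz^*\|^2)$, and the second moment contributes nothing at first order. The bias corrections $1/(1-\beta^{n+1})$ and $1/(1-\rho^{n+1})$ tend to $1$ geometrically. We therefore treat the system as an asymptotically autonomous map whose limit is heavy-ball-type descent--ascent. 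Writing $\tz=(\tx,\ty)$ and using the signs of the two players, the linearized autonomous iteration is
\[
\tm_{n+1}=\beta\tm_n-(1-\beta)\CJ(\tz_n-\tz^*),\qquad \tz_{n+1}-\tz^*=(\tz_n-\tz^*)+\tfrac{h}{\sqrt{\epsilon}}\tm_{n+1},
\]
where $\tm$ stacks $-\tilde{\tm}$ and $\hat{\tm}$ appropriately. Since the nonlinear and time-varying perturbations decay (the latter geometrically), a standard perturbation argument for discrete systems reduces exponential local convergence to the spectral radius of this linear map being $<1$. Conversely, if some eigenvalue has modulus $>1$, an instability argument rules out convergence for generic initializations. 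The degenerate case of modulus exactly $1$ is excluded by the strict inequality, and is where the ``iff'' requires care.

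Next we diagonalize along eigenvectors of $\CJ$. For $\lambda\in\mathrm{Sp}(\CJ)$, set $c=h(1-\beta)\lambda/\sqrt{\epsilon}$. Eliminating $\tm$, the iteration becomes $z_{n+1}-z_n=\beta(z_n-z_{n-1})-c z_n$. Its characteristic polynomial is
\[
\mu^2-(1+\beta-c)\mu+\beta=0.
\]
Stability holds iff both roots lie in the open unit disk. By the local Nash conditions and Assumption~\ref{ibr}, $\Re(\lambda)\le 0$, and we need $\Re(\lambda)<0$; the kernel condition $\mathrm{EigVec}(\CA)\cap\mathrm{Ker}(\CS)=\{\textbf{0}\}$ gives exactly this, via the argument of \cite{wang2024local} used in Theorem~\ref{Corollary: Continuous Local convergence for general case}. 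Note that $c=0$ would give a root at $1$, which is why this step is needed.

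We then apply a complex Schur--Cohn (Jury) criterion to $\mu^2+a\mu+b$, with $a=c-(1+\beta)$ and $b=\beta$ real, $|\beta|<1$. The roots lie in the unit disk iff $|a-\bar a b|<1-|b|^2$, i.e.
\[
|c-(1+\beta)|\,|1-\beta|\cdot\ldots<1-\beta^2 .
\]
Concretely, $|a(1-\beta)|<1-\beta^2$, which reduces to $|c-(1+\beta)|<1+\beta$ after dividing by $1-\beta>0$. This is wrong for complex $a$, so we will use the correct form $|a-\bar a b|=|a-\beta\bar a|$. Squaring and writing $c=h(1-\beta)\lambda/\sqrt{\epsilon}$ with $\Re(\lambda)<0$ gives an inequality that is quadratic in $h$ with zero constant term. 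Dividing by $h>0$ leaves a linear inequality in $h$:
\[
h\bigl[(1+\beta^2)|\lambda|^2+2\beta(\Im(\lambda)^2-\Re(\lambda)^2)\bigr]<2\sqrt{\epsilon}(1-\beta^2)|\Re(\lambda)|.
\]
The bracket equals $|\lambda+\beta\bar\lambda|^2$-type expressions and hence is positive. Taking the minimum over $\lambda$ yields \eqref{h22}.

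The main obstacle is this algebra: getting the exact Schur--Cohn condition for complex coefficients and checking that the $h$-independent terms cancel, so that the resulting condition is linear. The rigorous handling of the vanishing bias corrections and of the $\tv$-components is the secondary technical step. The $\tv$-components enter only at second order, so their Jacobian block is $\rho\CI$ with $\rho<1$ and is harmless.
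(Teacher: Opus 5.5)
Your route is the paper's. You strip the bias corrections to get an autonomous map $\bar T$, treat the remainder as a perturbation of size $O(Q^n\|\tz_n-\tz^*\|)$, and note that the $\tv$-blocks only contribute the eigenvalue $\rho$. You then reduce the rest to a heavy-ball quadratic for each $\lambda\in\mathrm{Sp}(\CJ)$ and apply Henrici's criterion $|b|<1$, $|a-b\bar a|<1-|b|^2$.

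\textbf{Sign error in the linearization.} As written, your linearization cannot produce the bound. With $\tm=(-\tilde\tm,\hat\tm)$ you have $\tm_{n+1}=\beta\tm_n+(1-\beta)\bigl(-\nabla_x f,\nabla_y f\bigr)(\tx_n,\ty_n)$. Since $\CJ$ is exactly the Jacobian of $(-\nabla_x f,\nabla_y f)$ at $\tz^*$, the recursion is $\tm_{n+1}=\beta\tm_n+(1-\beta)\CJ(\tz_n-\tz^*)$. The quadratic is therefore $\mu^2-(1+\beta+c)\mu+\beta=0$, which is the paper's characteristic polynomial.

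Your version $\mu^2-(1+\beta-c)\mu+\beta$ fails the check $\beta=0$: its nonzero root $1-h\lambda/\sqrt{\epsilon}$ lies outside the unit disk whenever $\Re(\lambda)<0$. Put $k=h(1-\beta)/\sqrt{\epsilon}$. Carrying your $a=c-(1+\beta)$ through the criterion gives $k\,|\lambda-\beta\bar\lambda|^2<-2(1-\beta)(1-\beta^2)|\Re(\lambda)|$, which never holds. With the correct $a=-(1+\beta+c)$ you get $a-\beta\bar a=-(1-\beta^2)-k(\lambda-\beta\bar\lambda)$. Then $|a-\beta\bar a|<1-\beta^2$ becomes exactly your displayed linear inequality. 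The value of $a$ displayed in the paper's proof carries the same slip, but its characteristic polynomial and final bound use the correct sign.

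Two smaller corrections:
\begin{itemize}
\item The bracket is $|\lambda-\beta\bar\lambda|^2=(1-\beta)^2\Re(\lambda)^2+(1+\beta)^2\Im(\lambda)^2$. It is not $|\lambda+\beta\bar\lambda|^2$, which flips the sign of the $2\beta$ term.
\item $\CJ$ need not be diagonalizable. Triangularize $\CJ$ instead, or use the paper's Schur-complement determinant identity. Either works because every block of the linearized map is a polynomial in $\CJ$.
\end{itemize}

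\textbf{The ``only if'' direction.} Neither your sketch nor the paper's argument proves it; the paper's proof establishes only sufficiency. An instability argument for a root of modulus $>1$ says nothing at the endpoint value of $h$, where a root has modulus exactly $1$. Ruling out convergence ``for generic initializations'' is also weaker than what the theorem claims.

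A cleaner route is to differentiate. Suppose $\|\tz_n-\tz^*\|\le C\gamma^n\|\tz_0-\tz^*\|$ holds uniformly on a neighbourhood of $\tz^*$. Then the Jacobian of the $n$-step map at $\tz^*$, which is the product of the one-step Jacobians, has norm at most $C\gamma^n$. For the autonomous map, Gelfand's formula then forces $\varrho(\mathrm{Jac}\,\bar T(\tz^*))\le\gamma<1$. This handles $|\mu|>1$ and $|\mu|=1$ at once. What remains is to show that the geometrically vanishing bias-correction factors do not change the exponential growth rate of these products. Without such an argument, you and the paper have proved only the sufficiency direction.
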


By comparing Theorem \ref{Corollary: Continuous Local convergence for general case} and Theorem \ref{Thm: Discrete Local convergence for general case}, we observe that they share several common properties. One particularly interesting property is the following corollary, which shows that in both theorems there exists a trade-off between the parameters $h$ and $\beta$.

\begin{cor}\label{lmss}
The upper bound in Theorem \ref{Corollary: Continuous Local convergence for general case} is a decreasing function of $\beta \in (-1,1)$. Similarly, the upper bound in Theorem \ref{Thm: Discrete Local convergence for general case} is also a decreasing function of $\beta$ if the minimizer of \eqref{h22} is achieved in $\widetilde{\mathrm{Sp}(\CJ)}$ and $\beta$ satisfies
% \begin{align}\label{localdescreasingbeta}
%     \max_{\lambda \in \mathrm{Sp}(\CJ)}\frac{|\Re(\lambda)|-|\Im(\lambda)|}{|\Re(\lambda)|+|\Im(\lambda)|}<\beta<1
% \end{align}
\begin{align}\label{localdescreasingbeta}
\max_{\lambda \in \widetilde{\mathrm{Sp}(\CJ)}}\frac{\left(|\Re(\lambda)|-|\Im(\lambda)|\right)}{ \left(|\Re(\lambda)|+|\Im(\lambda)|\right)}<\beta<1,
\end{align}
Therefore, within these regions, using a \textbf{smaller} $\beta$ allows \ref{CADAM} and \ref{Adam} to achieve local convergence across a broader range of step sizes h.
\end{cor}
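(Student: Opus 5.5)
For the continuous-time bound in Theorem~\ref{Corollary: Continuous Local convergence for general case}, the plan is to differentiate each term of the minimum separately. For fixed $\lambda\in\widetilde{\mathrm{Sp}(\CJ)}$, the expression $\frac{2\sqrt{\epsilon}\,|\Re(\lambda)|}{\Im(\lambda)^2-\Re(\lambda)^2}$ is a positive constant in $\beta$; the denominator is positive by Assumption~\ref{ibr}. So each term is that constant times $g(\beta)=\frac{1-\beta}{1+\beta}$. Since $g'(\beta)=-\frac{2}{(1+\beta)^2}<0$ on $(-1,1)$, every term is strictly decreasing. A minimum over a finite, $\beta$-independent index set of decreasing functions is again decreasing, because the same $g(\beta)$ factors out of the minimum. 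This gives the first claim.

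For the discrete bound \eqref{h22}, I again fix $\lambda$. I write $a=|\Re(\lambda)|$, $b=|\Im(\lambda)|$, $D=b^2-a^2>0$ and $r^2=a^2+b^2=|\lambda|^2$, and study
\[
F(\beta)=\frac{1-\beta^2}{(1+\beta^2)r^2+2\beta D}.
\]
By assumption the minimizer $\lambda^\star$ of \eqref{h22} lies in $\widetilde{\mathrm{Sp}(\CJ)}$. To keep the argument honest, I will read this as: on the stated $\beta$-range the minimum is attained at such eigenvalues. Then the bound equals $\min_{\lambda\in\widetilde{\mathrm{Sp}(\CJ)}}2\sqrt{\epsilon}\,a_\lambda F_\lambda(\beta)$. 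It therefore suffices to show that each $F_\lambda$ with $\lambda\in\widetilde{\mathrm{Sp}(\CJ)}$ is decreasing on \eqref{localdescreasingbeta}, since a minimum of decreasing functions is decreasing.

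The derivative sign reduces to a quadratic. Differentiating gives $F'(\beta)\propto -2\beta\,[(1+\beta^2)r^2+2\beta D]-(1-\beta^2)(2\beta r^2+2D)$. This simplifies to
\[
-2\bigl[D\beta^2+2r^2\beta+D\bigr].
\]
So $F'<0$ iff $q(\beta)=D\beta^2+2r^2\beta+D>0$. The roots of $q$ are $\beta_\pm=\frac{-r^2\pm\sqrt{r^4-D^2}}{D}$. Using $r^4-D^2=(r^2-D)(r^2+D)=4a^2b^2$, we get $\beta_\pm=\frac{-(a^2+b^2)\pm 2ab}{b^2-a^2}$. Hence $\beta_+=-\frac{(b-a)^2}{b^2-a^2}=\frac{a-b}{a+b}$ and $\beta_-=\frac{a+b}{a-b}<-1$. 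Since $D>0$, $q>0$ exactly for $\beta>\frac{a-b}{a+b}$ within $(-1,1)$. Taking the maximum over $\lambda\in\widetilde{\mathrm{Sp}(\CJ)}$ gives precisely the range \eqref{localdescreasingbeta}. I also need the denominator of $F$ to stay positive so that $F$ is well defined; this holds since $(1+\beta^2)r^2+2\beta D\ge (1+\beta^2)r^2-2|\beta|r^2=(1-|\beta|)^2r^2>0$.

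The main obstacle is not the algebra but the interaction with the minimum. Eigenvalues outside $\widetilde{\mathrm{Sp}(\CJ)}$ have $D\le0$, and their terms may increase in $\beta$. The hypothesis on the minimizer is what removes them, and I would state explicitly that it is used for all $\beta$ in the range, so that the envelope is the minimum of decreasing branches only. Finally, since the admissible step-size interval in both theorems is $(0,\text{bound})$, a decreasing bound means a smaller $\beta$ admits a larger range of $h$, which is the concluding statement.
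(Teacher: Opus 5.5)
Your proof is correct and is exactly the direct computation the paper leaves implicit (the corollary is stated without a written proof). For the continuous bound, the factor $(1-\beta)/(1+\beta)$ pulls out of the minimum. For \eqref{h22}, the sign of the derivative reduces to $D\beta^2+2r^2\beta+D$, whose relevant root is $\frac{a-b}{a+b}$, and the minimizer hypothesis is used, as you say, to discard the terms with $D\le 0$. One small addition: strictness (each term being a \emph{positive} multiple of a decreasing function) needs $\Re(\lambda)\neq 0$, which comes from Lemma~\ref{wangresult} rather than from the eigenvalue part of Assumption~\ref{ibr} alone.
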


The interval in (\ref{localdescreasingbeta}) can be very close to $(-1,1)$ when $|\Im(\lambda)|$ is much larger than $|\Re(\lambda)|$, as is the case in typical games such as bilinear games and GANs. Corollary \ref{lmss} reflects a fundamental difference in the role of $\beta$ between zero-sum games and minimization. In minimization problems, it is known that a larger $\beta$ can enable Adam to achieve local convergence with a wider range of step sizes \citep{o2015adaptive,bock2019proof}. However, as shown by Corollary \ref{lmss}, to achieve a similar effect, we need to use a smaller $\beta$ in zero-sum games. We illustrate this point in Figure \ref{lcpicture}. More experiments are presented in Appendix \ref{Appendix_S45}.

Finally, Theorem \ref{Corollary: Continuous Local convergence for general case} and \ref{Thm: Discrete Local convergence for general case} imply Adam-DA always diverge in the important class of bilinear games:

\begin{cor}\label{Corollary: Continuous Local convergence for bilinear case} For bilinear games $f(\tx, \ty)=\tx^{\top}A\ty$,  \ref{CADAM} and \ref{Adam} always diverge regardless of the choice of parameters $\beta$, $\rho$, $\epsilon$ and $h$.
\end{cor}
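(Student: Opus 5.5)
For the bilinear game $f(\tx,\ty)=\tx^{\top}A\ty$, the plan is to compute the Jacobian $\CJ$ from \eqref{jacgda} and then show that the step-size conditions in Theorems~\ref{Corollary: Continuous Local convergence for general case} and~\ref{Thm: Discrete Local convergence for general case} can never hold. The diagonal blocks $\nabla^2_x f$ and $\nabla^2_y f$ vanish, so $\CS=\textbf{0}$ and
\[
\CJ=\CA=\begin{bmatrix}\textbf{0} & -A\\ A^{\top} & \textbf{0}\end{bmatrix},
\]
which is a real skew-symmetric matrix. Its spectrum is therefore purely imaginary, namely $\pm i\sigma_k(A)$ together with possibly $0$. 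Hence every $\lambda\in\mathrm{Sp}(\CJ)$ has $\Re(\lambda)=0$.

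For Continuous Adam-DA I would bypass Theorem~\ref{Corollary: Continuous Local convergence for general case} and use Proposition~\ref{jacobianadam} directly. This avoids worrying whether Assumption~\ref{ibr} holds, since $\mathrm{Ker}(\CS)$ is the whole space and the assumption actually fails. If $\lambda=i s$ is an eigenvalue of $\CJ$, then
\[
\frac{1}{\sqrt{\epsilon}}\Bigl(\lambda-\frac{h(1+\beta)}{2\sqrt{\epsilon}(1-\beta)}\lambda^2\Bigr)
\]
is an eigenvalue of $\CJ_{\mathrm{Adam}}$, and its real part is
\[
\frac{h(1+\beta)s^2}{2\epsilon(1-\beta)}\ \ge 0,
\]
with strict inequality whenever $s\neq 0$ (assuming $A\neq 0$). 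An eigenvalue with positive real part gives instability of the linearization, hence of the nonlinear system, so the trajectory cannot converge exponentially from a generic nearby start. The $s=0$ directions contribute zero real parts, which also rule out exponential convergence. In the degenerate case $A=0$, $\CJ_{\mathrm{Adam}}=\textbf{0}$ and there is no exponential convergence either.

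For discrete Adam-DA I would apply the characterization of Theorem~\ref{Thm: Discrete Local convergence for general case}. Its upper bound for $h$ contains the factor $|\Re(\lambda)|=0$ in the numerator for every $\lambda$. The denominator equals $(1+\beta)^2 s^2>0$ when $s\neq0$, since $\Im^2-\Re^2=|\lambda|^2=s^2$. The bound is therefore $0$, and no $h>0$ satisfies it, for every choice of $\beta,\rho,\epsilon$.

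The main obstacle is legitimacy. Theorem~\ref{Thm: Discrete Local convergence for general case} was stated under Assumption~\ref{ibr}, and here the kernel part of that assumption fails. I would therefore check, from the appendix proof of that theorem, that the ``only if'' direction uses only the eigenvalue computation of the linearized Adam-DA map and not the genericity condition. Concretely, I would redo that step: with $\Re(\lambda)=0$, the characteristic polynomial of the discrete linearization at $\lambda=is$ has a root of modulus at least $1$, and the formula shows this modulus is strictly greater than $1$ for $s\ne 0$ and $h>0$. It is also worth noting that at the equilibrium the gradients vanish, so the second-moment variables decouple and $\rho$ drops out; this is why the conclusion is independent of $\rho$.
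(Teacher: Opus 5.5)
Your proposal is correct and is essentially the paper's argument. The paper notes that $\Re(\lambda)=0$ for every $\lambda\in\mathrm{Sp}(\CJ)$ when $f=\tx^{\top}A\ty$, so the step-size windows in Theorems~\ref{Corollary: Continuous Local convergence for general case} and~\ref{Thm: Discrete Local convergence for general case} collapse to the empty condition $0<h<0$; you reproduce this through the real part $h(1+\beta)s^2/(2\epsilon(1-\beta))>0$ obtained from Proposition~\ref{jacobianadam} and through the vanishing discrete bound.

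Your extra caution is well founded and goes beyond the paper. Bilinear games violate the kernel condition of Assumption~\ref{ibr}, and indeed $\Re(\lambda)=0$ there, so the conclusion of Lemma~\ref{wangresult} fails. Moreover, the appendix proves only the sufficiency direction of Theorem~\ref{Thm: Discrete Local convergence for general case}, so your planned check will find no ``only if'' argument to reuse. What actually makes the discrete claim rigorous is your direct root computation for $\hat\mu^2-(1+\beta+\frac{h(1-\beta)}{\sqrt{\epsilon}}\,is)\hat\mu+\beta$, whose Schur--Cohn quantity is $(1-\beta^2)\sqrt{1+h^2s^2/\epsilon}>1-\beta^2$. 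This must be paired with a standard linearized-instability argument for $C^1$ maps that tolerates the exponentially decaying bias-correction term $\CR(n,\tz)$.
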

%Corollary \ref{Corollary: Continuous Local convergence for bilinear case} can be seen from the fact that for bilinear games, $\lvert \Re (\lambda) \lvert =0$ for all $\lambda \in \CJ$, thus the upper bounds in Theorem \ref{Corollary: Continuous Local convergence for general case} and Theorem \ref{Thm: Discrete Local convergence for general case} are always zero. 

Corollary \ref{Corollary: Continuous Local convergence for bilinear case} extends the previous results on the divergence of GDA and MWU in the bilinear zero-sum games \citep{bailey2018multiplicative,bailey2020finite} to the Adam algorithm. This corollary reflects another difference between Adam-DA in zero-sum games and Adam in minimization. In minimization, Adam can always converge to a neighborhood of a local minimum with carefully chosen parameters \citep{zhang2022adam}. However, Corollary \ref{Corollary: Continuous Local convergence for bilinear case} shows that Adam-DA always diverges in the fundamental class of bilinear games.

\section{Implicit Gradient Regularization}\label{IGR}
\begin{figure*}[t]
    \centering
    \subfigure[ResNet with different $\beta$,\ \  Data Set: CIFAR-10]{
        \includegraphics[width=1.5in]{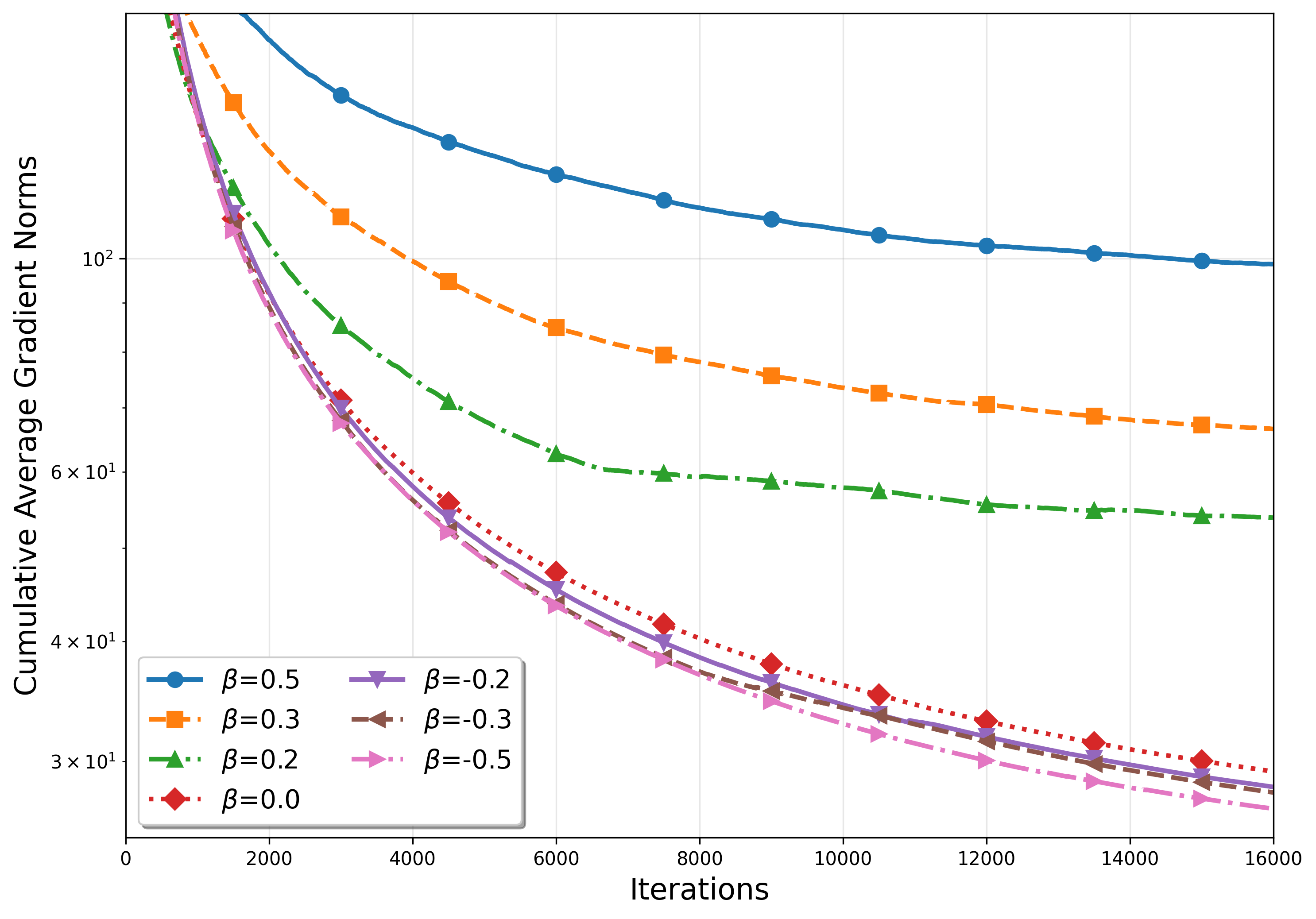}
        \label{p21}
    }
    \subfigure[ResNet with different $\rho$,\ \   Data Set: CIFAR-10]{
        \includegraphics[width=1.5in]{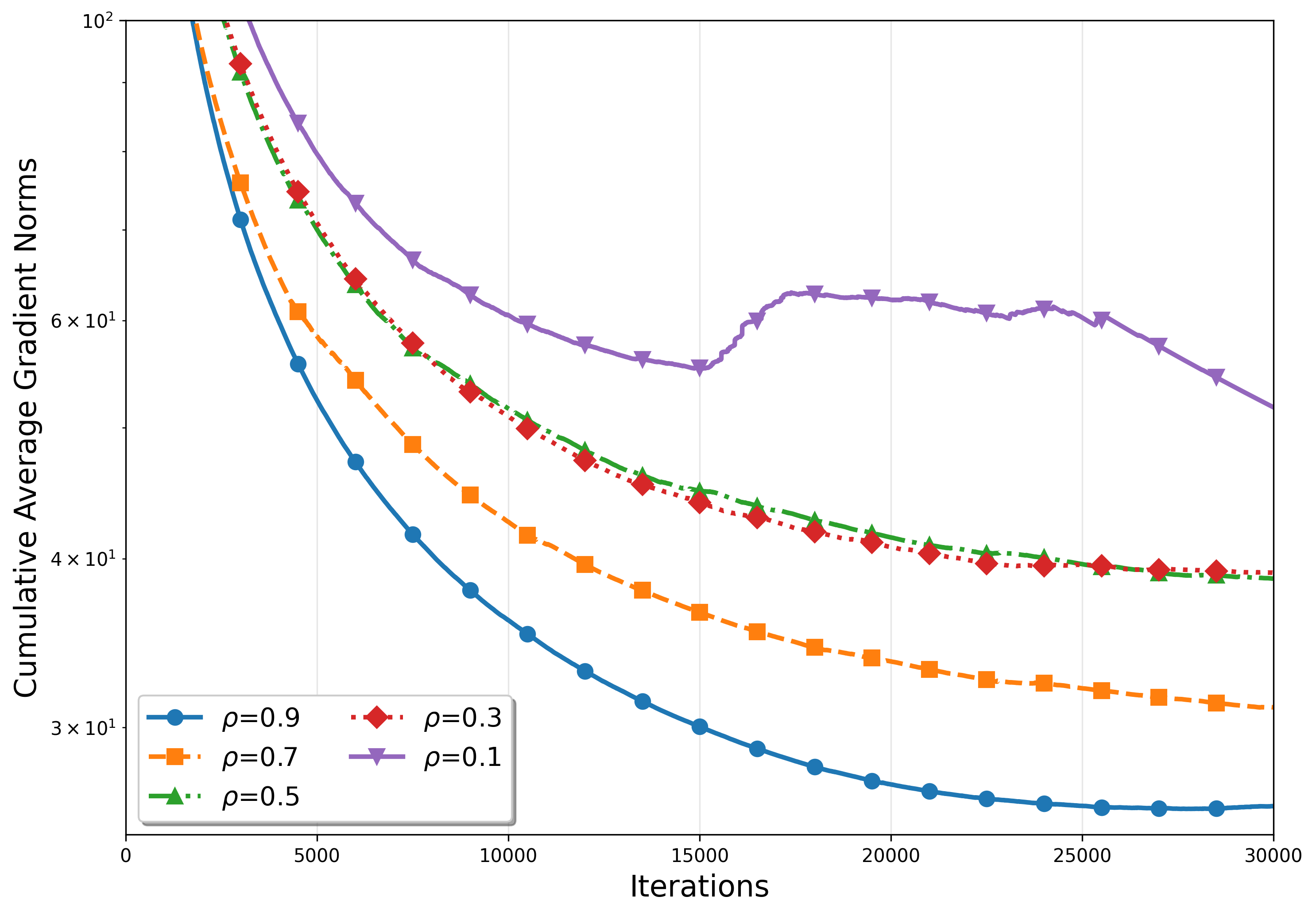}
        \label{p22}
    }
    \subfigure[ResNet with different $\beta$,\ \   Data Set: STL-10]{
        \includegraphics[width=1.5in]{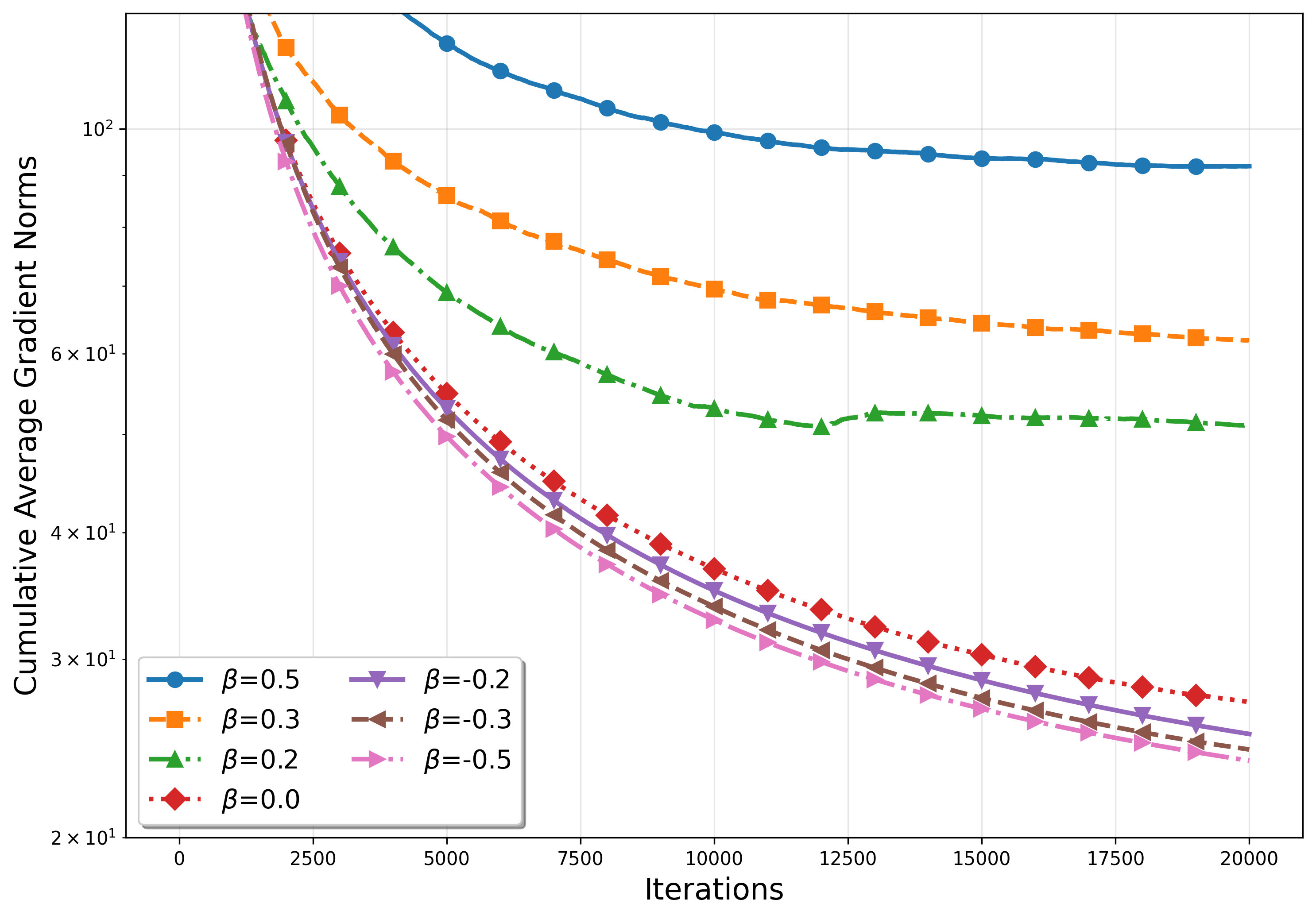}
        \label{p23}
    }
    \subfigure[ResNet with different $\rho$,\ \   Data Set: STL-10]{
	\includegraphics[width=1.5in]{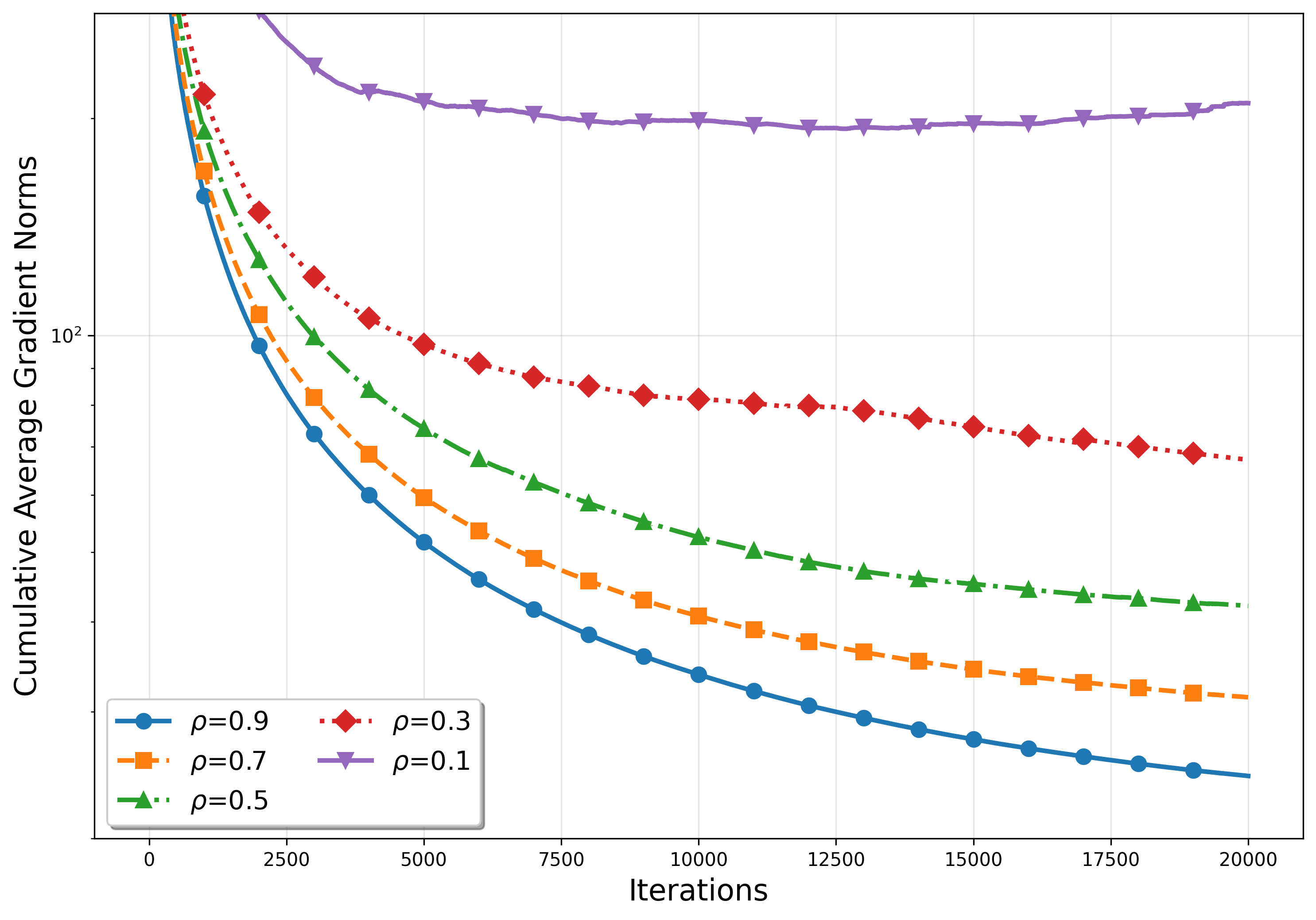}
        \label{p24}
    }
    \subfigure[CNN with different $\beta$,\ \ \ \ \ \ Data Set: CIFAR-10]{
	\includegraphics[width=1.5in]{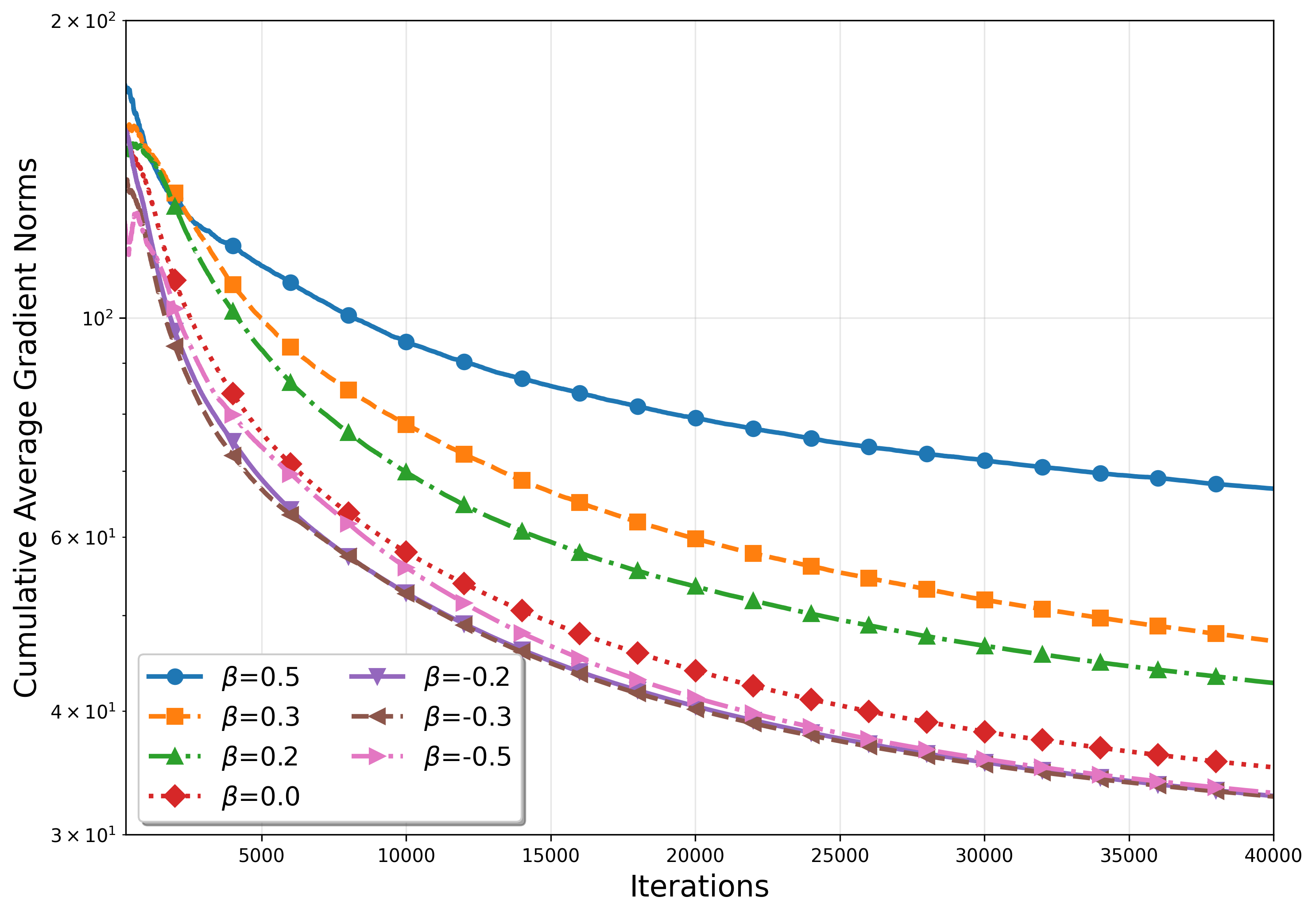}
        \label{p29}
    }
    \subfigure[CNN with different $\rho$,\ \ \ \ \ \ \ Data Set: CIFAR-10]{
	\includegraphics[width=1.5in]{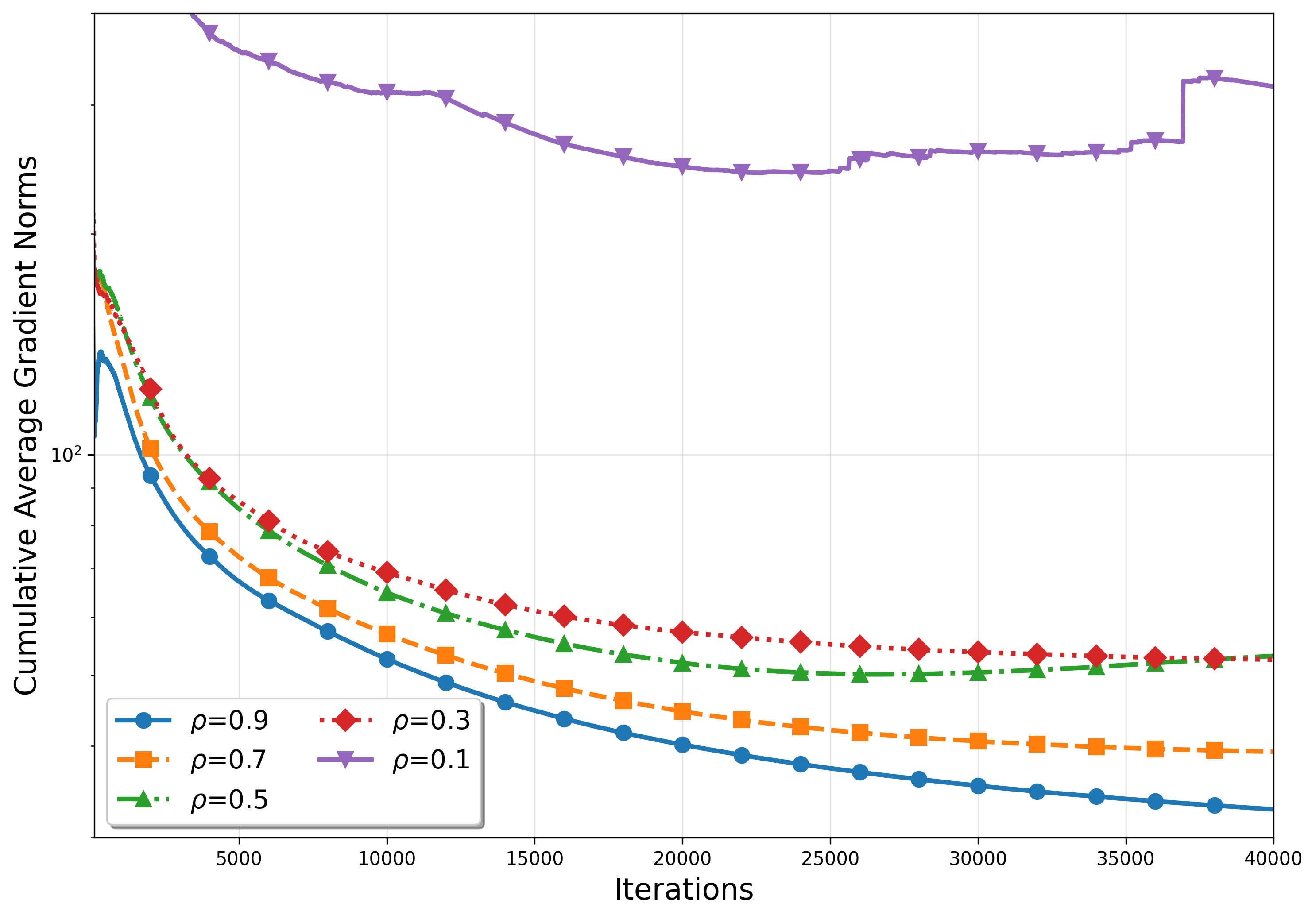}
        \label{p30}
    }
    \subfigure[CNN with different $\beta$,\ \ \ \ \ \ Data Set: STL-10]{
	\includegraphics[width=1.5in]{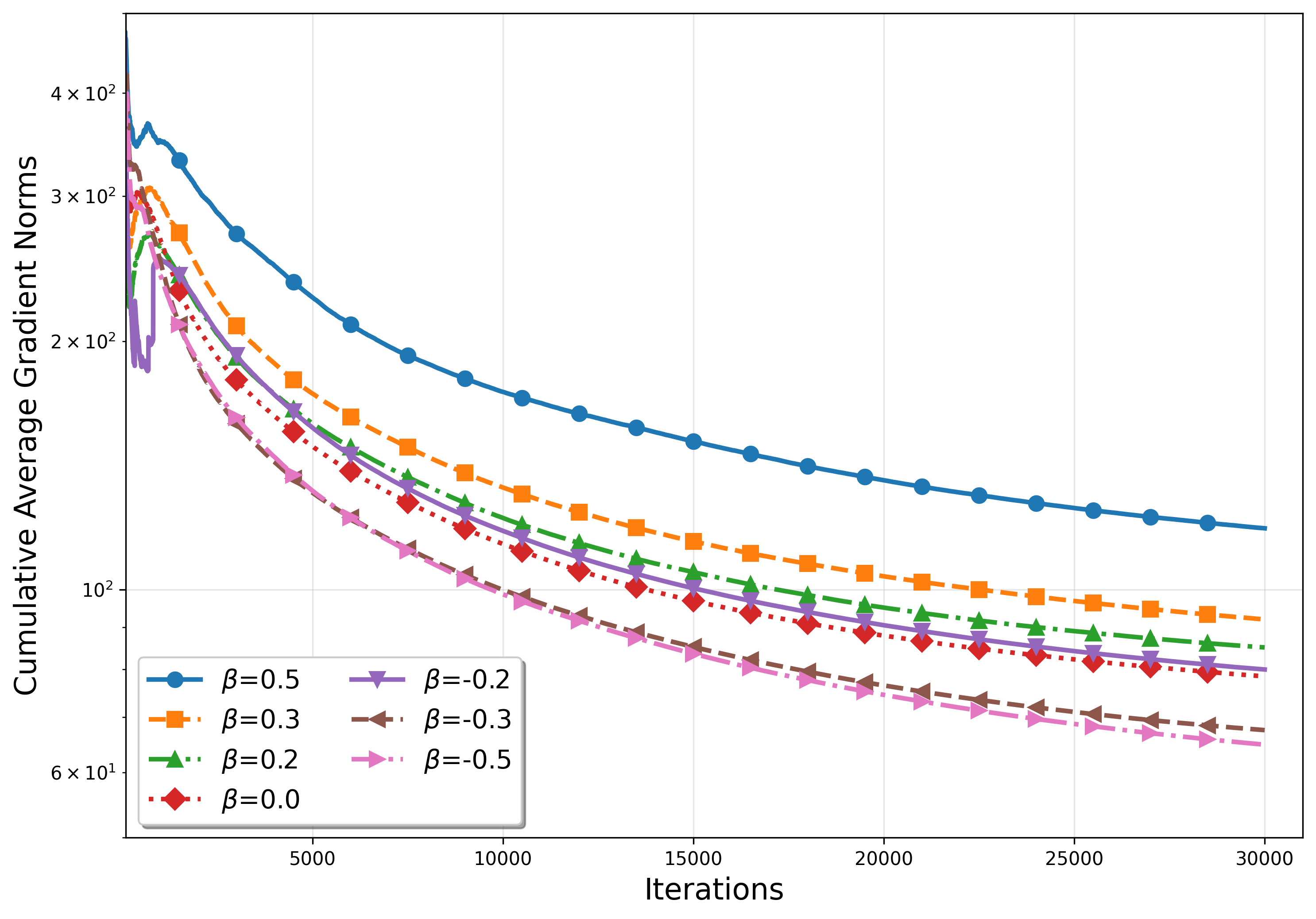}
        \label{p31}
    }
    \subfigure[CNN with different $\rho$,\ \ \ \ \ \ Data Set: STL-10]{
	\includegraphics[width=1.5in]{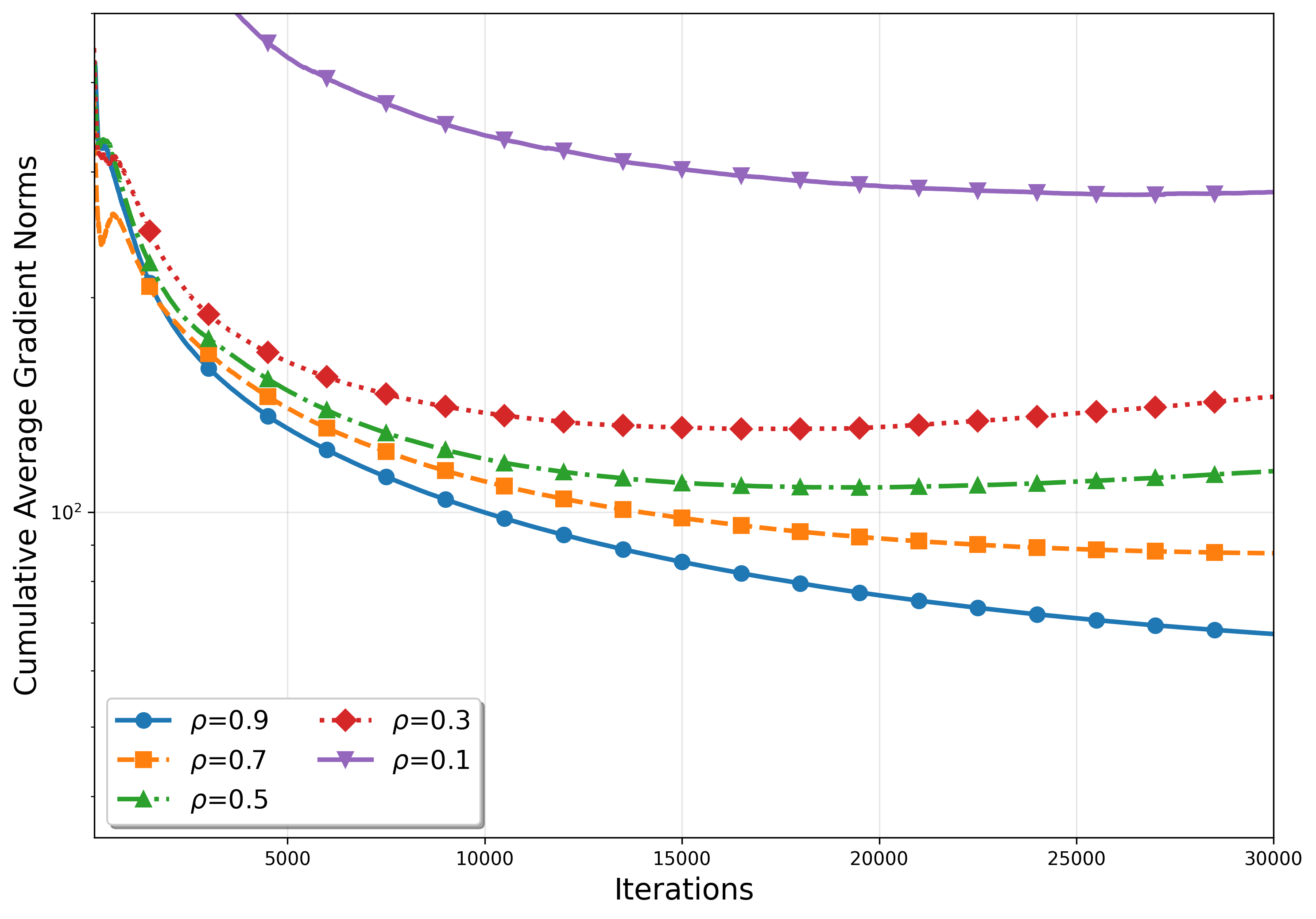}
        \label{p32}
    }
    \caption{\textit{The $\ell_1$ norm of gradients of \ref{Adam} with varying $\beta$ and $\rho$ during GANs training. Datasets: $\textit{CIFAR-10}$ and $\textit{STL-10}$. Architectures: ResNet and CNN. As shown in \ref{p21}, \ref{p23}, \ref{p29}, and \ref{p31}, smaller $\beta$ values result in smaller gradient norms. According to \ref{p22}, \ref{p24}, \ref{p30}, and \ref{p32}, larger $\rho$ values also lead to smaller gradient norms. Both findings support the thesis.}}
    \label{flatnesspic}
\end{figure*}

\begin{figure*}[h]
    \centering
    \subfigure[ResNet with different $\beta$,\ \ Data Set: CIFAR-10]{
	\includegraphics[width=1.5in]{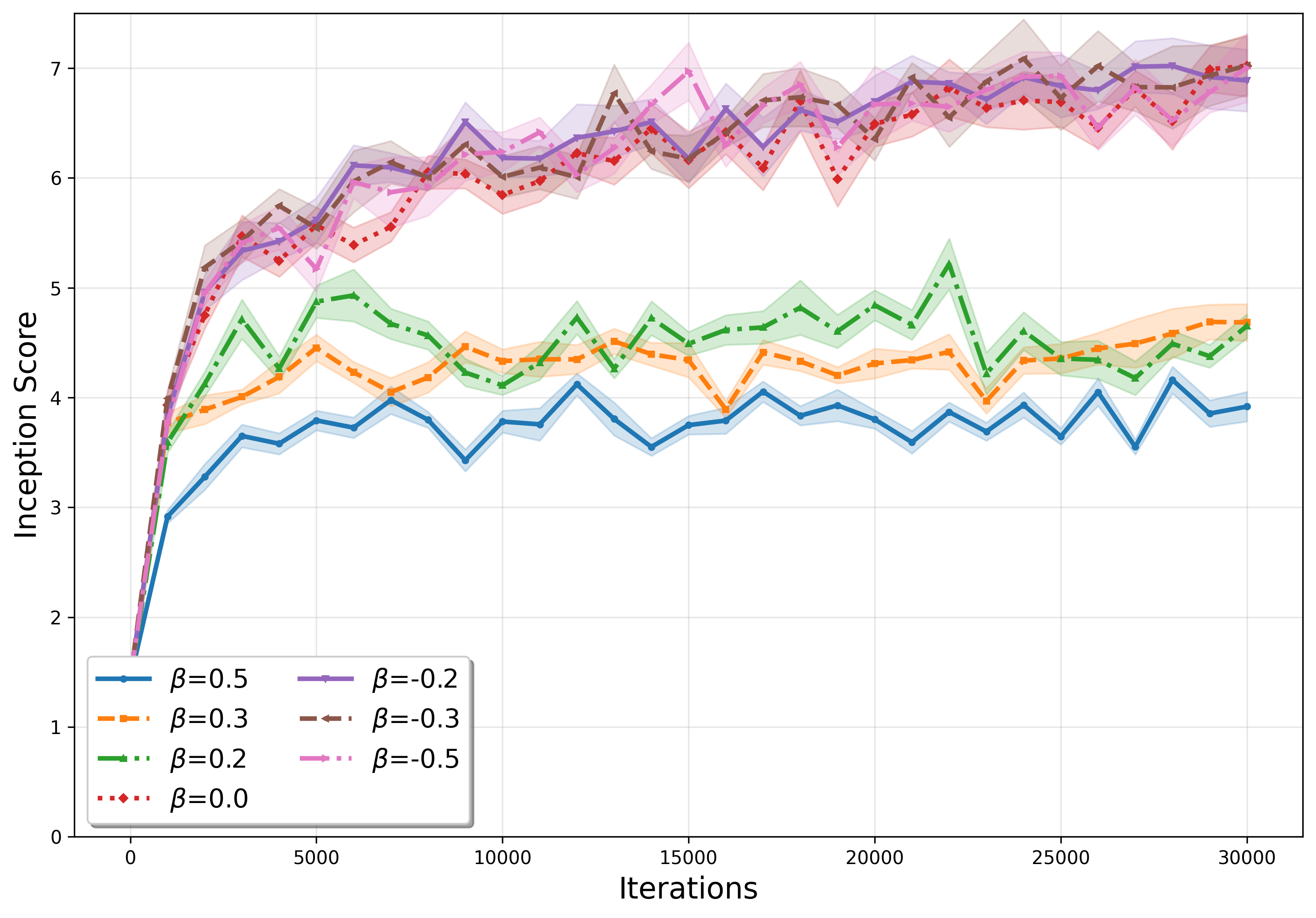}
        \label{p25}
    }
    \subfigure[ResNet with different $\rho$,\ \ Data Set: CIFAR-10]{
	\includegraphics[width=1.5in]{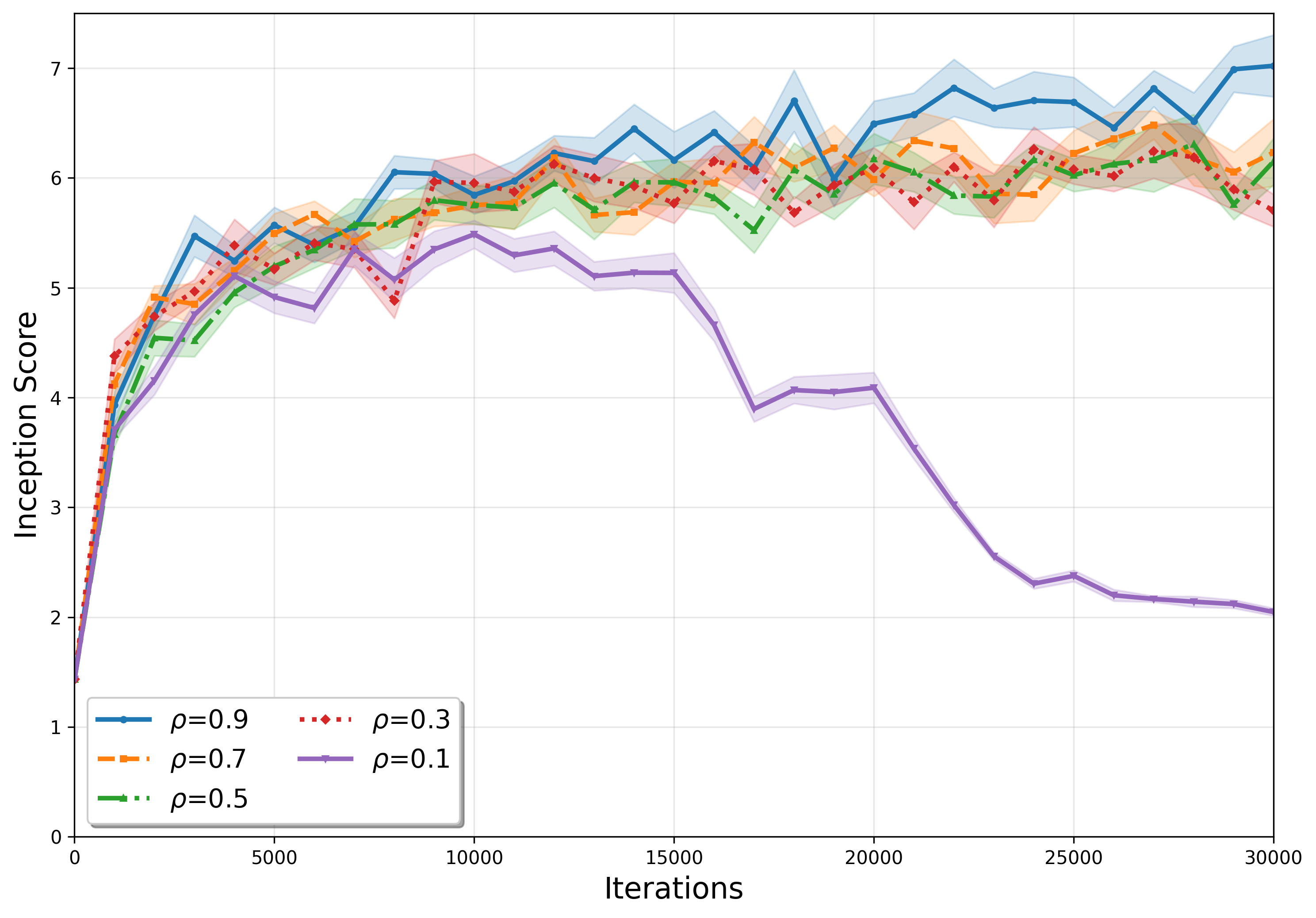}
        \label{p26}
    }
    \subfigure[ResNet with different $\beta$,\ \ Data Set: STL-10]{
	\includegraphics[width=1.5in]{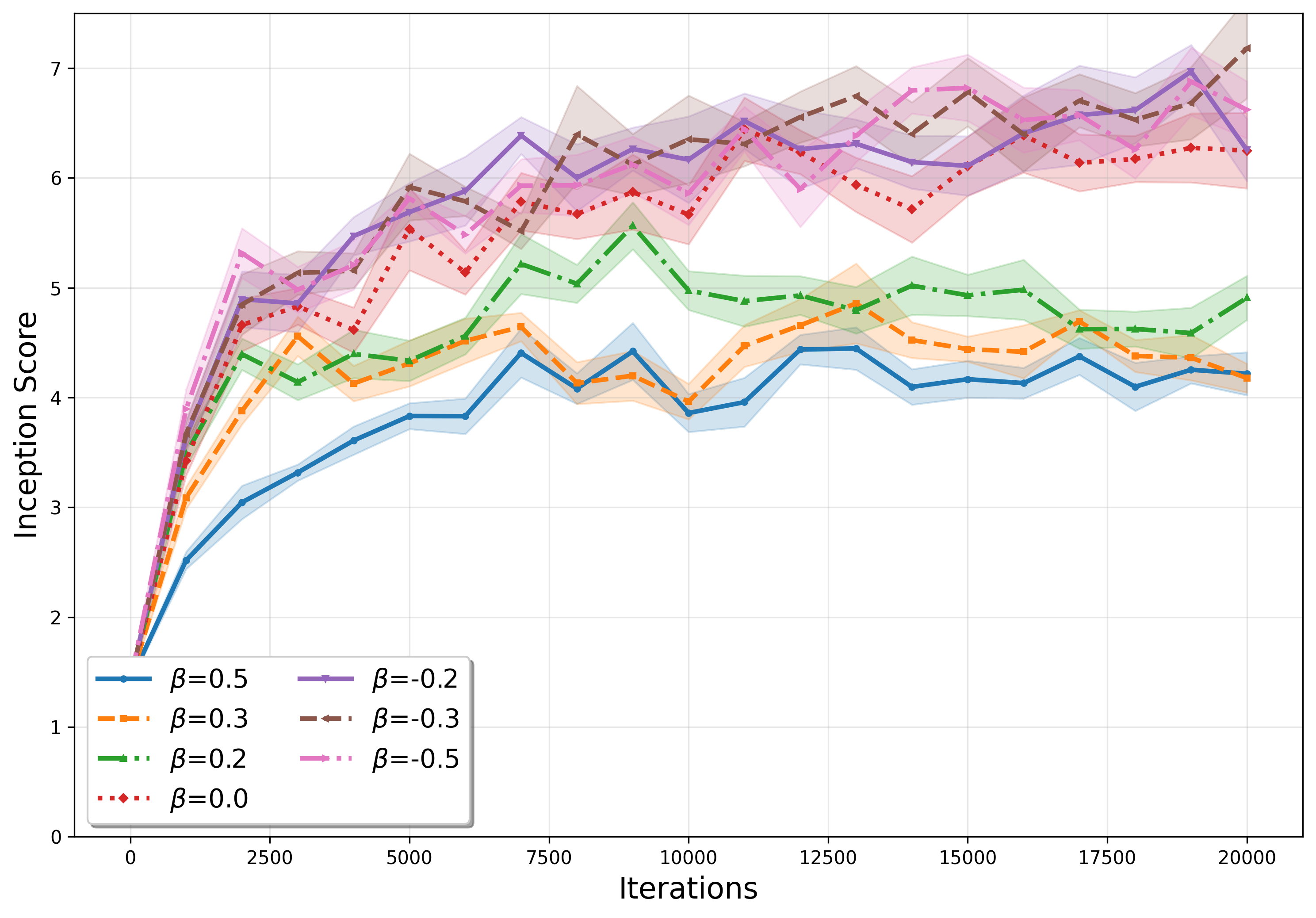}
        \label{p27}
    }
    \subfigure[ResNet with different $\rho$,\ \ Data Set: STL-10]{
	\includegraphics[width=1.5in]{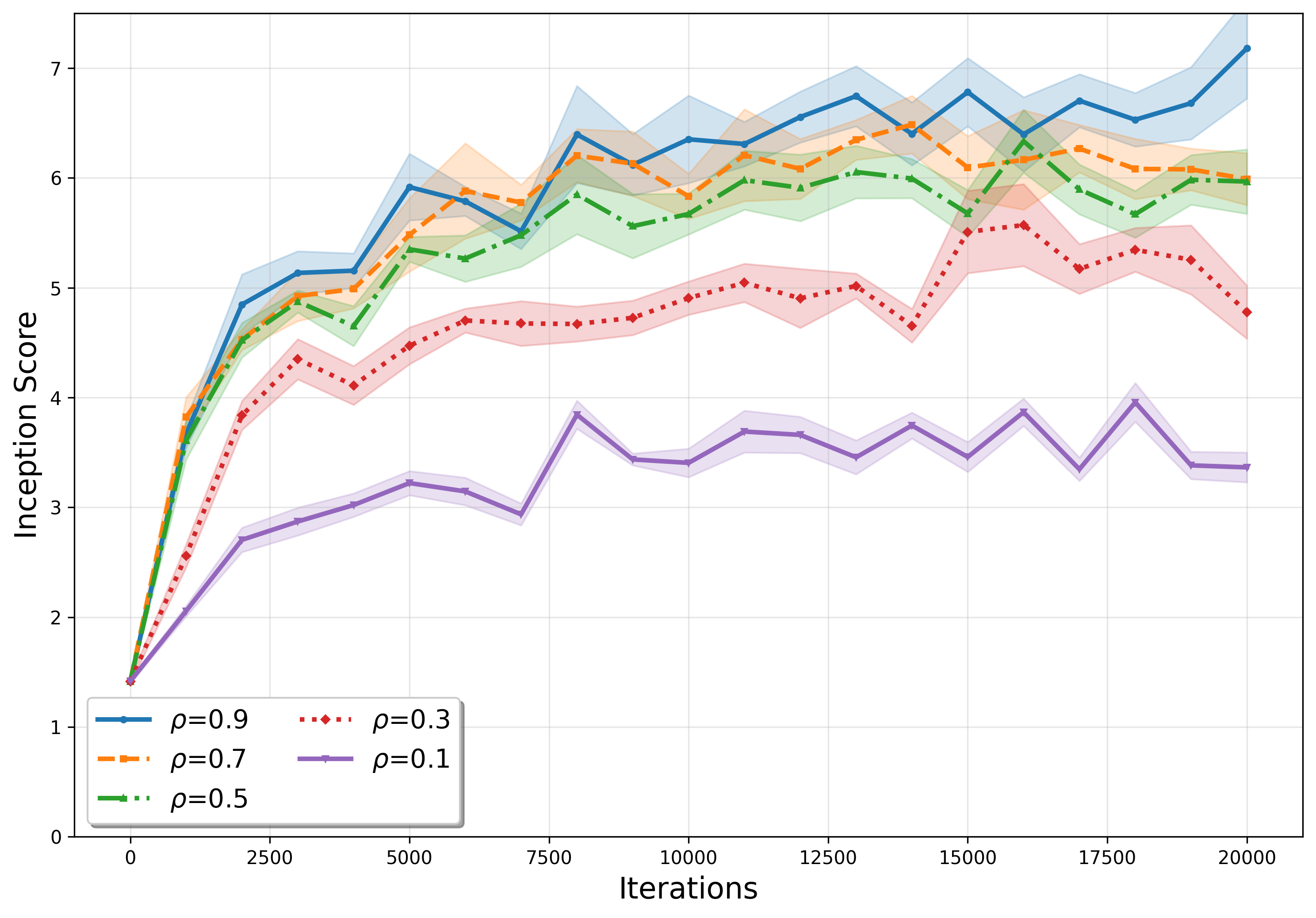}
        \label{p28}
    }
    \subfigure[CNN with different $\beta$,\ \ \ \ \ \ Data Set: CIFAR-10]{
	\includegraphics[width=1.5in]{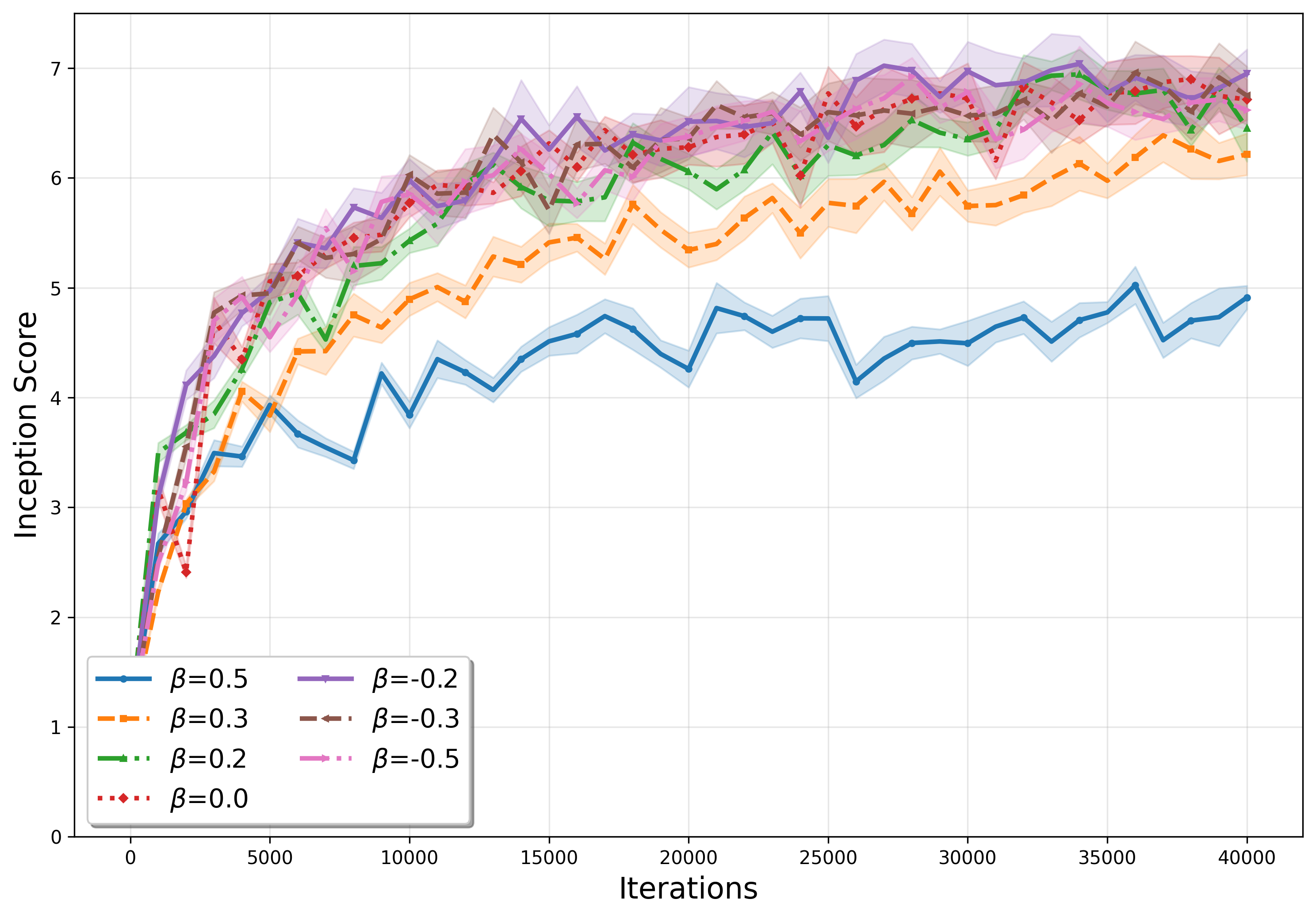}
        \label{p33}
    }
    \subfigure[CNN with different $\rho$,\ \ \ \ \ \ \ Data Set: CIFAR-10]{
	\includegraphics[width=1.5in]{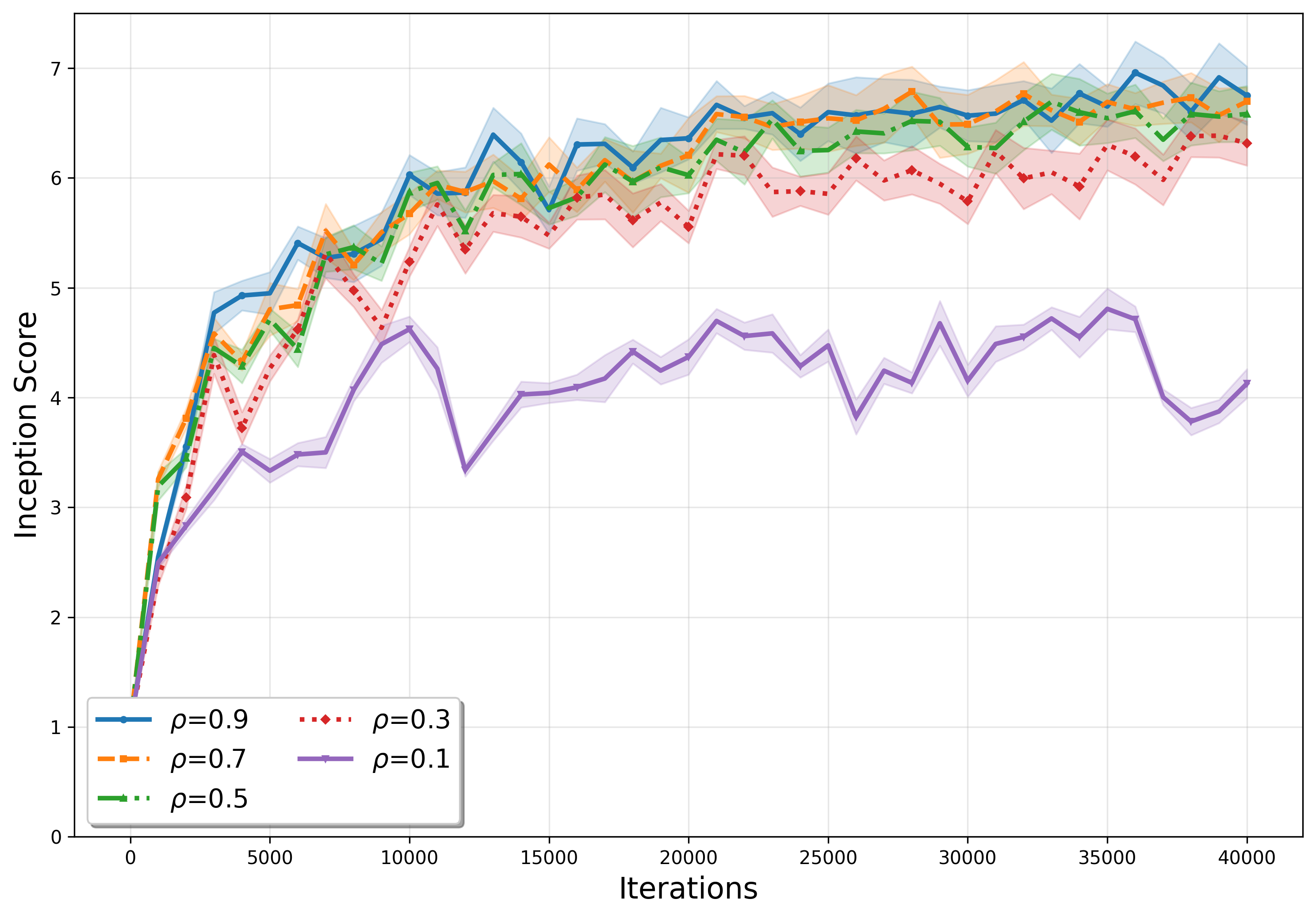}
        \label{p34}
    }
    \subfigure[CNN with different $\beta$,\ \ \ \ \ \  Data Set: STL-10]{
	\includegraphics[width=1.5in]{Pictures/IS_curve_STL.png}
        \label{35}
    }
    \subfigure[CNN with different $\rho$,\ \ \ \ \ \  Data Set: STL-10]{
	\includegraphics[width=1.5in]{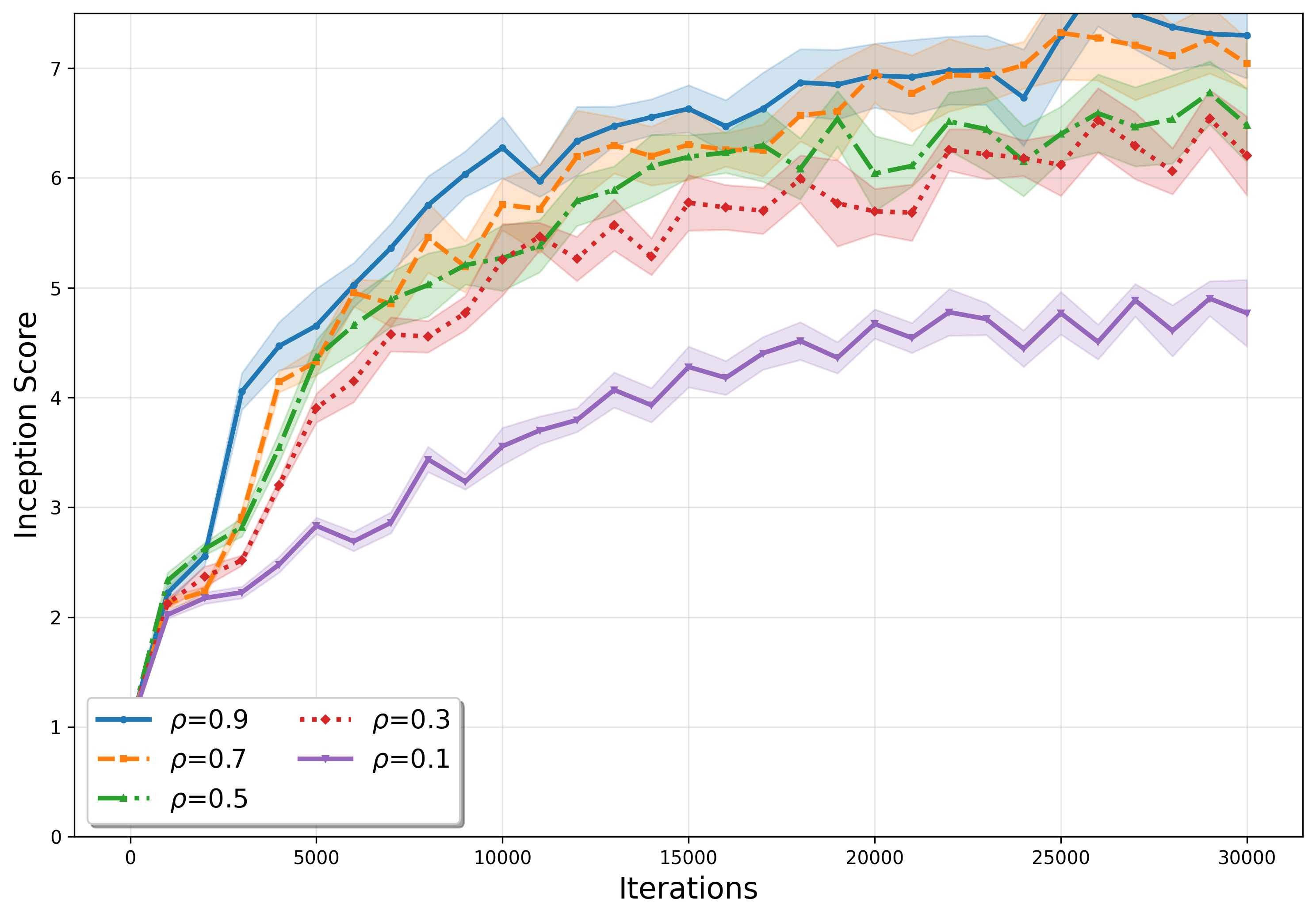}
        \label{p36}
    }
    \caption{\textit{Inception Score for the corresponding experimental settings in Figure \ref{flatnesspic}. A high score implies better training performance. Training algorithms with lower $\ell_1$ gradient norms also have better performance.}}
    \label{Inception_score}
\end{figure*}

Implicit gradient regularization (IGR) refers to the terms involving the gradients of loss functions that emerge from ODEs of the discrete-time algorithms. One example is that in minimization,  \citet{barrett2020implicit} derived the following $\CO(h^3)$-local error ODEs for the Gradient Descent algorithm (GD) $\tx_{t+1} = \tx_t - h \nabla_x f(\tx_t)$:
\begin{align}\label{continuous-GD}
    \dot{\tx}(t) = -\nabla_x f(\tx) - (h/4) \nabla_x\lVert \nabla_xf(\tx)\rVert^2_2,
\end{align}
The IGR term in (\ref{continuous-GD}) is $- (h/4)\nabla_x\lVert \nabla_xf(\tx)\rVert^2_2$, which reveals that besides minimizing function value $f(\tx)$, GD also implicitly minimizing $\lVert \nabla_xf(\tx)\rVert^2_2$, thereby guiding the optimization trajectories towards flatter region of the loss landscapes. Here we use the terminology \textit{flatter} regions to refer to regions in the loss landscape with smaller gradient norm, following \cite{barrett2020implicit}.  Recently, \cite{pmlr-v235-cattaneo24a} generalize this approach to the Adam algorithm in minimization. They found in minimization problems, the trajectories of Adam explore flatter regions of the loss landscape with a \textbf{larger}  $\beta$ and a \textbf{smaller}   $\rho$. Motivated by these findings, we ask the question:

\textit{How the parameters $\beta$ and $\rho$ affect the interaction between the trajectories of \ref{Adam} and the flatter regions of loss landscapes in zero-sum games?}

%Building on previous work in implicit gradient regularization \citep{barrett2020implicit,feng2025continuoustime}, we present our findings as a \textit{Thesis}. This emphasizes that our results stem from ODEss and are validated by experiments, not formal proofs. 

As in Section~\ref{LC}, here we restrict our attention to typical zero-sum games that are distinct from pure minimization problems. We focus on adversarial interaction-dominated games where $\nabla_{xy} f(\mathbf{x}, \mathbf{y})$—which characterizes the adversarial interaction between players' dynamics in \eqref{jacgda}—dominates the terms $\nabla_x^2 f(\mathbf{x}, \mathbf{y})$ and $\nabla_y^2 f(\mathbf{x}, \mathbf{y})$ in magnitude along the training trajectory. This property can be assessed by examining the Jacobian spectrum along the algorithm's iterates: in interaction-dominated games, the eigenvalues typically exhibit large imaginary parts, reflecting the antisymmetric structure induced by the coupling between the two players \cite{pmlr-v89-liang19b}. In the literature, we found that the GAN training problem most closely satisfies this interaction-dominated condition, where eigenvalues of the Jacobian with large imaginary parts are identified as a main cause of GAN training instability  \citep{nagarajan2017gradient}.

Our goal in this section is to argue the following thesis, which highlights the distinct behavior of IGR with Adam-DA in zero-sum games compared to Adam in minimization \citep{pmlr-v235-cattaneo24a}:

\underline{\textbf{\textit{Thesis.}}} In zero-sum games where players' interactions dominate the dynamics, \textit{smaller} $\beta$ and \textit{larger} $\rho$ guides trajectories of \ref{Adam} towards flatter regions of loss landscapes in terms of $\ell_1$ norm, i.e., regions with lower values of $\lVert \nabla_{x} f \rVert_1 + \lVert \nabla_{y} f \rVert_1$.

We present observations from \ref{CADAM} that motivate our thesis in Section \ref{ofc}, and verify it experimentally in Section \ref{expigr}.

\subsection{Observations from ODEs}\label{ofc}

% We introduce the following modified payoff function:
% \begin{align}\label{mpf}
%     \CF(\tx,\ty) = &f(\tx,\ty)
%     +  \frac{h}{2} \CK(\beta,\rho) \left( \lVert \nabla_x f(\tx,\ty) \lVert_{1,\epsilon} -  \lVert \nabla_y f(\tx,\ty) \lVert_{1,\epsilon} \right),
% \end{align}
% where  $ \CK(\beta,\rho) = (1+\beta)/(1-\beta) - (1+\rho)/(1-\rho)$ as in Section \ref{CTM}.

The interesting scenario in understanding implicit gradient regularization occurs when $\epsilon$ is much smaller than the gradient norms\footnote{The default value of $\epsilon$ in PyTorch is $1 \times 10^{-8}$. In practice, gradient norms are much larger than this.}. In this scenario, we take $\epsilon \to 0$, and \ref{CADAM} reduces to the following equations:
% \begin{align}\label{newcon}
% \dot{\tx}(t) = \mu_{\epsilon}(\tx,\ty) \cdot
% &\left[
% -\nabla_x f(\tx,\ty)
% - \frac{h}{2}\CK(\beta,\rho)\,\nabla_x \lVert \nabla_x f(\tx,\ty) \rVert_{1,\epsilon}
% \right. \nonumber\\
% &\left.\qquad
% + \frac{h}{2}\CK(\beta,\rho)\,\nabla_x \lVert \nabla_y f(\tx,\ty) \rVert_{1,\epsilon}
% \right] \nonumber\\
% \dot{\ty}(t) = \nu_{\epsilon}(\tx,\ty) \cdot
% &\left[
% \nabla_y f(\tx,\ty)
% + \frac{h}{2}\CK(\beta,\rho)\,\nabla_y \lVert \nabla_x f(\tx,\ty) \rVert_{1,\epsilon}
% \right. \nonumber\\
% &\left.\qquad
% - \frac{h}{2}\CK(\beta,\rho)\,\nabla_y \lVert \nabla_y f(\tx,\ty) \rVert_{1,\epsilon}
% \right],
% \end{align}
\begin{align*}
&\dot{\tx}(t)
= \mu_{\epsilon}(\tx,\ty)\cdot\Bigl[
-\nabla_x f(\tx,\ty)\nonumber \\
& + \frac{h}{2}\CK(\beta,\rho)\,
\nabla_x\!\Bigl(
\lVert \nabla_y f(\tx,\ty) \rVert_{1,\epsilon}
-\lVert \nabla_x f(\tx,\ty) \rVert_{1,\epsilon}
\Bigr)
\Bigr], \nonumber\\
&\dot{\ty}(t)
= \nu_{\epsilon}(\tx,\ty)\cdot\Bigl[
\nabla_y f(\tx,\ty)\nonumber \\
&+ \frac{h}{2}\CK(\beta,\rho)\,
\nabla_y\!\Bigl(
\lVert \nabla_x f(\tx,\ty) \rVert_{1,\epsilon}
-\lVert \nabla_y f(\tx,\ty) \rVert_{1,\epsilon}
\Bigr)
\Bigr].
\end{align*}

where  $ \CK(\beta,\rho) = (1+\beta)/(1-\beta) - (1+\rho)/(1-\rho)$ as in Section \ref{CTM}.
Here we continue to use $\ell_{1,\epsilon}$ and $\mu_{\epsilon}, \nu_{\epsilon}$ to avoid the non-differentiable problem of the $\ell_{1}$ norm and the zero denominator problem. 

We consider terms about $\lVert \nabla_x f(\tx,\ty) \lVert_{1,\epsilon}$. There are two factors affecting its evolution in above equations: The $x$-player's equation include a gradient \textit{descent} term on $ \CK(\beta,\rho)\lVert \nabla_x f(\tx,\ty) \lVert_{1,\epsilon}$, which aims to \textit{minimize} it:
\begin{align}\label{xplayerlim}
&- \frac{h}{2}\CK(\beta,\rho)  \nabla_x \lVert \nabla_x f(\tx,\ty) \lVert_{1,\epsilon} = \nonumber\\
& - \frac{h}{2}\CK(\beta,\rho) \left(\nabla^2_xf(\tx,\ty) \cdot \mu_{\epsilon}(\tx,\ty) \cdot \nabla_xf(\tx,\ty)\right),
\end{align}
while $y$-player's equation include a gradient \textit{ascent} term to  \textit{maximize} it:
\begin{align}\label{ypalyerlim}
&\frac{h}{2}\CK(\beta,\rho) \nabla_y \lVert \nabla_x f(\tx,\ty) \lVert_{1,\epsilon} =\nonumber \\
& \frac{h}{2}\CK(\beta,\rho) \left(\nabla_{yx} f(\tx,\ty) \cdot \mu_{\epsilon}(\tx,\ty) \cdot \nabla_xf(\tx,\ty)\right).
\end{align}

The proof of (\ref{xplayerlim}) and (\ref{ypalyerlim}) are provided in Appendix \ref{Appendix_S41}. By comparing (\ref{xplayerlim}) and (\ref{ypalyerlim}), we can predict that the evolution of $\lVert \nabla_x f(\tx,\ty) \rVert_{1,\epsilon}$ in the competition between the two players depends on the relative sizes of $\nabla^2_xf(\tx,\ty)$ and $\nabla_{yx}f(\tx,\ty)$. Moreover, under the condition that the players' interactions dominate the overall dynamics, we have $\nabla_{yx}f(\tx,\ty) \gg \nabla^2_xf(\tx,\ty)$, i.e., the matrix $\nabla_{yx}f(\tx,\ty)$ dominates  $\nabla^2_{x}f(\tx,\ty)$ in magnitude. This indicates that the $x$-player's impact on the evolution of $\lVert \nabla_x f(\tx,\ty) \rVert_{1,\epsilon}$ is negligible compared to the $y$-player’s, so we can focus on (\ref{ypalyerlim}) to understand how various parameters affect its evolution. From (\ref{ypalyerlim}), the IGR causes the $y$-player to perform gradient ascent on $\lVert \nabla_x f(\tx,\ty) \lVert_{1,\epsilon}$, effectively trying to maximize it. To encourage the trajectories of \ref{Adam} to explore flatter regions of the loss landscapes, this effect should be reduced. This can be achieved by selecting a smaller $\beta$ and a larger $\rho$, which lower $\CK(\beta,\rho)$; adjusting these parameters accordingly helps minimize its value. 

The above argument also applies to the evolution of $\lVert \nabla_y f(\tx,\ty) \rVert_{1,\epsilon}$, due to the symmetry between the two players. In particular, a smaller $\beta$ and a larger $\rho$ are expected to result in lower values of $\lVert \nabla_y f(\tx,\ty) \rVert_{1,\epsilon}$ along the trajectories. Thus, we obtain a justification of the thesis proposed above through \ref{CADAM}.

\subsection{Experiments}\label{expigr}

This section presents experimental results on GAN training to support our thesis. GAN training is a zero-sum game with strong adversarial interactions, aligning well with our thesis assumptions. Our goal here is \textbf{not} to outperform existing methods, but to validate the IGR effect of \ref{Adam}.

\textbf{Experimental Setting.}  Our setup generally follows the improved Wasserstein GANs framework \citep{gulrajani2017improved}. Data sets include CIFAR-10 ($32 \times 32$ resolution) and STL-10 ($64 \times 64$ resolution). Network architectures are ResNet and CNN. We train GANs using the \ref{Adam} with different $\beta$  and $\rho$. Both generator and discriminator use the learning rate $2 \times 10^{-4}$. The batch size is $64$. Future details are presented in Appendix \ref{Appendix_S5}. We use the following cumulative average gradient norms for visualization:
\begin{align*}
\mathrm{AvgS}_{\beta,\rho}(t) = \frac{1}{t}
\sum^{t}_{s=1} \left( \lVert \nabla_x f(\tx_s,\ty_s) \lVert_{1} +  \lVert \nabla_y f(\tx_s,\ty_s) \lVert_1  \right).
\end{align*}
Smaller $\mathrm{AvgS}_{\beta,\rho}$ indicate that the algorithms' trajectories are exploring flatter regions of loss landscapes.

\textbf{Experimental Results.} In Figure \ref{flatnesspic}, we show how the cumulative average gradient norms change during training. To evaluate the impact of $\beta$, we fix a value of $\rho = 0.9$ and plot the evolution of these norms for different $\beta$ values in Figures \ref{p21}, \ref{p23}, \ref{p29}, and \ref{p31}. We observe that smaller $\beta$ values lead to smaller gradient norms, supporting our thesis about the role of $\beta$. Similarly, to assess the effect of $\rho$, we fix $\beta = 0$ and plot the norms under different $\rho$ values in Figures \ref{p22}, \ref{p24}, \ref{p30}, and \ref{p32}. Larger $\rho$ values result in smaller gradient norms, which supports our thesis regarding $\rho$. The experimental results are consistency across different architectures and data sets.

% \begin{wraptable}{r}{0.6\textwidth}
%   \input{Tex/Table_2}
% \end{wraptable}
In Figure \ref{Inception_score}, we present the Inception Scores (IS) of various GANs trained by \ref{Adam}. IS is a widely used metric for evaluating the performance of GAN training, with higher scores indicating better performance \citep{salimans2016improved}. 
We find models that exhibit lower cumulative average gradient norms tend to achieve higher IS. This is analogous to the phenomena observed in minimization problems, where flatter regions tend to generalize better \citep{foret2020sharpness}. 

%It suggests a potential connection between the geometric properties such as flatness of regions in min-max loss landscapes and the enhanced performance of model parameters within those regions.

\section{Conclusions}
In this paper, we develop an ODE-based framework for analyzing the dynamics of Adam-DA in zero-sum games. We focus on how the algorithm's parameters shape two key aspects of the dynamics, namely local convergence and implicit gradient regularization. For both aspects, we show that these parameters influence Adam-DA in a way that is exactly opposite to Adam in minimization problems. This provides a theoretical explanation for empirical observations such as the benefits of negative momentum in GAN training \cite{gidel2019negative}, and it extends prior separation results between game dynamics and minimization dynamics from simpler methods without adaptivity or momentum \cite{bailey2018multiplicative,bailey2020finite} to the Adam-DA algorithm. An interesting direction in the future is to study dynamics of Adam in more complex and practical settings, such as the multi-player general-sum games, where the roles of momentum and adaptivity are far from well-understood.

%Several directions remain open for future works. First, our analysis is limited to the deterministic setting, and extending it to the stochastic case would be of great interest, where stochastic differential equations offer powerful tools for continuous-time analysis \citep{li2017stochastic,malladi2022sdes,compagnoni2024sdes}. Second, our experiments in Section \ref{IGR} suggest a potential link between the flatness of regions in min-max loss landscapes in terms of gradient norms and the improved performance of model parameters in those regions. Establishing a formal relation between these two, analogous to the connections already established in minimization problems \citep{haddouche2025a}, is an important problem for future study.

% \paragraph{Reproducibility Statement.} All theoretical results in this paper are accompanied by rigorous mathematical proofs, and the assumptions used are explicitly stated. To ensure reproducibility, the experimental code is provided in the supplementary materials.

% \paragraph{The Use of Large Language Models.} This paper utilized a large language model for grammar checking and polishing during its writing process. The code for some experiments in Section 5 of this paper was completed with the assistance of a large language model. The authors of this paper have reviewed all content generated by the large language model and confirm and take responsibility for it.
\section*{Acknowledgements}
This work was supported by Danish Data Science Academy, which is funded by the Novo Nordisk Foundation (NNF21SA0069429). The research of Xiao Wang is supported by National Key R\&D Program of China (2023YFA1009500), the Fundamental Research Funds for the Central Universities, Shanghai Sci-tech Co-research Program (25HB2703800). The research of Weiming Ou is supported by the Fundamental Research Funds for
the Central Universities. The authors thank Chang He and Ping Li for their feedback on early drafts of this work. They also thank the anonymous reviewer of ICML 2026 for their valuable suggestions.

\section*{Impact Statements}
This paper presents work whose goal is to advance the field of machine learning. There are many potential societal consequences of our work, none of which we feel must be specifically highlighted here.

\bibliography{Bibli}

@article{LP19,
	Author = {Jason D. Lee and Ioannis Panageas and Georgios Piliouras and Max Simchowitz and Michael I. Jordan and Benjamin Recht},
	Journal = {Math. Program.},
	Number = {1-2},
	Pages = {311--337},
	Title = {First-order methods almost always avoid strict saddle points},
	Volume = {176},
	Year = {2019}}

@article{daskalakis2017training,
  title={Training {GAN}s with optimism},
  author={Daskalakis, Constantinos and Ilyas, Andrew and Syrgkanis, Vasilis and Zeng, Haoyang},
  journal={arXiv preprint arXiv:1711.00141},
  year={2017}
}

@inproceedings{goodfellow2014generative,
author = {Goodfellow, Ian J. and Pouget-Abadie, Jean and Mirza, Mehdi and Xu, Bing and Warde-Farley, David and Ozair, Sherjil and Courville, Aaron and Bengio, Yoshua},
title = {Generative Adversarial Nets},
year = {2014},
publisher = {MIT Press},
address = {Cambridge, MA, USA},
booktitle = {Proceedings of the 27th International Conference on Neural Information Processing Systems - Volume 2},
pages = {2672–2680},
numpages = {9},
location = {Montreal, Canada},
series = {NIPS’14}
}

@inproceedings{CP2020,
	author    = {Yun Kuen Cheung and
	Georgios Piliouras},
	title     = {Chaos, Extremism and Optimism: Volume Analysis of Learning in Games},
	year      = {2020},
	booktitle      = {{NeurIPS} }
}

@inproceedings{bailey2018multiplicative,
  title={Multiplicative weights update in zero-sum games},
  author={Bailey, James P and Piliouras, Georgios},
  booktitle={Proceedings of the 2018 ACM Conference on Economics and Computation},
  pages={321--338},
  year={2018}
}

@inproceedings{hsieh2021limits,
  title={The limits of min-max optimization algorithms: Convergence to spurious non-critical sets},
  author={Hsieh, Ya-Ping and Mertikopoulos, Panayotis and Cevher, Volkan},
  booktitle={International Conference on Machine Learning},
  pages={4337--4348},
  year={2021},
  organization={PMLR}
}

@inproceedings{bailey2020finite,
  title={Finite regret and cycles with fixed step-size via alternating gradient descent-ascent},
  author={Bailey, James P and Gidel, Gauthier and Piliouras, Georgios},
  booktitle={Conference on Learning Theory},
  pages={391--407},
  year={2020},
  organization={PMLR}
}

@inproceedings{gidel2019negative,
  title={Negative momentum for improved game dynamics},
  author={Gidel, Gauthier and Hemmat, Reyhane Askari and Pezeshki, Mohammad and Le Priol, R{\'e}mi and Huang, Gabriel and Lacoste-Julien, Simon and Mitliagkas, Ioannis},
  booktitle={The 22nd International Conference on Artificial Intelligence and Statistics},
  year={2019},
}

@book{haier2006geometric,
  title={Geometric Numerical integration: structure-preserving algorithms for ordinary differential equations},
  author={Haier, Ernst and Lubich, Christian and Wanner, Gerhard},
  year={2006},
  publisher={Springer}
}

@inproceedings{fasoulakis2022forward,
  title={Forward looking best-response multiplicative weights update methods for bilinear zero-sum games},
  author={Fasoulakis, Michail and Markakis, Evangelos and Pantazis, Yannis and Varsos, Constantinos},
  booktitle={International Conference on Artificial Intelligence and Statistics},
  pages={11096--11117},
  year={2022},
  organization={PMLR}
}

@inproceedings{mokhtari2020unified,
  title={A unified analysis of extra-gradient and optimistic gradient methods for saddle point problems: Proximal point approach},
  author={Mokhtari, Aryan and Ozdaglar, Asuman and Pattathil, Sarath},
  booktitle={International Conference on Artificial Intelligence and Statistics},
  pages={1497--1507},
  year={2020},
  organization={PMLR}
}

@article{shi2022understanding,
  title={Understanding the acceleration phenomenon via high-resolution differential equations},
  author={Shi, Bin and Du, Simon S and Jordan, Michael I and Su, Weijie J},
  journal={Mathematical Programming},
  pages={1--70},
  year={2022},
  publisher={Springer}
}

@inproceedings{barrett2020implicit,
  title={Implicit Gradient Regularization},
  author={Barrett, David and Dherin, Benoit},
  booktitle={International Conference on Learning Representations},
  year={2021}
}

@article{ghosh2023implicit,
  title={Implicit regularization in heavy-ball momentum accelerated stochastic gradient descent},
  author={Ghosh, Avrajit and Lyu, He and Zhang, Xitong and Wang, Rongrong},
  journal={ICLR},
  year={2023}
}

@inproceedings{rosca2021discretization,
  title={Discretization drift in two-player games},
  author={Rosca, Mihaela C and Wu, Yan and Dherin, Benoit and Barrett, David},
  booktitle={International Conference on Machine Learning},
  pages={9064--9074},
  year={2021},
  organization={PMLR}
}

@article{nagarajan2017gradient,
  title={Gradient descent GAN optimization is locally stable},
  author={Nagarajan, Vaishnavh and Kolter, J Zico},
  journal={Advances in neural information processing systems},
  volume={30},
  year={2017}
}

@InProceedings{pmlr-v89-liang19b,
  title = 	 {Interaction Matters: A Note on Non-asymptotic Local Convergence of Generative Adversarial Networks},
  author =       {Liang, Tengyuan and Stokes, James},
  booktitle = 	 {Proceedings of the Twenty-Second International Conference on Artificial Intelligence and Statistics},
  year = 	 {2019},
}

@article{wang2024local,
  title={Local convergence of gradient methods for min-max games: partial curvature generically suffices},
  author={Wang, Guillaume and Chizat, L{\'e}na{\"\i}c},
  journal={Advances in Neural Information Processing Systems},
  volume={36},
  year={2024}
}

@InProceedings{pmlr-v151-zhang22e,
  title = 	 { Near-optimal Local Convergence of Alternating Gradient Descent-Ascent for Minimax Optimization },
  author =       {Zhang, Guodong and Wang, Yuanhao and Lessard, Laurent and Grosse, Roger B.},
  booktitle = 	 {International Conference on Artificial Intelligence and Statistics},
  pages = 	 {7659--7679},
  publisher =    {PMLR},
  year = 	 {2022},
}

@article{JMLR:v20:19-008,
  author  = {Alistair Letcher and David Balduzzi and S{{\'e}}bastien Racani{{\`e}}re and James Martens and Jakob Foerster and Karl Tuyls and Thore Graepel},
  title   = {Differentiable Game Mechanics},
  journal = {Journal of Machine Learning Research},
  year    = {2019},
  volume  = {20},
  number  = {84},
  pages   = {1--40},
}

@inproceedings{lotidisaccelerated,
  title={Accelerated Regularized Learning in Finite N-Person Games},
  author={Lotidis, Kyriakos and Giannou, Angeliki and Mertikopoulos, Panayotis and Bambos, Nicholas},
  booktitle={The Thirty-eighth Annual Conference on Neural Information Processing Systems},
  year    = {2024}
}

@article{wibisono2016variational,
  title={A variational perspective on accelerated methods in optimization},
  author={Wibisono, Andre and Wilson, Ashia C and Jordan, Michael I},
  journal={proceedings of the National Academy of Sciences},
  volume={113},
  number={47},
  pages={E7351--E7358},
  year={2016},
  publisher={National Acad Sciences}
}

@article{foret2020sharpness,
  title={Sharpness-aware minimization for efficiently improving generalization},
  author={Foret, Pierre and Kleiner, Ariel and Mobahi, Hossein and Neyshabur, Behnam},
  journal={ICLR},
  year={2021}
}

@article{gulrajani2017improved,
  title={Improved training of {W}asserstein {GAN}s},
  author={Gulrajani, Ishaan and Ahmed, Faruk and Arjovsky, Martin and Dumoulin, Vincent and Courville, Aaron C},
  journal={Advances in neural information processing systems},
  volume={30},
  year={2017}
}

@article{kingma2014adam,
  title={Adam: A method for stochastic optimization},
  author={Kingma, Diederik P. and Ba, Jimmy},
  journal={arXiv preprint arXiv:1412.6980},
  year={2014}
}

@book{graham2018matrix,
  title={Matrix Theory and Applications for Scientists and Engineers},
  author={Graham, Alexander},
  year={2018},
  publisher={Courier Dover Publications}
}

@article{o2015adaptive,
  title={Adaptive restart for accelerated gradient schemes},
  author={O’Donoghue, Brendan and Candes, Emmanuel},
  journal={Foundations of Computational Mathematics},
  volume={15},
  pages={715--732},
  year={2015},
  publisher={Springer}
}

@article{salimans2016improved,
  title={Improved techniques for training gans},
  author={Salimans, Tim and Goodfellow, Ian and Zaremba, Wojciech and Cheung, Vicki and Radford, Alec and Chen, Xi},
  journal={Advances in neural information processing systems},
  volume={29},
  year={2016}
}

@article{compagnoni2024adaptive,
  title={Adaptive Methods through the Lens of SDEs: Theoretical Insights on the Role of Noise},
  author={Compagnoni, Enea Monzio and Liu, Tianlin and Islamov, Rustem and Proske, Frank Norbert and Orvieto, Antonio and Lucchi, Aurelien},
  journal={arXiv preprint arXiv:2411.15958},
  year={2024}
}

@inproceedings{compagnoni2024sdes,
  title={SDEs for minimax optimization},
  author={Compagnoni, Enea Monzio and Orvieto, Antonio and Kersting, Hans and Proske, Frank and Lucchi, Aur{\'e}lien},
  booktitle={International Conference on Artificial Intelligence and Statistics},
  pages={4834--4842},
  year={2024},
  organization={PMLR}
}

@inproceedings{
feng2025continuoustime,
title={Continuous-Time Analysis of Heavy Ball Momentum in Min-Max Games},
author={Yi Feng and Kaito Fujii and Stratis Skoulakis and Xiao Wang and Volkan Cevher},
booktitle={Forty-second International Conference on Machine Learning},
year={2025}
}

@inproceedings{ma2022qualitative,
  title={A qualitative study of the dynamic behavior for adaptive gradient algorithms},
  author={Ma, Chao and Wu, Lei and others},
  booktitle={Mathematical and scientific machine learning},
  pages={671--692},
  year={2022},
  organization={PMLR}
}

@inproceedings{compagnoniadaptive,
  title={Adaptive Methods through the Lens of SDEs: Theoretical Insights on the Role of Noise},
  author={Compagnoni, Enea Monzio and Liu, Tianlin and Islamov, Rustem and Proske, Frank Norbert and Orvieto, Antonio and Lucchi, Aurelien},
  year={2025},
  booktitle={The Thirteenth International Conference on Learning Representations}
}

@InProceedings{pmlr-v235-cattaneo24a,
  title = 	 {On the Implicit Bias of {A}dam},
  author =       {Cattaneo, Matias D. and Klusowski, Jason Matthew and Shigida, Boris},
  booktitle = 	 {Proceedings of the 41st International Conference on Machine Learning},
  pages = 	 {5862--5906},
  year = 	 {2024},
  editor = 	 {Salakhutdinov, Ruslan and Kolter, Zico and Heller, Katherine and Weller, Adrian and Oliver, Nuria and Scarlett, Jonathan and Berkenkamp, Felix},
  volume = 	 {235},
  series = 	 {Proceedings of Machine Learning Research},
  month = 	 {21--27 Jul},
  publisher =    {PMLR},
}

@inproceedings{zhang2023gradient,
  title={Gradient norm aware minimization seeks first-order flatness and improves generalization},
  author={Zhang, Xingxuan and Xu, Renzhe and Yu, Han and Zou, Hao and Cui, Peng},
  booktitle={Proceedings of the IEEE/CVF Conference on Computer Vision and Pattern Recognition},
  pages={20247--20257},
  year={2023}
}

@inproceedings{zhang2023flatness,
  title={Flatness-aware minimization for domain generalization},
  author={Zhang, Xingxuan and Xu, Renzhe and Yu, Han and Dong, Yancheng and Tian, Pengfei and Cui, Peng},
  booktitle={Proceedings of the IEEE/CVF International Conference on Computer Vision},
  pages={5189--5202},
  year={2023}
}

@book{henrici1974acca,
  title={Applied and Computational Complex Analysis, Vol. I},
  author={Henrici, Peter},
  year={1974},
  publisher={Wiley-Interscience}
}

@book{elaydi2005introduction,
  title={An introduction to difference equations},
  author={Elaydi, Saber},
  year={2005},
  publisher={Springer}
}

@inproceedings{bock2019proof,
  title={A proof of local convergence for the Adam optimizer},
  author={Bock, Sebastian and Wei{\ss}, Martin},
  booktitle={2019 international joint conference on neural networks (IJCNN)},
  pages={1--8},
  year={2019},
  organization={IEEE}
}

@article{bock2021local,
  title={Local convergence of adaptive gradient descent optimizers},
  author={Bock, Sebastian and Wei{\ss}, Martin Georg},
  journal={arXiv preprint arXiv:2102.09804},
  year={2021}
}

@article{zhang2022adam,
  title={Adam can converge without any modification on update rules},
  author={Zhang, Yushun and Chen, Congliang and Shi, Naichen and Sun, Ruoyu and Luo, Zhi-Quan},
  journal={Advances in neural information processing systems},
  volume={35},
  pages={28386--28399},
  year={2022}
}

@article{su2016differential,
  title={A differential equation for modeling Nesterov's accelerated gradient method: Theory and insights},
  author={Su, Weijie and Boyd, Stephen and Candes, Emmanuel J},
  journal={Journal of Machine Learning Research},
  volume={17},
  number={153},
  pages={1--43},
  year={2016}
}

@article{polyak1964some,
  title={Some methods of speeding up the convergence of iteration methods},
  author={Polyak, Boris T},
  journal={Ussr computational mathematics and mathematical physics},
  volume={4},
  number={5},
  pages={1--17},
  year={1964},
  publisher={Elsevier}
}

@inproceedings{li2022convergence,
  title={On convergence of gradient descent ascent: A tight local analysis},
  author={Li, Haochuan and Farnia, Farzan and Das, Subhro and Jadbabaie, Ali},
  booktitle={International Conference on Machine Learning},
  pages={12717--12740},
  year={2022},
  organization={PMLR}
}

@inproceedings{
zhang2025local,
title={Local convergence of simultaneous min-max algorithms to differential equilibrium on Riemannian manifold},
author={Sixin Zhang},
booktitle={The Thirteenth International Conference on Learning Representations},
year={2025}
}

@article{goodfellow2014explaining,
  title={Explaining and harnessing adversarial examples},
  author={Goodfellow, Ian J and Shlens, Jonathon and Szegedy, Christian},
  journal={arXiv preprint arXiv:1412.6572},
  year={2014}
}

@inproceedings{arjovsky2017wasserstein,
  title={Wasserstein generative adversarial networks},
  author={Arjovsky, Martin and Chintala, Soumith and Bottou, L{\'e}on},
  booktitle={International conference on machine learning},
  pages={214--223},
  year={2017},
  organization={PMLR}
}

@inproceedings{sauer2023stylegan,
  title={Stylegan-t: Unlocking the power of gans for fast large-scale text-to-image synthesis},
  author={Sauer, Axel and Karras, Tero and Laine, Samuli and Geiger, Andreas and Aila, Timo},
  booktitle={International conference on machine learning},
  pages={30105--30118},
  year={2023},
  organization={PMLR}
}

@article{zhao2021improved,
  title={Improved transformer for high-resolution gans},
  author={Zhao, Long and Zhang, Zizhao and Chen, Ting and Metaxas, Dimitris and Zhang, Han},
  journal={Advances in Neural Information Processing Systems},
  volume={34},
  pages={18367--18380},
  year={2021}
}

@article{suh2023continuous,
  title={Continuous-time analysis of anchor acceleration},
  author={Suh, Jaewook and Park, Jisun and Ryu, Ernest},
  journal={Advances in Neural Information Processing Systems},
  volume={36},
  pages={32782--32866},
  year={2023}
}

@book{khalil2002nonlinear,
  title={Nonlinear systems},
  author={Khalil, Hassan K and Grizzle, Jessy W},
  volume={3},
  year={2002},
  publisher={Prentice hall Upper Saddle River, NJ}
}

@inproceedings{jin2020local,
  title={What is local optimality in nonconvex-nonconcave minimax optimization?},
  author={Jin, Chi and Netrapalli, Praneeth and Jordan, Michael},
  booktitle={International conference on machine learning},
  pages={4880--4889},
  year={2020},
  organization={PMLR}
}

@article{fiez2021global,
  title={Global convergence to local minmax equilibrium in classes of nonconvex zero-sum games},
  author={Fiez, Tanner and Ratliff, Lillian and Mazumdar, Eric and Faulkner, Evan and Narang, Adhyyan},
  journal={Advances in Neural Information Processing Systems},
  volume={34},
  pages={29049--29063},
  year={2021}
}

@inproceedings{fiez2021local,
  title={Local convergence analysis of gradient descent ascent with finite timescale separation},
  author={Fiez, Tanner and Ratliff, Lillian J},
  booktitle={Proceedings of the International Conference on Learning Representation},
  year={2021}
}

@inproceedings{litiada,
  title={TiAda: A Time-scale Adaptive Algorithm for Nonconvex Minimax Optimization},
  author={Li, Xiang and Junchi, YANG and He, Niao},
  booktitle={The Eleventh International Conference on Learning Representations},
  year={2022}
}

@article{yang2022nest,
  title={Nest your adaptive algorithm for parameter-agnostic nonconvex minimax optimization},
  author={Yang, Junchi and Li, Xiang and He, Niao},
  journal={Advances in Neural Information Processing Systems},
  volume={35},
  pages={11202--11216},
  year={2022}
}

@article{gadat2022asymptotic,
  title={Asymptotic study of stochastic adaptive algorithms in non-convex landscape},
  author={Gadat, S{\'e}bastien and Gavra, Ioana},
  journal={Journal of Machine Learning Research},
  volume={23},
  number={228},
  pages={1--54},
  year={2022}
}

@article{huang2022new,
  title={New first-order algorithms for stochastic variational inequalities},
  author={Huang, Kevin and Zhang, Shuzhong},
  journal={SIAM Journal on Optimization},
  volume={32},
  number={4},
  pages={2745--2772},
  year={2022},
  publisher={SIAM}
}

@article{bot2023relaxed,
  title={A relaxed inertial forward-backward-forward algorithm for solving monotone inclusions with application to GANs},
  author={Bot, Radu I and Sedlmayer, Michael and Vuong, Phan Tu},
  journal={Journal of Machine Learning Research},
  volume={24},
  number={8},
  pages={1--37},
  year={2023}
}

@inproceedings{
cohen2021gradient,
title={Gradient Descent on Neural Networks Typically Occurs at the Edge of Stability},
author={Jeremy Cohen and Simran Kaur and Yuanzhi Li and J Zico Kolter and Ameet Talwalkar},
booktitle={International Conference on Learning Representations},
year={2021},
url={https://openreview.net/forum?id=jh-rTtvkGeM}
}

@article{wang2022does,
  title={Does momentum change the implicit regularization on separable data?},
  author={Wang, Bohan and Meng, Qi and Zhang, Huishuai and Sun, Ruoyu and Chen, Wei and Ma, Zhi-Ming and Liu, Tie-Yan},
  journal={Advances in Neural Information Processing Systems},
  volume={35},
  pages={26764--26776},
  year={2022}
}

@article{zhang2024implicit,
  title={The implicit bias of adam on separable data},
  author={Zhang, Chenyang and Zou, Difan and Cao, Yuan},
  journal={Advances in Neural Information Processing Systems},
  volume={37},
  pages={23988--24021},
  year={2024}
}

@inproceedings{
cohen2025understanding,
title={Understanding Optimization in Deep Learning with Central Flows},
author={Jeremy Cohen and Alex Damian and Ameet Talwalkar and J Zico Kolter and Jason D. Lee},
booktitle={The Thirteenth International Conference on Learning Representations},
year={2025},
url={https://openreview.net/forum?id=sIE2rI3ZPs}
}

@inproceedings{wang2021implicit,
  title={The implicit bias for adaptive optimization algorithms on homogeneous neural networks},
  author={Wang, Bohan and Meng, Qi and Chen, Wei and Liu, Tie-Yan},
  booktitle={International Conference on Machine Learning},
  pages={10849--10858},
  year={2021},
  organization={PMLR}
}

@inproceedings{
xie2024implicit,
title={Implicit Bias of AdamW: \${\textbackslash}ell\_{\textbackslash}infty\$-Norm Constrained Optimization},
author={Shuo Xie and Zhiyuan Li},
booktitle={Forty-first International Conference on Machine Learning},
year={2024}
}

@article{hong2024convergence,
  title={On convergence of adam for stochastic optimization under relaxed assumptions},
  author={Hong, Yusu and Lin, Junhong},
  journal={Advances in Neural Information Processing Systems},
  volume={37},
  pages={10827--10877},
  year={2024}
}

@inproceedings{wang2024provable,
  title={Provable adaptivity of adam under non-uniform smoothness},
  author={Wang, Bohan and Zhang, Yushun and Zhang, Huishuai and Meng, Qi and Sun, Ruoyu and Ma, Zhi-Ming and Liu, Tie-Yan and Luo, Zhi-Quan and Chen, Wei},
  booktitle={Proceedings of the 30th ACM SIGKDD Conference on Knowledge Discovery and Data Mining},
  pages={2960--2969},
  year={2024}
}

@article{li2023convergence,
  title={Convergence of adam under relaxed assumptions},
  author={Li, Haochuan and Rakhlin, Alexander and Jadbabaie, Ali},
  journal={Advances in Neural Information Processing Systems},
  volume={36},
  pages={52166--52196},
  year={2023}
}

@article{zaheer2018adaptive,
  title={Adaptive methods for nonconvex optimization},
  author={Zaheer, Manzil and Reddi, Sashank and Sachan, Devendra and Kale, Satyen and Kumar, Sanjiv},
  journal={Advances in neural information processing systems},
  volume={31},
  year={2018}
}

@article{chen2018convergence,
  title={On the convergence of a class of adam-type algorithms for non-convex optimization},
  author={Chen, Xiangyi and Liu, Sijia and Sun, Ruoyu and Hong, Mingyi},
  journal={arXiv preprint arXiv:1808.02941},
  year={2018}
}

@inproceedings{zou2019sufficient,
  title={A sufficient condition for convergences of adam and rmsprop},
  author={Zou, Fangyu and Shen, Li and Jie, Zequn and Zhang, Weizhong and Liu, Wei},
  booktitle={Proceedings of the IEEE/CVF Conference on computer vision and pattern recognition},
  pages={11127--11135},
  year={2019}
}

@article{wang2023closing,
  title={Closing the gap between the upper bound and lower bound of Adam's iteration complexity},
  author={Wang, Bohan and Fu, Jingwen and Zhang, Huishuai and Zheng, Nanning and Chen, Wei},
  journal={Advances in Neural Information Processing Systems},
  volume={36},
  pages={39006--39032},
  year={2023}
}

@article{tieleman2012lecture,
  title={Lecture 6.5-rmsprop: Divide the gradient by a running average of its recent magnitude},
  author={Tieleman, Tijmen},
  journal={COURSERA: Neural networks for machine learning},
  volume={4},
  number={2},
  pages={26},
  year={2012}
}

@misc{KB14,
	author = {Diederik P. Kingma and Jimmy Ba},
	howpublished = {\url{https://arxiv.org/abs/1412.6980}},
	title = {Adam: A method for stochastic optimization},
	year = {2014}}

@inproceedings{RKK18,
	author = {Sashank J. Reddi and Satyen Kale and Sanjiv Kumar},
	booktitle = {ICLR '18: Proceedings of the 2018 International Conference on Learning Representations},
	title = {On the convergence of {Adam} and beyond},
	year = {2018}}

@article{duchi2011adaptive,
  title={Adaptive subgradient methods for online learning and stochastic optimization.},
  author={Duchi, John and Hazan, Elad and Singer, Yoram},
  journal={Journal of machine learning research},
  volume={12},
  number={7},
  year={2011}
}

@article{defossez2020simple,
  title={A simple convergence proof of adam and adagrad},
  author={D{\'e}fossez, Alexandre and Bottou, L{\'e}on and Bach, Francis and Usunier, Nicolas},
  journal={arXiv preprint arXiv:2003.02395},
  year={2020}
}
\bibliographystyle{icml2026}

% \bibliography{example_paper}
% \bibliographystyle{icml2026}

%%%%%%%%%%%%%%%%%%%%%%%%%%%%%%%%%%%%%%%%%%%%%%%%%%%%%%%%%%%%%%%%%%%%%%%%%%%%%%%
%%%%%%%%%%%%%%%%%%%%%%%%%%%%%%%%%%%%%%%%%%%%%%%%%%%%%%%%%%%%%%%%%%%%%%%%%%%%%%%
% APPENDIX
%%%%%%%%%%%%%%%%%%%%%%%%%%%%%%%%%%%%%%%%%%%%%%%%%%%%%%%%%%%%%%%%%%%%%%%%%%%%%%%
%%%%%%%%%%%%%%%%%%%%%%%%%%%%%%%%%%%%%%%%%%%%%%%%%%%%%%%%%%%%%%%%%%%%%%%%%%%%%%%
\newpage
\appendix
\onecolumn
\newpage
\section{More discussion on related works}\label{Appendix: detailed comparison with feng}

\paragraph{Comparison with \cite{rosca2021discretization} and \cite{feng2025continuoustime}.} 

We  make a detailed comparison with \cite{rosca2021discretization} and \cite{feng2025continuoustime}, which studied the dynamics of Gradient Descent-Ascent method and Heavy Ball method in zero-sum games through the lens of ODEs in the following. The work of \cite{rosca2021discretization} investigated implicit regularization of GDA in two-player general-sum games. However, their GDA algorithm involves neither adaptive stepsizes nor momentum parameters. Next, the work of \cite{feng2025continuoustime} studied relatively simple Heavy-ball momentum method, which involves only a non-adaptive momentum parameter. Adam’s adaptive structure substantially increases the complexity of both the ODEs and the discrete-time analysis. For instance, deriving an appropriate $\mathcal{O}(h^3)$ local error equation is significantly more involved than in the Heavy-ball case, which in turn makes the Jacobian analysis considerably more difficult. Our analysis yield new insights that cannot be inferred from previous works of \cite{rosca2021discretization,feng2025continuoustime}. In particular:
\begin{itemize}[leftmargin=*]
    \item Discrete-time vs. continuous-time behavior: We give a local convergence result for the continuous-time model and also analyze the original discrete-time Adam-DA algorithms. The discrete-time results show that the continuous-time model matches the algorithms well in usual settings. But when the momentum $\beta$ is very small, this match becomes weak, which highlight the potential failure of continuous-time models that do not appear in previous works.
    \item The role of $\rho$ : Heavy-ball momentum is non-adaptive, so the influence of $\rho$ is entirely absent in \cite{feng2025continuoustime}. In contrast, we show that in min–max games, larger $\rho$ amplifies Adam’s implicit regularization effect—in sharp contrast to the minimization setting, where \cite{pmlr-v235-cattaneo24a} shows that smaller $\rho$ strengthens implicit regularization. This is a fundamentally new phenomenon specific to the min–max setting.
    \item The role of $\epsilon$ : Traditionally, $\epsilon$ is viewed merely as a term preventing division by zero and is assumed to be extremely small. In fact, using large $\epsilon$ is often considered inconsistent with the design philosophy of adaptive gradient methods \citep{zhang2022adam}. However, Theorems 4.3 and 4.4 demonstrate that a large $\epsilon$ can enhance local convergence in min–max games—again revealing behavior entirely absent in the Heavy-ball analysis.
\end{itemize}
%Based on these points, we believe that our results provide new insights into the dynamics of Adam in min–max games that go well beyond the findings of \cite{feng2025continuoustime}.

\paragraph{More Related Works in Game Dynamics.} Besides converging to local nash equilibrium, other properties of zero-sum games has been researched. Early rigorous characterizations of local optimality and stability in this setting were developed by \cite{jin2020local}, who formalized notions of local min–max optimality in nonconvex–nonconcave landscapes. \cite{fiez2021global} investigated the non-asymptotic convergence rates to $\epsilon$-critical points in several classes of zero-sum games. At the same time, \cite{hsieh2021limits} exposed limitations of common min–max algorithms by showing possible convergence to spurious, non-critical sets, highlighting the delicate geometry of min–max flows. Parallel to these algorithmic and discrete-time analyses, recent work has explored adaptive time-scale strategies: \cite{litiada} proposed Tiada, a time-scale adaptive method for nonconvex minimax problems, and \cite{yang2022nest} developed nested adaptive schemes that achieve parameter-agnostic convergence—complementary approaches that adaptively tune per-variable step sizes and thereby mitigate the need for manual time-scale separation. 

\paragraph{Related work on Adam.} The evolution of adaptive optimization began with AdaGrad \cite{duchi2011adaptive}, which normalizes updates via the sum of squared gradients to optimize for sparse data. While theoretically sound, AdaGrad's monotonic accumulation leads to premature learning rate decay in non-convex regimes. To address this, RMSProp \cite{tieleman2012lecture} introduced an exponential moving average (EMA) to curb the rapid decline. The seminal Adam optimizer \cite{KB14} later integrated this adaptive scaling with momentum-based acceleration, tracking both first and second gradient moments to achieve superior stability in training deep neural networks.

Although Adam dominates empirically, theoretical convergence concerns led to the development of AMSGrad \cite{RKK18}, which imposes a non-decreasing constraint on the denominator. In terms of analysis, early studies relied on restrictive assumptions like bounded gradients \cite{zaheer2018adaptive,chen2018convergence,zou2019sufficient,defossez2020simple}. Contemporary research, however, has pivoted toward high-probability guarantees under looser conditions: \cite{li2023convergence} allows for sub-Gaussian noise, \cite{hong2024convergence} accommodates affine noise, and \cite{wang2023closing} has proven that Adam meets fundamental lower bounds. Extensions to finite-sum settings have also been established \cite{zhang2022adam,wang2024provable}.

% \paragraph{Continuous-time analysis.} Continuous-time (dynamical-systems) analyses of adaptive gradient methods—including Adam-like schemes—have been advanced in a series of works that predate some recent discrete-time studies: \cite{da2020general} proposed a general ODE system modeling first-order adaptive algorithms; \cite{barakat2021convergence} and \cite{barakat2021stochastic} analyzed convergence, fluctuations, and escape phenomena for Adam and momentum-based stochastic dynamics; and \cite{gadat2022asymptotic} provided asymptotic studies of stochastic adaptive algorithms in nonconvex landscapes. These continuous-time perspectives sit within a longer tradition of dynamical-systems approaches to stochastic approximation (e.g., \cite{benaim1996dynamical,borkar2008stochastic} and references therein) and collectively motivate the study of adaptive algorithms in min–max settings from both discrete- and continuous-time viewpoints.

\newpage
\section{Additional Materials for Section \ref{CTM}}

\subsection{Future Details of Figure \ref{comparewithsign}}\label{FDoF}

The  test functions in Figure \ref{comparewithsign} comes from \citep{compagnoni2024sdes}:
\begin{itemize}[leftmargin=*]
\item (Figure in the left:) $f_1(x,y) = x(y-0.45) + \phi(x) - \phi(y),\ \phi(z) = \frac{1}{4}z^2 - \frac{1}{2}z^4 + \frac{1}{6}z^6$. 

Initial point: $(x_0,y_0) = (0.6,0.6)$. $\beta=0,\ \rho = 0.5,\ \epsilon = 10^{-6}.$ Step size = $0.007.$
\item  (Figure in the middle:)$f_2(x,y) = xy - \frac{1}{10}(\frac{1}{2}y^2 - \frac{1}{4}y^4 )$

Initial point: $(x_0,y_0) = (0.6,0.6)$. $\beta=-0.3,\ \rho = 0.9,\ \epsilon = 10^{-3}.$ Step size = $0.002.$
\item  (Figure in the right:)$f_3(x,y) = \frac{1}{10}x^2 - \frac{1}{10}y^2 + \sin(x)\cos(y)$

Initial point: $(x_0,y_0) = (0.6,0.6)$. $\beta=0.3,\ \rho = 0.5,\ \epsilon = 10^{-4}.$ Step size = $0.005.$
\end{itemize}

In the following we also present $30$ random initial conditions, and draw the distance between the trajectories of \ref{CADAM}, \ref{sign} with \ref{Adam}. It can be observed that \ref{CADAM} can better approximate \ref{Adam} than \ref{sign}.

\begin{figure}[h]
    \centering
    % \subfigure{
    %     \includegraphics[width=1.6in]{Pictures/oder_traj.png}
    %     \label{ap1}
    % }
    % \subfigure{
    %     \includegraphics[width=1.6in]{Pictures/oder_traj2.png}
    %     \label{ap2}
    % }
    % \subfigure{
    %     \includegraphics[width=1.6in]{Pictures/oder_traj3.png}
    %     \label{ap3}
    % }
    \subfigure[Distance Curves of $f_1$]{
	\includegraphics[width=1.6in]{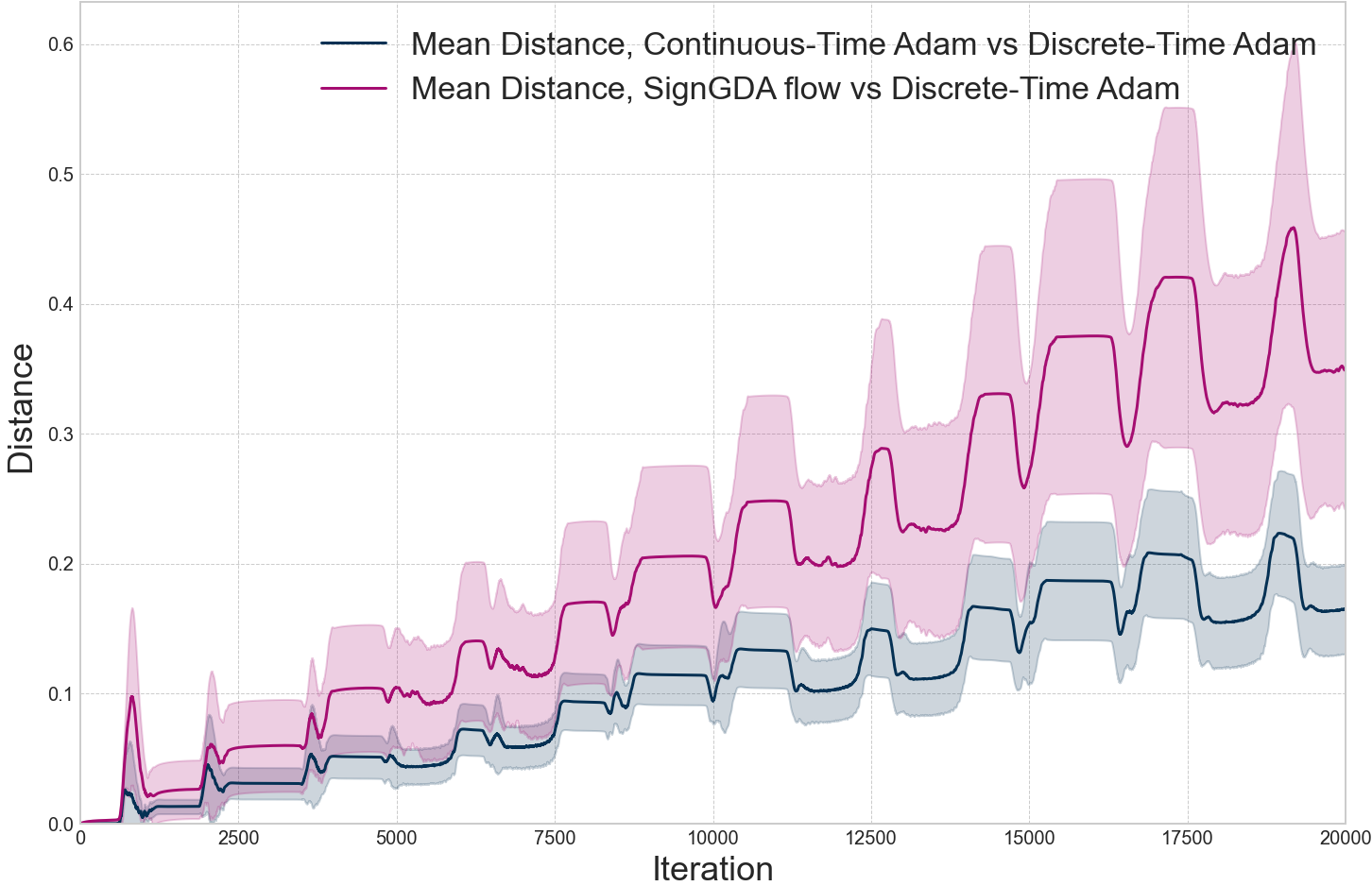}
        \label{ap4}
    }
    \subfigure[Distance Curves of $f_2$]{
	\includegraphics[width=1.6in]{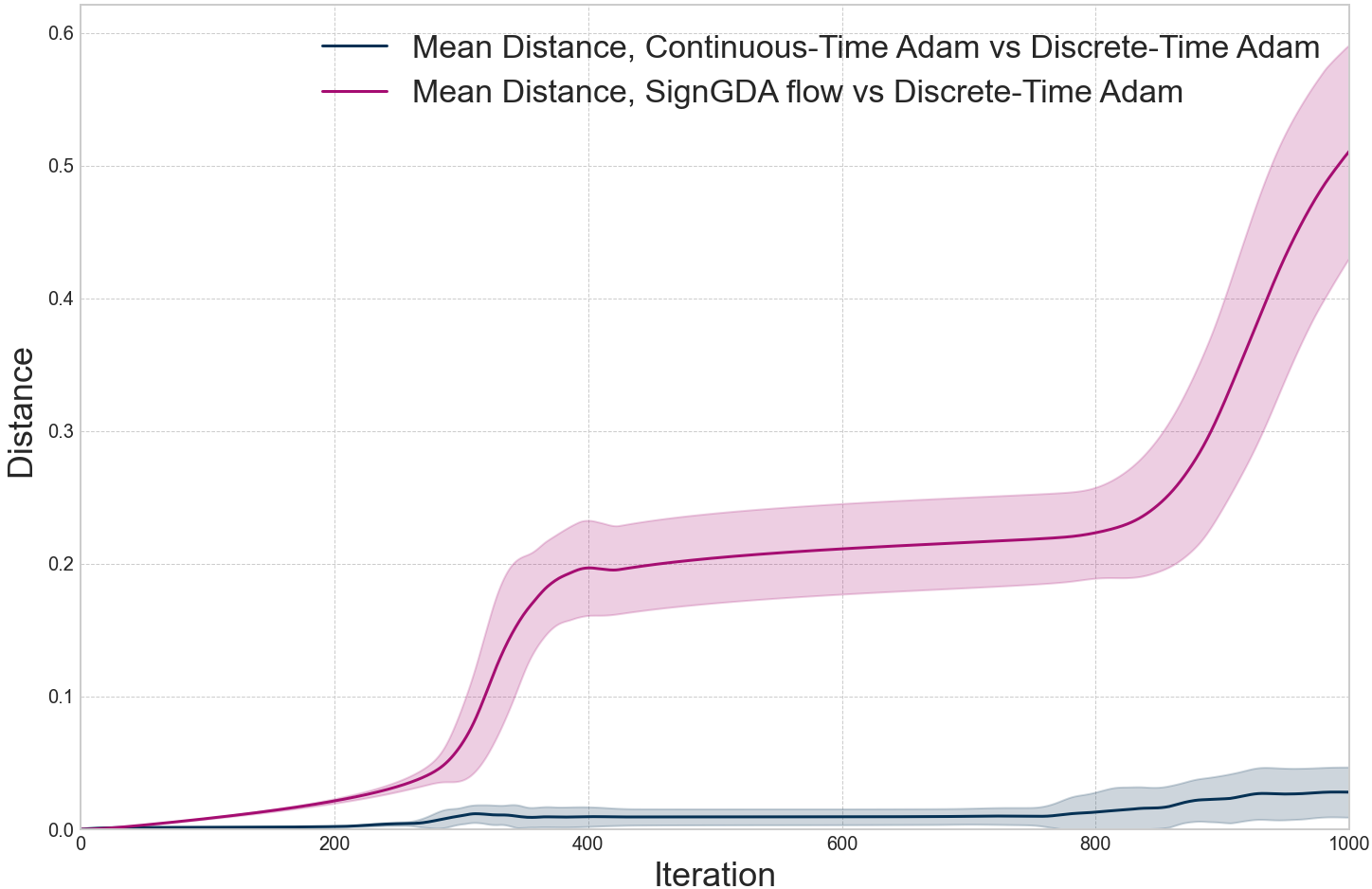}
        \label{ap5}
    }
    \subfigure[Distance Curves of $f_3$]{
	\includegraphics[width=1.6in]{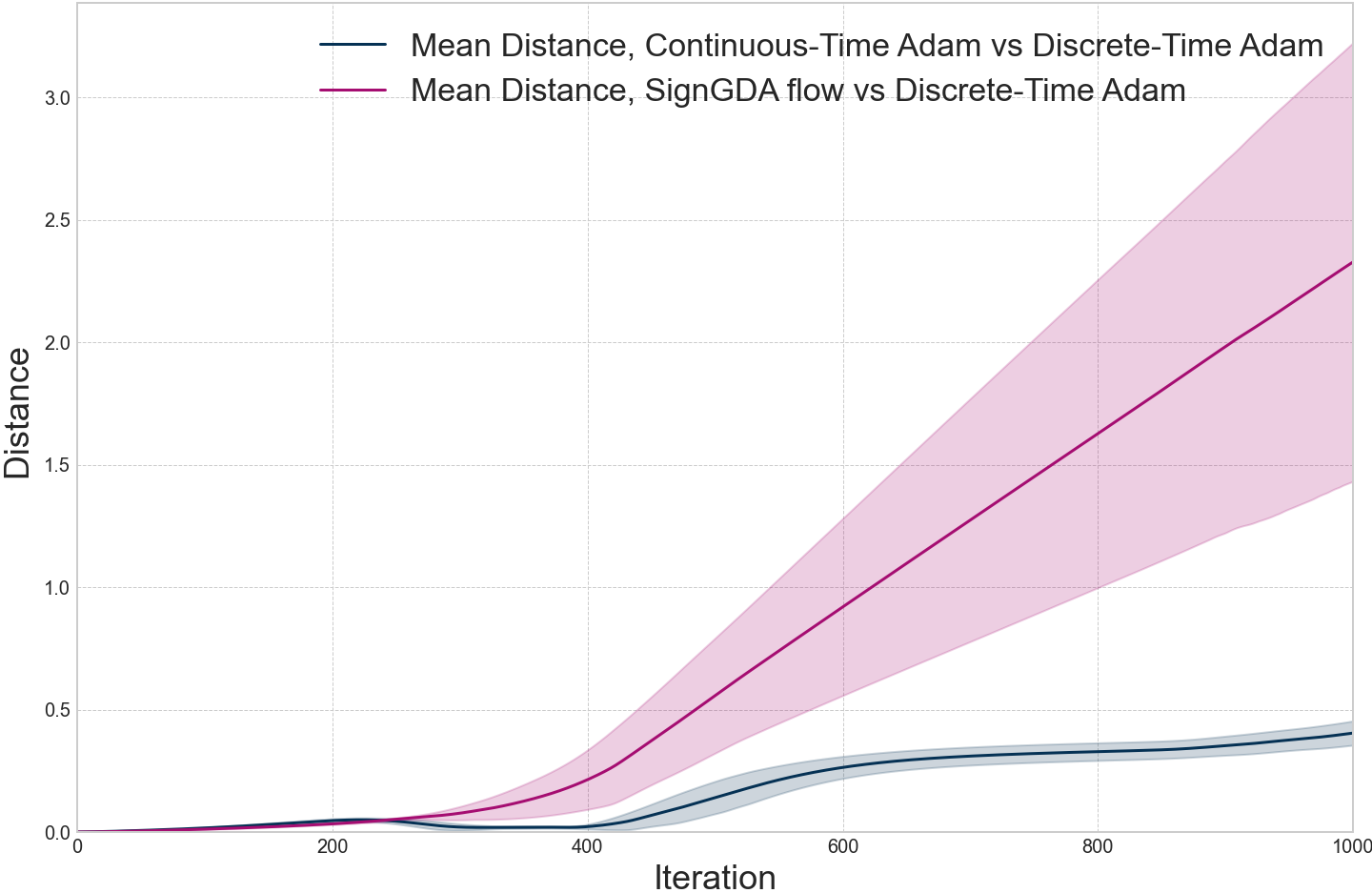}
        \label{ap6}
    }
    \caption{\textit{Figures \ref{ap4}, \ref{ap5}, and \ref{ap6} show the distances of two continuous-time models between Adam, with results averaged over 30 random initial conditions. }}
    \label{comparewithsign2}
\end{figure}

In the following Figure \ref{comparewithsign3}, we present additional experiments with different choice of parameters. The initial points are are chosen to be $(0.3,-0.3)$ in figures of trajectories. For the left figure, $\beta=0.5,\rho=0.8,\epsilon=10^{-6}, h=0.007$. For the middle figure,  $\beta=0.9,\rho=0.8,\epsilon=10^{-3}, h=0.002$. For the right figure, $\beta=-0.1,\rho=0.6,\epsilon=10^{-4}, h=0.005$

\begin{figure}[h]
    \centering
    \subfigure{
        \includegraphics[width=1.6in]{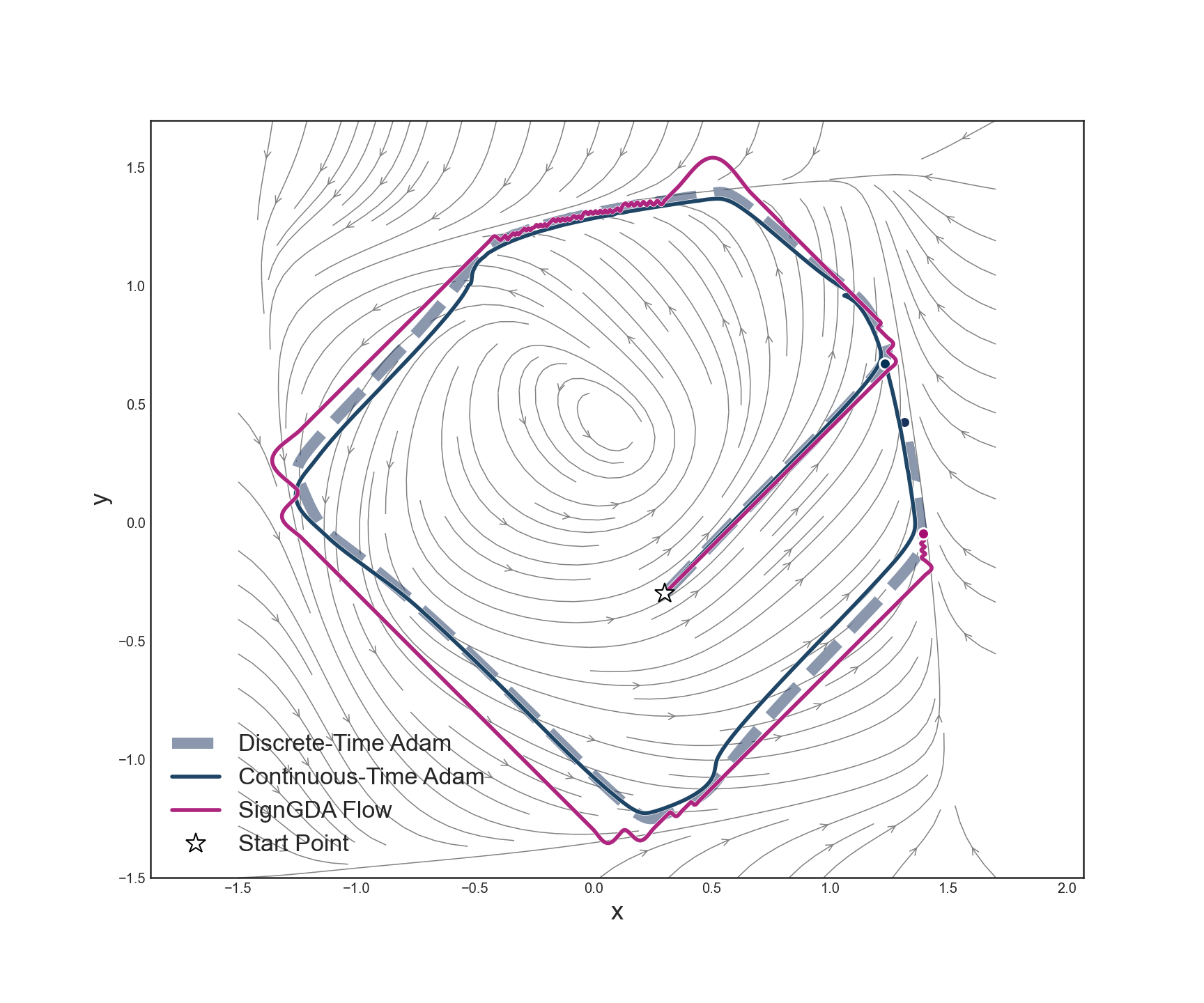}
        \label{ap11}
    }
    \subfigure{
        \includegraphics[width=1.6in]{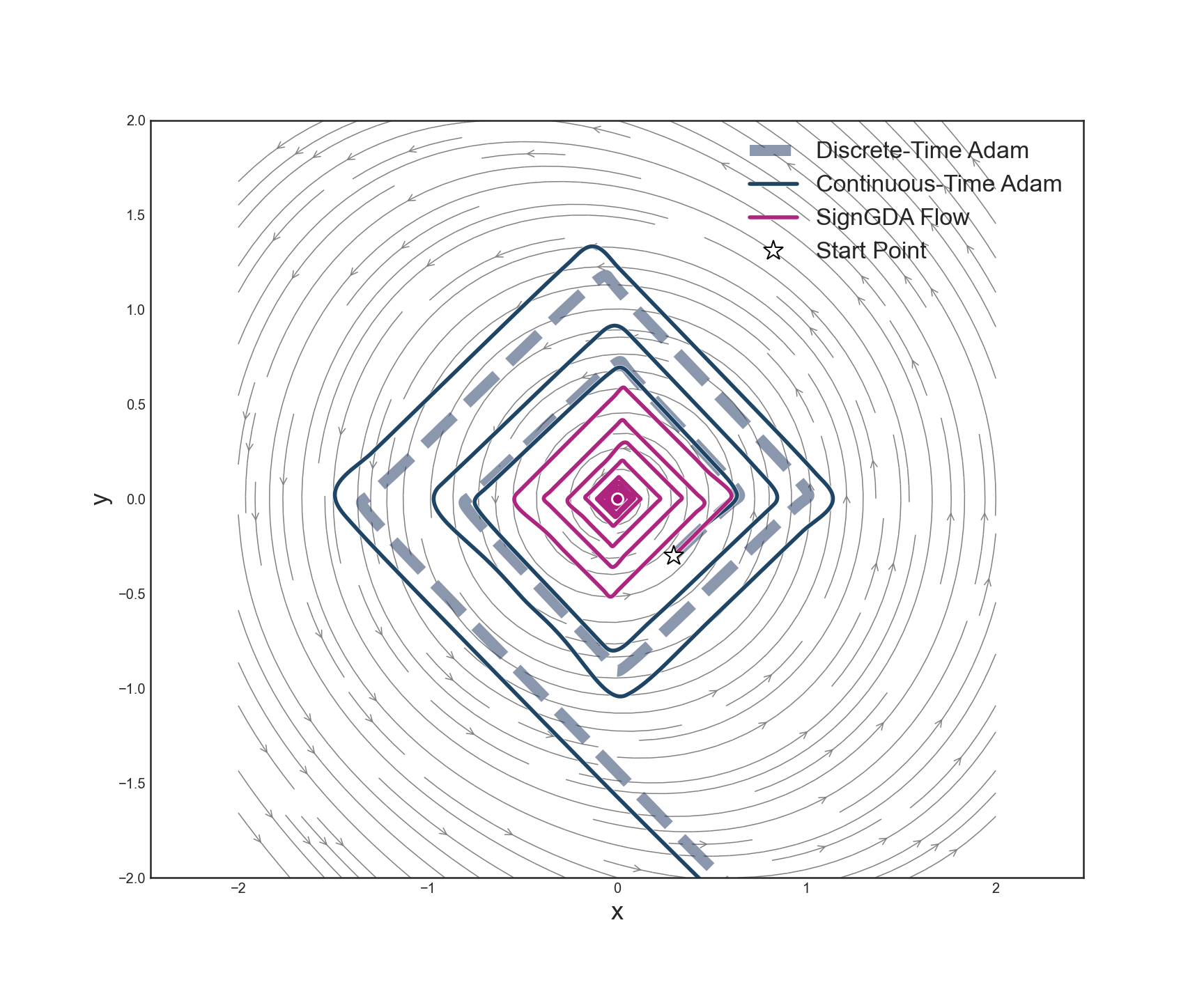}
        \label{ap21}
    }
    \subfigure{
        \includegraphics[width=1.6in]{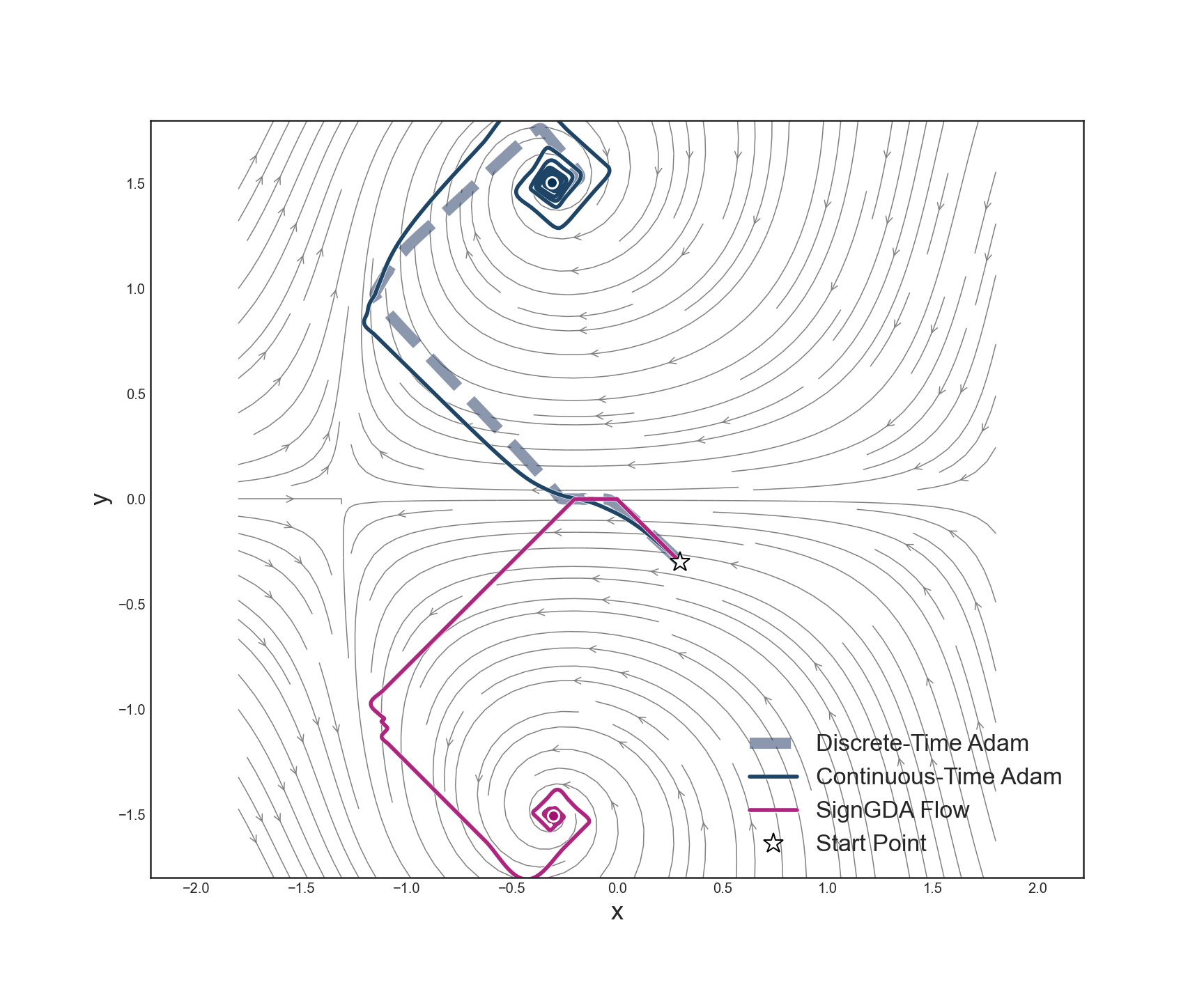}
        \label{ap31}
    }
    \subfigure[[Distance Curves of $f_1$]{
	\includegraphics[width=1.6in]{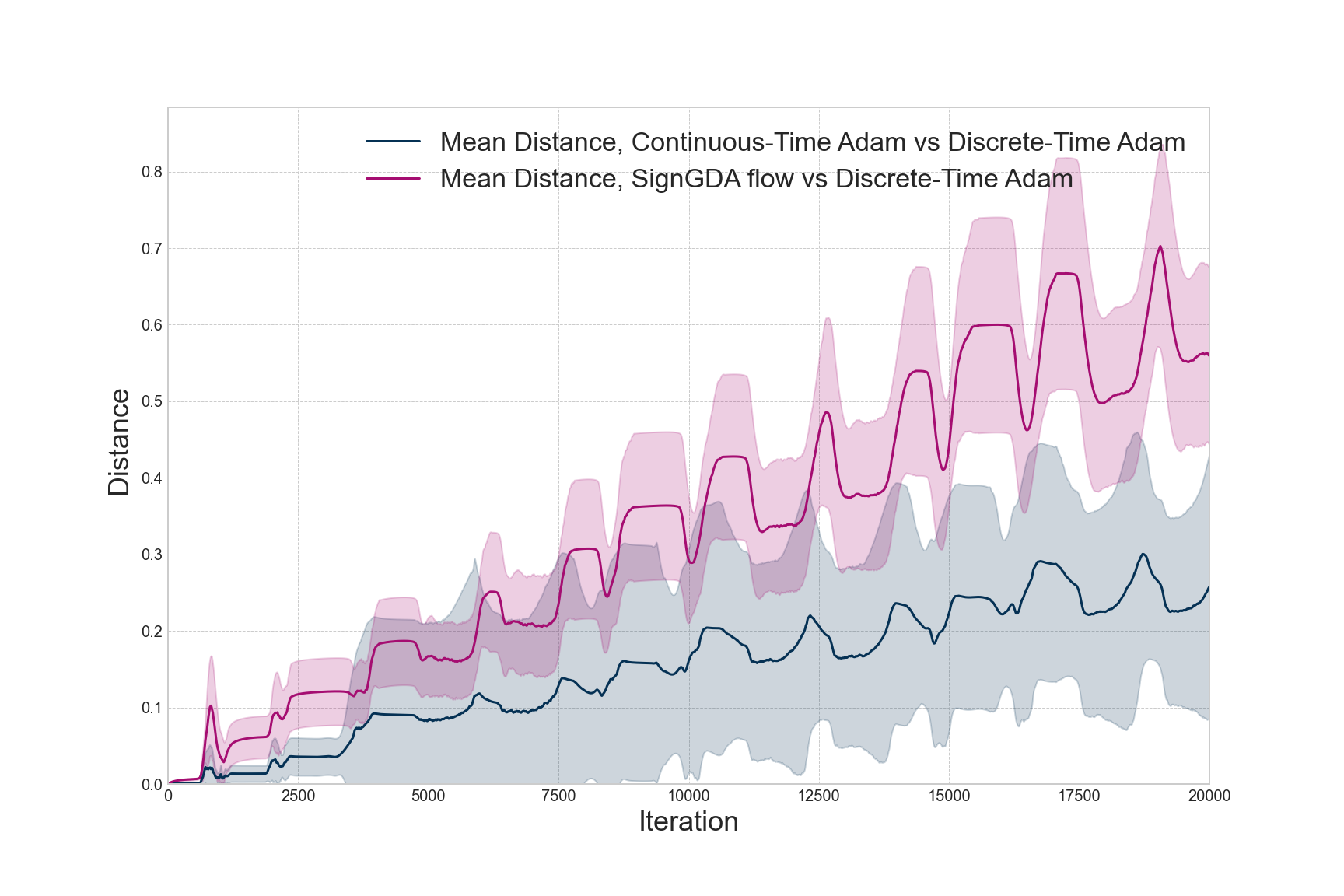}
        \label{ap41}
    }
    \subfigure[[Distance Curves of $f_2$]{
	\includegraphics[width=1.6in]{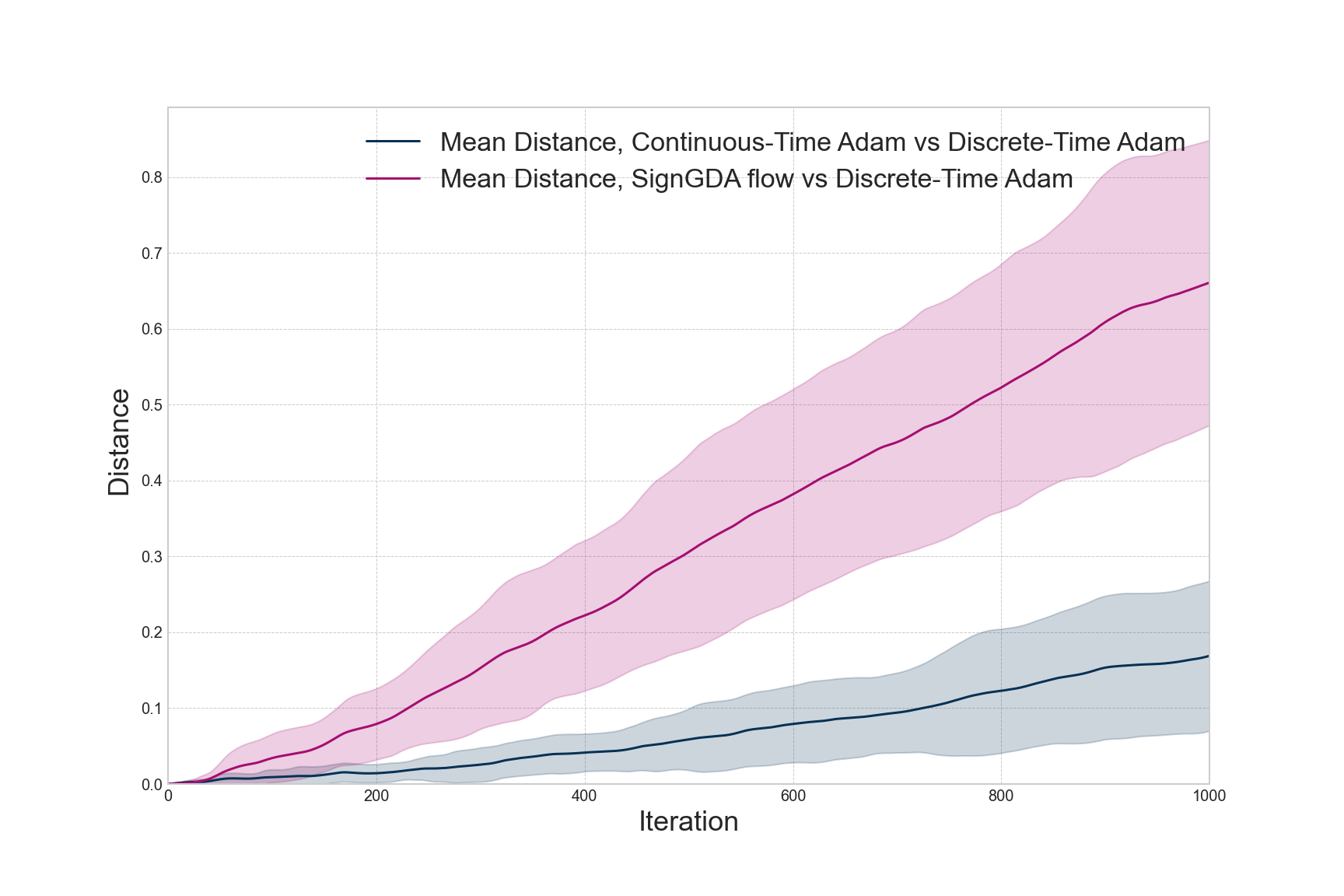}
        \label{ap51}
    }
    \subfigure[[Distance Curves of $f_3$]{
	\includegraphics[width=1.6in]{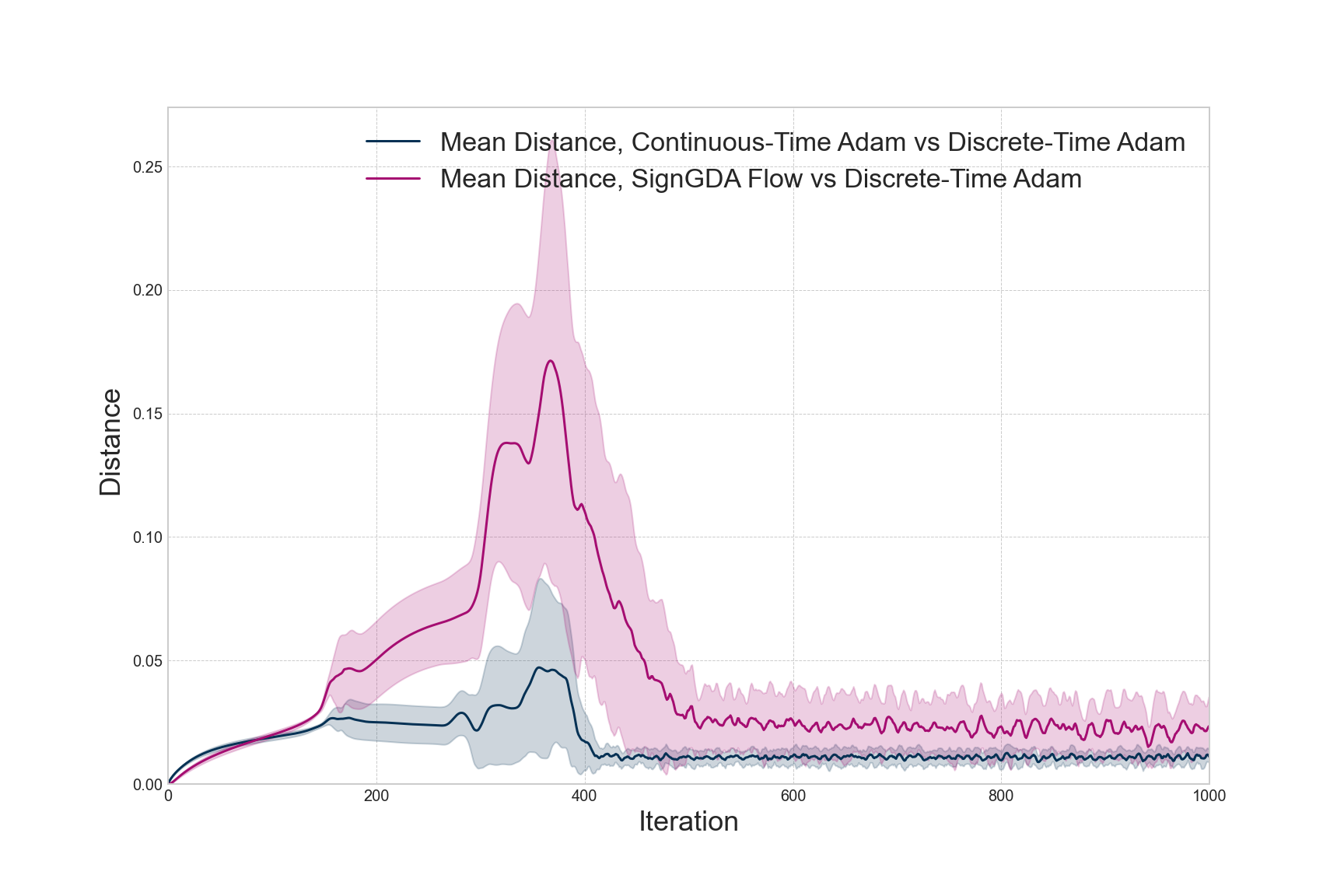}
        \label{ap61}
    }
    \caption{\textit{Additional experiments with different parameters.}}
    \label{comparewithsign3}
\end{figure}

\subsection{Proof of Theorem \ref{error_ana}}\label{proofofthm1}
\textbf{Notation Statement.} In Appendix \ref{proofofthm1}, component index and iteration index are used in many quantities. To avoid ambiguity, we use superscripts as coordinate indices (e.g., $\tx^{(j)}$) and subscripts as iteration indices (e.g., $A_n(\tx, \ty)$). The derivation of ODEs in Appendix \ref{proofofthm1} is written by coordinate-wise, which is the same as the matrix form in Theorem \ref{error_ana}.

Before providing the formal proof of Theorem \ref{error_ana}, we provide the sketched proof of Theorem \ref{error_ana} consists of several steps:
\begin{itemize}[leftmargin=*]
    \item Step 1: In Lemma \ref{lemma: ODE for Sim-Adam}, we derive a family of ODEs that each ODE corresponds to a time region $[t_n, t_{n+1})$ with length $h$ ($h$ is the stepsize of \ref{Adam}). The solutions of these ODEs achieves good approximation of \ref{Adam} with local error $\CO(h^3)$. It should be pointed out that these ODEs may be not differentiable at time nodes and they are dependent on $n$, which hinder the analysis.   
    \item Step 2: In Lemma \ref{lemma: local error between ODEs}, we prove that the family of ODEs in Step 1 will converge to the single equation \ref{CADAM} with an exponential rate. Due to the exponential rate, after a short time region, we can bound the local error between the family of ODEs and \ref{CADAM} by $\CO(h^3)$. The single ODE \ref{CADAM} is differentiable everywhere and independent of $n$, which can simplify the analysis of the parameters' behaviors in \ref{minmaxprel}.
\end{itemize}
    
% \end{proof}

% We proceed to present the following lemma. We will use Lemma \ref{lemma: 4th order bounded derivatives} to safely truncate the Taylor-expansion to get an $\CO(h^3)$ error.
% \begin{lem}\label{lemma: 4th order bounded derivatives}
%  Suppose that $f(\tx,\ty)$ has bounded derivatives up to the fourth order. Then for any bounded $h$, we can get $\dot{\tx}^{(j)}(t)$, $\ddot{\tx}^{(j)}(t)$, $\dddot{\tx}^{(j)}(t)$ and $\dot{\ty}^{(i)}(t)$, $\ddot{\ty}^{(i)}(t)$, $\dddot{\ty}^{(i)}(t)$ are bounded independent of $h$ for all $j=1,2,\cdots, d_1$ and $i=1,2,\cdots, d_2$.   
% \end{lem}
% \begin{proof}
%     The proof follows directly by the definition of $\dot{\tx}^{(j)}(t)$, $\ddot{\tx}^{(j)}(t)$, $\dddot{\tx}^{(j)}(t)$ and $\dot{\ty}^{(i)}(t)$, $\ddot{\ty}^{(i)}(t)$, $\dddot{\ty}^{(i)}(t)$ together with bounded derivatives of $f(x)$ up to the fourth order.
% \end{proof}

Define 
\begin{align}\label{equ: definition in full batch setting}
    \begin{split}
     &N_n^{\tx^{(j)}}(\tx(t_n), \ty(t_n))=\frac{\partial f(\tx(t_n), \ty(t_n))}{\partial \tx^{(j)}}, \  D_n^{\tx^{(j)}}(\tx(t_n), \ty(t_n))=\sqrt{\left(\frac{\partial f(\tx(t_n), \ty(t_n))}{\partial \tx^{(j)}}\right)^2+\epsilon},\\
     \\
     &N_n^{\ty^{(i)}}(\tx(t_n), \ty(t_n))=\frac{\partial f(\tx(t_n), \ty(t_n))}{\partial \ty^{(i)}},\  D_n^{\ty^{(i)}}(\tx(t_n), \ty(t_n))=\sqrt{\left(\frac{\partial f(\tx(t_n), \ty(t_n))}{\partial \ty^{(i)}}\right)^2+\epsilon},\\
     \\
     &Q_n^{\tx^{(j)}}(\tx(t_n), \ty(t_n))\\
     &=\left(\frac{\beta}{1-\beta}-\frac{(n+1)\beta^{n+1}}{1-\beta^{n+1}}\right)\left[\sum_{s=1}^{d_1}\frac{\partial^2 f(\tx(t_n), \ty(t_n))}{\partial \tx^{(j)} \partial \tx^{(s)}}\frac{N_n^{\tx^{(s)}}}{D_n^{\tx^{(s)}}}-\sum_{s=1}^{d_2}\frac{\partial^2 f(\tx(t_n), \ty(t_n))}{\partial \tx^{(j)} \partial \ty^{(s)}}\frac{N_n^{\ty^{(s)}}}{D_n^{\ty^{(s)}}}\right],\\
     \\
     &Q_n^{\ty^{(i)}}(\tx(t_n), \ty(t_n))\\
     &=\left(\frac{\beta}{1-\beta}-\frac{(n+1)\beta^{n+1}}{1-\beta^{n+1}}\right)\left[\sum_{s=1}^{d_1}\frac{\partial^2 f(\tx(t_n), \ty(t_n))}{\partial \ty^{(i)} \partial \tx^{(s)}}\frac{N_n^{\tx^{(s)}}}{D_n^{\tx^{(s)}}}-\sum_{s=1}^{d_2}\frac{\partial^2 f(\tx(t_n), \ty(t_n))}{\partial \ty^{(i)} \partial \ty^{(s)}}\frac{N_n^{\ty^{(s)}}}{D_n^{\ty^{(s)}}}\right],\\
     \\
     &P_n^{\tx^{(j)}}(\tx(t_n), \ty(t_n))=\left(\frac{\rho}{1-\rho}-\frac{(n+1)\rho^{n+1}}{1-\rho^{n+1}}\right)\left[\sum_{s=1}^{d_1}\frac{\partial^2 f(\tx(t_n), \ty(t_n))}{\partial \tx^{(j)} \partial \tx^{(s)}}\frac{\partial f(\tx(t_n), \ty(t_n))}{\partial \tx^{(j)}}\frac{N_n^{\tx^{(s)}}}{D_n^{\tx^{(s)}}} \right.\\&\left.
          \ \ \ \ \ \ \ \ \ \ \ \ \ \ \ \ \ \ \ \ \ \ \ \ \ \ \ \ \ \ \ \ \ \ \ \ \ \ \ \ \ \ \ \ \ \ \ \ \ \ \ \ \ \ \ \ \ \ \ \ \ \ \ \ \ \ \ \ \ \ \ \ \ \ \ \ \ \ \ \
          -\sum_{s=1}^{d_2}\frac{\partial^2 f(\tx(t_n), \ty(t_n))}{\partial \tx^{(j)} \partial \ty^{(s)}}\frac{\partial f(\tx(t_n), \ty(t_n))}{\partial \tx^{(j)}}\frac{N_n^{\ty^{(s)}}}{D_n^{\ty^{(s)}}}\right],\\
     \\
     % &N_n^{\ty^{(i)}}(\tx(t_n), \ty(t_n))=\frac{\partial f(\tx(t_n), \ty(t_n))}{\partial \ty^{(i)}},\\ 
     % &D_n^{\ty^{(i)}}(\tx(t_n), \ty(t_n))=\sqrt{\left(\frac{\partial f(\tx(t_n), \ty(t_n))}{\partial \ty^{(i)}}\right)^2+\epsilon},\\
     &P_n^{\ty^{(i)}}(\tx(t_n), \ty(t_n))=\left(\frac{\rho}{1-\rho}-\frac{(n+1)\rho^{n+1}}{1-\rho^{n+1}}\right)\left[\sum_{s=1}^{d_1}\frac{\partial^2 f(\tx(t_n), \ty(t_n))}{\partial \ty^{(i)} \partial \tx^{(s)}}\frac{\partial f(\tx(t_n), \ty(t_n))}{\partial \ty^{(i)}}\frac{N_n^{\tx^{(s)}}}{D_n^{\tx^{(s)}}} \right.\\&\left.
     \ \ \ \ \ \ \ \ \ \ \ \ \ \ \ \ \ \ \ \ \ \ \ \ \ \ \ \ \ \ \ \ \ \ \ \ \ \ \ \
     \ \ \ \ \ \ \ \ \ \ \ \ \ \ \ \ \ \ \ \ \ \ \ \ \ \ \ \ \ \ \ \ \ \ \ \ \ \ \ \
     -\sum_{s=1}^{d_2}\frac{\partial^2 f(\tx(t_n), \ty(t_n))}{\partial \ty^{(i)} \partial \ty^{(s)}}\frac{\partial f(\tx(t_n), \ty(t_n))}{\partial \ty^{(i)}}\frac{N_n^{\ty^{(s)}}}{D_n^{\ty^{(s)}}}\right].\\
    \end{split}
\end{align}

To simplify the notation, we denote $N_n^{\tx^{(j)}}(\tx(t_n), \ty(t_n))$ by $N_n^{\tx^{(j)}}$ in the following. Similarly, the same convention applies to $D_n^{\tx^{(j)}}$, $N_n^{\ty^{(i)}}$, $D_n^{\ty^{(i)}}$, $Q_n^{x^{(j)}}$, $P_n^{x^{(j)}}$, $Q_n^{y^{(i)}}$ and $P_n^{y^{(i)}}$.

%(resp. $D_n^{\tx^{(j)}}(\tx(t_n), \ty(t_n))$, $N_n^{\ty^{(i)}}(\tx(t_n), \ty(t_n))$ and $D_n^{\ty^{(i)}}(\tx(t_n), \ty(t_n))$) by $N_n^{\tx^{(j)}}$ (resp. $D_n^{\tx^{(j)}}$, $N_n^{\ty^{(i)}}$ and $D_n^{\ty^{(i)}}$).

We proceed to present the following lemma. We will use Lemma \ref{lemma: 4th order bounded derivatives} to safely truncate the Taylor-expansion to get an $\CO(h^3)$ error.
\begin{lem}\label{lemma: 4th order bounded derivatives}
 Suppose that $f(\tx,\ty)$ has bounded derivatives up to the fourth order. Then for any bounded $h$, we can get $\dot{\tx}^{(j)}(t)$, $\ddot{\tx}^{(j)}(t)$, $\dddot{\tx}^{(j)}(t)$ and $\dot{\ty}^{(i)}(t)$, $\ddot{\ty}^{(i)}(t)$, $\dddot{\ty}^{(i)}(t)$ are bounded independent of $h$ for all $j=1,2,\cdots, d_1$ and $i=1,2,\cdots, d_2$.   
\end{lem}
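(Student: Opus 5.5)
The plan is to argue directly from the form of the ODEs that will be used in Lemma \ref{lemma: ODE for Sim-Adam}, i.e., the family of ODEs on $[t_n,t_{n+1})$. Their right-hand sides have the structure
\[
\dot{\tx}^{(j)}(t) = -\frac{N_n^{\tx^{(j)}}}{D_n^{\tx^{(j)}}} + h\,\Phi^{(j)}\bigl(N_n, D_n, Q_n, P_n\bigr),
\]
and analogously for $\dot{\ty}^{(i)}(t)$. Here $\Phi$ is a rational expression in the quantities defined in \eqref{equ: definition in full batch setting}, whose only denominators are powers of $D_n^{\tx^{(s)}}, D_n^{\ty^{(s)}}$. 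For \ref{CADAM} itself the same holds, with the matrices $\mu_\epsilon,\nu_\epsilon,\CM^{\mu}_{\beta,\rho,\epsilon},\CM^{\nu}_{\beta,\rho,\epsilon}$ in place of these quantities.

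\textbf{Step 1: uniform bounds on the building blocks.} Because $\epsilon>0$, we have $D\ge\sqrt{\epsilon}$. Hence $|N/D|\le 1$, every entry of $\mu_\epsilon,\nu_\epsilon$ is at most $\epsilon^{-1/2}$, and $|\partial f|/D\le 1$. The coefficients $\frac{\beta}{1-\beta}-\frac{(n+1)\beta^{n+1}}{1-\beta^{n+1}}$ and their $\rho$ analogues are bounded uniformly in $n$, since $|\beta|<1$, $\rho\in(0,1)$ and $(n+1)|\beta|^{n+1}$ is bounded; the same holds for $\CK(\beta,\rho)$. The second derivatives of $f$ are bounded by hypothesis. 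Together these show that $Q_n,P_n$ are bounded, and so $\dot{\tx},\dot{\ty}$ are bounded by a constant independent of $h$, using $h\le h_0$.

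\textbf{Step 2: differentiate along the trajectory.} Apply the chain rule: $\frac{d}{dt}G(\tx(t),\ty(t))=\nabla_x G\cdot\dot{\tx}+\nabla_y G\cdot\dot{\ty}$. Each derivative of a building block is again bounded. For example,
\[
\partial\bigl(N/D\bigr)=\epsilon\,\partial N/D^{3},
\]
and $\partial D = N\,\partial N/D$; these involve only derivatives of $f$ of one order higher, times negative powers of $D\ge\sqrt{\epsilon}$. The right-hand side involves derivatives of $f$ up to order $2$. So $\ddot{\tx}$ needs derivatives up to order $3$, and $\dddot{\tx}$ needs derivatives up to order $4$, together with products of the already bounded $\dot{\tx},\ddot{\tx}$. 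This is exactly why boundedness up to the fourth order is required. An induction on the derivative order, with the inductive hypothesis that the $k$-th time derivative is a polynomial in bounded quantities, closes the argument. If the ODEs are written with $\tx(t)$ in place of the frozen point $\tx(t_n)$, the same computation applies.

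\textbf{Main obstacle.} The main obstacle is the bookkeeping: checking that no $h$-dependence enters except through the factor $h$ itself, which is bounded, and that the $n$-dependent coefficients stay uniformly bounded. The denominators are harmless because of the $\epsilon$ lower bound, so the argument reduces to this careful but routine term tracking.
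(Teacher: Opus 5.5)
Your proposal is correct and follows the same route as the paper. The paper's proof is the one-line assertion that the bounds follow from the explicit form of the ODEs together with bounded derivatives of $f$ up to fourth order, and you make explicit what it leaves implicit: the lower bound $D\ge\sqrt{\epsilon}$ on every denominator, the uniform-in-$n$ boundedness of the bias-correction coefficients, and the count showing that $\ddot{\tx}$ and $\dddot{\tx}$ need third and fourth derivatives of $f$, respectively. Note that the ODEs of Lemma \ref{lemma: ODE for Sim-Adam} have state-dependent right-hand sides, so your second reading, with $\tx(t)$ in place of the frozen point $\tx(t_n)$, is the relevant one.
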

\begin{proof}
    The proof follows directly by the definition of $\dot{\tx}^{(j)}(t)$, $\ddot{\tx}^{(j)}(t)$, $\dddot{\tx}^{(j)}(t)$ and $\dot{\ty}^{(i)}(t)$, $\ddot{\ty}^{(i)}(t)$, $\dddot{\ty}^{(i)}(t)$ together with bounded derivatives of $f(x)$ up to the fourth order.
\end{proof}

\begin{lem}\label{lemma: ODE for Sim-Adam}
    Let $t_n=nh$ where $h$ is the stepsize of \ref{Adam}. Then in time region $[t_n, t_{n+1})$, the solutions of the following ODEs:
    \begin{align*}
    &\frac{d \tx^{(j)}(t)}{dt}\\
    &=\frac{-\frac{\partial f(\tx, \ty)}{\partial \tx^{(j)}}}{\sqrt{\left(\frac{\partial f(\tx, \ty)}{\partial \tx^{(j)}}\right)^2+\epsilon}}-\frac{h}{2\sqrt{\left(\frac{\partial f(\tx, \ty)}{\partial \tx^{(j)}}\right)^2+\epsilon}}\times\left(\frac{\partial}{\partial \tx^{(j)}}\|\nabla_{x} f(\tx, \ty)\|_{1,\epsilon}-\frac{\partial}{\partial \tx^{(j)}}\|\nabla_{y} f(\tx, \ty)\|_{1,\epsilon}\right)\\
     &\times\left[\frac{1+\beta}{1-\beta}-\frac{1+\rho}{1-\rho}+\frac{\epsilon}{\left(\frac{\partial f(\tx, \ty)}{\partial \tx^{(j)}}\right)^2+\epsilon}\left(\frac{1+\rho}{1-\rho}-\frac{(n+1)\rho^{n+1}}{1-\rho^{n+1}}\right)-\frac{(n+1)\beta^{n+1}}{1-\beta^{n+1}}+\frac{(n+1)\rho^{n+1}}{1-\rho^{n+1}}\right],\\
     \\
    &\frac{d \ty^{(i)}(t)}{dt}\\
    &=\frac{\frac{\partial f(\tx, \ty)}{\partial \ty^{(i)}}}{\sqrt{\left(\frac{\partial f(\tx, \ty)}{\partial \ty^{(i)}}\right)^2+\epsilon}}+\frac{h}{2\sqrt{\left(\frac{\partial f(\tx, \ty)}{\partial \ty^{(i)}}\right)^2+\epsilon}}\times\left(\frac{\partial}{\partial \ty^{(i)}}\|\nabla_{x} f(\tx, \ty)\|_{1,\epsilon}-\frac{\partial}{\partial \ty^{(i)}}\|\nabla_{y} f(\tx, \ty)\|_{1,\epsilon}\right)\\
    &\times\left[\frac{1+\beta}{1-\beta}-\frac{1+\rho}{1-\rho}+\frac{\epsilon}{\left(\frac{\partial f(\tx, \ty)}{\partial \ty^{(i)}}\right)^2+\epsilon}\left(\frac{1+\rho}{1-\rho}-\frac{(n+1)\rho^{n+1}}{1-\rho^{n+1}}\right)-\frac{(n+1)\beta^{n+1}}{1-\beta^{n+1}}+\frac{(n+1)\rho^{n+1}}{1-\rho^{n+1}}\right].
  \end{align*}
can approximate the trajectories of \ref{Adam} with an $\CO(h^3)$-local error.
\end{lem}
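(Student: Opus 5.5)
We follow the backward error analysis of \cite{pmlr-v235-cattaneo24a}, adapted to two players. The idea is to posit on $[t_n,t_{n+1})$ a modified flow $\dot{\tx}^{(j)} = G^{(j)}_0(\tx,\ty) + h\,G^{(j)}_{1,n}(\tx,\ty)$, and similarly for $\ty$, with $G_0$ the SignGDA-flow field. We then determine $G_{1,n}$ by requiring that the one-step map $\tx(t_{n+1})-\tx(t_n)$ agree with the Adam-DA increment up to $\CO(h^3)$.

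\textbf{Step 1: expand the flow.} On the continuous side, Taylor expansion together with Lemma~\ref{lemma: 4th order bounded derivatives} gives $\tx(t_{n+1})-\tx(t_n) = h\dot{\tx}(t_n) + \frac{h^2}{2}\ddot{\tx}(t_n) + \CO(h^3)$. Here $\ddot{\tx} = \nabla G_0 \cdot (G_0^x, G_0^y) + \CO(h)$. The chain rule makes the $\ty$-block of this Jacobian pick up the sign flip of the ascent player.

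\textbf{Step 2: expand the Adam-DA increment.} Unroll the moving averages as $\tilde{\tm}_{n+1} = (1-\beta)\sum_{k=0}^n \beta^{n-k}\nabla_x f(\tx_k,\ty_k)$, and similarly for $\tilde{\tv}_{n+1}$. Then express each past gradient along the trajectory: $\nabla_x f(\tx_k,\ty_k) = \nabla_x f(\tx_n,\ty_n) - (n-k)h\,[\nabla_x^2 f\,\dot{\tx} + \nabla_{xy} f\,\dot{\ty}] + \CO(h^2)$. The $h^2$ remainders are summable because $\sum (n-k)^2\beta^{n-k}$ is bounded. With the bias corrections, the weighted sums evaluate to
\[
\frac{(1-\beta)\sum_{k=0}^{n}(n-k)\beta^{n-k}}{1-\beta^{n+1}} = \frac{\beta}{1-\beta}-\frac{(n+1)\beta^{n+1}}{1-\beta^{n+1}},
\]
and likewise for $\rho$. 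This is exactly where the coefficients in $Q_n$ and $P_n$ of \eqref{equ: definition in full batch setting} arise. Substituting $\dot{\tx} = -N^x/D^x$ and $\dot{\ty} = N^y/D^y$ at leading order produces the bracketed sums in $Q_n$ and $P_n$.

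Next, expand the ratio $\frac{m}{\sqrt{v+\epsilon}}$ to first order in $h$:
\[
\frac{N - hQ}{\sqrt{D^2 - 2hP}} = \frac{N}{D} - h\frac{Q}{D} + h\frac{N P}{D^3} + \CO(h^2).
\]
So the $x$-step equals $-h\frac{N^{x^{(j)}}}{D^{x^{(j)}}} + h^2\left(\frac{Q_n}{D} - \frac{N P_n}{D^3}\right) + \CO(h^3)$.

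\textbf{Step 3: match the $h^2$ terms.} Equating the two expansions gives $G_{1,n} = \frac{1}{h^2}(\text{Adam }h^2\text{ term}) - \frac12 \nabla G_0\cdot G_0$. Rewrite the result using $\partial_{x^{(j)}}\|\nabla_x f\|_{1,\epsilon} = \sum_s \partial^2_{x^{(j)}x^{(s)}}f\, N^{x^{(s)}}/D^{x^{(s)}}$ and the analogous identity for $\ty$. Also use $\frac{N^2}{D^2} = 1 - \frac{\epsilon}{D^2}$, which converts the $P_n$ contribution into the factor $\frac{\epsilon}{D^2}(\cdot)$. The constants combine as $\frac{\beta}{1-\beta} \mapsto \frac{1+\beta}{1-\beta}$ after absorbing the $\frac12$ from $\ddot{\tx}$, which should reproduce the bracket in the lemma. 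The $\ty$-equation follows by the same computation with the signs reversed.

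\textbf{Main obstacle.} The hardest part is the bookkeeping in Step 3. Several things must line up for the bracket to collapse exactly to $\CK(\beta,\rho)$ plus the $n$-dependent transients: the cross terms between players ($\nabla_{xy}$ paired with the $y$-sign), the $-\frac12\nabla G_0\cdot G_0$ correction, and the $\epsilon$-term rewriting. I would first verify the scalar case $d_1=d_2=1$, then lift it coordinatewise. Finally, the $\CO(h^3)$ local error claim follows because all neglected remainders are controlled, uniformly in $n$, by the bounded fourth derivatives and the geometric sums.
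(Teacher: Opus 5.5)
Your approach is the same as the paper's. You posit an $n$-dependent modified flow on $[t_n,t_{n+1})$ and unroll the bias-corrected moments. You then Taylor-expand the past gradients along the flow, which produces the weights $c_\beta:=\frac{\beta}{1-\beta}-\frac{(n+1)\beta^{n+1}}{1-\beta^{n+1}}$ and $c_\rho$. Finally you expand the ratio and match the $h$ and $h^2$ coefficients.

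\textbf{Sign error in Step 2.} Step 2 has a sign error, and carried through it gives the wrong ODE. Your own Taylor line is correct. Substituting $\dot{\tx}=-N^x/D^x$ and $\dot{\ty}=N^y/D^y$ into $-(n-k)h[\nabla_x^2 f\,\dot{\tx}+\nabla_{xy}f\,\dot{\ty}]$ gives $+(n-k)h[\nabla_x^2 f\,N^x/D^x-\nabla_{xy}f\,N^y/D^y]$. So the bias-corrected first moment is $N+hQ_n$, and the bias-corrected second moment plus $\epsilon$ is $D^2+2hP_n$, with $Q_n,P_n$ exactly as in \eqref{equ: definition in full batch setting}. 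It is not $N-hQ$ and $D^2-2hP$. Hence
\[
\tx^{(j)}(t_{n+1})-\tx^{(j)}(t_n)=-h\frac{N}{D}-h^2\Bigl(\frac{Q_n}{D}-\frac{NP_n}{D^3}\Bigr)+\CO(h^3).
\]
This is the paper's \eqref{equ: sim-Adam final form-A_n}, and its $h^2$ term has the opposite sign to yours.

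The sign matters. Write $B_j:=\partial_{x^{(j)}}\bigl(\|\nabla_x f\|_{1,\epsilon}-\|\nabla_y f\|_{1,\epsilon}\bigr)$. Then $Q_n=c_\beta B_j$, $P_n=c_\rho N B_j$, and $\ddot{\tx}^{(j)}=\epsilon B_j/D^3+\CO(h)$. Use the lemma's normalization $A_n^{(j)}=-\frac{B_j}{2D}[\,\cdot\,]$. With your sign, the $h^2$ matching gives the limiting bracket $-\CK(\beta,\rho)+\frac{1-3\rho}{1-\rho}\frac{\epsilon}{D^2}$ instead of $\CK(\beta,\rho)+\frac{1+\rho}{1-\rho}\frac{\epsilon}{D^2}$. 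That reverses the sign of $\CK$, on which every later conclusion of the paper rests.

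\textbf{The corrected bookkeeping, and the stated bracket.} With the sign fixed, the matching gives
\[
A_n^{(j)}=-\frac{B_j}{D}\Bigl(c_\beta-c_\rho+c_\rho\frac{\epsilon}{D^2}\Bigr)-\frac{\epsilon B_j}{2D^3}=-\frac{B_j}{2D}\Bigl[2c_\beta-2c_\rho+(2c_\rho+1)\frac{\epsilon}{D^2}\Bigr].
\]
The passage to $\frac{1+\beta}{1-\beta}$ comes from $2\frac{\beta}{1-\beta}=\frac{1+\beta}{1-\beta}-1$, with the $-1$ cancelling against its $\rho$-counterpart. The $\frac12\ddot{\tx}$ term is what turns $\frac{2\rho}{1-\rho}\frac{\epsilon}{D^2}$ into $\frac{1+\rho}{1-\rho}\frac{\epsilon}{D^2}$, not the $\beta$-conversion as you suggested.

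This correct computation gives transient terms $\frac{2(n+1)\beta^{n+1}}{1-\beta^{n+1}}$ and $\frac{2(n+1)\rho^{n+1}}{1-\rho^{n+1}}$. These are twice what the lemma prints. The case $n=0$ confirms the factor $2$. The first bias-corrected step is exactly $-hN/D$, so the bracket must reduce to $\epsilon/D^2$. The doubled version does this; the printed version gives $\frac{1}{1-\beta}-\frac{1}{1-\rho}+\frac{1}{1-\rho}\frac{\epsilon}{D^2}$.

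So your plan will not reproduce the stated bracket verbatim. That is an inaccuracy in the statement: the paper's own substitution step, done with the same $Q_n,P_n$, gives the doubled terms. It is harmless for Lemma~\ref{lemma: local error between ODEs} and Theorem~\ref{error_ana}. Those only use that the transient terms are $\CO(h)$ once $n>\max\{\frac{2\log h}{\log|\beta|},\frac{2\log h}{\log\rho}\}$.
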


\begin{proof}
We rewrite \ref{Adam} as
\begin{align*}
    \begin{split}
    &\tx_{n+1}^{(j)}=\tx_{n}^{(j)}-h\frac{\sum_{k=0}^n\frac{\beta^{n-k}(1-\beta)}{1-\beta^{n+1}}\frac{\partial f(\tx_k, \ty_k)}{\partial \tx^{(j)}}}{\sqrt{\sum_{k=0}^n\frac{\rho^{n-k}(1-\rho)}{1-\rho^{n+1}}\left(\frac{\partial f(\tx_k, \ty_k)}{\partial \tx^{(j)}}\right)^2+\epsilon}}, \quad j=1,\cdots, d_1,\\
    \\&\ty^{(i)}_{n+1}=\ty_{n}^{(i)}+h\frac{\sum_{k=0}^n\frac{\beta^{n-k}(1-\beta)}{1-\beta^{n+1}}\frac{\partial f(\tx_k, \ty_k)}{\partial \ty^{(i)}}}{\sqrt{\sum_{k=0}^n\frac{\rho^{n-k}(1-\rho)}{1-\rho^{n+1}}\left(\frac{\partial f(\tx_k, \ty_k)}{\partial \ty^{(i)}}\right)^2+\epsilon}}, \quad i=1,\cdots, d_2.
    \end{split}
\end{align*}
In each time interval $[t_n, t_{n+1})$, we  define 
\begin{itemize}
\item $\nabla_{x} G_n(\tx, \ty)=\left(\frac{\partial G_n(\tx, \ty)}{\partial \tx^{(1)}}, \cdots,\frac{\partial G_n(\tx, \ty)}{\partial \tx^{(j)}},\cdots, \frac{\partial G_n(\tx, \ty)}{\partial \tx^{(d_1)}}\right)$
\item $\nabla_{y} F_n(\tx, \ty)=\left(\frac{\partial F_n(\tx, \ty)}{\partial \ty^{(1)}}, \cdots,\frac{\partial F_n(\tx, \ty)}{\partial \ty^{(i)}},\cdots, \frac{\partial F_n(\tx, \ty)}{\partial \ty^{(d_2)}}\right)$
\item $A_n(\tx, \ty)=\left(A_n^{(1)}(\tx, \ty),\cdots, A_n^{(j)}(\tx, \ty), \cdots, A_n^{(d_1)}(\tx, \ty)\right)$
\item $B_n(\tx, \ty)=\left(B_n^{(1)}(\tx, \ty),\cdots, B_n^{(i)}(\tx, \ty), \cdots, B_n^{(d_2)}(\tx, \ty)\right)$
\end{itemize}
where the terms $G_n(\tx, \ty), F_n(\tx, \ty), A_n^{(j)}(\tx, \ty)$ and $B_n^{(i)}(\tx, \ty)$ are assumed to satisfy the following equations in the time interval $[t_n, t_{n+1})$:
    \begin{align}
            &\frac{d \tx^{(j)}(t)}{dt}=-\frac{\partial G_n(\tx, \ty)}{\partial \tx^{(j)}}+hA_n^{(j)}(\tx, \ty), \label{equ: undetermined coefficients equation-1}\\
            &\frac{d \ty^{(i)}(t)}{dt}=\frac{\partial F_n(\tx, \ty)}{\partial \ty^{(i)}}+hB_n^{(i)}(\tx, \ty). \label{equ: undetermined coefficients equation-2}
    \end{align}
    Our goal is to find $G_n$, $F_n$, $A_n$ and $B_n$ such that 
 \begin{align}
         &\tx^{(j)}(t_{n+1})-\tx^{(j)}(t_n)=\frac{-h\sum_{k=0}^n\frac{\beta^{n-k}(1-\beta)}{1-\beta^{n+1}}\frac{\partial f(\tx(t_k), \ty(t_k))}{\partial \tx^{(j)}}}{\sqrt{\sum_{k=0}^n\frac{\rho^{n-k}(1-\rho)}{1-\rho^{n+1}}\left(\frac{\partial f(\tx(t_k), \ty(t_k))}{\partial \tx^{(j)}}\right)^2+\epsilon}}+\CO(h^3), \label{equ:goal-1}\\
         \nonumber\\
         &\ty^{(i)}(t_{n+1})-\ty^{(i)}(t_n)=\frac{h\sum_{k=0}^n\frac{\beta^{n-k}(1-\beta)}{1-\beta^{n+1}}\frac{\partial f(\tx(t_k), \ty(t_k))}{\partial \ty^{(i)}}}{\sqrt{\sum_{k=0}^n\frac{\rho^{n-k}(1-\rho)}{1-\rho^{n+1}}\left(\frac{\partial f(\tx(t_k), \ty(t_k))}{\partial \ty^{(i)}}\right)^2+\epsilon}}+\CO(h^3). \label{equ: goal-2}
 \end{align} 
 % Firstly, we solve $\frac{\partial G_n(\tx, \ty)}{\partial \tx^{(j)}}$ and $\frac{\partial F_n(\tx, \ty)}{\partial \ty^{(i)}}$.
 We firstly do the Taylor expansions of $\frac{\partial f(\tx(t_k), \ty(t_k))}{\partial \tx^{(j)}}$ as follows:
 \begin{equation}\label{equ:taylor expansion of partial x^{(j)}}
 \begin{split}
\frac{\partial f(\tx(t_k), \ty(t_k))}{\partial \tx^{(j)}} & =\frac{\partial f(\tx(t_n), \ty(t_n))}{\partial \tx^{(j)}}+ \sum_{s=1}^{d_1}\frac{\partial^2 f(\tx(t_n), \ty(t_n))}{\partial \tx^{(j)} \partial \tx^{(s)}}\left(\tx^{(s)}(t_k)-\tx^{(s)}(t_n)\right)\\
\\
& \ \ \ \ \ \ \ \ \ \ \ \ \ \ \ \ \ \ \ \ \ \ \ \ \ \ \ \ \ \ \ \ \ \ +\sum_{s=1}^{d_2}\frac{\partial^2 f(\tx(t_n), \ty(t_n))}{\partial \tx^{(j)} \partial \ty^{(s)}}\left(\ty^{(s)}(t_k)-\ty^{(s)}(t_n)\right)+\CO\left(h^2\right)\\
\\
&=\frac{\partial f(\tx(t_n), \ty(t_n))}{\partial \tx^{(j)}}-\sum_{s=1}^{d_1}\frac{\partial^2 f(\tx(t_n), \ty(t_n))}{\partial \tx^{(j)} \partial \tx^{(s)}}\dot{\tx}^{(s)}(t_n^+)\left(n-k\right)h\\
\\
& \ \ \ \ \ \ \ \ \ \ \ \ \ \ \ \ \ \ \ \ \ \ \ \ \ \ \ \ \ \ \ \ \ \ 
-\sum_{s=1}^{d_2}\frac{\partial^2 f(\tx(t_n), \ty(t_n))}{\partial \tx^{(j)} \partial \ty^{(s)}}\dot{\ty}^{(s)}(t_n^+)\left(n-k\right)h+\CO\left(h^2\right),\\
\\
&=\frac{\partial f(\tx(t_n), \ty(t_n))}{\partial \tx^{(j)}}+\sum_{s=1}^{d_1}\frac{\partial^2 f(\tx(t_n), \ty(t_n))}{\partial \tx^{(j)} \partial \tx^{(s)}}\frac{\partial G_n\left(\tx(t_n), \ty(t_n)\right)}{\partial \tx^{(s)}}\left(n-k\right)h\\
\\
& \ \ \ \ \ \ \ \ \ \ \ \ \ \ \ \ \ \ \ \ \ \ \ \ \ \ \ \ \ \ \ \ \ \ 
-\sum_{s=1}^{d_2}\frac{\partial^2 f(\tx(t_n), \ty(t_n))}{\partial \tx^{(j)} \partial \ty^{(s)}}\frac{\partial F_n\left(\tx(t_n), \ty(t_n)\right)}{\partial \ty^{(s)}}\left(n-k\right)h+\CO\left(h^2\right),
\end{split}    
\end{equation}
 then we obtain that
 \begin{equation}\label{equ:sim-substitute 1}
    \begin{split}
 \sum_{k=0}^n\frac{\beta^{n-k}(1-\beta)}{1-\beta^{n+1}}&\frac{\partial f(\tx(t_k), \ty(t_k))}{\partial \tx^{(j)}}=\sum_{k=0}^n\frac{\beta^{n-k}(1-\beta)}{1-\beta^{n+1}}\frac{\partial f(\tx(t_n), \ty(t_n))}{\partial \tx^{(j)}}\\
 \\
 & +h\sum_{k=0}^n\frac{\beta^{n-k}(1-\beta)\left(n-k\right)}{1-\beta^{n+1}}\sum_{s=1}^{d_1}\frac{\partial^2 f(\tx(t_n), \ty(t_n))}{\partial \tx^{(j)} \partial \tx^{(s)}}\frac{\partial G_n\left(\tx(t_n), \ty(t_n)\right)}{\partial \tx^{(s)}}\\
 \\
& -h\sum_{k=0}^n\frac{\beta^{n-k}(1-\beta)\left(n-k\right)}{1-\beta^{n+1}}\sum_{s=1}^{d_2}\frac{\partial^2 f(\tx(t_n), \ty(t_n))}{\partial \tx^{(j)} \partial \ty^{(s)}}\frac{\partial F_n\left(\tx(t_n), \ty(t_n)\right)}{\partial \ty^{(s)}}+\CO\left(h^2\right),
     \end{split}
 \end{equation}
 Executing square operation in the both sides of (\ref{equ:taylor expansion of partial x^{(j)}}), we can obtain
 \begin{equation}
\begin{split}
\left(\frac{\partial f(\tx(t_k), \ty(t_k))}{\partial \tx^{(j)}}\right)^2  &=\left(\frac{\partial f(\tx(t_n), \ty(t_n))}{\partial \tx^{(j)}}\right)^2\\
\\
& +2h\sum_{s=1}^{d_1}\left(n-k\right)\frac{\partial^2 f(\tx(t_n), \ty(t_n))}{\partial \tx^{(j)} \partial \tx^{(s)}}\frac{\partial f(\tx(t_n), \ty(t_n))}{\partial \tx^{(j)}}\frac{\partial G_n\left(\tx(t_n), \ty(t_n)\right)}{\partial \tx^{(s)}}\\
\\
&-2h\sum_{s=1}^{d_2}\left(n-k\right)\frac{\partial^2 f(\tx(t_n), \ty(t_n))}{\partial \tx^{(j)} \partial \ty^{(s)}}\frac{\partial f(\tx(t_n), \ty(t_n))}{\partial \tx^{(j)}}\frac{\partial F_n\left(\tx(t_n), \ty(t_n)\right)}{\partial \ty^{(s)}}+\CO\left(h^2\right).
     \end{split}
 \end{equation}
 Then we can  get
 \begin{equation}
     \begin{split}
  &\sum_{k=0}^n\frac{\rho^{n-k}(1-\rho)}{1-\rho^{n+1}}\left(\frac{\partial f(\tx(t_k), \ty(t_k))}{\partial \tx^{(j)}}\right)^2 \\
  \\
  &=\sum_{k=0}^n\frac{\rho^{n-k}(1-\rho)}{1-\rho^{n+1}}\left(\frac{\partial f(\tx(t_n), \ty(t_n))}{\partial \tx^{(j)}}\right)^2\\
  \\
  &\ \ \ \ \ \ \ \ \ +2h\sum_{k=0}^n\frac{\rho^{n-k}(1-\rho)\left(n-k\right)}{1-\rho^{n+1}}\sum_{s=1}^{d_1}\frac{\partial^2 f(\tx(t_n), \ty(t_n))}{\partial \tx^{(j)} \partial \tx^{(s)}}\frac{\partial f(\tx(t_n), \ty(t_n))}{\partial \tx^{(j)}}\frac{\partial G_n\left(\tx(t_n), \ty(t_n)\right)}{\partial \tx^{(s)}}\\
  \\
&\ \ \ \ \ \ \ \ \ -2h\sum_{k=0}^n\frac{\rho^{n-k}(1-\rho)\left(n-k\right)}{1-\rho^{n+1}}\sum_{s=1}^{d_2}\frac{\partial^2 f(\tx(t_n), \ty(t_n))}{\partial \tx^{(j)} \partial \ty^{(s)}}\frac{\partial f(\tx(t_n), \ty(t_n))}{\partial \tx^{(j)}}\frac{\partial F_n\left(\tx(t_n), \ty(t_n)\right)}{\partial \ty^{(s)}}\\
\\
&\ \ \ \ \ \ \ \ \ +\CO\left(h^2\right).   
     \end{split}
 \end{equation}
 Then we use the fact that $\left(\sum_{k=0}^na_kh^k\right)^{-\frac{1}{2}}=\frac{1}{\sqrt{a_0}}-\frac{a_1}{2(\sqrt{a_0})^3}h+O(h^2)$ to get
 \begin{equation}\label{equ: sim-substitute-2}
     \begin{split}
    &\frac{1}{\sqrt{\sum_{k=0}^n\frac{\rho^{n-k}(1-\rho)}{1-\rho^{n+1}}\left(\frac{\partial f(\tx(t_k), \ty(t_k))}{\partial \tx^{(j)}}\right)^2+\epsilon}}=\frac{1}{D_n^{\tx^{(j)}}}+\frac{hS_n^{\tx^{(j)}}}{\left(D_n^{\tx^{(j)}}\right)^3}+\CO\left(h^2\right),     
\end{split}
 \end{equation}
 where
 \begin{align*}
     S_n^{\tx^{(j)}}=\sum_{k=0}^n\frac{\rho^{n-k}(1-\rho)\left(n-k\right)}{1-\rho^{n+1}}\left(\sum_{s=1}^{d_1}\frac{\partial^2 f(\tx(t_n), \ty(t_n))}{\partial \tx^{(j)} \partial \tx^{(s)}}\frac{\partial f(\tx(t_n), \ty(t_n))}{\partial \tx^{(j)}}\frac{\partial G_n\left(\tx(t_n), \ty(t_n)\right)}{\partial \tx^{(s)}}\right. \\ \\ \left.-\sum_{s=1}^{d_2}\frac{\partial^2 f(\tx(t_k), \ty(t_k))}{\partial \tx^{(j)} \partial \ty^{(s)}}\frac{\partial f(\tx(t_k), \ty(t_k))}{\partial \tx^{(j)}}\frac{\partial F_n\left(\tx(t_n), \ty(t_n)\right)}{\partial \ty^{(s)}}\right),
 \end{align*} and we have used the equation
 \[
 \sum_{k=0}^n\frac{\rho^{n-k}(1-\rho)}{1-\rho^{n+1}}\left(\frac{\partial f(\tx(t_n), \ty(t_n))}{\partial \tx^{(j)}}\right)^2=\left(\frac{\partial f(\tx(t_n), \ty(t_n))}{\partial \tx^{(j)}}\right)^2.
 \]
Firstly, we solve $\frac{\partial G_n(\tx, \ty)}{\partial \tx^{(j)}}$ and $\frac{\partial F_n(\tx, \ty)}{\partial \ty^{(i)}}$.
Substituting (\ref{equ:sim-substitute 1}) and (\ref{equ: sim-substitute-2}) into (\ref{equ:goal-1}), we can get
\begin{equation}\label{equ: sim-Adam final form}
    \begin{split}
&\tx^{(j)}(t_{n+1})-\tx^{(j)}(t_n)=-h\frac{N_n^{\tx^{(j)}}}{D_n^{\tx^{(j)}}}+\CO(h^2)        
    \end{split}
\end{equation}
 
 Since $\dot{\tx}^{(j)}(t_n^+)$, $\ddot{\tx}^{(j)}(t_n^+)$ and $\dddot{\tx}^{(j)}(t_n^+)$ are bounded independent of $h$ (by Lemma \ref{lemma: 4th order bounded derivatives}), we can obtain the Taylor expansion at time $t_n$ (recall that $t_n=nh$) 
 \begin{align}\label{equ: taylor expansion}
     \tx^{(j)}(t_{n+1})-\tx^{(j)}(t_n)&=\tx^{(j)}(t_{n}+h)-\tx^{(j)}(t_n)\nonumber\\
     &=h\dot{\tx}^{(j)}(t_n^+)+\frac{h^2}{2}\ddot{\tx}^{(j)}(t_n^+)+\CO(h^3),
 \end{align}
 where $\dot{\tx}^{(j)}(t_n^+)$ (resp. $\ddot{\tx}^{(j)}(t_n^+)$) is the right first-order derivative (resp. the right second-order derivative) at time $t_n$. 

 Applying the chain rule for (\ref{equ: undetermined coefficients equation-1}), we have
 \begin{equation}\label{equ: second-order right derivative}
     \begin{split}
 &\ddot{\tx}^{(j)}(t_n^+)\\
 \\
     &=-\sum_{s=1}^{d_1}\frac{\partial^2G_n(\tx(t_n), \ty(t_n))}{\partial \tx^{(s)}\partial \tx^{(j)}}\dot{\tx}^{(s)}(t_n^+)-\sum_{s=1}^{d_2}\frac{\partial^2G_n(\tx(t_n), \ty(t_n))}{\partial \tx^{(j)} \partial \ty^{(s)}}\dot{\ty}^{(s)}(t_n^+)+\CO(h)\\
    \\ &=\sum_{s=1}^{d_1}\frac{\partial^2G_n(\tx(t_n), \ty(t_n))}{\partial \tx^{(s)}\partial \tx^{(j)}}\frac{\partial G_n(\tx(t_n), \ty(t_n))}{\partial \tx^{(s)}}\\
    \\
     &\ \ \ \ \ \ \ \ \ \ \ \ \ -\sum_{s=1}^{d_2}\frac{\partial^2G_n(\tx(t_n), \ty(t_n))}{\partial \tx^{(j)} \partial \ty^{(s)}}\frac{\partial F_n(\tx(t_n), \ty(t_n))}{\partial \ty^{(s)}}+\CO(h).        
     \end{split}
 \end{equation}
 Also note that from (\ref{equ: undetermined coefficients equation-1}), we have
 \begin{align}\label{equ: first-order right derivative}
    \dot{\tx}^{(j)}(t_n^+)=-\frac{\partial G_n(\tx(t_n), \ty(t_n))}{\partial \tx^{(j)}}+hA_n^{(j)}(\tx(t_n), \ty(t_n)),
 \end{align}
Substituting (\ref{equ: second-order right derivative}) and (\ref{equ: first-order right derivative}) into (\ref{equ: taylor expansion}), we can get
\begin{equation}\label{equ:final taylor expansion}
    \begin{split}
   \tx^{(j)}(t_{n+1})-\tx^{(j)}(t_n)&=-h\frac{\partial G_n(\tx(t_n), \ty(t_n))}{\partial \tx^{(j)}}+h^2A_n^{(j)}(\tx(t_n), \ty(t_n))\\
   \\
   &+\frac{h^2}{2}\sum_{s=1}^{d_1}\frac{\partial^2G_n(\tx(t_n), \ty(t_n))}{\partial \tx^{(s)}\partial \tx^{(j)}}\frac{\partial G_n(\tx(t_n), \ty(t_n))}{\partial \tx^{(s)}}\\
   \\
   &-\frac{h^2}{2}\sum_{s=1}^{d_2}\frac{\partial^2G_n(\tx(t_n), \ty(t_n))}{\partial \tx^{(j)} \partial \ty^{(s)}}\frac{\partial F_n(\tx(t_n), \ty(t_n))}{\partial \ty^{(s)}}+\CO(h^3).    
    \end{split}
\end{equation}
Comparing the coefficients of term $h$ in (\ref{equ: sim-Adam final form}) and (\ref{equ:final taylor expansion}), we can get
\begin{align}\label{equ: G_n-sim}
\frac{\partial G_n(\tx(t_n), \ty(t_n))}{\partial \tx^{(j)}}=\frac{N_n^{\tx^{(j)}}}{D_n^{\tx^{(j)}}}. 
\end{align}
Repeating the similar argument for $y$-player's equation, we can get
\begin{align}\label{equ: F_n-sim}
\frac{\partial F_n(\tx(t_n), \ty(t_n))}{\partial \ty^{(i)}}=\frac{N_n^{\ty^{(i)}}}{D_n^{\ty^{(i)}}}.    
\end{align}
With (\ref{equ: G_n-sim}) and (\ref{equ: F_n-sim}) in hand, we can substitute (\ref{equ:sim-substitute 1}) and (\ref{equ: sim-substitute-2}) into (\ref{equ:goal-1}) to get 
\begin{equation}\label{equ: sim-Adam final form-A_n}
    \begin{split}
&\tx^{(j)}(t_{n+1})-\tx^{(j)}(t_n)=-h\frac{N_n^{\tx^{(j)}}}{D_n^{\tx^{(j)}}}-h^2\left(\frac{Q_n^{\tx^{(j)}}}{D_n^{\tx^{(j)}}}-\frac{P_n^{\tx^{(j)}}N_n^{\tx^{(j)}}}{\left(D_n^{\tx^{(j)}}\right)^3}\right)+\CO(h^3)        
    \end{split}
\end{equation}
Comparing the coefficients of term $h^2$ in (\ref{equ: sim-Adam final form-A_n}) and (\ref{equ:final taylor expansion}), we can get
\begin{equation}\label{equ: explicit of A_n}
\begin{split}
&A_n^{(j)}(\tx(t_n), \ty(t_n))\\
\\
&=\frac{1}{2}\sum_{s=1}^{d_2}\frac{\partial^2G_n(\tx(t_n), \ty(t_n))}{\partial \tx^{(j)} \partial \ty^{(s)}}\frac{\partial F_n(\tx(t_n), \ty(t_n))}{\partial \ty^{(s)}} \\ 
\\
&\ \ \ \ \ \ \ \ \ \ -\frac{1}{2}\sum_{s=1}^{d_1}\frac{\partial^2G_n(\tx(t_n), \ty(t_n))}{\partial \tx^{(s)}\partial \tx^{(j)}}\frac{\partial G_n(\tx(t_n), \ty(t_n))}{\partial \tx^{(s)}}-\left(\frac{Q_n^{\tx^{(j)}}}{D_n^{\tx^{(j)}}}-\frac{P_n^{\tx^{(j)}}N_n^{\tx^{(j)}}}{\left(D_n^{\tx^{(j)}}\right)^3}\right) \\
\\
&=-\left(\sum_{s=1}^{d_1}\frac{\frac{\partial N_n^{\tx^{(j)}}}{\partial \tx^{(s)}}D_n^{\tx^{(j)}}-\frac{\partial D_n^{\tx^{(j)}}}{\partial \tx^{(s)}}N_n^{\tx^{(j)}}}{2\left(D_n^{\tx^{(j)}}\right)^2}\frac{N_n^{\tx^{(s)}}}{D_n^{\tx^{(s)}}}-\sum_{s=1}^{d_2}\frac{\frac{\partial N_n^{\tx^{(j)}}}{\partial \ty^{(s)}}D_n^{\tx^{(j)}}-\frac{\partial D_n^{\tx^{(j)}}}{\partial \ty^{(s)}}N_n^{\tx^{(j)}}}{2\left(D_n^{\tx^{(j)}}\right)^2}\frac{N_n^{\ty^{(s)}}}{D_n^{\ty^{(s)}}}\right) \\
\\
&\ \ \ \ \ \ \ \ \ \ -\left(\frac{Q_n^{\tx^{(j)}}}{D_n^{\tx^{(j)}}}-\frac{P_n^{\tx^{(j)}}N_n^{\tx^{(j)}}}{\left(D_n^{\tx^{(j)}}\right)^3}\right).
\end{split}
\end{equation}
Substituting $D_n^{\tx^{(j)}}$, $N_n^{\tx^{(j)}}$, $P_n^{\tx^{(j)}}$ and $Q_n^{\tx^{(j)}}$ in (\ref{equ: definition in full batch setting}) into (\ref{equ: explicit of A_n}), we can get the desired equation.

Repeating the above similar argument for $y$-player's equation, we can get
\begin{align}\label{equ: explicit equation of F_n^i}
&\frac{\partial F_n(\tx, \ty)}{\partial \ty^{(i)}}=\frac{N_n^{\ty^{(i)}}}{D_n^{\ty^{(i)}}}, \\ \nonumber \\ \nonumber\label{equ:B_n^i}
&B_n^{(i)}(\tx(t_n), \ty(t_n))=\sum_{s=1}^{d_1}\frac{\frac{\partial N_n^{\ty^{(i)}}}{\partial \tx^{(s)}}D_n^{\ty^{(i)}}-\frac{\partial D_n^{\ty^{(i)}}}{\partial \tx^{(s)}}N_n^{\ty^{(i)}}}{2\left(D_n^{\ty^{(i)}}\right)^2}\frac{N_n^{\tx^{(s)}}}{D_n^{\tx^{(s)}}}\\\nonumber \\
&\ \ \ \ \ \ \ \ \ \ \ \ \ \ \ \ -\sum_{s=1}^{d_2}\frac{\frac{\partial N_n^{\ty^{(i)}}}{\partial \ty^{(s)}}D_n^{\ty^{(i)}}-\frac{\partial D_n^{\ty^{(i)}}}{\partial \ty^{(s)}}N_n^{\ty^{(i)}}}{2\left(D_n^{\ty^{(i)}}\right)^2}\frac{N_n^{\ty^{(s)}}}{D_n^{\ty^{(s)}}}+\left(\frac{Q_n^{\ty^{(i)}}}{D_n^{\ty^{(i)}}}-\frac{P_n^{\ty^{(i)}}N_n^{\ty^{(i)}}}{\left(D_n^{\ty^{(i)}}\right)^3}\right).
\end{align}
Substituting $D_n^{\ty^{(i)}}$, $N_n^{\ty^{(i)}}$, $P_n^{\ty^{(i)}}$ and $Q_n^{\ty^{(i)}}$ in (\ref{equ: definition in full batch setting}) into (\ref{equ: explicit equation of F_n^i}) and (\ref{equ:B_n^i}), we can get the desired equation.
\end{proof}

\begin{cor}\label{Corollary: ODEs when n tends to infty}
When $n$ tends to infinity, the ODEs become
\begin{equation*}
    \begin{split}
        &\frac{d \tx^{(j)}(t)}{dt}=-\frac{\frac{\partial f(\tx, \ty)}{\partial \tx^{(j)}}}{\sqrt{\left(\frac{\partial f(\tx, \ty)}{\partial \tx^{(j)}}\right)^2+\epsilon}}-\frac{h}{2\sqrt{\left(\frac{\partial f(\tx, \ty)}{\partial \tx^{(j)}}\right)^2+\epsilon}}\times\\
        &\left(\frac{1+\beta}{1-\beta}-\frac{1+\rho}{1-\rho}+\frac{\epsilon}{\left(\frac{\partial f(\tx, \ty)}{\partial \tx^{(j)}}\right)^2+\epsilon}\frac{1+\rho}{1-\rho}\right)\left(\frac{\partial}{\partial \tx^{(j)}}\|\nabla_{x} f(\tx, \ty)\|_{1,\epsilon}-\frac{\partial}{\partial \tx^{(j)}}\|\nabla_{y} f(\tx, \ty)\|_{1,\epsilon}\right),
    \end{split}
\end{equation*}
\begin{equation*}
    \begin{split}
        &\frac{d \ty^{(i)}(t)}{dt}=\frac{\frac{\partial f(\tx, \ty)}{\partial \ty^{(i)}}}{\sqrt{\left(\frac{\partial f(\tx, \ty)}{\partial \ty^{(i)}}\right)^2+\epsilon}}+\frac{h}{2\sqrt{\left(\frac{\partial f(\tx, \ty)}{\partial \ty^{(i)}}\right)^2+\epsilon}}\times\\
        &\left(\frac{1+\beta}{1-\beta}-\frac{1+\rho}{1-\rho}+\frac{\epsilon}{\left(\frac{\partial f(\tx, \ty)}{\partial \ty^{(i)}}\right)^2+\epsilon}\frac{1+\rho}{1-\rho}\right)\left(\frac{\partial}{\partial \ty^{(i)}}\|\nabla_{x} f(\tx, \ty)\|_{1,\epsilon}-\frac{\partial}{\partial \ty^{(i)}}\|\nabla_{y} f(\tx, \ty)\|_{1,\epsilon}\right).
    \end{split}
\end{equation*}
\end{cor}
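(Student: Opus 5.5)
The plan is to take the limit $n\to\infty$ term by term in the ODE family of Lemma~\ref{lemma: ODE for Sim-Adam}. The only $n$-dependence sits in the bracketed coefficient multiplying the $h$-correction. That coefficient involves the quantities $\frac{(n+1)\beta^{n+1}}{1-\beta^{n+1}}$ and $\frac{(n+1)\rho^{n+1}}{1-\rho^{n+1}}$. Everything else (the leading term $-N/D$, the prefactor $h/(2D)$, and the gradient difference of the perturbed $\ell_1$ norms) is independent of $n$, since $\epsilon$ and $f$ are fixed.

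First I would verify that both sequences tend to zero. Because $|\beta|<1$ and $\rho\in(0,1)$, we have $(n+1)|\beta|^{n+1}\to 0$ by the standard fact that $n q^n\to0$ for $|q|<1$, while $1-\beta^{n+1}\to 1$. The same argument applies to $\rho$.

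With both sequences gone, the bracket reduces to
\begin{align*}
\frac{1+\beta}{1-\beta}-\frac{1+\rho}{1-\rho}+\frac{\epsilon}{\left(\partial_{x^{(j)}} f\right)^2+\epsilon}\cdot\frac{1+\rho}{1-\rho}.
\end{align*}
This is exactly the coefficient in the corollary, and the $y$-equation is handled identically.

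To make the limit meaningful at the level of the vector field, I would note that the convergence is uniform in $(\tx,\ty)$. The ratio $\epsilon/((\partial f)^2+\epsilon)$ lies in $(0,1]$, and the remaining factors are bounded by the bounded-derivative assumption together with $D\ge\sqrt{\epsilon}$. Hence the difference between the $n$-th vector field and the limiting one is bounded by $Ch\left(n|\beta|^{n}+n\rho^{n}\right)$, which decays exponentially in $n$. This uniform bound is the only mildly delicate point. It is also what later justifies replacing the family by a single ODE after roughly $\max\{2\log h/\log|\beta|,2\log h/\log\rho\}$ steps: at that point $|\beta|^n,\rho^n\le h^2$, so the discrepancy is of order $h^3$ up to a logarithmic factor.

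Finally, I would rewrite the coefficient in matrix form. Writing $\frac{1+\beta}{1-\beta}-\frac{1+\rho}{1-\rho}=\CK(\beta,\rho)$ and $\epsilon/D^2$ as the $j$-th diagonal entry of $\epsilon\mu_\epsilon^2$ identifies the result with $\CM^{\mu}_{\beta,\rho,\epsilon}$ and $\CM^{\nu}_{\beta,\rho,\epsilon}$ in \ref{CADAM}.
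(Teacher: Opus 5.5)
Your proposal is correct and takes the same route as the paper. The paper gives no separate proof, since the corollary is just the $n\to\infty$ limit of Lemma~\ref{lemma: ODE for Sim-Adam}, where the only $n$-dependence is through $\frac{(n+1)\beta^{n+1}}{1-\beta^{n+1}}$ and $\frac{(n+1)\rho^{n+1}}{1-\rho^{n+1}}$; both vanish for $|\beta|<1$ and $\rho\in(0,1)$, leaving exactly the stated coefficient. Your uniform-in-$(\tx,\ty)$ bound on the difference of vector fields is not needed for the corollary itself, but it is essentially the estimate the paper carries out separately in Lemma~\ref{lemma: local error between ODEs}.
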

\textbf{Remark.} Recall that the derivation of ODEs in Appendix \ref{proofofthm1} is written by coordinate-wise and we transform it into matrix form in Section \ref{CTM}.

\begin{lem}\label{lemma: local error between ODEs}
    Suppose that $f(\tx)$ has bounded derivatives up to the fourth order. Let $T_0$ be a fixed time interval. For all $t\in [0, T_0]$. Let $\left(\tilde{\tx}(t), \tilde{\ty}(t)\right)$ be the solution trajectory of ODEs defined in Lemma \ref{lemma: ODE for Sim-Adam} and $\left(\tx(t), \ty(t)\right)$ be the solution trajectory of ODEs for \ref{CADAM} defined in Corollary \ref{Corollary: ODEs when n tends to infty}. Suppose that at time $t_n=nh$, we have\[\left(\tilde{\tx}(t), \tilde{\ty}(t)\right)=\left(\tx(t), \ty(t)\right).\] If we select $n>\max\{\frac{2\log h}{\log|\beta|}, \frac{2\log h}{\log\rho}\}$, then at time $t_{n+1}=(n+1)h$, we have
    \[
    \|\left(\tilde{\tx}(t_{n+1}), \tilde{\ty}(t_{n+1})\right)-\left(\tx(t_{n+1}), \ty(t_{n+1})\right)\|=\CO(h^3).
    \]
\end{lem}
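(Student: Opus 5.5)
The plan is to compare the two vector fields directly on the single interval $[t_n,t_{n+1}]$ and then apply a Gronwall-type estimate. Both trajectories start from the same point at $t_n$ and the interval has length $h$. So it suffices to show two things: the two right-hand sides differ by $\CO(h^2)$ uniformly, and they are Lipschitz in $(\tx,\ty)$ with an $h$-independent constant. Integrating an $\CO(h^2)$ discrepancy over a time window of length $h$ then gives the claimed $\CO(h^3)$ bound.

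First I will write the right-hand side of the ODEs of Lemma \ref{lemma: ODE for Sim-Adam} as $\Phi(\tx,\ty)+h\,R_n(\tx,\ty)$. Here $\Phi$ is the right-hand side of the limiting ODEs in Corollary \ref{Corollary: ODEs when n tends to infty}. The remainder $R_n$ collects every term that carries a factor $\frac{(n+1)\beta^{n+1}}{1-\beta^{n+1}}$ or $\frac{(n+1)\rho^{n+1}}{1-\rho^{n+1}}$. Reading off the bracket in Lemma \ref{lemma: ODE for Sim-Adam}, each coordinate of $R_n$ has the form
\begin{align*}
\frac{c_n(\tx,\ty)}{2\sqrt{(\partial f)^2+\epsilon}}\Bigl(\partial\|\nabla_x f\|_{1,\epsilon}-\partial\|\nabla_y f\|_{1,\epsilon}\Bigr),
\end{align*}
where the scalar $c_n$ is a combination of $\frac{(n+1)\beta^{n+1}}{1-\beta^{n+1}}$ and $\frac{(n+1)\rho^{n+1}}{1-\rho^{n+1}}$ with coefficients bounded by $1+\epsilon/((\partial f)^2+\epsilon)\le 2$. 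By the bounded-derivative hypothesis (as used in Lemma \ref{lemma: 4th order bounded derivatives}), the factor multiplying $c_n$ is bounded, and so is its derivative, since $\sqrt{(\partial f)^2+\epsilon}\ge\sqrt{\epsilon}$. Hence it suffices to bound $|c_n|$.

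The key step uses the hypothesis on $n$. Since $\log|\beta|<0$ and $\log\rho<0$, the condition $n>\frac{2\log h}{\log|\beta|}$ is equivalent to $|\beta|^n<h^2$, and likewise $\rho^n<h^2$. This alone does not make $(n+1)|\beta|^{n+1}$ small uniformly, because $n$ can be large. Here I will use the finite horizon: $t_n=nh\le T_0$ gives $n+1\le T_0/h+1$. Together with $1-|\beta|^{n+1}\ge 1-|\beta|$, this yields
\begin{align*}
\frac{(n+1)|\beta|^{n+1}}{|1-\beta^{n+1}|}\le \frac{(T_0/h+1)\,h^2}{1-|\beta|}=\CO(h),
\end{align*}
and the same holds for $\rho$. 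Thus $h\,R_n=\CO(h^2)$ uniformly in $(\tx,\ty)$. Getting this uniform bound, and not merely a pointwise-in-$n$ one, is the main point, and the finite horizon $T_0$ is exactly what rescues it. If a sharper rate were desired, one could instead use $n|\beta|^n\le C_\beta |\beta|^{n/2}$, but $\CO(h)$ is all that is needed here.

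Finally, set $e(t)=(\tilde\tx,\tilde\ty)(t)-(\tx,\ty)(t)$ on $[t_n,t_{n+1}]$, so that $e(t_n)=0$. Then
\begin{align*}
\|\dot e\|\le \|\Phi(\tilde\tx,\tilde\ty)-\Phi(\tx,\ty)\|+h\|R_n\|\le L\|e\|+Ch^2,
\end{align*}
where $L$ is a Lipschitz constant of $\Phi$. This constant is independent of $h$: it follows from the bounded derivatives up to fourth order and the lower bound $\sqrt{\epsilon}$ on every denominator. Gronwall's inequality over a window of length $h$ gives $\|e(t_{n+1})\|\le Ch^2\cdot h\,e^{Lh}=\CO(h^3)$, which is the assertion.
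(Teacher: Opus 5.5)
Your proof is correct, and it reaches the bound by a different mechanism than the paper.

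\textbf{The paper's route.} It Taylor-expands both trajectories at $t_n$ to second order with an $\CO(h^3)$ remainder. This is where Lemma~\ref{lemma: 4th order bounded derivatives}, and hence the fourth-order hypothesis, enters. It cancels the matching initial values. It then bounds the first-derivative discrepancy at $t_n$ by $h^2$ times the same combination of $\frac{(n+1)\beta^{n+1}}{1-\beta^{n+1}}$ and $\frac{(n+1)\rho^{n+1}}{1-\rho^{n+1}}$ that you collect in $c_n$. The second-derivative discrepancy is dispatched ``by a similar argument''.

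\textbf{Your route.} You split the field as $\Phi+hR_n$ and run a one-step Gronwall estimate. The decisive estimate is the same in both proofs: $(n+1)|\beta|^{n+1}\le (T_0/h+1)h^2=\CO(h)$, using $|\beta|^n<h^2$ and $nh\le T_0$.

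\textbf{What your route buys.} You need only a uniform bound on $hR_n$ and an $h$-independent Lipschitz constant for $\Phi$. So bounded derivatives of $f$ up to third order already suffice, and nothing is left to a ``similar argument''. Your denominator bound $|1-\beta^{n+1}|\ge 1-|\beta|$ also holds for every $\beta\in(-1,1)$. The paper instead replaces $1-\beta^{n+1}$ by $1-\beta$, which fails for every $\beta<0$ once $n\ge 1$; this only affects a constant.

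\textbf{What the paper's route buys.} Its Taylor matching stays inside the same coefficient-matching framework as Lemma~\ref{lemma: ODE for Sim-Adam}.

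\textbf{A small correction to your side remark.} Bounding $n|\beta|^{n/2}$ by a constant $C_\beta$ does not sharpen the rate; it is still $\CO(h)$. What it buys is a constant independent of the horizon $T_0$.
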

\begin{proof} We only need to prove
\begin{align*}
 &|\tilde{\tx}^{(j)}(t_{n+1})-\tx^{(j)}(t_{n+1})|=\CO(h^3),\,\,\, j=1,2,\cdots, d_1, \\
 &|\tilde{\ty}^{(i)}(t_{n+1})-\ty^{(i)}(t_{n+1})|=\CO(h^3),\,\,\, i=1,2,\cdots, d_2.
\end{align*}
    Firstly, we prove $|\tilde{\tx}^{(j)}(t_{n+1})-\tx^{(j)}(t_{n+1})|=\CO(h^3)$. By the Taylor expansion, we have
    \begin{align*}
       &\tilde{\tx}^{(j)}(t_{n+1})=\tilde{\tx}^{(j)}(nh+h)= \tilde{\tx}^{(j)}(nh)+h\dot{\tilde{\tx}}^{(j)}(nh^+)+\frac{h^2}{2}\ddot{\tilde{\tx}}^{(j)}(nh^+)+\CO(h^3),\\
       &\tx^{(j)}(t_{n+1})=\tx^{(j)}(nh+h)=\tx^{(j)}(nh)+h\dot{\tx}^{(j)}(nh)+\frac{h^2}{2}\ddot{\tx}^{(j)}(nh)+\CO(h^3),
    \end{align*}
     Recall that $\left(\tx(t_n), \ty(t_n)\right)=\left(\tilde{\tx}(t_n), \tilde{\ty}(t_n)\right)$, then we can get
     \[
     |\tilde{\tx}^{(j)}(t_{n+1})-\tx^{(j)}(t_{n+1})|=h|\dot{\tilde{\tx}}^{(j)}(nh^+)-\dot{\tx}^{(j)}(nh)|+\frac{h^2}{2}|\ddot{\tilde{\tx}}^{(j)}(nh^+)-\ddot{\tx}^{(j)}(nh)|+\CO(h^3).
     \]
     By Lemma \ref{lemma: ODE for Sim-Adam} and Corollary \ref{Corollary: ODEs when n tends to infty}, we have
     \begin{align*}
         &|h\dot{\tilde{\tx}}^{(j)}(nh^+)-h\dot{\tx}^{(j)}(nh)|\\
         &=h^2\left|\frac{\epsilon}{\left(\frac{\partial f(\tx(t), \ty(t))}{\partial \ty^{(i)}}\right)^2+\epsilon}\left(-\frac{(n+1)\rho^{n+1}}{1-\rho^{n+1}}\right)-\frac{(n+1)\beta^{n+1}}{1-\beta^{n+1}}+\frac{(n+1)\rho^{n+1}}{1-\rho^{n+1}}\right|\\
         &\ \ \ \ \ \ \ \ \times \left|\frac{\partial}{\partial \tx^{(j)}}\|\nabla_x f(x(t), y(t))\|_{1,\epsilon}-\frac{\partial}{\partial \tx^{(j)}}\|\nabla_y f(x(t), y(t))\|_{1,\epsilon}\right|\\
         \\
         &\leq h^2\left(\left|\frac{\epsilon}{\left(\frac{\partial f(x(t), y(t))}{\partial \ty^{(i)}}\right)^2+\epsilon}\left(-\frac{(n+1)\rho^{n+1}}{1-\rho^{n+1}}\right)\right|+\left|-\frac{(n+1)\beta^{n+1}}{1-\beta^{n+1}}\right|+\left|\frac{(n+1)\rho^{n+1}}{1-\rho^{n+1}}\right|\right)\\
         &\ \ \ \ \ \ \ \ \times\left|\frac{\partial}{\partial \tx^{(j)}}\|\nabla_x f(x(t), y(t))\|_{1,\epsilon}-\frac{\partial}{\partial \tx^{(j)}}\|\nabla_y f(x(t), y(t))\|_{1,\epsilon}\right|\\
         \\
         &\leq h^2\left(\left|\frac{(n+1)\rho^{n+1}}{1-\rho}\right|+\left|\frac{(n+1)\beta^{n+1}}{1-\beta}\right|+\left|\frac{(n+1)\rho^{n+1}}{1-\rho}\right|\right)\\
         &\ \ \ \ \ \ \ \ \times\left|\frac{\partial}{\partial \tx^{(j)}}\|\nabla_x f(x(t), y(t))\|_{1,\epsilon}-\frac{\partial}{\partial \tx^{(j)}}\|\nabla_y f(x(t), y(t))\|_{1,\epsilon}\right|
     \end{align*}
     Actually, the term
     $$\left|\frac{\partial}{\partial \tx^{(j)}}\|\nabla_x f(\tx(t), \ty(t))\|_{1,\epsilon}-\frac{\partial}{\partial \tx^{(j)}}\|\nabla_y f(\tx(t), \ty(t))\|_{1,\epsilon}\right|$$
     is bounded since the first and second derivatives of $f$ are bounded. It suffices to prove
     \[
     \left|\frac{(n+1)\rho^{n+1}}{1-\rho}\right|+\left|\frac{(n+1)\beta^{n+1}}{1-\beta}\right|+\left|\frac{(n+1)\rho^{n+1}}{1-\rho}\right|=\CO(h). 
     \]
     If we select $n>\max\{\frac{2\log h}{\log|\beta|}, \frac{2\log h}{\log\rho}\}$, i.e., $|\beta|^n<h^2$ and $|\rho|^n<h^2$, then we have
     \[
     |(n+1)\beta^{n+1}|<\frac{T_0}{h}|\beta|^{n+1}<T_0|\beta|h, \quad |(n+1)\rho^{n+1}|<\frac{T_0}{h}|\rho|^{n+1}<T_0|\rho|h.
     \]
     Repeating the similar argument, we can prove $$h^2|\ddot{\tilde{\tx}}^{(j)}(nh^+)-\ddot{\tx}^{(j)}(nh)|=\CO(h^3)$$ 
     if $n>\max\{\frac{2\log h}{\log|\beta|}, \frac{2\log h}{\log\rho}\}.$ We can also prove $|\tilde{\ty}^{(i)}(t_{n+1})-\ty^{(i)}(t_{n+1})|=\CO(h^3)$ through similarly arguments.
\end{proof}
The proof of Theorem \ref{error_ana} follows from Lemma \ref{lemma: ODE for Sim-Adam}, Corollary \ref{Corollary: ODEs when n tends to infty} and Lemma \ref{lemma: local error between ODEs}.

\newpage
\section{Additional Materials for Section \ref{LC}.}\label{Appendix_S4}

\subsection{Proof of Proposition \ref{jacobianadam}.}\label{Appendix_S41} 

\begin{lem}\label{Lemma1Appendix_S42}
    We have
    \begin{itemize}
        \item $\nabla_x \lVert \nabla_xf(\tx,\ty) \lVert_{1,\epsilon} = \nabla^2_xf(\tx,\ty) \cdot \mu_{\epsilon}(\tx,\ty) \cdot \nabla_xf(\tx,\ty)$
        \item $\nabla_y \lVert \nabla_xf(\tx,\ty) \lVert_{1,\epsilon} = \nabla_{yx}f(\tx,\ty) \cdot \mu_{\epsilon}(x,y) \cdot \nabla_xf(\tx,\ty)$
        \item $\nabla_y \lVert \nabla_yf(\tx,\ty) \lVert_{1,\epsilon} = \nabla^2_{y}f(\tx,\ty) \cdot \nu_{\epsilon}(x,y) \cdot \nabla_yf(\tx,\ty)$
        \item $\nabla_x \lVert \nabla_yf(\tx,\ty) \lVert_{1,\epsilon} = \nabla_{xy}f(\tx,\ty) \cdot \nu_{\epsilon}(x,y) \cdot \nabla_yf(\tx,\ty)$
    \end{itemize}
\end{lem}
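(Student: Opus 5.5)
The argument is a direct chain-rule computation, carried out coordinate-wise and then reassembled into matrix form. I would start from the definition $\lVert \nabla_x f(\tx,\ty)\rVert_{1,\epsilon} = \sum_{s=1}^{d_1}\sqrt{(\partial_{x_s} f(\tx,\ty))^2+\epsilon}$. Since $\epsilon>0$, each summand is a smooth function of $(\tx,\ty)$, so there is no differentiability issue. Differentiating the summand with respect to any variable $z$ (a coordinate of either $\tx$ or $\ty$) gives
\begin{align*}
\frac{\partial}{\partial z}\sqrt{(\partial_{x_s} f)^2+\epsilon} = \frac{\partial_{x_s} f}{\sqrt{(\partial_{x_s} f)^2+\epsilon}}\,\frac{\partial^2 f}{\partial z\,\partial x_s}.
\end{align*}
The factor $1/\sqrt{(\partial_{x_s} f)^2+\epsilon}$ is exactly the $s$-th diagonal entry of $\mu_\epsilon(\tx,\ty)$. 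Here the notation $\lVert \partial_{x_j} f\rVert_{1,\epsilon}$ for a scalar means $\sqrt{(\partial_{x_j}f)^2+\epsilon}$.

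Next I would sum over $s$. For $z=x_j$ this gives
\begin{align*}
\sum_s [\nabla_x^2 f]_{js}\,[\mu_\epsilon]_{ss}\,[\nabla_x f]_s,
\end{align*}
which is the $j$-th entry of $\nabla_x^2 f\,\mu_\epsilon\,\nabla_x f$ and proves the first item. For $z=y_i$ the same computation gives $\sum_s \partial_{y_i}\partial_{x_s} f\,[\mu_\epsilon]_{ss}[\nabla_x f]_s$. This is the $i$-th entry of $\nabla_{yx} f\,\mu_\epsilon\,\nabla_x f$, provided $\nabla_{yx}f$ denotes the $d_2\times d_1$ matrix with entries $\partial_{y_i}\partial_{x_s} f$. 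That convention is consistent with the block $\nabla_{yx}f$ appearing in the lower-left of \eqref{jacgda}, so the second item follows.

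The third and fourth items follow by the symmetric computation with the roles of $\tx$ and $\ty$ exchanged and $\mu_\epsilon$ replaced by $\nu_\epsilon$. In the fourth item, $\nabla_{xy}f$ is the $d_1\times d_2$ matrix with entries $\partial_{x_j}\partial_{y_s} f$, which is the upper-right block of \eqref{jacgda}.

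I do not expect a real obstacle. The only point needing care is bookkeeping of the index conventions for the mixed blocks $\nabla_{xy}f$ and $\nabla_{yx}f$, so that the products have the correct shapes. Symmetry of mixed partials, which holds because $f$ is smooth, makes the two orderings of differentiation agree entrywise.
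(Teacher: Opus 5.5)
Your proposal is correct and is essentially the paper's argument: the paper applies the chain rule in vector form with $g=\nabla_x f$, writing $\nabla_x\lVert g\rVert_{1,\epsilon}=\nabla_x^2 f\cdot\nabla_g\lVert g\rVert_{1,\epsilon}$ and identifying $\nabla_g\lVert g\rVert_{1,\epsilon}=\mu_\epsilon\nabla_x f$, while you carry out the same computation entrywise. Your explicit check of the shapes of $\nabla_{xy}f$ and $\nabla_{yx}f$ is a minor gain in care, since the paper writes out only the first item and says the others are similar.
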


\begin{proof}
    Here we only calculate $\nabla_x \lVert \nabla_xf(\tx,\ty) \lVert_{1,\epsilon}$, other items are similar.

    Let $g(\tx,\ty) = \nabla_xf(\tx,\ty)$, then we have
    \begin{align*}
        \nabla_x \lVert \nabla_xf(\tx,\ty) \lVert_{1,\epsilon}  = \nabla_x \lVert g(\tx,\ty) \lVert_{1,\epsilon} = \nabla_x g(\tx,\ty) \cdot  \nabla_g \lVert g (\tx,\ty)\lVert_{1,\epsilon} =  \nabla^2_xf(\tx,\ty) \cdot  \nabla_g \lVert g (\tx,\ty)\lVert_{1,\epsilon}
    \end{align*}
    For the term $\nabla_g \lVert g (\tx,\ty)\lVert_{1,\epsilon}$, we have
    \begin{align*}
        \nabla_g \lVert g (\tx,\ty)\lVert_{1,\epsilon} =   \nabla_g \left( \sum^n_{i=1} \sqrt{g^2_i+\epsilon} \right)
        =\left( \frac{g_i}{\sqrt{g^2_i+\epsilon}}\right)_{i}
        = \mu_{\epsilon}(\tx,\ty) \cdot \nabla_x f(\tx,\ty)
    \end{align*}
    Thus combine above, we get
    \begin{align*}
      \nabla_x \lVert \nabla_xf(\tx,\ty) \lVert_{1,\epsilon} =  \nabla^2_xf(\tx,\ty) \cdot \mu_{\epsilon}(\tx,\ty) \cdot \nabla_x f(\tx,\ty)
    \end{align*}
\end{proof}

\begin{lem}\label{fghepir} For \ref{CADAM}, let $(\tx^*,\ty^*)$ be a local Nash equilibrium. Denote the constant
\begin{align*}
 \gamma := \frac{h(1+\beta)}{2\sqrt{\epsilon}(1-\beta)},
\end{align*}
then we have
\begin{align*}
    &\nabla_x \left( \frac{d\tx}{dt} \right)|_{(\tx^*,\ty^*)}  = -\frac{1}{\sqrt{\epsilon}} \left( \nabla^2_xf(\tx^*,\ty^*)\ + \gamma \left( \nabla^2_xf\tx^*,\ty^*) \cdot  \nabla^2_xf(\tx^*,\ty^*) - \nabla_{xy}f(\tx^*,\ty^*) \cdot \nabla_{yx}f(\tx^*,\ty^*) \right)\right) \\
   &\nabla_y \left( \frac{d\tx}{dt} \right)|_{(\tx^*,\ty^*)}  = -\frac{1}{\sqrt{\epsilon}} \left( \nabla_{xy}f(\tx^*,\ty^*) + \gamma \left( \nabla^2_{x}f(\tx^*,\ty^*) \cdot  \nabla_{xy}f(\tx^*,\ty^*) - \nabla_{xy}f(\tx^*,\ty^*) \cdot \nabla^2_{y}f(\tx^*,\ty^*) \right)\right)\\
   & \nabla_x \left( \frac{dy}{dt} \right)|_{(\tx^*,\ty^*)}  = \frac{1}{\sqrt{\epsilon}} \left( \nabla_{yx}f(\tx^*,\ty^*) + \gamma \left(\nabla_{yx}f(\tx^*,\ty^*) \cdot  \nabla^2_{x}f(\tx^*,\ty^*) - \nabla^2_{y}f(\tx^*,\ty^*) \cdot \nabla_{yx}f(\tx^*,\ty^*) \right)\right)\\
   & \nabla_y \left( \frac{dy}{dt} \right)|_{(\tx^*,\ty^*)}  = \frac{1}{\sqrt{\epsilon}} \left( \nabla^2_{y}f(\tx^*,\ty^*) + \gamma \left(\nabla_{yx}f(\tx^*,\ty^*) \cdot  \nabla_{xy}f(\tx^*,\ty^*) - \nabla^2_{y}f(\tx^*,\ty^*)\cdot \nabla^2_{y}f(\tx^*,\ty^*) \right)\right)
\end{align*}    
\end{lem}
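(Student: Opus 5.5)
The plan is a direct differentiation that exploits the fact that every gradient vanishes at $(\tx^*,\ty^*)$. I would write the right-hand side of the $x$-equation of \ref{CADAM} as $-\mu_{\epsilon}(\tx,\ty)\,\Phi(\tx,\ty)$, where
\[
\Phi = \nabla_x f + \tfrac{h}{2}\,\CM^{\mu}_{\beta,\rho,\epsilon}\, W, \qquad W := \nabla_x\bigl(\lVert \nabla_x f\rVert_{1,\epsilon}-\lVert \nabla_y f\rVert_{1,\epsilon}\bigr).
\]
By Lemma~\ref{Lemma1Appendix_S42},
\[
W=\nabla_x^2 f\,\mu_\epsilon\,\nabla_x f-\nabla_{xy}f\,\nu_\epsilon\,\nabla_y f.
\]
The $y$-equation is treated the same way, with $\nu_\epsilon$, $\CM^{\nu}_{\beta,\rho,\epsilon}$ and the opposite overall sign.

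\textbf{Step 1 (values at the equilibrium).} Since $\nabla_x f(\tx^*,\ty^*)=\textbf{0}$ and $\nabla_y f(\tx^*,\ty^*)=\textbf{0}$, every partial-derivative entry is zero, so each entry of $\mu_\epsilon$ and $\nu_\epsilon$ is $\epsilon^{-1/2}$. Thus $\mu_\epsilon^*=\CI_{d_1}/\sqrt{\epsilon}$ and $\nu_\epsilon^*=\CI_{d_2}/\sqrt{\epsilon}$. Substituting into $\CM^{\mu}$ gives
\[
\CM^{\mu,*}=\Bigl(\CK(\beta,\rho)+\tfrac{1+\rho}{1-\rho}\Bigr)\CI=\tfrac{1+\beta}{1-\beta}\,\CI,
\]
and likewise for $\CM^{\nu,*}$. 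The $\rho$-dependence cancels at this point, which is why only $\beta$ appears in $\gamma$. Also $W^*=\textbf{0}$, since each term of $W$ carries a factor $\nabla_x f$ or $\nabla_y f$. Hence $\Phi^*=\textbf{0}$.

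\textbf{Step 2 (product rule).} Differentiate $-\mu_\epsilon\Phi$ with respect to $\tx$ or $\ty$. Because $\Phi^*=\textbf{0}$, the terms involving derivatives of $\mu_\epsilon$ vanish, leaving $-\frac{1}{\sqrt\epsilon}D\Phi^*$. In the same way, in $D(\CM^\mu W)$ the terms involving derivatives of $\CM^\mu$ multiply $W^*=\textbf{0}$ and drop out, so $D(\CM^\mu W)^*=\frac{1+\beta}{1-\beta}DW^*$.

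\textbf{Step 3 (derivative of $W$).} Apply the product rule to each term of $W$. Terms that differentiate $\nabla_x^2 f$, $\nabla_{xy} f$, $\mu_\epsilon$ or $\nu_\epsilon$ are again multiplied by a vanishing gradient; this needs only that third derivatives of $f$ are bounded, which the smoothness assumption provides. The surviving terms are
\[
\nabla_x W^*=\tfrac{1}{\sqrt\epsilon}\bigl(\nabla^2_x f\,\nabla^2_x f-\nabla_{xy}f\,\nabla_{yx}f\bigr), \qquad \nabla_y W^*=\tfrac{1}{\sqrt\epsilon}\bigl(\nabla^2_x f\,\nabla_{xy}f-\nabla_{xy}f\,\nabla^2_y f\bigr).
\]
Combining with Step 2 gives the claimed first two formulas with $\gamma=\frac{h(1+\beta)}{2\sqrt\epsilon(1-\beta)}$. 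The $y$-blocks follow symmetrically, using $\nabla_y\bigl(\lVert\nabla_x f\rVert_{1,\epsilon}-\lVert\nabla_y f\rVert_{1,\epsilon}\bigr)=\nabla_{yx}f\,\mu_\epsilon\nabla_x f-\nabla^2_y f\,\nu_\epsilon\nabla_y f$ together with the $+$ sign of the $y$-equation.

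\textbf{Main obstacle.} There is no conceptual difficulty; the main risk is bookkeeping. The orientation of the mixed Hessians needs care: $\nabla_{xy}f$ is $d_1\times d_2$ and $\nabla_{yx}f=\nabla_{xy}f^\top$, and the Jacobian of $\nabla_y f$ with respect to $\tx$ must be $\nabla_{yx}f$. I would fix these conventions first and check that every product in the four formulas has compatible dimensions.
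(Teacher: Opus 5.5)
Your proposal is correct and follows essentially the same route as the paper. Both arguments write the $x$-field as $-\mu_{\epsilon}\Phi$, discard the terms involving derivatives of $\mu_{\epsilon}$ and $\CM^{\mu}_{\beta,\rho,\epsilon}$ because the bracket and $W$ vanish at $(\tx^*,\ty^*)$, use $\CM^{\mu}_{\beta,\rho,\epsilon}(\tx^*,\ty^*)=\frac{1+\beta}{1-\beta}\CI_{d_1}$, and evaluate the second derivatives of the perturbed $\ell_1$ terms via Lemma~\ref{Lemma1Appendix_S42}. You are slightly more explicit than the paper in two places: you justify $W^*=\textbf{0}$, which the paper uses only implicitly when it drops the derivative of $\CM^{\mu}_{\beta,\rho,\epsilon}$, and you state the sign in $\CK(\beta,\rho)$ correctly.
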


\begin{proof}
Here we only present the detailed proof for $\nabla_x \left( \frac{d\tx}{dt} \right)|_{(\tx^*,\ty^*)}$, other terms can be proved through a similar calculation.

Recall that in \ref{CADAM}, we have
\begin{align*}
    \dot{\tx}(t) &= - \mu_{\epsilon}(\tx,\ty) \left( \nabla_x f(\tx,\ty) + \frac{h}{2}\CM^{\mu}_{\beta,\rho,\epsilon}(\tx,\ty) \cdot  \nabla_{x} \left( \lVert \nabla_x f(\tx,\ty) \lVert_{1,\epsilon} -  \lVert \nabla_y f(\tx,\ty) \lVert_{1,\epsilon} \right)  \right),
\end{align*}
thus by the product rule, we have
\begin{align}
& \nabla_x \left( \frac{d\tx}{dt} \right)|_{(\tx^*,\ty^*)}  =  - \nabla_x  \mu_{\epsilon}(\tx^*,\ty^*) \cdot \left( \nabla_x f(\tx^*,\ty^*) + \frac{h}{2}\CM^{\mu}_{\beta,\rho,\epsilon}(\tx^*,\ty^*) \cdot  \nabla_{x} \left( \lVert \nabla_x f(\tx^*,\ty^*) \lVert_{1,\epsilon} -  \lVert \nabla_y f(\tx^*,\ty^*) \lVert_{1,\epsilon} \right)  \right)\nonumber \\
&\ \ \ \ \ \ \ \ \ \ \ \ - \mu_{\epsilon}(\tx^*,\ty^*) \cdot 
\nabla_x \left( \nabla_x f(\tx^*,\ty^*) + \frac{h}{2}\CM^{\mu}_{\beta,\rho,\epsilon}(\tx^*,\ty^*) \cdot  \nabla_{x} \left( \lVert \nabla_x f(\tx^*,\ty^*) \lVert_{1,\epsilon} -  \lVert \nabla_y f(\tx^*,\ty^*) \lVert_{1,\epsilon} \right)  \right)\label{gradient_xdxdt}
\end{align}
By the definition of local Nash equilibrium, we have
\begin{align*}
    \nabla_x f(\tx^*,\ty^*) = \textbf{0},\ \  \nabla_y f(\tx^*,\ty^*) = \textbf{0}
\end{align*}
and
\begin{align*}
    \mu_{\epsilon}(\tx^*,\ty^*) := 
    \begin{bmatrix}
    \frac{1}{\sqrt{\epsilon}} & & &  \\
    & \frac{1}{\sqrt{\epsilon}} & &  \\
    & & \ddots & \\
    & & & \frac{1}{\sqrt{\epsilon}}
    \end{bmatrix}_{d_1 \times d_1},\ \ 
    \ \CM^{\mu}_{\beta,\rho,\epsilon}(\tx^*,\ty^*) = \CK(\beta,\rho) \CI_{d_1} + \frac{\epsilon(1+\rho)}{1-\rho} \mu_{\epsilon}^2(\tx^*,\ty^*) .
\end{align*}
Moreover, future simplification gives
\begin{align*}
    \CM^{\mu}_{\beta,\rho,\epsilon}(\tx^*,\ty^*) & = \CK(\beta,\rho) \CI_{d_1} + \frac{\epsilon(1+\rho)}{1-\rho} \mu_{\epsilon}^2(\tx^*,\ty^*) \\
    & = \CK(\beta,\rho) \CI_{d_1} + \frac{1+\rho}{1-\rho} \CI_{d_1} \\
    & = \frac{1+\beta}{1-\beta}\CI_{d_1},\ \ \  \mathrm{since}\ \   \CK(\beta,\rho) = (1+\beta)/(1-\beta) + (1+\rho)/(1-\rho).
\end{align*}

Thus the first term on the right hand side of \eqref{gradient_xdxdt} is $\textbf{0}$, and we have
\begin{align}
     \nabla_x & \left( \frac{d\tx}{dt} \right)|_{(\tx^*,\ty^*)}\nonumber    \\
     & = - \mu_{\epsilon}(\tx^*,\ty^*) \cdot 
\nabla_x \left( \nabla_x f(\tx^*,\ty^*) + \frac{h}{2}\CM^{\mu}_{\beta,\rho,\epsilon}(\tx^*,\ty^*) \cdot  \nabla_{x} \left( \lVert \nabla_x f(\tx^*,\ty^*) \lVert_{1,\epsilon} -  \lVert \nabla_y f(\tx^*,\ty^*) \lVert_{1,\epsilon} \right)  \right)\nonumber \\
\nonumber \\
& = - \frac{1}{\sqrt{\epsilon}} \cdot \left(
\nabla^2_x f(\tx^*,\ty^*) + \frac{h(1+\beta)}{2(1-\beta)} \cdot \nabla^2_x \left( \lVert \nabla_x f(\tx^*,\ty^*) \lVert_{1,\epsilon} -  \lVert \nabla_y f(\tx^*,\ty^*) \lVert_{1,\epsilon} \right) 
\right)\label{asdfadifj}
\end{align}
Recall from Lemma \ref{Lemma1Appendix_S42}, we have 

\begin{align*}
\nabla^2_x & \lVert \nabla_xf(\tx,\ty) \lVert_{1,\epsilon} \\
& =\nabla_x \left(\nabla_x^2f(\tx,\ty) \cdot \mu_{\epsilon}(\tx,\ty)\right) \cdot \nabla_xf(\tx,\ty) + \nabla_x^2f(\tx,\ty) \cdot \mu_{\epsilon}(\tx,\ty) \cdot  \nabla^2_xf(\tx,\ty)
\end{align*}

and 

\begin{align*}
\nabla^2_x & \lVert \nabla_yf(\tx,\ty) \lVert_{1,\epsilon} \\
& =\nabla_x \left(\nabla_{xy}f(\tx,\ty) \cdot \nu_{\epsilon}(\tx,\ty)\right) \cdot \nabla_yf(\tx,\ty) + \nabla_{xy}f(\tx,\ty) \cdot \mu_{\epsilon}(\tx,\ty) \cdot  \nabla_{yx}f(\tx,\ty)
\end{align*}

Take $(\tx^*,\ty^*)$ into above two equalities, and use the fact that $\nabla_yf(\tx^*,\ty^*),\nabla_yf(\tx^*,\ty^*) = \textbf{0}$, we get

\begin{align*}
    \nabla^2_x \lVert \nabla_xf(\tx,\ty) \lVert_{1,\epsilon} &= \nabla_x^2f(\tx^*,\ty^*) \cdot \mu_{\epsilon}(\tx^*,\ty^*) \cdot  \nabla^2_xf(\tx^*,\ty^*) \\
    & = \frac{1}{\sqrt{\epsilon}} \nabla_x^2f(\tx^*,\ty^*) \cdot  \nabla^2_xf(\tx^*,\ty^*)
\end{align*}
and
\begin{align*}
   \nabla^2_x \lVert \nabla_yf(\tx,\ty) \lVert_{1,\epsilon}  & = \nabla_{xy}f(\tx^*,\ty^*) \cdot \mu_{\epsilon}(\tx^*,\ty^*) \cdot  \nabla_{yx}f(\tx^*,\ty^*) \\
   & = \frac{1}{\sqrt{\epsilon}} \nabla_{xy}f(\tx^*,\ty^*) \cdot \nabla_{yx}f(\tx^*,\ty^*)
\end{align*}
Take above two equalities into \eqref{asdfadifj}, we get
\begin{align*}
    \nabla_x \left( \frac{d\tx}{dt} \right)|_{(\tx^*,\ty^*)}  = -\frac{1}{\sqrt{\epsilon}} \left( \nabla^2_xf(\tx^*,\ty^*)\ + \gamma \left( \nabla^2_xf(\tx^*,\ty^*) \cdot  \nabla^2_xf(\tx^*,\ty^*) - \nabla_{xy}f(\tx^*,\ty^*) \cdot \nabla_{yx}f(\tx^*,\ty^*) \right)\right),
\end{align*}
this completes the proof for the terms $\nabla_x \left( \frac{d\tx}{dt} \right)|_{(\tx^*,\ty^*)}$.
\end{proof}

Now we are ready to proof Proposition \ref{jacobianadam}.

\begin{proof}[Proof of Proposition \ref{jacobianadam} ]
By definition, the Jacobian $\CJ_{\mathrm{Adam}}$ of \ref{CADAM} at $(\tx^*,\ty^*)$ is written as
\begin{align*}
    \CJ_{\mathrm{Adam}} = 
    \begin{bmatrix}
        \nabla_x\left( \frac{d\tx}{dt} \right)|_{(\tx^*,\ty^*)} & \nabla_y \left( \frac{d\tx}{dt} \right)|_{(\tx^*,\ty^*)}\\
        \\
        \nabla_x \left( \frac{d\ty}{dt} \right)|_{(\tx^*,\ty^*)} & \nabla_y \left( \frac{d\ty}{dt} \right)|_{(\tx^*,\ty^*)}
    \end{bmatrix}
\end{align*}
Take the terms of $\nabla_x\left( \frac{d\tx}{dt} \right)|_{(\tx^*,\ty^*)}, \nabla_y \left( \frac{d\tx}{dt} \right)|_{(\tx^*,\ty^*)}, \nabla_x \left( \frac{d\ty}{dt} \right)|_{(\tx^*,\ty^*)} $ and $\nabla_y \left( \frac{d\ty}{dt} \right)|_{(\tx^*,\ty^*)}$ in Lemma \ref{fghepir} into above, we get

\begin{align*}
\CJ_{\mathrm{Adam}} = \frac{1}{\sqrt{\epsilon}} \left( \CI_{d_1} - \frac{h(1+\beta)}{2\sqrt{\epsilon}(1-\beta)}\CJ \right) \CJ, \ \CJ =  \begin{bmatrix}
        -\nabla^2_{x}f(\tx^*,\ty^*) & -\nabla_{xy}f(\tx^*,\ty^*) \\
        \\
        \nabla_{yx}f(\tx^*,\ty^*) & \nabla^2_{y}f(\tx^*,\ty^*)
    \end{bmatrix}.
\end{align*}
\end{proof}

\subsection{Proof of Theorem \ref{Corollary: Continuous Local convergence for general case}.}\label{Appendix_S42}

We first introduce the following lemma, which describe the eigenvalues of matrix polynomial
\begin{lem}\label{grahammatrix}\cite{graham2018matrix} If $p(x)$ is a polynomial and $A \in \BR^{n \times n}$, then every eigenvalue of $p(A)$ can be represented by $p(\lambda)$, where $\lambda \in \mathrm{Sp}(\CJ)$. 
\end{lem}
% \begin{proof}
%     The following proof comes from  \cite{graham2018matrix}. A key point here is that we are considering eigenvalues of matrices over complex number field $\BC$, and every $n$-degree polynomial over $\BC$ has $n$ roots. Let $\mu \in \BC$ be an eigenvalue of $p(A)$, and we consider the polynomial $p(x)-\mu$. Over the complex number field $\BC$, the polynomial $p(x)-\mu$ can be factored as
%     \begin{align}\label{polyfact}
%         p(x)-\mu = a \prod^n_{i=1} (x - \lambda_i)
%     \end{align}
%     Moreover, since $\mu$ is an eigenvalue of $p(A)$, we have $\det(p(A) - \mu \CI) = 0$. Thus, from (\ref{polyfact}), as least one of $A - \lambda_i \CI$ is not invertible, i.e., $\exists v \in \BC^n$, such that $\left(A - \lambda_i \CI \right) v = 0$. This implies 
%     $\mu = p(\lambda_i)$ and finishes the proof.
% \end{proof}

In the following, we state a corollary of Assumption \ref{ibr}, which is proved in \citep{wang2024local}.

\begin{lem}\label{wangresult}[Theorem 2.1 in \citet{wang2024local}]
    Under Assumption \ref{ibr}, we have $\Re(\lambda) < 0$ for every $\lambda \in \mathrm{Sp}(\CJ)$.
\end{lem}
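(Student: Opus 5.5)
The proof is a Rayleigh-quotient argument built on the decomposition $\CJ=\CS+\CA$ from Section~\ref{SJ}, with the second half of Assumption~\ref{ibr} used to exclude the boundary case $\Re(\lambda)=0$. I expect only the condition $\mathrm{EigVec}(\CA)\cap\mathrm{Ker}(\CS)=\{\textbf{0}\}$ to be needed; the eigenvalue condition in Assumption~\ref{ibr} should play no role.

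First I record the structure of the two parts. Since $\nabla_{yx}f=(\nabla_{xy}f)^{\top}$, the matrix $\CA$ is real skew-symmetric, so $\tz^{*}\CA\tz$ is purely imaginary for every $\tz\in\BC^{d}$. At a local Nash equilibrium we have $\nabla^2_x f(\tx^*,\ty^*)\succcurlyeq\textbf{0}$ and $\nabla^2_y f(\tx^*,\ty^*)\preccurlyeq\textbf{0}$, so the block-diagonal matrix $\CS$ is real symmetric and negative semi-definite. Hence $\tz^{*}\CS\tz$ is real and nonpositive.

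Next, take $\lambda\in\mathrm{Sp}(\CJ)$ with a unit eigenvector $\tz$, so that $\CJ\tz=\lambda\tz$. Then $\lambda=\tz^{*}\CJ\tz=\tz^{*}\CS\tz+\tz^{*}\CA\tz$. Taking real parts, and using that the second term is purely imaginary, gives $\Re(\lambda)=\tz^{*}\CS\tz\le 0$.

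The key step is ruling out equality. Suppose $\Re(\lambda)=0$, so $\tz^{*}\CS\tz=0$. Because $-\CS$ is positive semi-definite, $\tz^{*}(-\CS)\tz=\|(-\CS)^{1/2}\tz\|^2=0$ forces $(-\CS)^{1/2}\tz=\textbf{0}$, hence $\CS\tz=\textbf{0}$. Substituting into the eigen-equation gives $\CA\tz=\CJ\tz-\CS\tz=\lambda\tz$. So $\tz\neq\textbf{0}$ is an eigenvector of $\CA$ lying in $\mathrm{Ker}(\CS)$, which contradicts $\mathrm{EigVec}(\CA)\cap\mathrm{Ker}(\CS)=\{\textbf{0}\}$. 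Therefore $\Re(\lambda)<0$ for every $\lambda\in\mathrm{Sp}(\CJ)$.

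The only delicate points are working over $\BC$ with Hermitian forms and the implication from $\tz^{*}\CS\tz=0$ to $\CS\tz=\textbf{0}$. Both follow from semi-definiteness, so I anticipate no real obstacle.
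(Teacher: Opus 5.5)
Your argument is correct and complete. The paper gives no proof of this lemma; it imports the statement as Theorem~2.1 of \citet{wang2024local}. Your Rayleigh-quotient argument is the standard proof of that cited result: for a unit eigenvector $\tz$ one has $\Re(\lambda)=\tz^{*}\CS\tz\le 0$, and equality forces $\CS\tz=\textbf{0}$ and hence $\CA\tz=\lambda\tz$. You have simply written it in the paper's sign convention $\CS\preccurlyeq\textbf{0}$ instead of the positive semi-definite convention used there.

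What your self-contained version buys over the citation is transparency about what is actually used:
\begin{itemize}
\item symmetry of mixed partials, so that $\CA$ is skew-symmetric;
\item the second-order necessary conditions at a local Nash equilibrium, so that $\CS\preccurlyeq\textbf{0}$;
\item only the kernel half of Assumption~\ref{ibr}, which confirms your expectation that the eigenvalue half plays no role in this lemma.
\end{itemize}
The same computation also shows that the kernel condition is necessary. Any eigenvector of $\CA$ lying in $\mathrm{Ker}(\CS)$ is an eigenvector of $\CJ$, and its eigenvalue is purely imaginary because $\CA$ is real skew-symmetric.

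One point is worth making explicit: $\mathrm{EigVec}(\CA)$ must be read over $\BC$, consistent with the paper's definition of $\mathrm{Ker}$ on $\BC^d$. Your eigenvector $\tz$ is generally complex. Under a real-eigenvector reading, the bilinear game $f(x,y)=xy$ is a counterexample: there $\CS=\textbf{0}$ and $\CA$ is a rotation with no real eigenvectors, so the hypothesis holds, yet $\CJ$ has eigenvalues $\pm i$.
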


\begin{proof}[Proof of Theorem~\ref{Corollary: Continuous Local convergence for general case}.] 

According to Proposition \ref{jacobianadam} and Lemma \ref{grahammatrix}, every eigenvalue of $\CJ_{\mathrm{Adam}}$ can be represented by a quadratic polynomial
\begin{align}\label{eigenvaluesimgf}
    \frac{1}{\sqrt{\epsilon}} \left( 1 - \frac{h(1+\beta)}{2\sqrt{\epsilon}(1-\beta)}\lambda \right) \lambda, \ 
\end{align}
where $\lambda \in \mathrm{Sp}(\CJ)$ is an eigenvalue of \ref{jacgda}, and any number represented by \eqref{eigenvaluesimgf} is an eigenvalue of $\CJ_{\mathrm{Adam}}$.

Thus, to ensure the local convergence of \ref{CADAM}, we need 
\begin{align}\label{eigenvaluesimgf2}
    &\Re \left( \left[1 - \frac{h(1+\beta)}{2\sqrt{\epsilon}(1-\beta)}\lambda \right]  \cdot \lambda \right) \nonumber \\
    & \qquad =  \Re(\lambda) - \frac{h(1+\beta)}{2\sqrt{\epsilon}(1-\beta)} \left( \Re(\lambda)^2 - \Im(\lambda)^2 \right)
    < 0, \ \ \forall \lambda \in \mathrm{Sp}(\CJ).
\end{align}

From Lemma~\ref{wangresult}, $\Re(\lambda)$  is a negative number, and from Assumption~\ref{ibr}, the term $\Re(\lambda)^2 - \Im(\lambda)^2$ in \eqref{eigenvaluesimgf2} is a negative number if $\lambda \in \widetilde{\mathrm{Sp}(\CJ)}$. Moreover, for $\lambda \in \mathrm{Sp}(\CJ)/\widetilde{\mathrm{Sp}(\CJ)}$, \eqref{eigenvaluesimgf2} always satisfied. Thus, for some fixed parameter $\beta,\epsilon$ and $\forall \lambda \in  \widetilde{\mathrm{Sp}(\CJ)}$, we need the step size $h$ in \eqref{eigenvaluesimgf2} satisfies
\begin{align}
    h <\min_{\lambda \in   \widetilde{\mathrm{Sp}(\CJ)}} \frac{2\sqrt{\epsilon} (1-\beta)}{(1+\beta)}  \frac{\lvert\Re(\lambda)\lvert}{(\Im(\lambda)^2 - \Re(\lambda)^2)},
\end{align}
and this finishes the proof of Theorem~\ref{Corollary: Continuous Local convergence for general case}.
\end{proof}

\subsection{Proof of Theorem \ref{Thm: Discrete Local convergence for general case}.}\label{Appendix :proof of local convergence of Adam}
%\subsubsection{Outline of Proof of Theorem }
\paragraph{Additional Notations.} Denote the Jacobian matrix of any matrix $A$ by $\text{Jac}(A)$. Denote the spectral radius of any matrix $A$ by $\varrho(A)$.

Define $\tz_n=\left(\tilde{\tm}_n, \tilde{\tv}_n,  \tx_n, \hat{\tm}_n, \hat{\tv}_n, \ty_n\right)^{\top}$. We can rewrite \ref{Adam} as a time-dependent discrete-time dynamical system $\tz_{n+1}=T(n, \tz_n)$, which can be split into an autonomous dynamical system $\bar{T}(\tz_n)$ and a non-autonomous one $\CR(n, \tz_n)$ in the following form
    \begin{equation}\label{equ: non-autonomous system}
      \tz_{n+1}=T(n, \tz_n)=\bar{T}(\tz_n)+\CR(n, \tz_n), \tag{Non-Autonomous System}
    \end{equation}
    where
    \[
    {T}(n,\tz_n)=\begin{bmatrix}
        &\beta \tilde{\tm}_n+(1-\beta)\nabla_{x} f(\tx_n, \ty_n)\\
        \\
        &\rho \tilde{\tv}_{n}+(1-\rho)\left( \nabla_{x} f(\tx_n, \ty_n)\right)^2\\
        \\
    &\tx_n-h\frac{\tilde{\tm}_{n+1}/(1-\beta^{n+1})}{\sqrt{\tilde{\tv}_{n+1}/(1-\rho^{n+1})+\epsilon}} \\
    \\
    &\beta \hat{\tm}_n+(1-\beta)\nabla_{y} f(\tx_n, \ty_n)\\
    \\
    &\rho \hat{\tv}_n+(1-\rho)\left(\nabla_{y} f(\tx_n, \ty_n)\right)^2\\
    \\
    &\ty_n+h\frac{\hat{\tm}_{n+1}/(1-\beta^{n+1})}{\sqrt{\hat{\tv}_{n+1}/(1-\rho^{n+1})+\epsilon}}
    \end{bmatrix},\ \ 
%    \]
%    \[
    \bar{T}(\tz_n)=\begin{bmatrix}
        &\beta \tilde{\tm}_n+(1-\beta)\nabla_{x} f(\tx_n, \ty_n)\\
        \\
        &\rho \tilde{\tv}_{n}+(1-\rho)\left( \nabla_{x} f(\tx_n, \ty_n)\right)^2\\
        \\
    &\tx_n-h\frac{\tilde{\tm}_{n+1}}{\sqrt{\tilde{\tv}_{n+1}+\epsilon}} \\
    \\
    &\beta \hat{\tm}_n+(1-\beta)\nabla_{y} f(\tx_n, \ty_n)\\
    \\
     &\rho \hat{\tv}_n+(1-\rho)\left(\nabla_{y} f(\tx_n, \ty_n)\right)^2\\
     \\
    &\ty_n+h\frac{\hat{\tm}_{n+1}}{\sqrt{\hat{\tv}_{n+1}+\epsilon}}
    \end{bmatrix},
    \]
    and
    \begin{align*}
    \CR(n, \tz_n)&=\begin{bmatrix}
        &\textbf{0}\\
        \\
        &\textbf{0}\\
        \\
    &\frac{h\tilde{\tm}_{n+1}}{\sqrt{\tilde{\tv}_{n+1}+\epsilon}}-\frac{h\tilde{\tm}_{n+1}/(1-\beta^{n+1})}{\sqrt{\tilde{\tv}_{n+1}/(1-\rho^{n+1})+\epsilon}} \\
    \\
    &\textbf{0}\\
    \\
    &\textbf{0}\\
    \\
    &\frac{-h\hat{\tm}_{n+1}}{\sqrt{\hat{\tv}_{n+1}+\epsilon}}+\frac{h\hat{\tm}_{n+1}/(1-\beta^{n+1})}{\sqrt{\hat{\tv}_{n+1}/(1-\rho^{n+1})+\epsilon}}
    \end{bmatrix}=\begin{bmatrix}
        &\textbf{0}\\
        \\
        &\textbf{0}\\
        \\
    &\frac{h\tilde{\tm}_{n+1}}{\sqrt{\tilde{\tv}_{n+1}+\epsilon}}-\frac{h\sqrt{1-\rho^{n+1}}}{1-\beta^{n+1}}\frac{\tilde{\tm}_{n+1}}{\sqrt{\tilde{\tv}_{n+1}+(1-\rho^{n+1})\epsilon}} \\
    \\
    &\textbf{0}\\
    \\
    &\textbf{0}\\
    \\
    &\frac{-h\hat{\tm}_{n+1}}{\sqrt{\hat{\tv}_{n+1}+\epsilon}}+\frac{h\sqrt{1-\rho^{n+1}}}{1-\beta^{n+1}}\frac{\hat{\tm}_{n+1}}{\sqrt{\hat{\tv}_{n+1}+(1-\rho^{n+1})\epsilon}}
    \end{bmatrix}.
    \end{align*}
With this split in hand, the proof of local convergence of \ref{Adam} is transformed into the proof of local convergence of $\bar{T}(\tz_n)$ and $\CR(n, \tz_n)$. The sketched proof of Theorem \ref{Thm: Discrete Local convergence for general case} follows by two steps:
\begin{itemize}[leftmargin=*]
    \item Step 1: We start by proving that finding the Local Nash Equilibrium $(\tx^*, \ty^*)$ is equivalent to finding the fixed point $(\textbf{0},\textbf{0}, \tx^*, \textbf{0}, \textbf{0}, \ty^*)$ of $\bar{T}$ in Lemma \ref{lemma: fixed point=nash}. We next compute the characteristic polynomial of the Jacobian matrix of $\bar{T}$ at the fixed point. Then by Lemma \ref{Lemma: complex-4th-order polynomial}, we can select the parameters $h$, $\beta$ and $\epsilon$ to ensure that the spectral radius of the Jacobian matrix of $\bar{T}$ is less than $1$. Therefore, we can conclude  $\bar{T}$ converges locally near the local Nash equilibrium by Lemma \ref{Lemma: local stability of dynamical system}.
    \item Step 2: We prove that $\|\CR(n, \tz_n)\|$, the perturbation term of \ref{equ: non-autonomous system}, converges locally with an exponential rate by direct algebra computation, i.e., the perturbation term of \ref{equ: non-autonomous system} vanish sufficiently fast with an exponential rate. Intuitively, the exponentially vanishing perturbation hardly influence the local convergence of \ref{equ: non-autonomous system}.
    \item Step 3: We prove that (\ref{equ: non-autonomous system}) consists of an autonomous system $\bar{T}$ with $\varrho\left(\text{Jac}(\bar{T})\right)<1$ and an exponentially vanishing perturbation will locally converge with an exponential rate in Lemma \ref{lemma:combine two parts}.
\end{itemize}

% \subsection{Proof of Theorem \ref{Thm: Discrete Local convergence for general case}}
    
    \begin{lem}\label{lemma: fixed point=nash}
         $(\tx^*, \ty^*)$ is a local Nash equilibrium if and only if $\tz^*=(\textbf{0},\textbf{0}, \tx^*, \textbf{0},\textbf{0}, \ty^*)^{\top}$ is the fixed point of $T$ and $\bar
         {T}$, and $\nabla_{x}^2f(\tx^*, \ty^*)\succeq \textbf{0}$, $\nabla_{y}^2f(\tx^*, \ty^*)\preceq \textbf{0}$. 
    \end{lem}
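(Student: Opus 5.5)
We prove the two directions separately by writing out the fixed-point equations of $T$ and $\bar{T}$ componentwise. Throughout we read the lemma in the sense in which the paper uses it. On the right-hand side, ``local Nash equilibrium'' is effectively identified with a stationary point satisfying the stated second-order conditions; this is the standard differential-equilibrium reading, since the converse of a genuine local Nash condition from semidefiniteness alone requires strictness.

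\textbf{($\Rightarrow$).} Let $(\tx^*,\ty^*)$ be a local Nash equilibrium. Since $f$ is smooth and $\tx^*$ (resp. $\ty^*$) is an interior local minimizer of $f(\cdot,\ty^*)$ (resp. local maximizer of $f(\tx^*,\cdot)$), first-order optimality gives $\nabla_x f(\tx^*,\ty^*)=\textbf{0}$ and $\nabla_y f(\tx^*,\ty^*)=\textbf{0}$. Second-order necessary conditions give $\nabla_x^2 f\succeq\textbf{0}$ and $\nabla_y^2 f\preceq\textbf{0}$, as already noted after \eqref{jacgda}.

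We then substitute $\tz^*=(\textbf{0},\textbf{0},\tx^*,\textbf{0},\textbf{0},\ty^*)$ into $\bar{T}$:
\begin{itemize}
\item The moment components give $\beta\cdot\textbf{0}+(1-\beta)\cdot\textbf{0}=\textbf{0}$ and $\rho\cdot\textbf{0}+(1-\rho)\cdot\textbf{0}=\textbf{0}$.
\item The updated moments are zero, so the $\tx$-component is $\tx^*-h\,\textbf{0}/\sqrt{\textbf{0}+\epsilon}=\tx^*$, which is well defined because $\epsilon>0$.
\item The $\ty$-component follows in the same way.
\end{itemize}
For $T$, the only difference is the bias-correction factors $1/(1-\beta^{n+1})$ and $1/(1-\rho^{n+1})$. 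These multiply zero vectors and are finite since $|\beta|,\rho<1$, so $T(n,\tz^*)=\tz^*$ for every $n$. Equivalently, $\CR(n,\tz^*)=\textbf{0}$.

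\textbf{($\Leftarrow$).} Suppose $\tz^*=(\tilde{\tm},\tilde{\tv},\tx^*,\hat{\tm},\hat{\tv},\ty^*)$ is a fixed point of $\bar{T}$ (or of $T$).
\begin{itemize}
\item The $\tilde{\tm}$-equation reads $\tilde{\tm}=\beta\tilde{\tm}+(1-\beta)\nabla_x f$, hence $\tilde{\tm}=\nabla_x f(\tx^*,\ty^*)$ because $\beta\neq1$.
\item The $\tx$-equation gives $h\,\tilde{\tm}/\sqrt{\tilde{\tv}+\epsilon}=\textbf{0}$. Since $h>0$ and the denominator is positive, $\tilde{\tm}=\textbf{0}$, and therefore $\nabla_x f(\tx^*,\ty^*)=\textbf{0}$.
\item The $\tilde{\tv}$-equation then gives $(1-\rho)\tilde{\tv}=(1-\rho)(\nabla_x f)^2=\textbf{0}$, so $\tilde{\tv}=\textbf{0}$.
\end{itemize}
The $y$-block is symmetric and yields $\hat{\tm}=\hat{\tv}=\textbf{0}$ and $\nabla_y f(\tx^*,\ty^*)=\textbf{0}$. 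For $T$ the same argument applies, since the correction factors are nonzero. This establishes both the claimed zero form of the fixed point and stationarity of $(\tx^*,\ty^*)$. Combined with the assumed curvature conditions $\nabla_x^2 f\succeq\textbf{0}$ and $\nabla_y^2 f\preceq\textbf{0}$, this gives the local Nash (differential) characterization.

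\textbf{Main obstacle.} The delicate point is this last step. Semidefinite Hessians at a stationary point do not in general imply a genuine local min/max. I would handle it by stating that, as throughout Section \ref{LC}, the equilibrium notion used is the first- and second-order characterization: stationarity together with the stated Hessian signs. If strict definiteness were available, the standard second-order sufficient condition would give a true local Nash equilibrium. Everything else is routine algebra using only $h>0$, $\epsilon>0$, $\beta\in(-1,1)$ and $\rho\in(0,1)$.
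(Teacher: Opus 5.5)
Your proposal is correct and matches the paper's argument: both write out the $\tilde{\tm}$-, $\tilde{\tv}$- and position components of the fixed-point equations of $\bar{T}$ (and $T$), and use $\beta\neq 1$, $\rho\neq 1$, $h>0$, $\epsilon>0$ to obtain $\nabla_x f(\tx^*,\ty^*)=\nabla_y f(\tx^*,\ty^*)=\textbf{0}$ and to show that the moment components of any fixed point must vanish. Your caveat about the converse is well placed---the paper passes from stationarity to ``i.e., $(\tx^*,\ty^*)$ is a local Nash equilibrium'' without comment, but stationarity plus semidefinite Hessians does not give a genuine local Nash equilibrium (e.g.\ $f(x,y)=x^3-y^2$ at the origin), so the lemma holds only under the differential-equilibrium reading you adopt.
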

\begin{proof}
    On the one hand, if $\tz^*=(\textbf{0},\textbf{0}, \tx^*, \textbf{0},\textbf{0}, \ty^*)$ is the fixed point of $\bar{T}$, consider the following equation
    \begin{align*}
        \bar{T}\left((\textbf{0},\textbf{0}, \tx^*, \textbf{0},\textbf{0}, \ty^*)^{\top}\right)=\begin{bmatrix}
        &(1-\beta)\nabla_{x} f(\tx^*, \ty^*)\\
        \\
        &(1-\rho)\left( \nabla_{x} f(\tx^*, \ty^*)\right)^2\\
        \\
    &\tx^* \\
    \\
    &(1-\beta)\nabla_{y} f(\tx^*, \ty^*)\\
    \\
    &(1-\rho)\left(\nabla_{y} f(\tx^*, \ty^*)\right)^2\\
    \\
    &\ty^*
    \end{bmatrix}=\begin{bmatrix}
        &\textbf{0}\\
        \\
        &\textbf{0}\\
        \\
    &\tx^* \\
    \\
    &\textbf{0}\\
    \\
    &\textbf{0}\\
    \\
    &\ty^*
    \end{bmatrix},
    \end{align*}
    we can get $\nabla_{x} f(\tx^*, \ty^*)=\textbf{0}$ and $\nabla_{y} f(\tx^*, \ty^*)=\textbf{0}$, i.e., $(\tx^*, \ty^*)$ is a local Nash equilibrium.

    On the other hand, if $\nabla_{x} f(\tx^*, \ty^*)=\textbf{0}$ and $\nabla_{y} f(\tx^*, \ty^*)=\textbf{0}$, we have
    \begin{align*}
        \bar{T}\left((\tilde{\tm}^*,\tilde{\tv}^*, \tx^*, \hat{\tm}^*,\hat{\tv}^*, \ty^*)^{\top}\right)=\begin{bmatrix}
        &\beta \tilde{\tm}^*\\
        \\
        &\rho \tilde{\tv}^*\\
        \\
    &\tx^*-h \frac{\tilde{\tm}^*}{\sqrt{\tilde{\tv}^*+\epsilon}}\\
    \\
    &\beta \hat{\tm}^*\\
    \\
    &\rho \hat{\tv}^*\\
    \\
    &\ty^*+h\frac{\hat{\tm}^*}{\sqrt{\hat{\tv}^*+\epsilon}}
    \end{bmatrix}=\begin{bmatrix}
        &\tilde{\tm}^*\\
        \\
        &\tilde{\tv}^*\\
        \\
    &\tx^* \\
    \\
    &\hat{\tm}^*\\
    \\
    &\hat{\tv}^*\\
    \\
    &\ty^*
    \end{bmatrix},
    \end{align*}
    solve this fixed point equation, we can get $\tilde{\tm}^*=\textbf{0}$, $\tilde{\tv}^*=\textbf{0}$, $\hat{\tm}^*=\textbf{0}$ and $\hat{\tv}^*=\textbf{0}$, which means $(\textbf{0},\textbf{0}, \tx^*, \textbf{0},\textbf{0}, \ty^*)$ is the fixed point of $\bar{T}$. We can implement the similar argument for $T$.
\end{proof}

\begin{lem}\label{Lemma: local stability of dynamical system}[Corollary 4.35 in \cite{elaydi2005introduction} and Theorem II.1 in \cite{bock2021local}] Consider $\bar{T}: M\rightarrow M$ with a fixed point $w^*$ and $\bar{T}$ continuously differentiable in an open disk $B_{\delta}(w^*)\subset M$ with radius $\delta$. Assume $$\varrho(\text{Jac}(\bar{T}_{w^*}))<1,$$ then there exists $0<\delta_0<\delta$ and $0\leq c<1$ such that for all $w_0$ with $\|w_0-w^*\|<\delta_0$ and for all $t\in \mathbb{N}$,
    \[
    \|w(t;w_0)-w^*\|\leq c^t\|w_0-w^*\|.
    \]
        
    \end{lem}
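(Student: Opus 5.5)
The plan is the standard linearization argument for discrete dynamical systems, built around an \emph{adapted norm}. Write $J := \text{Jac}(\bar{T}_{w^*})$ and $\varrho := \varrho(J) < 1$, and fix $\eta > 0$ so small that $c := \varrho + 2\eta < 1$.

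\textbf{Step 1 (adapted norm).} I will construct a vector norm $\|\cdot\|_*$ on the ambient space whose induced operator norm satisfies $\|J\|_* \le \varrho + \eta$. Over $\BC$, take the Jordan decomposition $J = P(\Lambda + N)P^{-1}$, where $N$ is nilpotent and carries the superdiagonal ones. Conjugate by $D_\eta = \diag\{(1,\eta,\eta^2,\dots)\}$; this rescales the superdiagonal entries to $\eta$. Then $\|D_\eta^{-1}P^{-1} J P D_\eta\|_\infty \le \varrho + \eta$, so $\|v\|_* := \|D_\eta^{-1}P^{-1}v\|_\infty$ works. This is the classical fact that the spectral radius is the infimum of induced operator norms. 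All norms on a finite-dimensional space are equivalent, so $m\|v\| \le \|v\|_* \le K\|v\|$ for some constants $m, K > 0$.

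\textbf{Step 2 (local linearization).} Since $\bar{T}$ is $C^1$ on $B_\delta(w^*)$ and $\bar{T}(w^*) = w^*$, the mean value theorem in integral form gives
\[
\bar{T}(w) - w^* = \int_0^1 \text{Jac}(\bar{T})_{w^* + s(w-w^*)}\,ds\,(w - w^*).
\]
By continuity of the Jacobian, there is $\delta_1 < \delta$ such that $\|\text{Jac}(\bar{T})_{u} - J\|_* \le \eta$ for all $\|u - w^*\| < \delta_1$. For $w$ in this ball this yields
\[
\|\bar{T}(w) - w^*\|_* \le (\varrho + 2\eta)\,\|w - w^*\|_* = c\,\|w - w^*\|_*.
\]

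\textbf{Step 3 (invariance and induction).} Pick $\delta_0$ with $\delta_0 \le m\delta_1/K$. If $\|w_0 - w^*\| < \delta_0$, then $\|w_0 - w^*\|_* < m\delta_1$. Because the contraction in Step 2 keeps the $\|\cdot\|_*$-distance to $w^*$ nonincreasing, every iterate satisfies $\|w(t) - w^*\|_* < m\delta_1$, and hence $\|w(t) - w^*\| < \delta_1$. So the iterates never leave the ball where Step 2 applies, and induction gives $\|w(t;w_0) - w^*\|_* \le c^t \|w_0 - w^*\|_*$.

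\textbf{Main obstacle.} The only delicate point is the form of the final bound. In the adapted norm $\|\cdot\|_*$ the estimate holds exactly as stated, with no multiplicative constant. Converting back to the Euclidean norm gives only
\[
\|w(t) - w^*\| \le \tfrac{K}{m}\, c^t\, \|w_0 - w^*\|.
\]
The constant $K/m$ cannot be removed in general: a non-normal $J$ can make the Euclidean distance grow transiently before it decays. I would therefore read the lemma as a statement in a suitable equivalent norm, or equivalently with a constant $C = K/m$, as in Proposition~\ref{Jacobiancr}. Either version suffices for exponential local convergence in Steps 1--3 of the proof of Theorem~\ref{Thm: Discrete Local convergence for general case}, and I would note this explicitly.
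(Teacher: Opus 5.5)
Your proof is correct, and your closing caveat is a necessary correction rather than a hedge. Read with the Euclidean norm and no constant, the statement already fails at $t=1$. Take the linear map $\bar T(w)=Jw$ with $J=\begin{bmatrix}0&2\\0&0\end{bmatrix}$: here $\varrho(J)=0$, yet $\|Jw_0\|=2\|w_0\|$ for $w_0=(0,s)^\top$. So the lemma holds as written only in an adapted norm, or in the Euclidean norm with a factor such as your $K/m$. Nothing downstream needs more than that. The paper's own Lemma~\ref{lemma:combine two parts} also returns to $\|\cdot\|_2$ with a factor $c_2/c_1$, and Theorem~\ref{Thm: Discrete Local convergence for general case} only asserts exponential convergence.

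The paper does not prove this lemma; it cites Elaydi and Bock--Wei{\ss}. The closest in-paper argument is Lemma~\ref{lem:norm}, which builds the adapted norm differently. Instead of conjugating the Jordan form by $\mathrm{Diag}\{(1,\eta,\eta^2,\dots)\}$, it uses the quadratic Lyapunov norm $\|x\|_P=\sqrt{x^\top Px}$ with $P=\sum_{k\ge 0}(J^\top)^kJ^k$, which solves the Stein equation $P-J^\top PJ=\mathcal{I}$. There the Jordan form is used only to show the series converges.

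The two constructions buy different things:
\begin{itemize}
\item The paper's route gives an inner-product norm, which is convenient for Lyapunov-type arguments. However, its contraction factor $\gamma=\sqrt{1-\lambda_{\min}(P^{-1})}$ can be much larger than $\varrho(J)$ for non-normal $J$, unless the construction is applied to $J/(\varrho(J)+\eta)$.
\item Your scaled-Jordan norm gives the rate $\varrho(J)+2\eta$, arbitrarily close to the spectral radius, in one step.
\end{itemize}
Either norm can be plugged into your Steps 2--3.

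Those steps use the integral mean-value bound, continuity of the Jacobian in the induced $\|\cdot\|_*$-norm, and invariance of the ball. They are exactly what Lemma~\ref{lemma:combine two parts} skips when it writes $\bar T(z)-\bar T(z^*)=\bar T(z-z^*)$ as though $\bar T$ were linear. So your argument also supplies the linearization step missing there.
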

    
    % The Jacobian matrix of $\bar{T}$ at $\tz^*=(0,0,x^*,0,0,y^*)$ is
    %  \begin{align*}
    %     \CM_{\CS}=\begin{bmatrix}
    %    \beta I& 0 & (1-\beta) \nabla^2_xf(\tx^*, \ty^*) & 0 &0 & (1-\beta)\nabla_{xy}f(\tx^*, \ty^*) \\
    %    0& \rho I & 0& 0 &0 & 0 \\
    %    -\frac{h \beta}{\sqrt{\epsilon}}I& 0 & I-\frac{h(1-\beta)}{\sqrt{\epsilon}}  \nabla^2_xf(\tx^*, \ty^*) & 0 &0 & -\frac{h(1-\beta)}{\sqrt{\epsilon}}\nabla_{xy}f(\tx^*, \ty^*) \\
    %    \\
    %    0& 0 & (1-\beta)\nabla_{yx}f(\tx^*, \ty^*) & \beta I &0 & (1-\beta) \nabla^2_yf(\tx^*, \ty^*) \\
    %    0& 0 & 0 & 0 & \rho I & 0 \\
    %    0& 0 & \frac{h(1-\beta)}{\sqrt{\epsilon}}\nabla_{yx}f(\tx^*, \ty^*) & \frac{h \beta}{\sqrt{\epsilon}}I & 0 & I+\frac{h(1-\beta)}{\sqrt{\epsilon}}  \nabla^2_yf(\tx^*, \ty^*)  \\
    % \end{bmatrix}
    % \end{align*}
    % For the convenience of notation, we denote $\nabla^2f(\tx^*, \ty^*)=\nabla^2f(\tx^*, \ty^*)$.

\begin{lem}\label{Lemma: complex-4th-order polynomial}[Theorem 6.8(b) in \cite{henrici1974acca}]
    For a 2rd-order polynomial $p(\lambda)=\lambda^2+a\lambda+ b$, where $a, b\in \mathbb{C}$, its roots all lie within the open unit disk of the complex plane if and only if $$|b|<1$$ and $$|a-b\bar{a}|<1-|b|^2,$$ where $\bar{a}$ is the complex conjugate of $a$.   
    \end{lem}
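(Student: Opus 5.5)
The plan is to prove this lemma with the classical Schur--Cohn reduction: replace $p$ by a polynomial of lower degree and compare the two with Rouché's theorem.

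\emph{Setup.} Define the reciprocal polynomial $p^*(\lambda):=\lambda^2\,\overline{p(1/\bar\lambda)}=\bar b\lambda^2+\bar a\lambda+1$. On the unit circle $|\lambda|=1$ it satisfies $|p^*(\lambda)|=|p(\lambda)|$. Next form the Schur transform
\begin{align*}
q(\lambda):=p(\lambda)-b\,p^*(\lambda)=(1-|b|^2)\lambda^2+(a-b\bar a)\lambda=\lambda\bigl((1-|b|^2)\lambda+(a-b\bar a)\bigr).
\end{align*}
Thus $q=\lambda\cdot Tp$ with the linear polynomial $Tp(\lambda)=(1-|b|^2)\lambda+(a-b\bar a)$. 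When $|b|<1$, the only root of $Tp$ is $-(a-b\bar a)/(1-|b|^2)$. This root lies in the open unit disk exactly when $|a-b\bar a|<1-|b|^2$.

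\emph{Necessity.} Suppose both roots $\lambda_1,\lambda_2$ lie in the open disk.
\begin{itemize}
\item Then $|b|=|\lambda_1\lambda_2|<1$.
\item $p$ has no zeros on $|\lambda|=1$. On that circle, $|b\,p^*|=|b||p|<|p|$.
\item By Rouché's theorem, $q$ and $p$ have the same number of zeros in the open disk, namely two.
\item One of these zeros is $\lambda=0$, so $Tp$ must have its root inside the disk. This gives $|a-b\bar a|<1-|b|^2$.
\end{itemize}

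\emph{Sufficiency.} Assume $|b|<1$ and $|a-b\bar a|<1-|b|^2$.
\begin{itemize}
\item First rule out zeros of $p$ on the unit circle. If $p(\zeta)=0$ with $|\zeta|=1$, then $p^*(\zeta)=0$ as well, because $|p^*|=|p|$ there. Hence $q(\zeta)=0$ and so $Tp(\zeta)=0$. This contradicts the fact that the root of $Tp$ lies strictly inside the disk.
\item With no zeros on the circle, the strict inequality $|b\,p^*|<|p|$ holds on $|\lambda|=1$.
\item Rouché's theorem then shows that $p$ has as many zeros in the disk as $q$.
\item $q$ has two zeros in the disk: $0$ and the root of $Tp$. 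Hence both roots of $p$ lie in the open unit disk.
\end{itemize}

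The main obstacle is the boundary case in the sufficiency direction. Rouché's theorem needs a strict inequality on the circle, and that inequality fails exactly when $p$ vanishes there. The argument above handles this by noting that $p$ and $p^*$ vanish simultaneously on the circle. Everything else is the direct algebraic computation of $q$. Alternatively, one could simply cite Henrici's Theorem 6.8(b), as the statement does.
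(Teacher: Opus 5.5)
Your proof is correct. It differs from the paper only in that the paper gives no proof at all: the lemma is imported from Henrici's Theorem~6.8(b), the Schur--Cohn stability test for polynomials of arbitrary degree, and is then applied with $a=-\bigl(\beta+1-\tfrac{h(1-\beta)}{\sqrt{\epsilon}}\lambda_i\bigr)$ and $b=\beta$.

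What you wrote is a self-contained degree-two instance of the mechanism behind that test. It uses the reciprocal polynomial $p^*$ with $|p^*|=|p|$ on $|\lambda|=1$, a reduction that lowers the degree by one, and Rouché's theorem to transfer the zero count. Your reduction $q=p-b\,p^*$ kills the constant term rather than the leading coefficient. As a result, your linear factor $(1-|b|^2)\lambda+(a-b\bar a)$ is, up to sign, the reciprocal of the more common transform $\bar b\,p-p^*=(\bar b a-\bar a)\lambda+(|b|^2-1)$. The two conventions are interchangeable and yield the same two conditions.

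The individual steps all check out:
\begin{itemize}
\item Vieta gives $|b|<1$ in the necessity direction.
\item Once $|b|<1$, $q$ has exact degree two. Rouché then forces both zeros of $q$ into the disk, which places the root of the linear factor inside.
\item In the sufficiency direction, a zero $\zeta$ of $p$ on the unit circle forces $p^*(\zeta)=0$, hence $q(\zeta)=0$, so the linear factor vanishes at $\zeta$. This is exactly what secures the strict inequality $|b\,p^*|<|p|$ needed for Rouché.
\end{itemize}

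The citation buys brevity and the general-degree statement. Your argument buys self-containedness and shows where each hypothesis enters. The condition $|b|<1$ keeps $q$ genuinely quadratic and the Rouché inequality strict. The condition $|a-b\bar a|<1-|b|^2$ places the second zero of $q$ inside the disk.
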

% Suppose that $\varrho\left(\text{Jac}(\bar{T})\right)<1$.
Lemma \ref{lem:norm}, which describes how to construct an equivalent norm of matrix $J$ satisfying the contraction property when $\varrho(J) < 1$, is the key to proving Lemma \ref{lemma:combine two parts}.
\begin{lem}[Equivalent norm construction \cite{elaydi2005introduction}]\label{lem:norm}
Suppose that the matrix $J$ satisfies $\varrho(J) < 1$. Then there exists a positive definite matrix $P$ defined by
$$P = \sum_{k=0}^\infty (J^\top)^k J^k,$$

such that the induced norm
\[
  \|\tx\|_P = \sqrt{\tx^\top P \tx}
\]
satisfies
\[
  \|J\tx\|_P \leq \gamma \|\tx\|_P, \quad \forall \tx\in\mathbb{R}^d
\]
for some $\gamma \in (0,1)$.  
In other words, $J$ is a contraction in the $\|\cdot\|_P$-norm.
\end{lem}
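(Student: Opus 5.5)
The plan is to use the series definition $P=\sum_{k\ge 0}(J^\top)^kJ^k$ directly. This is the standard discrete Lyapunov construction, and the main task is convergence of the series.

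\textbf{Convergence of the series.} Since $\varrho(J)<1$, Gelfand's formula $\varrho(J)=\lim_{k}\|J^k\|^{1/k}$ gives some $r$ with $\varrho(J)<r<1$ and a constant $C$ such that $\|J^k\|\le C r^k$ for all $k$. An alternative route is the Jordan form: a Jordan block with eigenvalue $\lambda$ contributes terms of size $k^{m}|\lambda|^k$, which are dominated by $r^k$. This step is the only genuinely analytic obstacle, and Gelfand's formula handles it. It follows that $\|(J^\top)^kJ^k\|\le C^2r^{2k}$, so the series converges absolutely in matrix norm. Hence $P$ is well defined and symmetric.

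\textbf{Positive definiteness.} We have $\tx^\top P\tx=\sum_k\|J^k\tx\|_2^2\ge\|\tx\|_2^2$ because of the $k=0$ term. In addition, $\tx^\top P\tx\le \frac{C^2}{1-r^2}\|\tx\|_2^2$. So $P\succ 0$ and $\|\cdot\|_P$ is a norm equivalent to the Euclidean one.

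\textbf{Contraction.} The key identity comes from the telescoping structure:
\[
J^\top P J=\sum_{k\ge1}(J^\top)^kJ^k=P-\CI .
\]
This gives $\|J\tx\|_P^2=\|\tx\|_P^2-\|\tx\|_2^2$. Using $\|\tx\|_2^2\ge \frac{1-r^2}{C^2}\|\tx\|_P^2$, we obtain
\[
\|J\tx\|_P^2\le\Bigl(1-\tfrac{1-r^2}{C^2}\Bigr)\|\tx\|_P^2 .
\]
Set $\gamma^2:=1-\lambda_{\min}(P^{-1})=1-1/\lambda_{\max}(P)$. Then $\gamma\in[0,1)$, and $\gamma>0$ may be taken by enlarging it if needed. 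The constant is cleanest written as $\gamma=\sqrt{1-1/\lambda_{\max}(P)}$, which is strictly less than $1$ since $\lambda_{\max}(P)\ge1$ is finite.

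If $J$ is allowed complex entries, the same argument works with $\top$ replaced by the conjugate transpose. In the paper's application $J$ is the real Jacobian, so this case does not arise.
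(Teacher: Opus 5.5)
Your proposal is correct and follows essentially the same route as the paper: a geometric bound on $\|J^k\|$ (you via Gelfand's formula, the paper via an explicit Jordan-block estimate $\|J^k\|\le C_{\max}k^{m-1}\mu^k$), positive definiteness from the $k=0$ term, and the Stein identity $P-J^\top PJ=\CI$ giving $\gamma^2=1-\lambda_{\min}(P^{-1})$. Your treatment of $\gamma>0$ is more careful than the paper's, which infers $1-\lambda_{\min}>0$ from positive definiteness of $J^\top PJ$ --- false when $J$ is singular, as happens for $\beta=0$ in the application --- whereas $1-1/\lambda_{\max}(P)$ vanishes only when $J=0$, and in that case enlarging $\gamma$ suffices.
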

\begin{proof} The following proof comes from standard techniques in matrix calculation, e.g., \cite{elaydi2005introduction}. We include it here for the completeness of the proof.  The key of the proof is to verify $P = \sum_{k=0}^\infty (J^\top)^k J^k$ is well defined, i.e., $P = \sum_{k=0}^\infty (J^\top)^k J^k,$ converges in component-wise. The proof of $P = \sum_{k=0}^\infty (J^\top)^k J^k$ converging in component-wise can be decomposed into the following 5 steps:

\noindent\textbf{Step 1. Make a Jordan decomposition for $J$.}
Over complex field $\mathbb{C}$, there exists an invertible matrix $V$ such that 
\[
J = V \, \operatorname{diag}(J_1,\dots,J_t)\, V^{-1},
\]
where each $J_i$ with size $r_i$ is a Jordan block of the form
\[
J_i = \lambda_i \CI_{r_i} + N_i, 
\]
with $N_i$ strictly upper triangular and $N_i^{\,r_i}=\textbf{0}$. Let
\[
m := \max_i r_i
\]
be the size of the largest Jordan block. Then
\[
\|J^k\| \;\le\; \|V\|\,\|V^{-1}\| \cdot \max_i \|J_i^k\|.
\]
\noindent\textbf{Step 2. Make binomial expansion of each single Jordan block.}
Fix one block $J_i=\lambda_i \CI_{r_i} + N_i$. By the binomial expansion,
\[
J_i^k = \sum_{s=0}^{{r_i}-1} \binom{k}{s}\,\lambda_i^{\,k-s}\, N_i^s,
\qquad k\ge 0,
\]
since $N_i^{r_i}=0$. Hence
\[
\|J_i^k\| \;\le\; \sum_{s=0}^{{r_i}-1} \binom{k}{s}\,|\lambda_i|^{\,k-s}\, \|N_i\|^s.
\]
\noindent\textbf{Step 3. Estimate the combinatorial numbers.}
First, $\binom{k}{s}\le k^s/s!$ for $0\le s\le k$.  
Second, choose $\mu$ with $\rho(J)<\mu<1$. Then $|\lambda_i|\le \rho(J)<\mu$, so
\[
|\lambda_i|^{\,k-s}\;\le\; \mu^{\,k-s} \;=\; \mu^k \mu^{-s}.
\]
Therefore,
\[
\|J_i^k\|
\;\le\; \mu^k \sum_{s=0}^{{r_i}-1} \frac{k^s}{s!}\,(\mu^{-1}\|N_i\|)^s.
\]

\medskip
\noindent\textbf{Step 4. Extract the polynomial factors.}
Since $s\le {r_i}-1$, for $k\ge 1$ we have $k^s\le k^{{r_i}-1}$. Hence
\[
\|J_i^k\| \;\le\; \mu^k\, k^{{r_i}-1}\, \sum_{s=0}^{{r_i}-1}\frac{(\mu^{-1}\|N_i\|)^s}{s!}.
\]
Define the constant
\[
C_i := \sum_{s=0}^{{r_i}-1}\frac{(\mu^{-1}\|N_i\|)^s}{s!}.
\]
Thus,
\[
\|J_i^k\| \;\le\; C_i\, k^{{r_i}-1}\, \mu^k.
\]

\medskip
\noindent\textbf{Step 5. Combine all blocks.}
Taking the maximum over all Jordan blocks yields (Recall that $m = \max_i r_i$.)
\[
\|J^k\| \;\le\; \|V\|\,\|V^{-1}\| \cdot \max_i \|J_i^k\|
\;\le\; {\Big(\|V\|\,\|V^{-1}\|\, \max_i C_i\Big)}\, 
k^{\,m-1}\, \mu^k=C_{\max}\ k^{\,m-1}\, \mu^k,
\]
where we define
\[
C_{\max}=\|V\|\,\|V^{-1}\|\, \max_i C_i.
\]
\noindent Therefore, we have 
\begin{align*}
\sum_{k=0}^{\infty}\|(J^\top)^k J^k\|\leq \sum_{k=0}^{\infty}\| J^k\|^2\leq \sum_{k=0}^{\infty}C_{\max}^2k^{\,2(m-1)}\, \mu^{2k}<\infty,
\end{align*}
i.e., $P = \sum_{k=0}^\infty (J^\top)^k J^k$ converges absolutely, thus $P$ converges in component-wise, which means $P$ is well defined. We have proved that $P = \sum_{k=0}^\infty (J^\top)^k J^k$ is well defined. Next, we procced to complete the remaining proof of Lemma \ref{lem:norm}. 

Note that for all $\tx\in \mathbb{R}^d$,
\[
\tx^{\top}P\tx=\sum_{k=0}^{\infty}\tx^{\top}(J^{\top})^k J^k\tx=\sum_{k=0}^{\infty}\| J^kx\|^2,
\]
we can easily get $P$ is positive definite.

Besides, we can verify
\[
J^{\top}PJ=J^{\top}\left(\sum_{k=0}^\infty (J^\top)^k J^k\right)J=\sum_{k=0}^\infty (J^\top)^{k+1} J^{k+1}=\sum_{k=1}^\infty (J^\top)^k J^k.
\]
Then we have
\begin{equation}\label{equ:stein equation}
P-J^{\top}PJ=\sum_{k=0}^\infty (J^\top)^k J^k-\sum_{k=1}^\infty (J^\top)^k J^k=(J^{\top})^0J^0=\CI.
\end{equation}
From (\ref{equ:stein equation}), for all $\mathbb{R}^d$, we have
\[
\tx^{\top}J^{\top}PJ\tx=\tx^{\top}(P-\CI)\tx=\tx^{\top}P\tx-\|\tx\|^2.
\]
Suppose $\lambda_{\min}$ is the smallest eigenvalue of $P^{-1}$, then we have
\begin{align*}
\tx^{\top}J^{\top}PJ\tx&=\tx^{\top}P\tx-\|\tx\|^2\\
&=\tx^{\top}P\tx-\tx^{\top}\CI\tx\\
&=\tx^{\top}P\tx-\tx^{\top}P^{\frac{1}{2}}P^{-1}P^{\frac{1}{2}}\tx\\
&\leq \tx^{\top}P\tx-\lambda_{\min}\,\tx^{\top}P\tx\\
&=(1-\lambda_{\min}) \tx^{\top}P\tx.
\end{align*}
Therefore $$J^{\top}PJ\preceq (1-\lambda_{\min}) P.$$
Since $P$ is positive definite, then $J^{\top}PJ$ is also positive definite, thus $1-\lambda_{\min}>0$. Also, $P^{-1}$ is positive definite implies $\lambda_{\min}>0$, thus $0<1-\lambda_{\min}<1$.

Then the operator norm induced by the matrix $P$, $\|\cdot\|_{P}$ will satisfy (Denote $\gamma^2=1-\lambda_{\min}\in (0,1)$)
\begin{align*}
    \|J\|_{P}^2:=\sup_{\tx\ne 0}\frac{\|J\tx\|_P}{\|\tx\|_P}=\sup_{\tx\ne 0}\frac{\tx^{\top}J^{\top}PJ\tx}{\tx^{\top}P\tx}\leq 1-\lambda_{\min}=\gamma^2,
\end{align*}
i.e., 
\[
  \|J\tx\|_P \leq \gamma \|\tx\|_P, \quad \forall \tx\in\mathbb{R}^d.
\]
\end{proof}

\begin{lem}
\label{lemma:combine two parts}
Consider (\ref{equ: non-autonomous system}). Let $\tz^*$ be a fixed point of $\bar{T}$, and suppose:
\begin{enumerate}[label=(\roman*)]
  \item The Jacobian $\text{Jac}(\bar{T}(\tz^*))$ satisfies $\varrho\left(\text{Jac}(\bar{T}(\tz^*))\right) < 1$.
  \item The perturbation satisfies $\|\CR(n, \tz)\| \leq C^{\prime} \rho^n \|\tz - \tz^*\|$ for some $C^{\prime}>0$, $\rho \in (0,1)$, uniformly for $\tz$ in a neighborhood of $\tz^*$.
\end{enumerate}
Then there exists a neighborhood $U$ of $z^*$ and constant $0<\tilde{\gamma}<1$ such that for any $z_0 \in U$, the iterates of $\{\tz_n\}$ satisfy
\[
  \|\tz_n -\tz^*\|_2 =\CO\left(\tilde{\gamma}^n\|\tz_1-\tz^*\|_2\right).
\]
    % Let $\bar{T}: M\rightarrow M$ defined on any compact set $M\subset \BR^d$ and $\tz^*$ is the unique fixed point of $\bar{T}$. Consider the following non-autonomous discrete-time system 
    % $$
    % \tz_{n+1}=T(n,\tz_n)=\bar{T}(\tz_n)+\CR(n, \tz_n).
    % $$
    % Suppose $\varrho\left(\text{Jac}(\bar{T})\right)<1$, and there exists $D>0$ and $0<Q<1$ such that
    % $$\|\CR(n, \tz_n)\|\leq D\,Q^n\|\tz_n-\tz^*\|$$ for all $\tz_n\in M$. Then there exists a norm $\|\cdot\|_{*}$ and $\epsilon>0$ with $\|\tz_0-\tz^*\|_{*}<\epsilon$ such that $\tz_n$ converges to $\tz^*$ with an exponential rate.
\end{lem}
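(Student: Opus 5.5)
Write $J:=\text{Jac}(\bar{T}(\tz^*))$ and $\te_n:=\tz_n-\tz^*$. The plan is to measure the error in the adapted norm $\|\cdot\|_P$ of Lemma \ref{lem:norm} and show that the exponentially vanishing perturbation only multiplies the contraction factor by a convergent infinite product.

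Since $\varrho(J)<1$, Lemma \ref{lem:norm} gives a positive definite $P$ and $\gamma\in(0,1)$ with $\|J\tu\|_P\le\gamma\|\tu\|_P$. Norms on $\BR^d$ are equivalent, so fix $c_1,c_2>0$ with $c_1\|\tu\|_2\le\|\tu\|_P\le c_2\|\tu\|_2$. Pick $\eta>0$ such that $\gamma':=\gamma+\eta<1$. Because $\bar{T}$ is $C^1$ near $\tz^*$ and $\bar{T}(\tz^*)=\tz^*$ (Lemma \ref{lemma: fixed point=nash}), a first-order Taylor expansion with continuity of the Jacobian gives a radius $r>0$ such that $\|\bar{T}(\tz)-\tz^*-J(\tz-\tz^*)\|_P\le\eta\|\tz-\tz^*\|_P$ whenever $\|\tz-\tz^*\|_P<r$. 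Shrink $r$ so that hypothesis (ii) also holds on this ball; in the $P$-norm it reads $\|\CR(n,\tz)\|_P\le C''\rho^n\|\tz-\tz^*\|_P$ with $C''=C'c_2/c_1$.

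On the ball, these two estimates combine into the one-step recursion
\[
\|\te_{n+1}\|_P\le\|J\te_n\|_P+\eta\|\te_n\|_P+C''\rho^n\|\te_n\|_P\le(\gamma'+C''\rho^n)\|\te_n\|_P .
\]
Iterating gives $\|\te_n\|_P\le\prod_{k=1}^{n-1}(\gamma'+C''\rho^k)\,\|\te_1\|_P$. Using $\gamma'+C''\rho^k\le\gamma'(1+C''\rho^k/\gamma')$ and $1+x\le e^x$, the product is at most $\gamma'^{\,n-1}K$, where
\[
K:=\exp\Bigl(\frac{C''}{\gamma'}\sum_{k\ge1}\rho^k\Bigr)<\infty .
\]
Setting $\tilde\gamma=\gamma'$ and converting back to the Euclidean norm, this yields $\|\te_n\|_2\le (c_2/c_1)\,K\,\tilde\gamma^{\,n-1}\|\te_1\|_2$, which is the claim.

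The main obstacle is \emph{invariance}: the recursion only holds while every iterate stays in the ball of radius $r$. The early factors $\gamma'+C''\rho^k$ may exceed $1$, so the errors can grow transiently, but by at most the bounded factor $K$. I would therefore choose the neighbourhood $U=\{\tz:\|\tz-\tz^*\|_P<r/K\}$. An induction on $n$ then shows $\|\te_n\|_P\le K\|\te_1\|_P<r$ for all $n$, so the recursion stays valid at every step. Any mismatch between $\tz_0$ and $\tz_1$ is handled by one more application of the same local bounds, shrinking $U$ so that $\tz_1$ also lies in the ball.
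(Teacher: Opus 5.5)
Your proposal is correct and follows the same route as the paper: contraction in the adapted norm $\|\cdot\|_P$ from Lemma~\ref{lem:norm}, the one-step bound with factor $\gamma+C\rho^n$, and a return to the Euclidean norm by norm equivalence. Your additions fill in steps the paper's proof skips, so your version is the more complete one: the linearization remainder $\eta$ (the paper writes $\|\bar{T}(z)-z^*\|_P\le\gamma\|z-z^*\|_P$ as if $\bar{T}$ were linear), the bounded product $K$ for the early iterates where $\gamma+C\rho^n$ may exceed $1$ (the paper only asserts contraction for sufficiently large $n$), and the invariance of the neighbourhood.
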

\begin{proof}
By Lemma~\ref{lem:norm}, there exists a norm $\|\cdot\|_P$ and $\gamma \in (0,1)$ such that
\[
  \|\bar{T}(\tz) - \tz^*\|_P=\|\bar{T}(\tz) -\bar{T}(\tz^*)\|_P=\|\bar{T}(\tz-\tz^*)\|_P \leq \gamma \|\tz - \tz^*\|_P
\]
for $\tz$ close to $\tz^*$. Now write the iteration as
\[
  \tz_{n+1} - \tz^* = \bar{T}(\tz_n) - \bar{T}(\tz^*) + \CR(n, \tz_n).
\]
Taking the $\|\cdot\|_P$ norm,
\[
  \|\tz_{n+1} - \tz^*\|_P \leq \gamma \|\tz_n - \tz^*\|_P + \|\CR(n, \tz_n)\|_P.
\]
By assumption, $\|R(n, z_n)\|_P \leq C' \rho^n \|z_n - z^*\|_P$ for some $C' > 0$. Thus
\[
  \|\tz_{n+1} - \tz^*\|_P \leq \left(\gamma + C'\rho^n\right) \|\tz_n - \tz^*\|_P.
\]
Since $\rho^n \to 0$, for sufficiently large $n$ we have $\gamma + C'\rho^n < \tilde{\gamma} < 1$.  
Therefore the sequence contracts at rate $\tilde{\gamma} < 1$, implying local convergence:
\[
  \|\tz_n - \tz^*\|_P \leq \tilde{\gamma}^{n-1}\|\tz_1-\tz^*\|_P.
\]
By the Equivalence of norms in finite dimensions, we have there exists $c_1, c_2>0$ such that 
\[
c_1\|\tz\|_2\leq \|\tz\|_P\leq c_2\|\tz\|_2, \,\, \, \forall \tz.
\]
This gives the local convergence with an exponential rate in the Euclidean norm as well, i.e.,
\[
\|\tz_n - \tz^*\|_2 \leq \frac{c_2}{c_1}\tilde{\gamma}^{n-1}\|\tz_1-\tz^*\|_2.
\]
% Since $\varrho\left(\text{Jac}(\bar{T})\right)<1$
    % Recall that $\bar{T}(x^*)=x^*$. Simple algebras yields
    % \begin{align*}
    %     \|\tz_{n+1}-\tz^*\|_{*}&=\|T(n,\tz_n)-\tz^*\|_{*}\\
    %     &=\|\bar{T}(\tz_n)+\CR(n, \tz_n)-\tz^*\|_{*}\\
    %     &=\|\bar{T}(\tz_n)-\bar{T}(z^*)+\CR(n, \tz_n)\|_{*}\\
    %     &=\|\bar{T}(\tz_n)-\bar{T}(z^*)\|_{*}+\|\CR(n, \tz_n)\|_{*}\\
    %     &\leq \sup_{z\in U}\|\text{Jac}(\bar{T}(z))\|_{*}\|\tz_n-\tz^*\|_{*}+\|\CR(n, \tz_n)\|_{*}
    % \end{align*}
\end{proof}
    
\begin{lem}\label{lemma: local convergence of autonomous system}
    Suppose that $f(\tx,\ty)$ satisfies Assumption \ref{ibr}. Let $\beta\in (-1, 1)$ and $0<\rho<1$.  Set $h$, $\epsilon$ and $\beta$ such that $$ h<\min_{\lambda \in \mathrm{Sp}(\CJ)}\frac{2\sqrt{\epsilon}(1-\beta^2)|\Re(\lambda)|}{(1+\beta^2)|\lambda|^2+2\beta\left(|\Im(\lambda)|^2-|\Re(\lambda)|^2\right)}.$$ Then $\bar{T}$ converges locally with an exponential rate.
\end{lem}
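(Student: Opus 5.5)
The plan is to compute the Jacobian of $\bar{T}$ at the fixed point $\tz^*=(\textbf{0},\textbf{0},\tx^*,\textbf{0},\textbf{0},\ty^*)$ from Lemma~\ref{lemma: fixed point=nash}. I will show its spectral radius is $<1$ exactly under the stated step-size condition, and then conclude with Lemma~\ref{Lemma: local stability of dynamical system}.

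\textbf{Step 1: separate the second-moment variables.} At $\tz^*$ we have $\nabla f=\textbf{0}$ and $\tilde{\tm}=\hat{\tm}=\textbf{0}$. Hence the derivative of $(\nabla_x f)^2$ vanishes, and so does the derivative of $\tilde{\tm}_{n+1}/\sqrt{\tilde{\tv}_{n+1}+\epsilon}$ with respect to $\tilde{\tv}$ (it is proportional to $\tilde{\tm}_{n+1}=\textbf{0}$); the same holds for the $y$-player. After reordering coordinates, the Jacobian is block triangular. The $(\tilde{\tv},\hat{\tv})$ block is $\rho\CI$, whose eigenvalues $\rho\in(0,1)$ are harmless. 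What remains is the linearization in $(\tilde{\tm},\hat{\tm},\tx,\ty)$, which I will write compactly with $\tw=(\tx-\tx^*,\ty-\ty^*)$, $\tu=(-\tilde{\tm},\hat{\tm})$, $c=h/\sqrt{\epsilon}$ and $\CJ$ from \eqref{jacgda}:
\begin{align*}
\tu_{n+1}=\beta\tu_n+(1-\beta)\CJ\tw_n,\qquad \tw_{n+1}=\tw_n+c\,\tu_{n+1}.
\end{align*}
Substituting the first equation into the second, the linear map is
\begin{align*}
\CM=\begin{bmatrix}\beta\CI & (1-\beta)\CJ\\ c\beta\CI & \CI+c(1-\beta)\CJ\end{bmatrix}.
\end{align*}

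\textbf{Step 2: reduce the spectrum of $\CM$ to $2\times 2$ problems.} All blocks of $\CM$ are polynomials in $\CJ$ and therefore commute. Taking a Schur triangularization $\CJ=U\,T\,U^{*}$, conjugation by $\diag\{U,U\}$ makes every block upper triangular. Hence the spectrum of $\CM$ is the union, over $\lambda\in\mathrm{Sp}(\CJ)$, of the spectra of
\begin{align*}
\begin{bmatrix}\beta & (1-\beta)\lambda\\ c\beta & 1+c(1-\beta)\lambda\end{bmatrix}.
\end{align*}
Each of these $2\times 2$ matrices has characteristic polynomial $\mu^2-(1+\beta+c(1-\beta)\lambda)\mu+\beta$. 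This algebraic reduction, done without assuming $\CJ$ is diagonalizable, is the step I expect to need the most care; the commuting-block determinant identity $\det(\CM-\mu\CI)=\det\big((\beta-\mu)(\CI-\mu\CI+c(1-\beta)\CJ)-c\beta(1-\beta)\CJ\big)$ is the fallback route.

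\textbf{Step 3: apply Henrici's criterion.} Use Lemma~\ref{Lemma: complex-4th-order polynomial} with $a=-(1+\beta+c(1-\beta)\lambda)$ and $b=\beta$. The condition $|b|<1$ holds since $\beta\in(-1,1)$. For the second condition, compute
\begin{align*}
a-\beta\bar a=-(1-\beta)\big[(1+\beta)+c(\lambda-\beta\bar\lambda)\big].
\end{align*}
So the criterion becomes $|(1+\beta)+c(\lambda-\beta\bar\lambda)|<1+\beta$. Write $\lambda=r+is$ and use $\lambda-\beta\bar\lambda=(1-\beta)r+i(1+\beta)s$. Squaring and cancelling $(1+\beta)^2$ gives
\begin{align*}
2c(1-\beta^2)r+c^2\big[(1-\beta)^2r^2+(1+\beta)^2s^2\big]<0.
\end{align*}
By Lemma~\ref{wangresult}, $r<0$ under Assumption~\ref{ibr}. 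Since $c>0$, this is equivalent to
\begin{align*}
c<\frac{2(1-\beta^2)|r|}{(1+\beta^2)(r^2+s^2)+2\beta(s^2-r^2)},
\end{align*}
where the denominator equals $(1-\beta)^2r^2+(1+\beta)^2s^2>0$. Multiplying by $\sqrt{\epsilon}$ and taking the minimum over $\lambda\in\mathrm{Sp}(\CJ)$ gives exactly the hypothesis on $h$.

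\textbf{Conclusion.} Under that hypothesis, $\varrho(\mathrm{Jac}(\bar T)(\tz^*))<1$. Since $\bar T$ is $C^1$ near $\tz^*$ (because $\epsilon>0$), Lemma~\ref{Lemma: local stability of dynamical system} yields local convergence of $\bar{T}$ with an exponential rate.
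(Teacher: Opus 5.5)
Your proposal is correct and follows the paper's proof essentially step by step. Both arguments linearize $\bar{T}$ at $\tz^*$, split off the $\rho$-eigenvalues of the second-moment block, and reduce to the quadratics $\mu^2-\bigl(1+\beta+\tfrac{h(1-\beta)}{\sqrt{\epsilon}}\lambda\bigr)\mu+\beta$ for $\lambda\in\mathrm{Sp}(\CJ)$; both then apply Henrici's criterion using $\Re(\lambda)<0$ from Lemma~\ref{wangresult} and conclude with Lemma~\ref{Lemma: local stability of dynamical system}. The only difference is in how you factor the characteristic polynomial: your signed variable $\tu=(-\tilde{\tm},\hat{\tm})$ and Schur triangularization of $\CJ$ replace the paper's row/column swap and Schur-complement determinant. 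This is somewhat cleaner, since it handles non-diagonalizable $\CJ$ directly and avoids the paper's case split on $\hat{\mu}=\beta$.
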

\begin{proof}According to Lemma \ref{Lemma: local stability of dynamical system}, we complete the proof by proving $\varrho\left(\text{Jac}(\bar{T})\right)<1$.

     By direct computation, we can get the Jacobian matrix of $\bar{T}$ at $\tz^*=(\textbf{0},\textbf{0},\tx^*,\textbf{0},\textbf{0},\ty^*)$ is
     
     \begin{align*}
        \CM_{\CS}=\begin{bmatrix}
       \beta \CI& \textbf{0} & (1-\beta) \nabla^2_{\tx}f(\tx^*, \ty^*) & \textbf{0} &\textbf{0} & (1-\beta)\nabla_{xy}f(\tx^*, \ty^*) \\
       \\
       \textbf{0}& \rho \CI & \textbf{0}& \textbf{0} &\textbf{0} & \textbf{0} \\
       \\
       -\frac{h \beta}{\sqrt{\epsilon}}\CI& \textbf{0} & \CI-\frac{h(1-\beta)}{\sqrt{\epsilon}}  \nabla^2_{\tx}f(\tx^*, \ty^*) & \textbf{0} & \textbf{0}& -\frac{h(1-\beta)}{\sqrt{\epsilon}}\nabla_{xy}f(\tx^*, \ty^*) \\
       \\
       \textbf{0}& \textbf{0} & (1-\beta)\nabla_{yx}f(\tx^*, \ty^*) & \beta \CI & \textbf{0}& (1-\beta) \nabla^2_{\ty}f(\tx^*, \ty^*) \\
       \\
       \textbf{0}& \textbf{0} & \textbf{0} & \textbf{0} & \rho \CI & \textbf{0} \\
       \\
       \textbf{0}& \textbf{0} & \frac{h(1-\beta)}{\sqrt{\epsilon}}\nabla_{yx}f(\tx^*, \ty^*) & \frac{h \beta}{\sqrt{\epsilon}}\CI & \textbf{0} & \CI+\frac{h(1-\beta)}{\sqrt{\epsilon}}  \nabla^2_{\ty}f(\tx^*, \ty^*)  \\
    \end{bmatrix}.
    \end{align*}
    
     Obviously, $\CM_S$ has $d_1+d_2$ eigenvalues $\rho$. Next, we consider the following matrix
     
    \begin{align*}
        \CM_\CS^{(1)}=\begin{bmatrix}
       \beta \CI&  (1-\beta) \nabla^2_{\tx}f(\tx^*, \ty^*) & \textbf{0}  & (1-\beta)\nabla_{xy}f(\tx^*, \ty^*) \\
       \\
       -\frac{h \beta}{\sqrt{\epsilon}}\CI & \CI-\frac{h(1-\beta)}{\sqrt{\epsilon}}  \nabla^2_{\tx}f(\tx^*, \ty^*) & \textbf{0}  & -\frac{h(1-\beta)}{\sqrt{\epsilon}}\nabla_{xy}f(\tx^*, \ty^*) \\
       \\
       \textbf{0} & (1-\beta)\nabla_{yx}f(\tx^*, \ty^*) & \beta \CI  & (1-\beta) \nabla^2_{\ty}f(\tx^*, \ty^*)\\
       \\
       \textbf{0} & \frac{h(1-\beta)}{\sqrt{\epsilon}}\nabla_{yx}f(\tx^*, \ty^*) & \frac{h \beta}{\sqrt{\epsilon}}\CI  & \CI+\frac{h(1-\beta)}{\sqrt{\epsilon}}  \nabla^2_{\ty}f(\tx^*, \ty^*)  \\
    \end{bmatrix}.
    \end{align*}
    
Since $0<\rho<1$, if we want to prove $\varrho\left(\text{Jac}(\bar{T})\right)<1$, we only need to ensure $\varrho\left(\CM_\CS^{(1)}\right)<1$.
    
    Exchange the 1st and 4th rows, as well as 1st and 4th columns of $\CM_\CS^{(1)}$, we can get 
    
    \[
    \CM_\CS^{(2)}=\begin{bmatrix}
    \CI+\frac{h(1-\beta)}{\sqrt{\epsilon}}\nabla_{y}^2f(\tx^*, \ty^*)&\frac{h(1-\beta)}{\sqrt{\epsilon}} \nabla_{yx}f(\tx^*, \ty^*)& \frac{h\beta}{\sqrt{\epsilon}} \CI & \textbf{0}\\
    \\
    -\frac{h(1-\beta)}{\sqrt{\epsilon}} \nabla_{xy}f(\tx^*, \ty^*) & \CI-\frac{h(1-\beta)}{\sqrt{\epsilon}}\nabla_{x}^2f(\tx^*, \ty^*) & \textbf{0} &-\frac{h\beta}{\sqrt{\epsilon}} \CI \\
    \\
    (1-\beta)\nabla_{y}^2f(\tx^*, \ty^*) & (1-\beta) \nabla_{yx}f(\tx^*, \ty^*)& \beta \CI & \textbf{0}\\
    \\
   (1-\beta) \nabla_{xy}f(\tx^*, \ty^*) & (1-\beta)\nabla_{x}^2f(\tx^*, \ty^*) & \textbf{0} & \beta \CI
    \end{bmatrix}.
    \]
    
    Then we only need to ensure $\varrho\left(\CM_\CS^{(2)}\right)<1$. Calculate its characteristic polynomial:
    
     \begin{align*}
    \begin{split}
    &\det(\hat{\mu} \CI-\CM_\CS^{(2)})\\
    &=\det\left(\begin{bmatrix}
         (\hat{\mu}-1)\CI-\frac{h(1-\beta)}{\sqrt{\epsilon}}\nabla_{y}^2f(\tx^*, \ty^*)&-\frac{h(1-\beta)}{\sqrt{\epsilon}} \nabla_{yx}f(\tx^*, \ty^*)& -\frac{h\beta}{\sqrt{\epsilon}} \CI & \textbf{0}\\
         \\
    \frac{h(1-\beta)}{\sqrt{\epsilon}} \nabla_{xy}f(\tx^*, \ty^*) & (\hat{\mu}-1)\CI+\frac{h(1-\beta)}{\sqrt{\epsilon}}\nabla_{x}^2f(\tx^*, \ty^*) & \textbf{0} &\frac{h\beta}{\sqrt{\epsilon}} \CI \\
    \\
    -(1-\beta)\nabla_{y}^2f(\tx^*, \ty^*) & -(1-\beta) \nabla_{yx}f(\tx^*, \ty^*)& (\hat{\mu}-\beta) \CI & \textbf{0}\\
    \\
   -(1-\beta) \nabla_{xy}f(\tx^*, \ty^*) & -(1-\beta)\nabla_{x}^2f(\tx^*, \ty^*) & \textbf{0} & (\hat{\mu}-\beta) \CI   
        \end{bmatrix}\right).
        \end{split}
    \end{align*}
    
    Let 
    
    $$A=\begin{bmatrix}
      (\hat{\mu}-1)\CI-\frac{h(1-\beta)}{\sqrt{\epsilon}}\nabla_{y}^2f(\tx^*, \ty^*)&-\frac{h(1-\beta)}{\sqrt{\epsilon}} \nabla_{yx}f(\tx^*, \ty^*)\\
      \\
      \\
      \frac{h(1-\beta)}{\sqrt{\epsilon}} \nabla_{xy}f(\tx^*, \ty^*) & (\hat{\mu}-1)\CI+\frac{h(1-\beta)}{\sqrt{\epsilon}}\nabla_{x}^2f(\tx^*, \ty^*)
    \end{bmatrix}, B=\begin{bmatrix}
       -\frac{h\beta}{\sqrt{\epsilon}} \CI & \textbf{0}\\
       \\
        \textbf{0} &\frac{h\beta}{\sqrt{\epsilon}} \CI
    \end{bmatrix},$$ 
    
    and
    $$C=\begin{bmatrix}
      -(1-\beta)\nabla_{y}^2f(\tx^*, \ty^*) & -(1-\beta) \nabla_{yx}f(\tx^*, \ty^*)\\
      \\
      \\
      -(1-\beta) \nabla_{xy}f(\tx^*, \ty^*) & -(1-\beta)\nabla_{x}^2f(\tx^*, \ty^*)
    \end{bmatrix},\ \  D=\begin{bmatrix}
        (\hat{\mu}-\beta) \CI & \textbf{0}\\
        \\
        \textbf{0} & (\hat{\mu}-\beta) \CI
    \end{bmatrix}.$$ 
    Without loss of generality, we assume $\hat{\mu}\ne \beta$. (If $\hat{\mu}=\beta$, we can easily verify $\beta$ is the unique eigenvalue of $\CM_\CS^{(2)}$. We only need to set $\beta\in (-1, 1)$ to ensure $\varrho\left(\CM_\CS^{(2)}\right)<1$.)
    
    We state a fact that if $D$ is invertible, then $\det\left(\begin{bmatrix}
        A&B\\
        C&D
    \end{bmatrix}\right)=\det(D)\,\det(A-BD^{-1}C)$.
    
    According to this fact, we can get
    \begin{equation*}
\begin{split}
        &\det(\hat{\mu} \CI-\CM_\CS^{(2)})\\
        &=(\hat{\mu}-\beta)^{d_1+d_2}\,\det\left(\begin{bmatrix}
            (\hat{\mu}-1)\CI-\frac{h(1-\beta)}{\sqrt{\epsilon}}\nabla_{y}^2f(\tx^*, \ty^*)&-\frac{h(1-\beta)}{\sqrt{\epsilon}} \nabla_{yx}f(\tx^*, \ty^*)\\
            \\
            \\
      \frac{h(1-\beta)}{\sqrt{\epsilon}} \nabla_{xy}f(\tx^*, \ty^*) & (\hat{\mu}-1)\CI+\frac{h(1-\beta)}{\sqrt{\epsilon}}\nabla_{x}^2f(\tx^*, \ty^*)
        \end{bmatrix}\right.\\ 
        \\
        & \ \ \ \ \ \ \ \ \ \ \ \ \ \ \ \ -\left.\begin{bmatrix}
            -\frac{h\beta}{\sqrt{\epsilon}}\CI& \textbf{0}\\
            \\
            \textbf{0}& \frac{h\beta}{\sqrt{\epsilon}}\CI
        \end{bmatrix}\begin{bmatrix}
            \frac{1}{\hat{\mu}-\beta}\CI&\textbf{0}\\
            \\
            \textbf{0}&\frac{1}{\hat{\mu}-\beta}\CI
        \end{bmatrix}\begin{bmatrix}
            -(1-\beta)\nabla_{y}^2f(\tx^*, \ty^*)&-(1-\beta)\nabla_{yx}^2f(\tx^*, \ty^*)\\
            \\
            \\
            -(1-\beta)\nabla_{xy}^2f(\tx^*, \ty^*)&-(1-\beta)\nabla_{x}^2f(\tx^*, \ty^*)
        \end{bmatrix} \right)\\
        \\
        &=(\hat{\mu}-\beta)^{d_1+d_2}\,\det\left(\begin{bmatrix}
            (\hat{\mu}-1)\CI-\frac{h(1-\beta)\hat{\mu}}{\sqrt{\epsilon}(\hat{\mu}-\beta)}\nabla_{y}^2f(\tx^*, \ty^*)&-\frac{h(1-\beta)\hat{\mu}}{\sqrt{\epsilon}(\hat{\mu}-\beta)} \nabla_{yx}f(\tx^*, \ty^*)\\
            \\
            \\
      \frac{h(1-\beta)\hat{\mu}}{\sqrt{\epsilon}(\hat{\mu}-\beta)} \nabla_{xy}f(\tx^*, \ty^*) & (\hat{\mu}-1)\CI+\frac{h(1-\beta)\hat{\mu}}{\sqrt{\epsilon}(\hat{\mu}-\beta)}\nabla_{x}^2f(\tx^*, \ty^*)
        \end{bmatrix}\right) \\
        \\
        &=(\hat{\mu}-\beta)^{d_1+d_2}\,\det\left(\begin{bmatrix}
            \hat{\mu} \CI&\textbf{0}\\
            \\
            \textbf{0}&\hat{\mu}\CI
        \end{bmatrix}-\left(\begin{bmatrix}
             \CI&\textbf{0}\\
             \\
            \textbf{0}&\CI
        \end{bmatrix}-\frac{h(1-\beta)\hat{\mu}}{\sqrt{\epsilon}(\hat{\mu}-\beta)}\begin{bmatrix}
            -\nabla_{y}^2f(\tx^*, \ty^*)&-\nabla_{yx}f(\tx^*, \ty^*)\\
            \\
            \\
            \nabla_{xy}f(\tx^*, \ty^*)&\nabla_{x}^2f(\tx^*, \ty^*)
        \end{bmatrix}\right)\right).
       \end{split}
\end{equation*}
    Let $\det(\hat{\mu} \CI-\CM_\CS^{(2)})=0$. Obviously, $\beta\in(-1,1)$ have ensured that the $d_1+d_2$ roots $\hat{\mu}=\beta$ lie in the unit disk, we only need to consider 
    \begin{align*}
    \nonumber\\
        \det\left(\begin{bmatrix}
            \hat{\mu} \CI&\textbf{0}\\
            \\
            \textbf{0}&\hat{\mu}\CI
        \end{bmatrix}-\left(\begin{bmatrix}
             \CI&\textbf{0}\\
             \\
            \textbf{0}&\CI
        \end{bmatrix}-\frac{h(1-\beta)\hat{\mu}}{\sqrt{\epsilon}(\hat{\mu}-\beta)}\begin{bmatrix}
            -\nabla_{y}^2f(\tx^*, \ty^*)&-\nabla_{yx}f(\tx^*, \ty^*)\\
            \\
            \nabla_{xy}f(\tx^*, \ty^*)&\nabla_{x}^2f(\tx^*, \ty^*)
        \end{bmatrix}\right)\right)=\textbf{0},
        \nonumber\\\
    \end{align*}
    which means $\hat{\mu}$ is the eigenvalue of the following matrix
    \[
    N:=\begin{bmatrix}
             \CI&\textbf{0}\\
             \\
            \textbf{0}&\CI
        \end{bmatrix}-\frac{h(1-\beta)\hat{\mu}}{\sqrt{\epsilon}(\hat{\mu}-\beta)}\begin{bmatrix}
            -\nabla_{y}^2f(\tx^*, \ty^*)&-\nabla_{yx}f(\tx^*, \ty^*)\\
            \\
            \nabla_{xy}f(\tx^*, \ty^*)&\nabla_{x}^2f(\tx^*, \ty^*)
        \end{bmatrix}.
    \]
Note that 
\begin{equation*}\label{equ: relation between Spec(J) and the spectral of another matrix}
    \begin{split}
   &\mathrm{Sp}\left(\begin{bmatrix}
        -\nabla_{y}^2f(\tx^*, \ty^*)& -\nabla_{yx}f(\tx^*, \ty^*)\\
        \\
        \nabla_{xy}f(\tx^*, \ty^*)& \nabla_{x}^2f(\tx^*, \ty^*)
    \end{bmatrix}\right)  =\mathrm{Sp}\left(\begin{bmatrix}
        \nabla_{x}^2f(\tx^*, \ty^*)& \nabla_{xy}f(\tx^*, \ty^*)\\
        \\
        -\nabla_{yx}f(\tx^*, \ty^*)& -\nabla_{y}^2f(\tx^*, \ty^*)
    \end{bmatrix}\right)\\
    \\
    &=-\mathrm{Sp}\left(\begin{bmatrix}
        -\nabla_{x}^2f(\tx^*, \ty^*)& -\nabla_{xy}f(\tx^*, \ty^*)\\
        \\
        \nabla_{yx}f(\tx^*, \ty^*)& \nabla_{y}^2f(\tx^*, \ty^*)
    \end{bmatrix}\right)=-\mathrm{Sp}\left(\CJ\right).
    \end{split}
    \end{equation*} 
    
    Let $\{\lambda_i\}_{i=1}^{d_1+d_2}$ be the eigenvalues of the matrix $\CJ$.
    % $$\begin{bmatrix}
    %     -\nabla_{y}^2f(\tx^*, \ty^*)& -\nabla_{yx}f(\tx^*, \ty^*)\\
    %     \\
    %     \nabla_{xy}f(\tx^*, \ty^*)& \nabla_{x}^2f(\tx^*, \ty^*)
    % \end{bmatrix}.$$ 
    Then $\left\{1-\frac{h(1-\beta)\hat{\mu}}{\sqrt{\epsilon}(\hat{\mu}-\beta)}(-\lambda_i)\right\}_{i=1}^{d_1+d_2}$ are the eigenvalues of $N$. Then we can get 
    \begin{equation}\label{equation111}
        \hat{\mu}=1-\frac{h(1-\beta)\hat{\mu}}{\sqrt{\epsilon}(\hat{\mu}-\beta)}(-\lambda_i), \quad i=1,2,\cdots, d_1+d_2.
    \end{equation}
    Solving the equation (\ref{equation111}), we can get all the eigenvalues of $\CM_\CS^{(2)}$.

    Rewriting (\ref{equation111}), we can get 
    \begin{equation}\label{equ: charateristic polynomial}
    \hat{\mu}^2-\left(\beta+1+\frac{h(1-\beta)}{\sqrt{\epsilon}}\lambda_i\right)\hat{\mu}+\beta=0.
    \end{equation}

% Note that 
% \begin{equation}\label{equ: relation between Spec(J) and the spectral of another matrix}
%     \begin{split}
%     \{\sigma_i\}_{i=1}^{d_1+d_2}&=\mathrm{Sp}\left(\begin{bmatrix}
%         -\nabla_{y}^2f(\tx^*, \ty^*)& -\nabla_{yx}f(\tx^*, \ty^*)\\
%         \\
%         \nabla_{xy}f(\tx^*, \ty^*)& \nabla_{x}^2f(\tx^*, \ty^*)
%     \end{bmatrix}\right)\\
%     &=\mathrm{Sp}\left(\begin{bmatrix}
%         \nabla_{x}^2f(\tx^*, \ty^*)& \nabla_{xy}f(\tx^*, \ty^*)\\
%         \\
%         -\nabla_{yx}f(\tx^*, \ty^*)& -\nabla_{y}^2f(\tx^*, \ty^*)
%     \end{bmatrix}\right)\\
%     &=-\mathrm{Sp}\left(\begin{bmatrix}
%         -\nabla_{x}^2f(\tx^*, \ty^*)& -\nabla_{xy}f(\tx^*, \ty^*)\\
%         \\
%         \nabla_{yx}f(\tx^*, \ty^*)& \nabla_{y}^2f(\tx^*, \ty^*)
%     \end{bmatrix}\right)\\
%     &=-\mathrm{Sp}\left(\CJ\right)=\{-\lambda_i\}_{i=1}^{d_1+d_2},
%     \end{split}
%     \end{equation} 
%     where $\{\lambda_i\}_{i=1}^{d_1+d_2}$ are the eigenvalues of $\CJ$.
%     It should be pointed out that Assumption \ref{ibr} implies $\Re(\lambda_i)<0$, thus $\Re(\sigma_i)>0$.
    
    Applying Lemma \ref{Lemma: complex-4th-order polynomial} for (\ref{equ: charateristic polynomial}) with
    \[
    a=-\left(\beta+1-\frac{h(1-\beta)}{\sqrt{\epsilon}}\lambda_i\right),\,\,\,\, b=\beta,
    \]
    and solving the resulting inequalities, we can get  \begin{align}
    h&<\min_i\frac{-2\sqrt{\epsilon}(1-\beta^2)\Re(\lambda_i)}{(1+\beta^2)|\lambda_i|^2+2\beta\left(|\Im(\lambda_i)|^2-|\Re(\lambda_i)|^2\right)}\\
    \nonumber\\
    &=\min_{\lambda \in \mathrm{Sp}(\CJ)}\frac{2\sqrt{\epsilon}(1-\beta^2)|\Re(\lambda)|}{(1+\beta^2)|\lambda|^2+2\beta\left(|\Im(\lambda)|^2-|\Re(\lambda)|^2\right)},\label{equ: upper bound of h}
    \end{align}
    where we use the fact that Assumption \ref{ibr} implies $\Re(\lambda_i)<0$ in the equality.
    
    This means all roots of (\ref{equ: charateristic polynomial}) lie within the open unit disk of the complex plane, which means the spectral radius of $\bar{T}$ is less than $1$, i.e., $\varrho(\text{Jac}(\bar{T}))<1$. 
    % By Lemma \ref{Lemma: local stability of dynamical system}, we conclude $\bar{T}$ converges locally.
    \end{proof}

     \begin{lem}\label{lemma: non-autonomous system}
        Assume $f(x)$ is $C^2$. Then the non-autonomous system $\CR(n, \tz_n)$ converges locally with an exponential rate, i.e., there exists $0<Q<1$ such that $$\|\CR(n, \tz_n)\|=\CO(Q^n\|\tz_n-\tz^*\|).$$
    \end{lem}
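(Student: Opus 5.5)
The plan is to bound the only nonzero blocks of $\CR(n,\tz_n)$, namely the $\tx$- and $\ty$-components, coordinatewise. Write $a_n := 1-\beta^{n+1}$ and $b_n := 1-\rho^{n+1}$. For one coordinate, with $m=\tilde{\tm}^{(j)}_{n+1}$ and $v=\tilde{\tv}^{(j)}_{n+1}\ge 0$, the entry is
\[
h\, m\left(\frac{1}{\sqrt{v+\epsilon}}-\frac{\sqrt{b_n}}{a_n\sqrt{v+b_n\epsilon}}\right)=: h\, m\, \phi_n(v).
\]
The factorization $m\cdot\phi_n(v)$ splits the work into two estimates. First, $|\phi_n(v)|=\CO(Q^n)$ uniformly in $v\ge 0$ for a suitable $Q<1$. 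Second, $|m|=\CO(\|\tz_n-\tz^*\|)$. Multiplying these gives the claim.

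\textbf{Step 1 (the factor $\phi_n$).} Write $\frac{\sqrt{b_n}}{a_n}=1+\CO(|\beta|^{n}+\rho^{n})$, which holds because $a_n\ge 1-|\beta|>0$ and $\sqrt{b_n}\in[\sqrt{1-\rho},1]$. Also write
\[
\frac{1}{\sqrt{v+\epsilon}}-\frac{1}{\sqrt{v+b_n\epsilon}}=\CO\!\left(\frac{(1-b_n)\epsilon}{\epsilon^{3/2}}\right)=\CO\!\left(\rho^{n}/\sqrt{\epsilon}\right).
\]
This follows from the mean value theorem applied to $s\mapsto s^{-1/2}$ on $[v+b_n\epsilon,\, v+\epsilon]$, whose left endpoint is at least $(1-\rho)\epsilon>0$. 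Both bounds are uniform in $v\ge 0$, since $v$ only enlarges the denominators. Combining them gives $|\phi_n(v)|\le C(\epsilon,\beta,\rho)\,Q^n$ with $Q=\max\{|\beta|,\rho\}<1$. If $\beta=0$ the $\beta$-term simply vanishes.

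\textbf{Step 2 (the factor $m$).} By definition $\tilde{\tm}_{n+1}=\beta\tilde{\tm}_n+(1-\beta)\nabla_x f(\tx_n,\ty_n)$. At the fixed point $\tz^*$ of Lemma \ref{lemma: fixed point=nash} we have $\tilde{\tm}^*=\textbf{0}$ and $\nabla_x f(\tx^*,\ty^*)=\textbf{0}$. Since $f\in C^2$, $\nabla_x f$ is locally Lipschitz, so on a bounded neighborhood of $\tz^*$
\[
\|\tilde{\tm}_{n+1}\|\le |\beta|\,\|\tilde{\tm}_n-\textbf{0}\|+(1-\beta)L\,\|(\tx_n,\ty_n)-(\tx^*,\ty^*)\|\le C'\|\tz_n-\tz^*\|.
\]
The same argument applies to $\hat{\tm}_{n+1}$ via $\nabla_y f$.

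\textbf{Step 3 (assembly).} Multiplying the two estimates and summing over coordinates yields $\|\CR(n,\tz_n)\|\le hCC'\sqrt{d_1+d_2}\,Q^n\|\tz_n-\tz^*\|$. This is exactly hypothesis (ii) of Lemma \ref{lemma:combine two parts}, with $\rho$ there replaced by $Q$.

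The main obstacle is Step 1: the bound on $\phi_n$ must hold uniformly in $v$, because near $\tz^*$ the second moment $v$ tends to $0$ while the denominators stay bounded below only by $\epsilon$. I would handle this by always working with the lower bound $(1-\rho)\epsilon$ on the denominators, which keeps every constant independent of $v$ and of $n$.
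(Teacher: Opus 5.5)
Your proposal is correct and follows essentially the same route as the paper. Both arguments split the discrepancy into the bias-correction factor $\sqrt{1-\rho^{n+1}}/(1-\beta^{n+1})-1$ and the $\epsilon$-perturbation $1/\sqrt{\tilde{\tv}_{n+1}+\epsilon}-1/\sqrt{\tilde{\tv}_{n+1}+(1-\rho^{n+1})\epsilon}$, bound each geometrically and uniformly using $\tilde{\tv}_{n+1}\ge 0$, and control $\|\tilde{\tm}_{n+1}\|$ through local Lipschitzness of the gradient. Your details are slightly cleaner, and they remain valid for the negative $\beta$ the paper cares about: you use $1-\beta^{n+1}\ge 1-|\beta|$, the rate $Q=\max\{|\beta|,\rho\}$, and the direct bound $\|\tilde{\tm}_{n+1}\|\le(|\beta|+(1-\beta)L)\|\tz_n-\tz^*\|$, whereas the paper's $1-\beta^{n+1}\ge 1-\beta$ and its auxiliary norm $\|\cdot\|_*$ only work for $\beta\ge 0$ (indeed $\beta>0$).
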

    \begin{proof}For the non-autonomous system, we have
        \begin{equation*}
            \begin{split}
                \|\CR(n, \tz_n)\|^2&=\underbrace{h^2\left\|\frac{\tilde{\tm}_{n+1}}{\sqrt{\tilde{\tv}_{n+1}+\epsilon}}-\frac{\sqrt{1-\rho^{n+1}}}{1-\beta^{n+1}}\frac{\tilde{\tm}_{n+1}}{\sqrt{\tilde{\tv}_{n+1}+(1-\rho^{n+1})\epsilon}}\right\|^2}_{\mathrm{Term 1}}\\
                &+\underbrace{h^2\left\|\frac{\hat{\tm}_{n+1}}{\sqrt{\hat{\tv}_{n+1}+\epsilon}}-\frac{\sqrt{1-\rho^{n+1}}}{1-\beta^{n+1}}\frac{\hat{\tm}_{n+1}}{\sqrt{\hat{\tv}_{n+1}+(1-\rho^{n+1})\epsilon}}\right\|^2}_{\mathrm{Term 2}}.
            \end{split}
        \end{equation*}
        For  Term 1, we have
        \begin{equation*}
            \begin{split}
                &\mathrm{Term 1}\\
                &=h^2\left\|\frac{\tilde{\tm}_{n+1}}{\sqrt{\tilde{\tv}_{n+1}+\epsilon}}-\frac{\sqrt{1-\rho^{n+1}}}{1-\beta^{n+1}}\frac{\tilde{\tm}_{n+1}}{\sqrt{\tilde{\tv}_{n+1}+\epsilon}}+\frac{\sqrt{1-\rho^{n+1}}}{1-\beta^{n+1}}\left(\frac{\tilde{\tm}_{n+1}}{\sqrt{\tilde{\tv}_{n+1}+\epsilon}}-\frac{\tilde{\tm}_{n+1}}{\sqrt{\tilde{\tv}_{n+1}+(1-\rho^{n+1})\epsilon}}\right)\right\|^2\\
                \\
                &\leq 2h^2\left\|\frac{\tilde{\tm}_{n+1}}{\sqrt{\tilde{\tv}_{n+1}+\epsilon}}-\frac{\sqrt{1-\rho^{n+1}}}{1-\beta^{n+1}}\frac{\tilde{\tm}_{n+1}}{\sqrt{\tilde{\tv}_{n+1}+\epsilon}}\right\|^2+2h^2\left\|\frac{\sqrt{1-\rho^{n+1}}}{1-\beta^{n+1}}\left(\frac{\tilde{\tm}_{n+1}}{\sqrt{\tilde{\tv}_{n+1}+\epsilon}}-\frac{\tilde{\tm}_{n+1}}{\sqrt{\tilde{\tv}_{n+1}+(1-\rho^{n+1})\epsilon}}\right)\right\|^2\\
                \\
                &=2h^2\left(1-\frac{\sqrt{1-\rho^{n+1}}}{1-\beta^{n+1}}\right)^2\frac{\|\tilde{\tm}_{n+1}\|^2}{\tilde{\tv}_{n+1}+\epsilon}+2h^2\left(\frac{\sqrt{1-\rho^{n+1}}}{1-\beta^{n+1}}\right)^2\left\|\frac{\tilde{\tm}_{n+1}}{\sqrt{\tilde{\tv}_{n+1}+\epsilon}}-\frac{\tilde{\tm}_{n+1}}{\sqrt{\tilde{\tv}_{n+1}+(1-\rho^{n+1})\epsilon}}\right\|^2\\
                \\
                &\leq \frac{2h^2\|\tilde{\tm}_{n+1}\|^2}{\epsilon}\left(\frac{\sqrt{1-\rho^{n+1}}-(1-\beta^{n+1})}{1-\beta^{n+1}}\right)^2+\frac{2h^2\|\tilde{\tm}_{n+1}\|^2}{(1-\beta)^2}\left(\frac{1}{\sqrt{\tilde{\tv}_{n+1}+\epsilon}}-\frac{1}{\sqrt{\tilde{\tv}_{n+1}+(1-\rho^{n+1})\epsilon}}\right)^2,
            \end{split}
        \end{equation*}
        where we use the fact $\|\tu+\tv\|^2\leq 2(\|\tu\|^2+\|\tv\|^2)$ in the first inequality.
        
         Direct computation yields
         \begin{align*}
           \left(\frac{\sqrt{1-\rho^{n+1}}-(1-\beta^{n+1})}{1-\beta^{n+1}}\right)^2&=\left(\frac{1-\rho^{n+1}-(1-\beta^{n+1})^2}{(1-\beta^{n+1})\left(\sqrt{1-\rho^{n+1}}+(1-\beta^{n+1})\right)}\right)^2  \\
           \\
           &\leq \left(\frac{-\rho^{n+1}+2\beta^{n+1}-\left(\rho^{2}\right)^{n+1}}{(1-\beta)\left(\sqrt{1-\rho}+1-\beta\right)}\right)^2\\
           \\
           &\leq \left(\frac{|-\rho^{n+1}|+2|\beta^{n+1}|+|-\left(\rho^{2}\right)^{n+1}|}{(1-\beta)\left(\sqrt{1-\rho}+1-\beta\right)}\right)^2\\
           \\
           &\leq \left(\frac{4Q^{n+1}}{(1-\beta)\left(\sqrt{1-\rho}+1-\beta\right)}\right)^2 = C_1Q^{2n+2}
         \end{align*}
         Here
         $$Q=\max\{|\rho|, |\beta|^2, |\rho^2|\},\ \ \ C_1=\frac{16}{(1-\beta)^2(\sqrt{1-\rho}+1-\beta)^2},$$ and we use the fact $$|a+b+c|^2\leq (|a|+|b|+|c|)^2$$ in the second inequality.

         Simple algebra operation yields
         \begin{align*}
          &\left(\frac{1}{\sqrt{\tilde{\tv}_{n+1}+\epsilon}}-\frac{1}{\sqrt{\tilde{\tv}_{n+1}+(1-\rho^{n+1})\epsilon}}\right)^2\\
          &= \left(\frac{\rho^{n+1}\epsilon}{\sqrt{\tilde{\tv}_{n+1}+\epsilon}\sqrt{\tilde{\tv}_{n+1}+(1-\rho^{n+1})\epsilon}\left(\sqrt{\tilde{\tv}_{n+1}+\epsilon}+\sqrt{\tilde{\tv}_{n+1}+(1-\rho^{n+1})\epsilon}\right)}\right)^2\\
          &\leq \left(\frac{\rho^{n+1}\epsilon}{\sqrt{\epsilon}\sqrt{(1-\rho)\epsilon}\left(\sqrt{\epsilon}+\sqrt{(1-\rho)\epsilon}\right)}\right)^2\\
          &=\left(\frac{\rho^{n+1}}{\sqrt{\epsilon}\sqrt{(1-\rho)}\left(1+\sqrt{1-\rho}\right)}\right)^2 = C_2\rho^{2n+2},
         \end{align*}
         where $$C_2=\frac{1}{\epsilon(1-\rho)(1+\sqrt{1-\rho})^2}.$$
         
         Since $f(x)$ is $C^2$, $\|\nabla^2f(\tx, \ty)\|$ is bounded in any bounded set, which means $\nabla_{x}f(\tx,\ty)$ is locally Lipschitz in any bounded set, i.e., there exists $L>0$ such that for all $(\tx_1, \ty_1), (\tx_2, \ty_2)$ in the neighborhood $B(\tx^*, \ty^*)$ of $(\tx^*, \ty^*)$, $$\|\nabla_{x}f(\tx_1, \ty_1)-\nabla_{x}f(\tx_2, \ty_2)\|\leq L\|(\tx_1, \ty_1)-(\tx_2,\ty_2)\|.$$ 
         
         Recall that the fixed point $\left(\tilde{\tm}^*, \tilde{\tv}^*, \tx^*, \hat{\tm}^*, \hat{\tv}^*, \ty^*\right)=(\textbf{0},\textbf{0}, \tx^*, \textbf{0}, \textbf{0}, \ty^*)$.
    \begin{equation}\label{equ:39}
    \begin{split}
\|\tilde{\tm}_{n+1}\|&=\|\tilde{\tm}_{n+1}-\tilde{\tm}^*\|\\
&\leq \beta\|\tilde{\tm}_n-\tilde{\tm}^*\|+(1-\beta)\|\nabla_{x}f^{(n)}(\tx_n, \ty_n)-\nabla_{x}f^{(n)}(\tx^*, \ty^*)\|\\
    &\leq \beta\|\tilde{\tm}_n-\tilde{\tm}^*\|+(1-\beta)L\|(\tx_n, \ty_n)-(\tx^*, \ty^*)\|.
    \end{split}
    \end{equation}
    In fact, $$\beta\|\tilde{\tm}-\tilde{\tm}^*\|+(1-\beta)L\|(\tilde{\tx}, \tilde{\ty})-(\tx^*, \ty^*)\|$$ corresponds to a norm  
    \begin{equation}
        \label{equ:40}\|(\tilde{\tm}, \tilde{\tx}, \tilde{\ty})\|_*:=\beta\|\tilde{\tm}\|+(1-\beta)L\|(\tilde{\tx}, \tilde{\ty})\|.
        \end{equation}
    Next, we verify that $\|\cdot\|_{*}$ is exactly a norm:
    \begin{itemize}
        \item \textbf{Positive definiteness.} Obviously, we have
        $$\|(\tilde{\tm}, \tilde{\tx}, \tilde{\ty})\|_*=\beta\left\|\tilde{\tm}\right\|+(1-\beta)L\left\|(\tilde{\tx}, \tilde{\ty})\right\|\geq 0$$ and $\|(\tilde{\tm}, \tilde{\tx}, \tilde{\ty})\|_*=\textbf{0}$ iff $(\tilde{\tm}, \tilde{\tx}, \tilde{\ty})=(\textbf{0}, \textbf{0}, \textbf{0})$, since $0<\beta<1$ and $L>0$.
        \item \textbf{Absolute homogeneity.} For all $a$ and all $(\tilde{\tm}, \tilde{\tx}, \tilde{\ty})$, we have
        \begin{align*}
        \left\|a\ (\tilde{\tm}, \tilde{\tx}, \tilde{\ty})\right\|_*&=\beta\left\||a|\ \tilde{\tm}\right\|+(1-\beta)L\left\||a|\ (\tilde{\tx}, \tilde{\ty})\right\|\\
        &=|a|\ \beta \left\|\tilde{\tm}\right\|+ |a|\ (1-\beta)L\left\|(\tilde{\tx}, \tilde{\ty})\right\|\\
        &= |a|\left(\beta\left\|\tilde{\tm}\right\|+(1-\beta)L\left\|(\tilde{\tx}, \tilde{\ty})\right\|\right)\\
        &=|a|\ \left\|\ (\tilde{\tm}, \tilde{\tx}, \tilde{\ty})\right\|_*
        \end{align*}
        \item \textbf{Triangle inequality.} For all $(\tilde{\tm}_1, \tilde{\tx}_1, \tilde{\ty}_1)$ and $(\tilde{\tm}_2, \tilde{\tx}_2, \tilde{\ty}_2)$, we have
        \begin{align*}
        &\left\|(\tilde{\tm}_1, \tilde{\tx}_1, \tilde{\ty}_1)+ (\tilde{\tm}_2, \tilde{\tx}_2, \tilde{\ty}_2) \right\|_*\\
        &=\left\|\left(\tilde{\tm}_1+\tilde{\tm}_2, \tilde{\tx}_1+\tilde{\tx}_2, \tilde{\ty}_1+\tilde{\ty}_2\right)  \right\|_*\\
        &=\beta\left\|\tilde{\tm}_1+\tilde{\tm}_2\right\|+(1-\beta)L\left\|\left( \tilde{\tx}_1+\tilde{\tx}_2, \tilde{\ty}_1+\tilde{\ty}_2\right)\right\|\\
        &\leq \beta \left(\|\tilde{\tm}_1\|+\|\tilde{\tm}_2\|\right)+(1-\beta)L\left(\left\|\left(\tilde{\tx}_1, \tilde{\ty}_1\right)\right\|+\left\|\left(\tilde{\tx}_2, \tilde{\ty}_2\right)\right\|\right)\\
        &=\beta\left\|\tilde{\tm}_1\right\|+(1-\beta)L\left\|(\tilde{\tx}_1, \tilde{\ty}_1)\right\|+\beta\left\|\tilde{\tm}_2\right\|+(1-\beta)L\left\|(\tilde{\tx}_2, \tilde{\ty}_2)\right\|\\
        &=\left\|(\tilde{\tm}_1, \tilde{\tx}_1, \tilde{\ty}_1)\right\|_*+\left\|(\tilde{\tm}_2, \tilde{\tx}_2, \tilde{\ty}_2)\right\|_*.
        \end{align*}
    \end{itemize}
    We have verified $\|\cdot\|_{*}$ is exactly a norm.
    
    By the equivalence of norms, there exists $\tilde{C}>0$ such that 
    \begin{equation}\label{equ:41}
        \|(\tilde{\tm}, \tilde{\tx}, \tilde{\ty})\|_*\leq \tilde{C}\|(\tilde{\tm}, \tilde{\tx}, \tilde{\ty})\|.
        \end{equation}
    Combining (\ref{equ:39}), (\ref{equ:40}) and (\ref{equ:41}), we have
    \begin{align*}
    \|\tilde{\tm}^{(n+1)}\|=\|\tilde{\tm}^{(n+1)}-\tilde{\tm}^*\|&\leq \tilde{C}\|(\tilde{\tm}_n-\tilde{\tm}^*, {\tx}_n-\tx^*, {\ty}_n-\ty^*)\|\leq     \tilde{C}\|\tz_n-\tz^*\|,
    \end{align*}
    since $\tilde{\tm}^* = \textbf{0}$ according to Lemma \ref{lemma: fixed point=nash}.
    
    Putting all the above facts together, we have
    \begin{align*}
        \mathrm{Term\ 1}&\leq \frac{2h^2C_1\tilde{C}^2\|\tz_n-\tz^*\|^2}{\epsilon}Q^{2n+2}+\frac{2h^2C_2\tilde{C}^2\|\tz_n-\tz^*\|^2}{(1-\beta)^2}\rho^{2n+2}\\
        \\
        &\leq \left(\frac{2h^2C_1\tilde{C}^2\|\tz_n-\tz^*\|^2}{\epsilon}+\frac{2h^2C_2\tilde{C}^2\|\tz_n-\tz^*\|^2}{(1-\beta)^2}\right)Q^{2n+2}\\
        \\
        &=D_1 Q^{2n+2}\|\tz_n-\tz^*\|^2,
    \end{align*}
    where $$D_1=\frac{2h^2C_1\tilde{C}^2}{\epsilon}+\frac{2h^2C_2\tilde{C}^2}{(1-\beta)^2}.$$
    
    As for $\mathrm{Term\ 2}$, by a similar argument of $\mathrm{Term\ 1}$, we have
    \begin{align*}
      \mathrm{Term\ 2}\leq D_2Q^{2n+2}\|\tz_n-\tz^*\|^2.
    \end{align*}
    Eventually, we can get $\|\CR(n, \tz_n)\|=\CO(Q^{n}\|\tz_n-\tz^*\|)$, where $Q<1$.
    \end{proof}

    % \begin{lem}
    %      $(\tx^*, \ty^*)$ is a local Nash equilibrium if and only if $\tz^*=(0,0, x^*, 0,0, y^*)$ is the fixed point of $\bar
    %      {T}$. 
    % \end{lem}

    Lastly, putting Lemma \ref{lemma: local convergence of autonomous system}, Lemma \ref{lemma: non-autonomous system} together with Lemma \ref{lemma:combine two parts}, we can get \ref{Adam} converges locally with an exponential rate and this completes the proof of Theorem \ref{Thm: Discrete Local convergence for general case}.

\newpage
\subsection{Future Details of Figure \ref{lcpicture}.}\label{Appendix_S45}

The details of Figure \ref{lcpicture} are:
\begin{itemize}[leftmargin=*]
\item For the \ref{Adam} and \ref{CADAM}, the test function is defined as
\begin{align*}
    \min_x\max_y 0.2x^2 - xy + 0.2y^2,
\end{align*}
which is a convex-concave function. We fix $\epsilon = 10^{-3}$.
\item For Adam in minimization problem, the test function is defined as
\begin{align*}
    \min_{(x,y)}\ x^2 + y^2
\end{align*}
which is a convex function. We fix $\epsilon = 10^{-4}$.
\end{itemize}

In Figure \ref{lcpicture2}, we present how the range of $h$ changes with $\epsilon$; the x-axis represents $\epsilon$ and the y-axis represents $h$. We can observe that larger $\epsilon$ values expand the convergence range for all three methods, supporting Theorem \ref{Corollary: Continuous Local convergence for general case} and \ref{Thm: Discrete Local convergence for general case}.

\begin{figure}[h]
    \centering
    % \subfigure[Continuous Adam-DA]{
    %     \includegraphics[width=1.7in]{Pictures/beta_h_cont2.png}
    %     \label{lp1}
    % }
    % \subfigure[Adam-DA]{
    %     \includegraphics[width=1.7in]{Pictures/beta_h_disc2.png}
    %     \label{lp2}
    % }
    % \subfigure[Adam in minimization]{
    %     \includegraphics[width=1.7in]{Pictures/beta_h_mini2.png}
    %     \label{lp3}
    % }
    \subfigure[Continuous Adam-DA]{
        \includegraphics[width=1.7in]{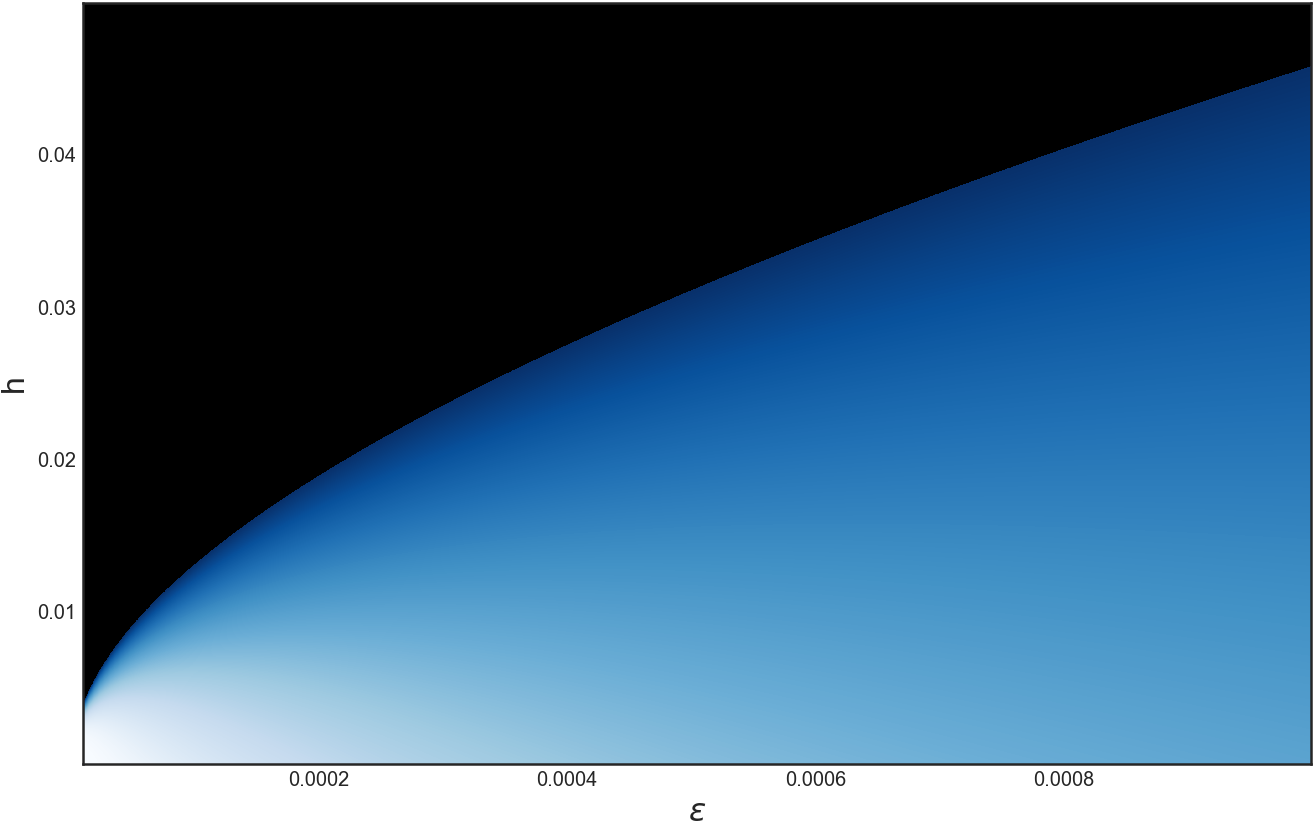}
        \label{lp4}
    }
    \subfigure[Adam-DA]{
	\includegraphics[width=1.7in]{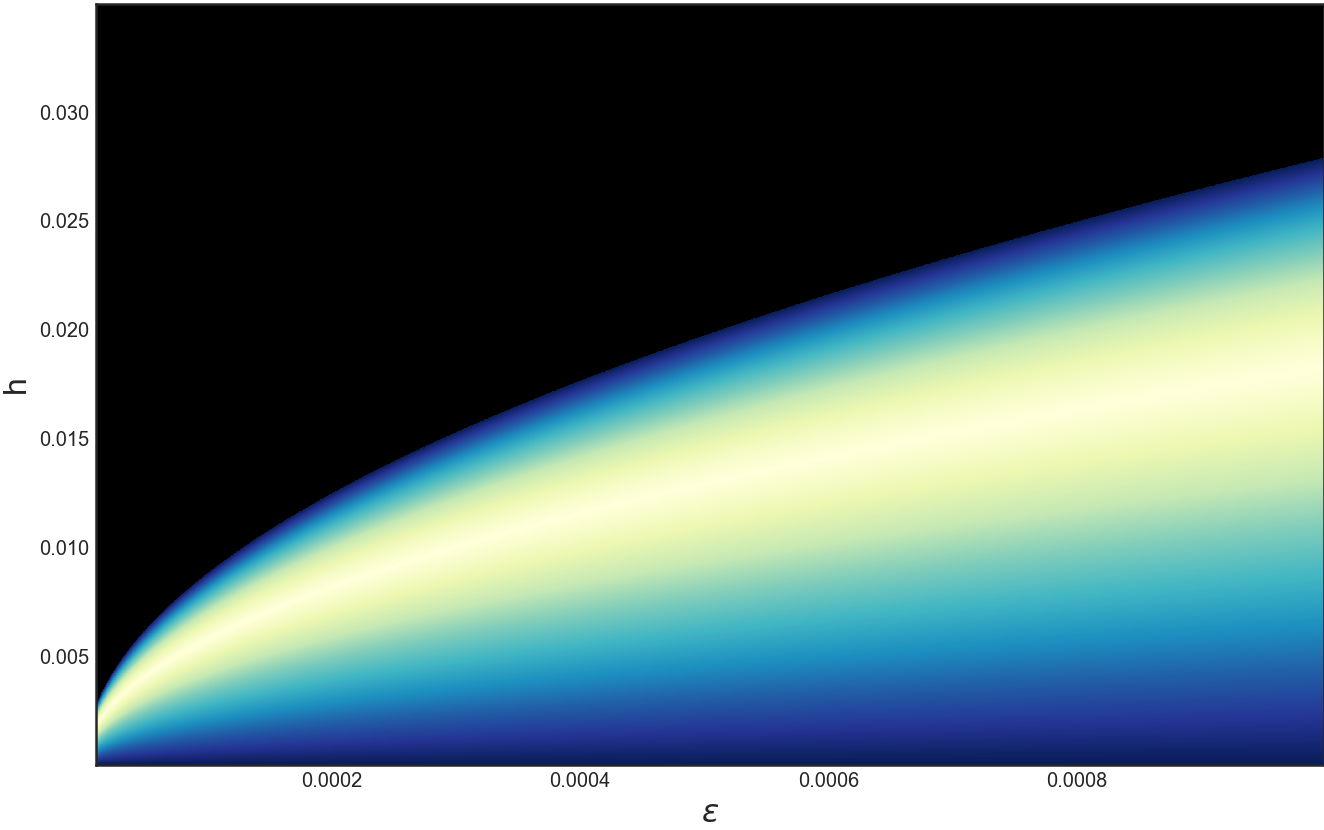}
        \label{lp5}
    }
    \subfigure[Adam in minimization]{
	\includegraphics[width=1.7in]{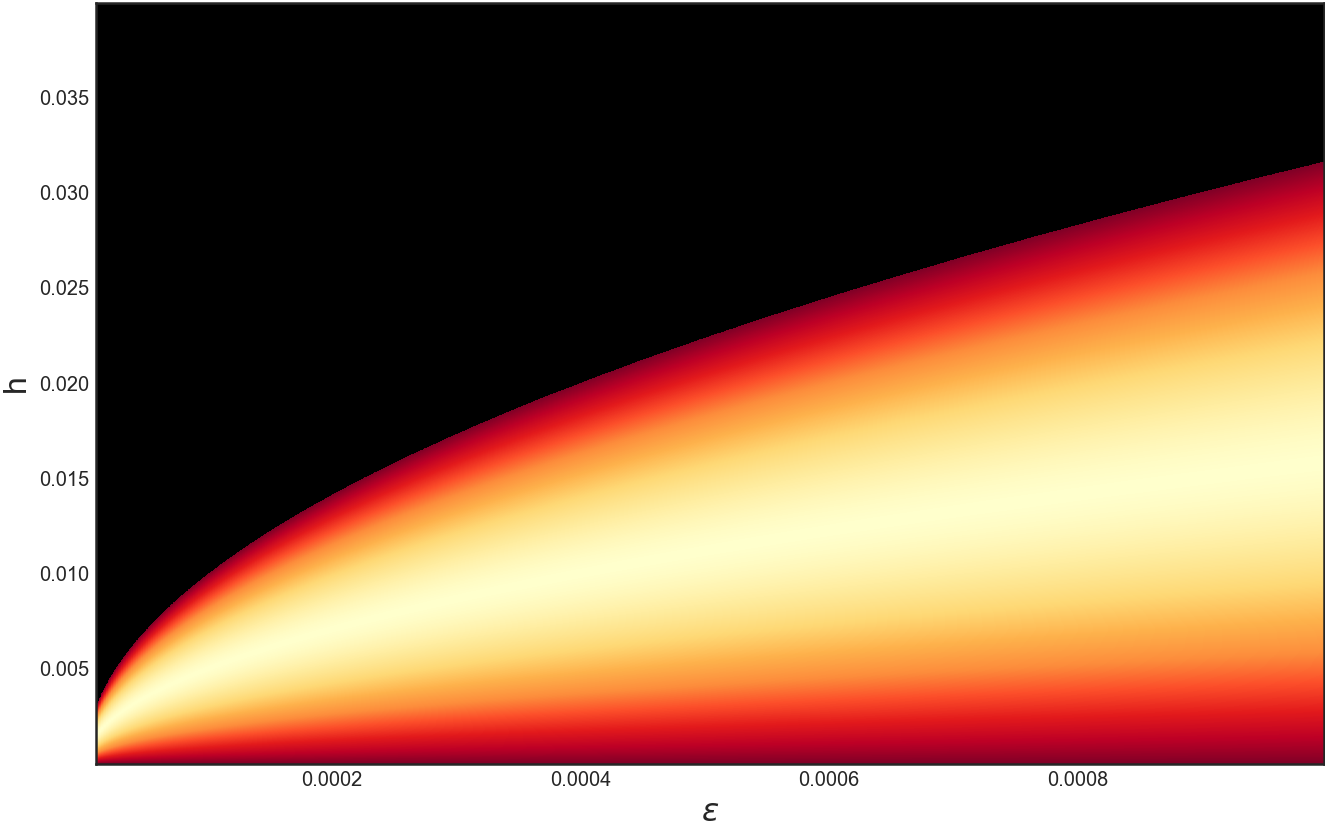}
        \label{lp6}
    }
    \caption{\textit{Effect of $\epsilon$.}}
    \label{lcpicture2}
\end{figure}

\newpage
\section{Additional Materials for Section \ref{IGR}}\label{Appendix_S5}
\subsection{Additional Experiments}

\begin{figure}[htbp]
    \centering

    \subfigure[Cumulative average gradient norms, different $\beta$\label{fig:sa_a}]{
        \includegraphics[width=0.48\textwidth]{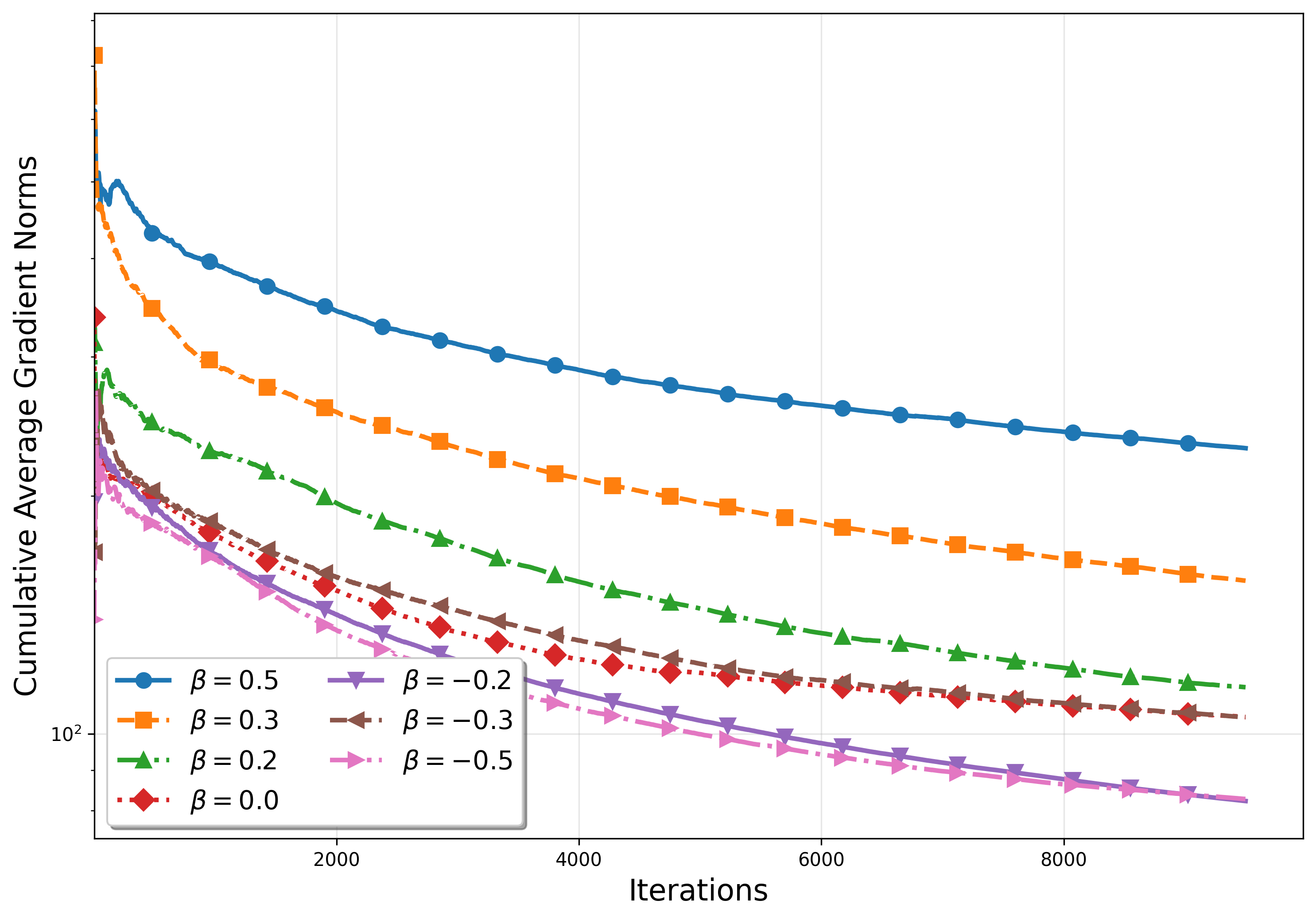}
    }
    \hfill
    \subfigure[FID, different $\beta$ and Optimistic Adaptive method.\label{fig:sa_b}]{
        \includegraphics[width=0.48\textwidth]{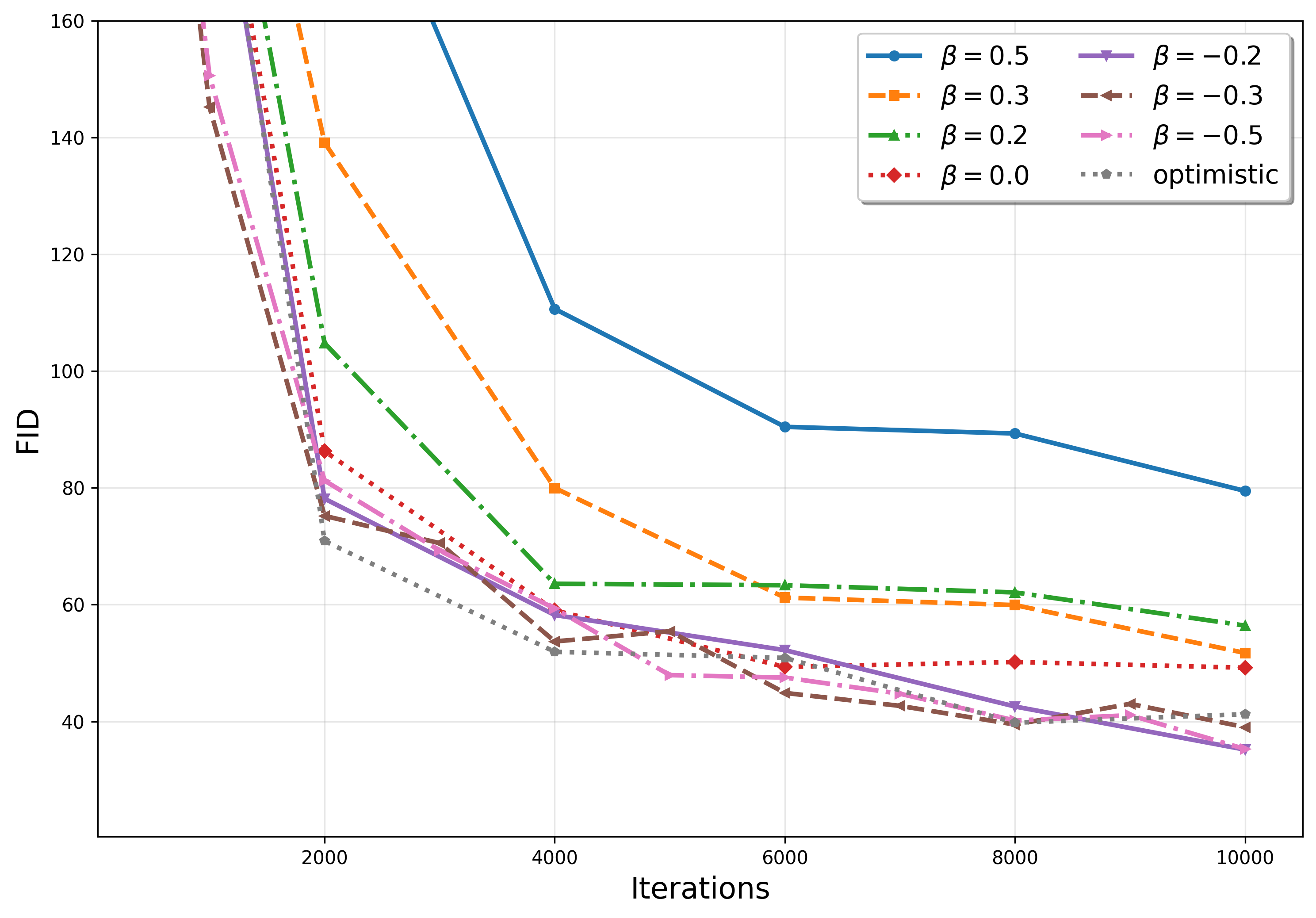}
    }

    \vspace{0.5em}

    \subfigure[Cumulative average gradient norms, different $\rho$\label{fig:sa_c}]{
        \includegraphics[width=0.48\textwidth]{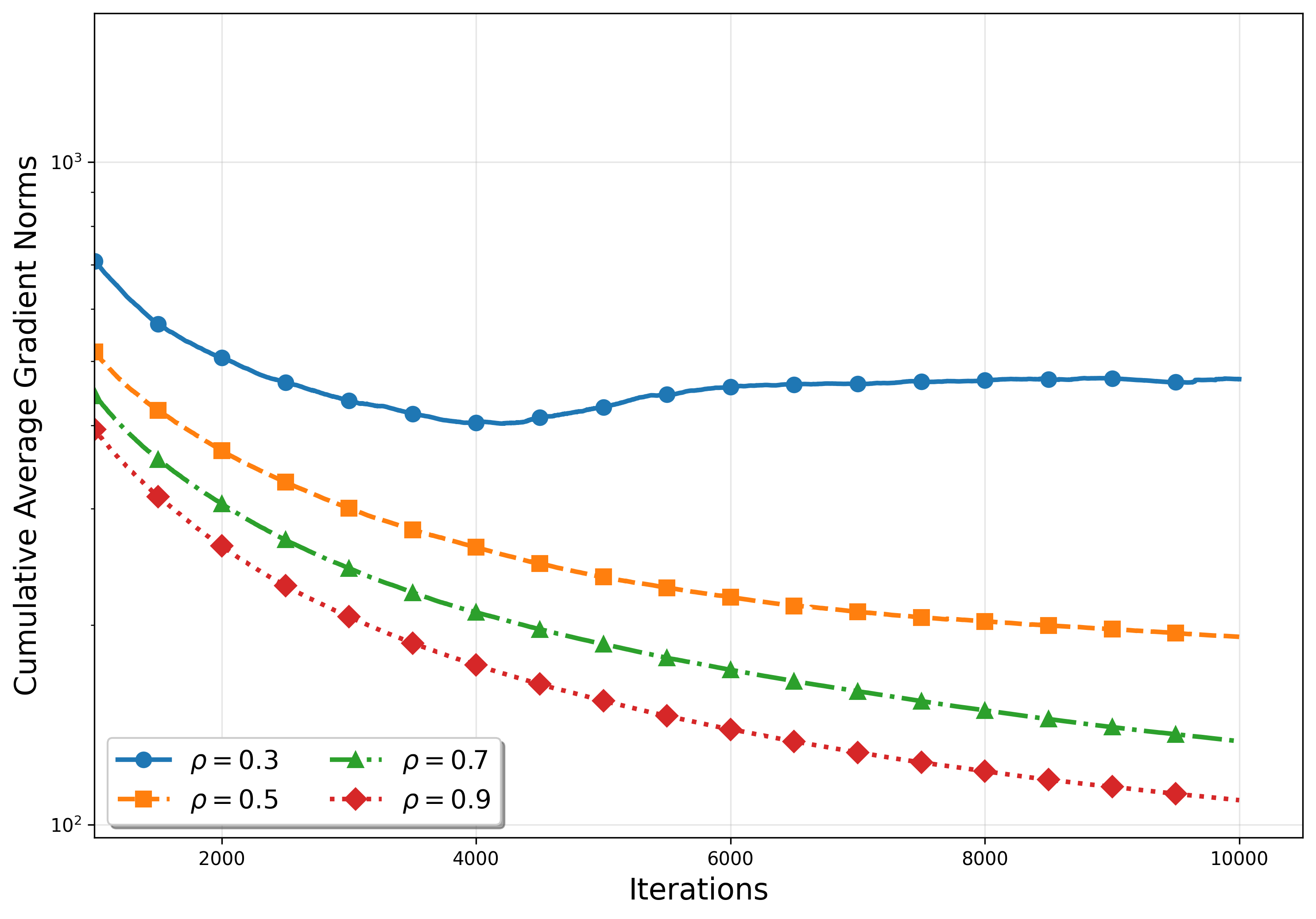}
    }
    \hfill
    \subfigure[FID, different $\rho$.\label{fig:sa_d}]{
        \includegraphics[width=0.48\textwidth]{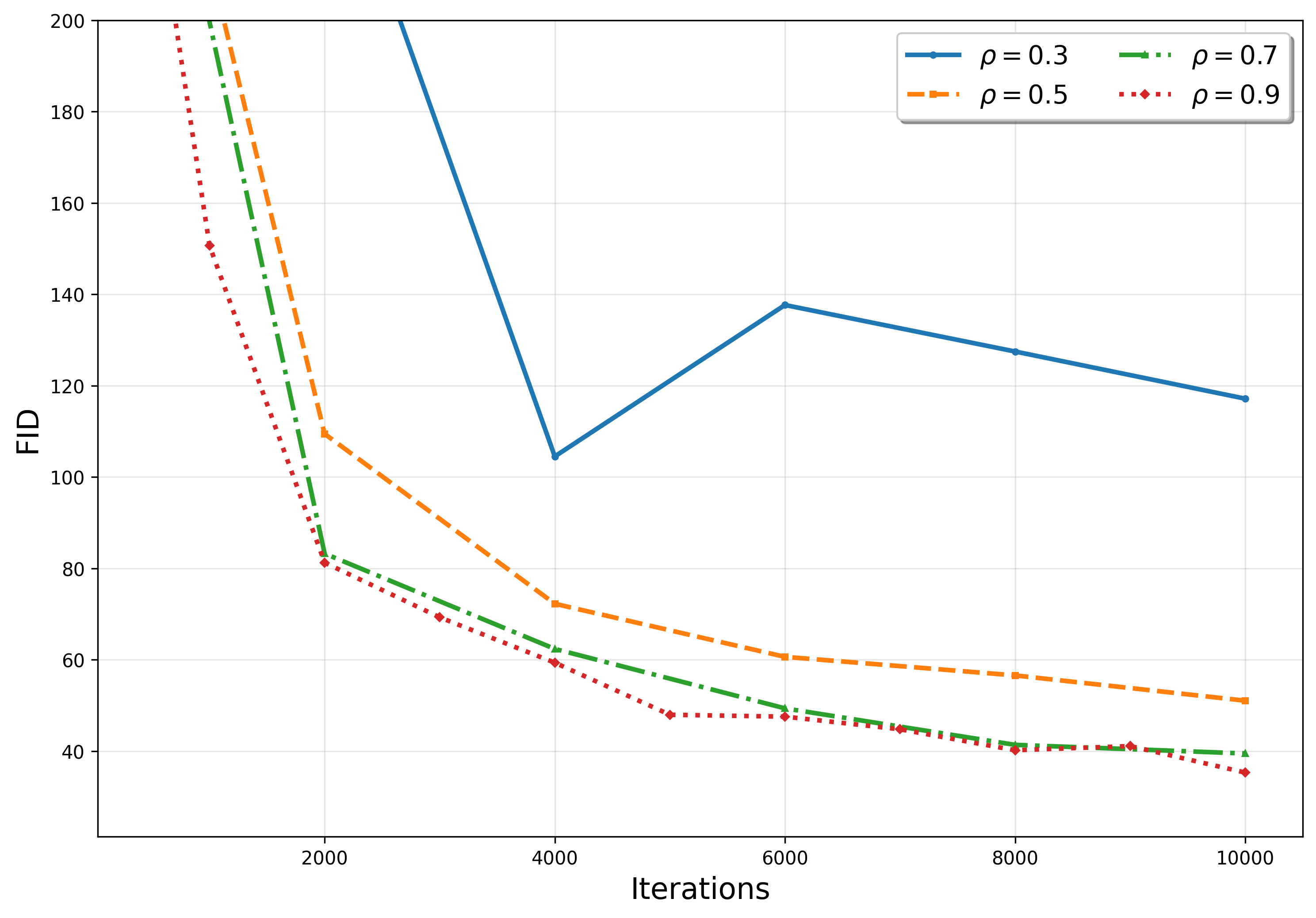}
    }

    \caption{Self-Attention GAN Experiments on CelebA, Evaluated by FID. In Figure 1(a), we present the evolution of the cumulative average gradient norms for $\rho=0.9$ and different values of $\beta$. In Figure 1(b), we compare the FID scores for these parameter settings with those of the \textit{optimistic} adaptive method proposed by Daskalakis et al. We observe that, in general, smaller $\beta$ leads to lower gradient norms and lower FID, indicating better training performance. Moreover, by comparing the optimistic method with Adam, we find that \textbf{although the optimistic method performs better than Adam with $\beta \ge 0$, it does not outperform Adam with negative $\beta$}. In Figures 1(c) and 1(d), we present the evolution of the cumulative average gradient norms and the FID scores for $\beta=-0.5$ and different values of $\rho$. We also observe that larger $\rho$ leads to lower gradient norms and lower FID, indicating better training performance. These results suggest that our findings in Section~5 also hold in this experimental setting.}
    \label{FF1}
\end{figure}

\newpage

\begin{figure}[htbp]
    \centering

    \subfigure[\label{fig:cnn_a}]{
        \includegraphics[width=0.48\textwidth]{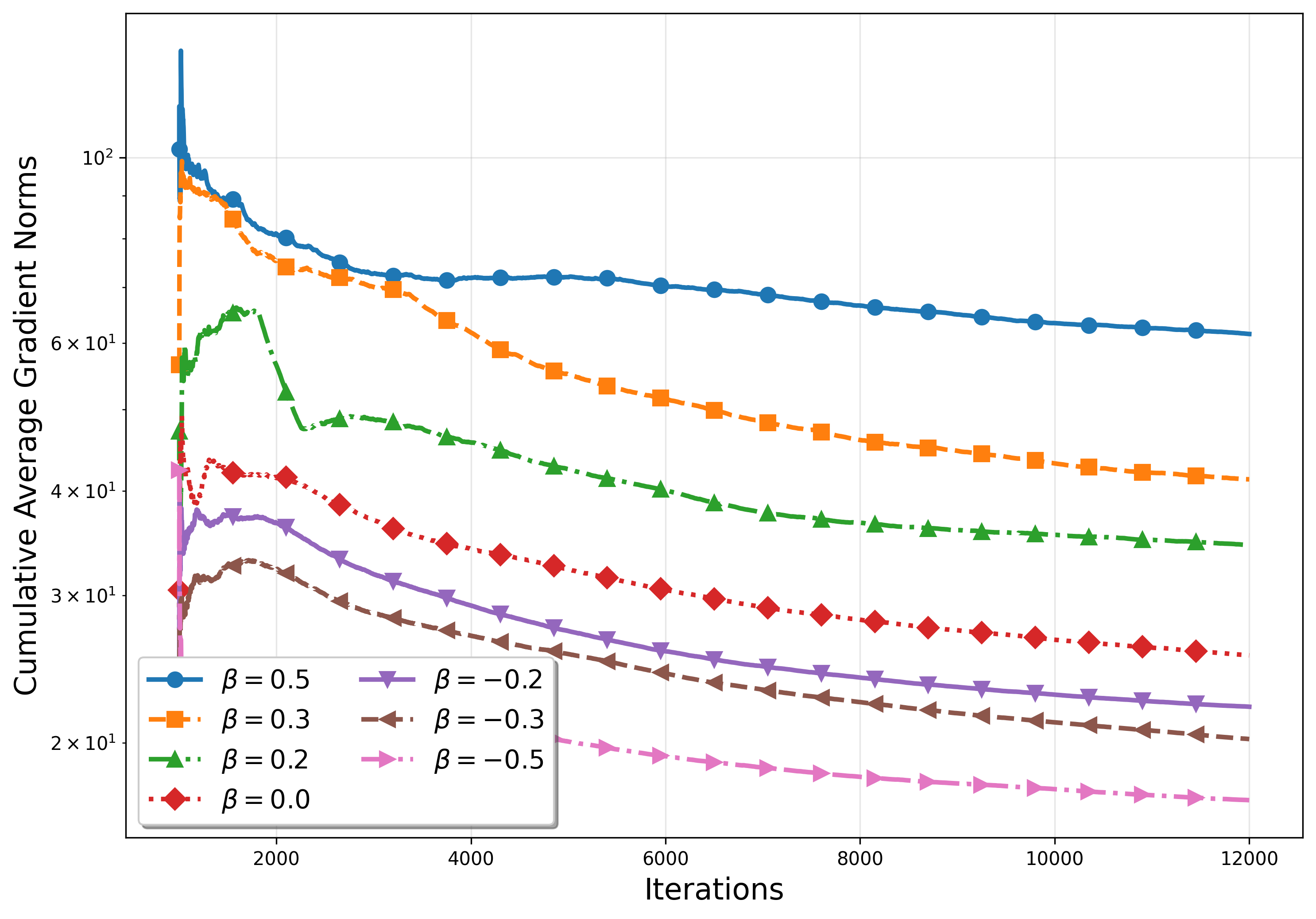}
    }
    \hfill
    \subfigure[\label{fig:cnn_b}]{
        \includegraphics[width=0.48\textwidth]{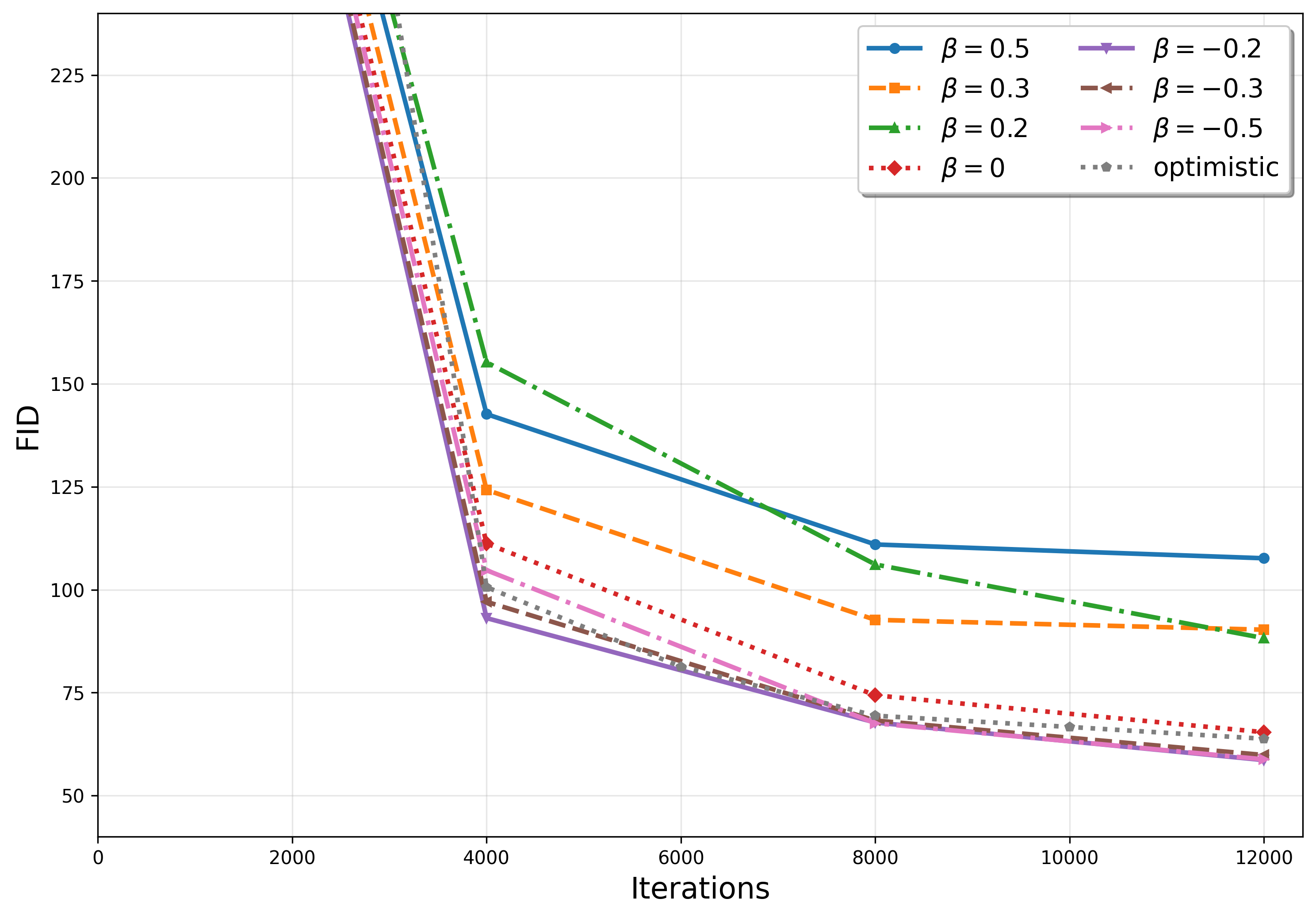}
    }

    \vspace{0.5em}

    \subfigure[\label{fig:cnn_c}]{
        \includegraphics[width=0.48\textwidth]{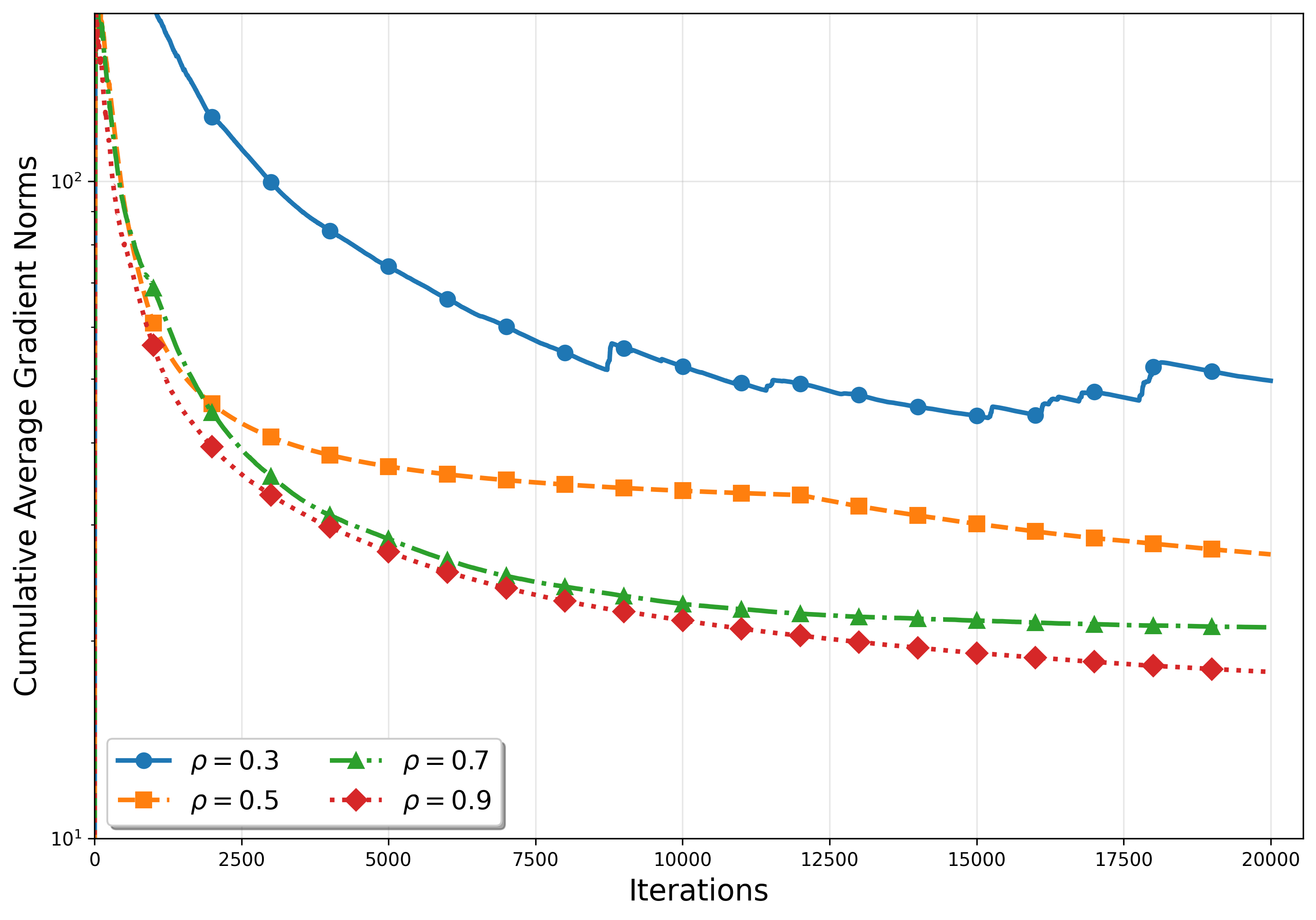}
    }
    \hfill
    \subfigure[\label{fig:cnn_d}]{
        \includegraphics[width=0.48\textwidth]{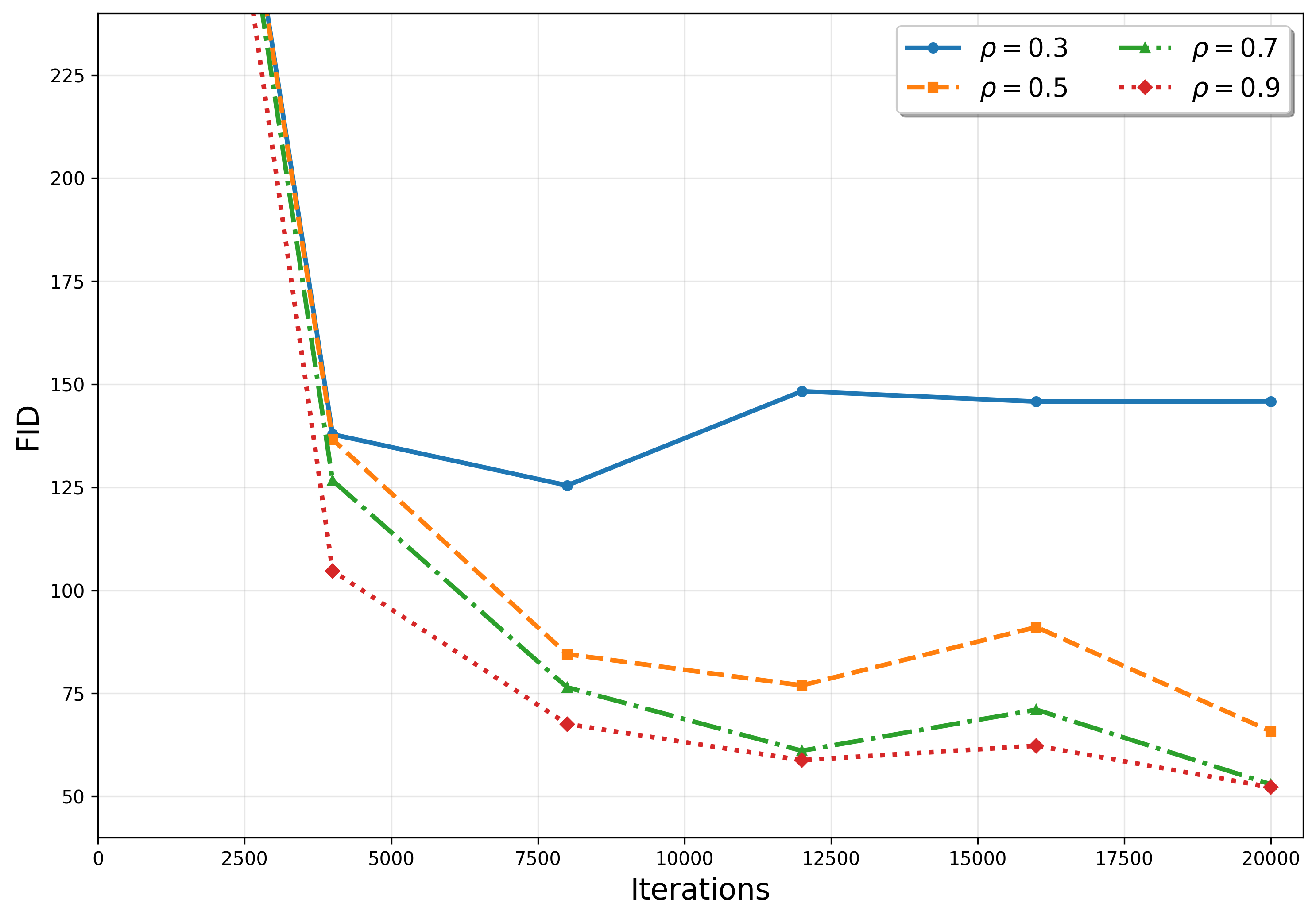}
    }

    \caption{IIn this figure, we reproduce one set of the experimental results from Section~5 of the submission on CNN GANs trained on the CIFAR-10 dataset. We evaluate performance using FID and include a comparison with the \textit{optimistic} adaptive method. The conclusion is the same as that in Figure~\ref{FF1} and is consistent with the results reported in Section~5 of the submission.}
\end{figure}

\newpage

\begin{figure}[htbp]
    \centering

    \subfigure[Self-attention GANs on CelebA.\label{fig:heatmap_a}]{
        \includegraphics[width=0.3\textwidth]{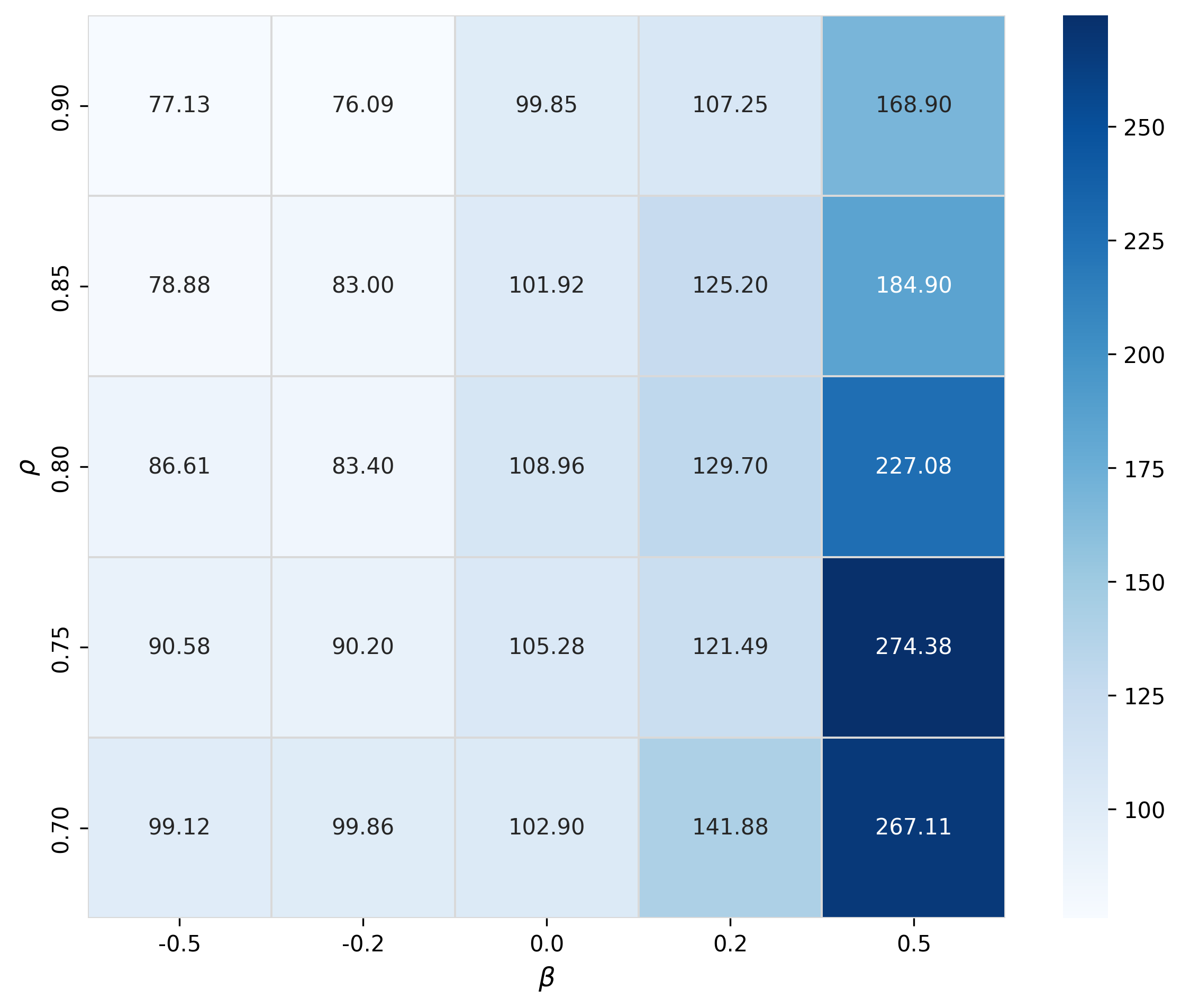}
    }
    \hfill
    \subfigure[CNN-based GANs on CIFAR-10.\label{fig:heatmap_b}]{
        \includegraphics[width=0.3\textwidth]{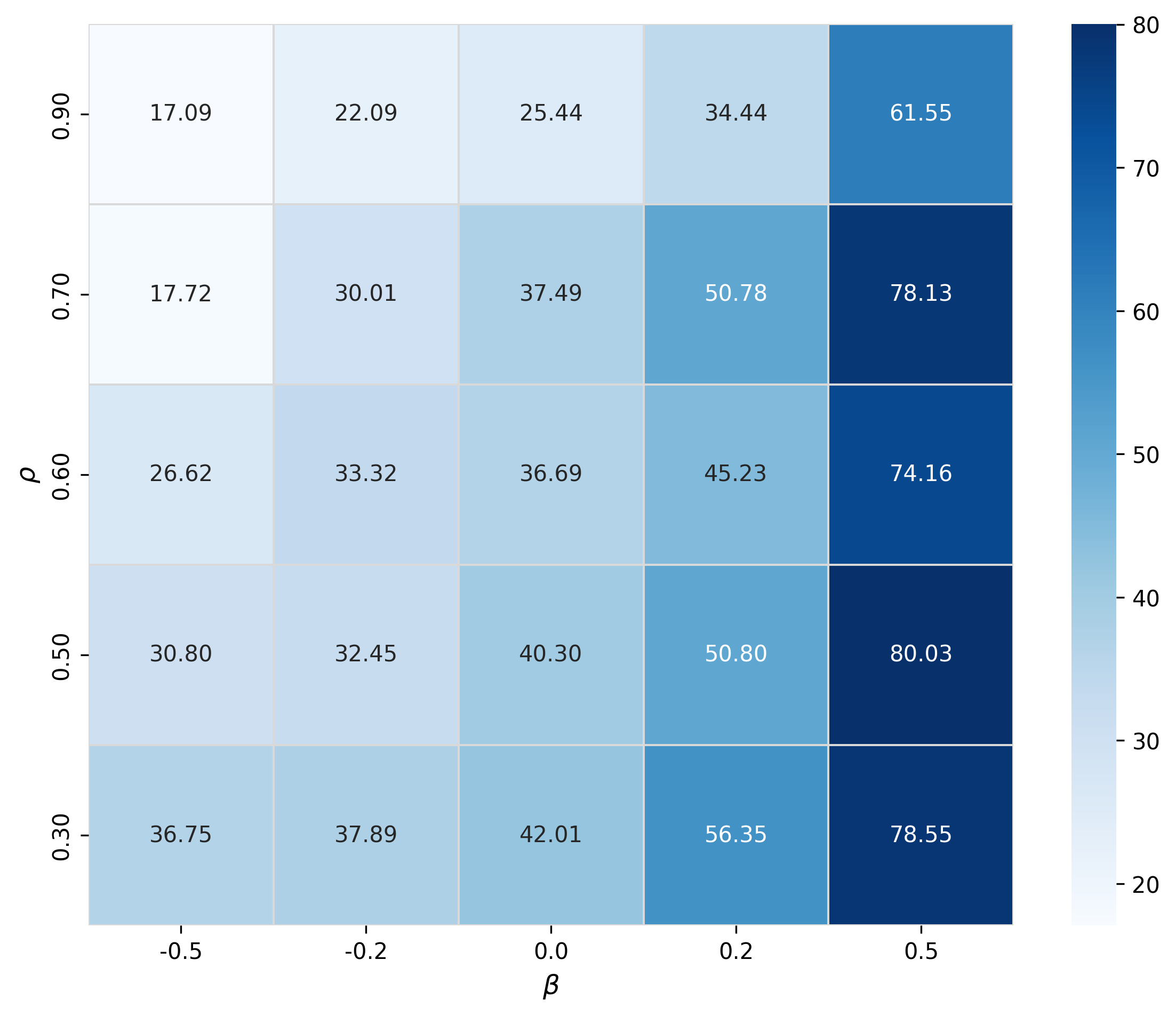}
    }
    \hfill
    \subfigure[ResNet-based GANs on STL-10.\label{fig:heatmap_c}]{
        \includegraphics[width=0.3\textwidth]{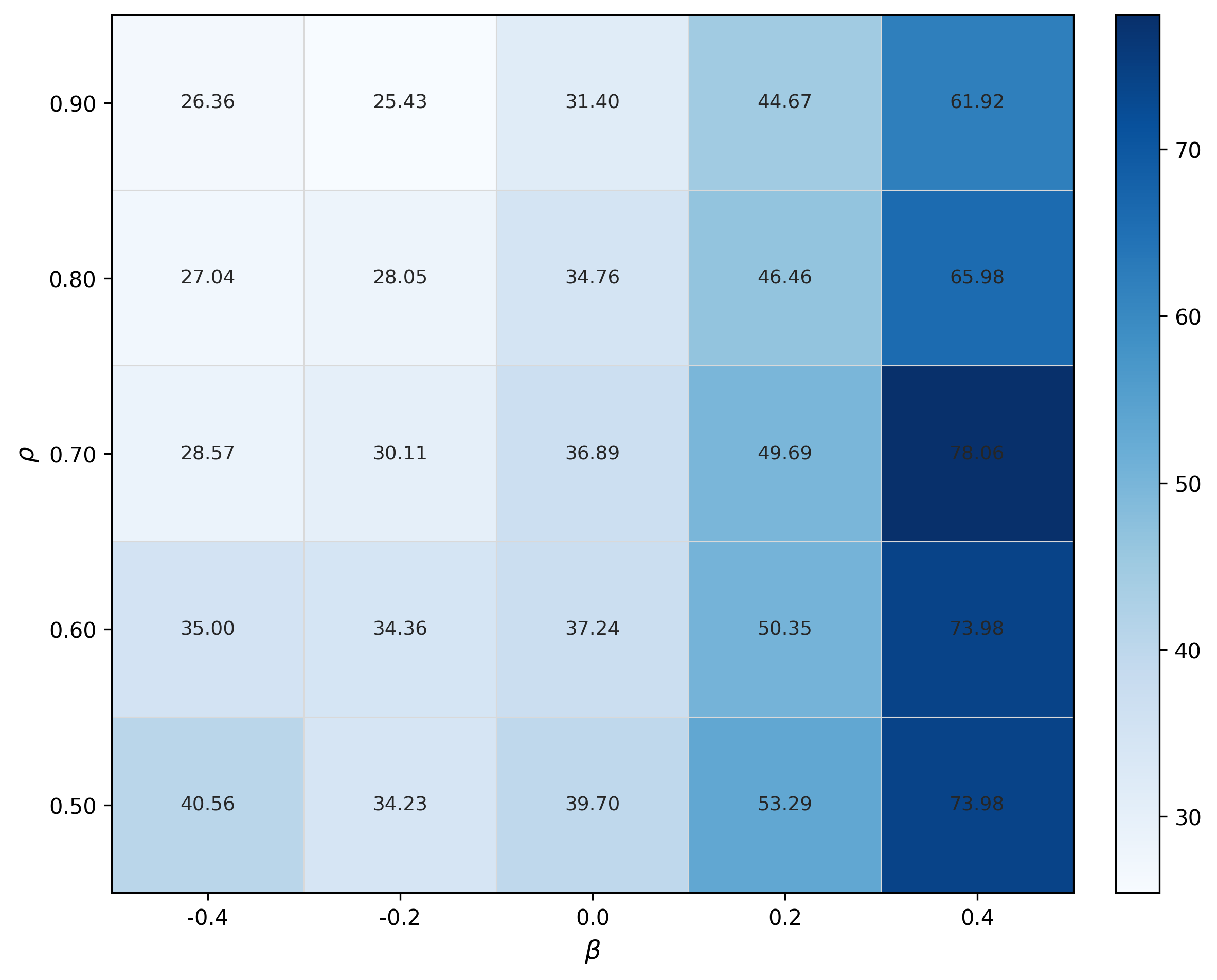}
    }

    \subfigure[FID for Self-attention GANs.\label{fig:heatmap_fid_sa}]{
        \includegraphics[width=0.3\textwidth]{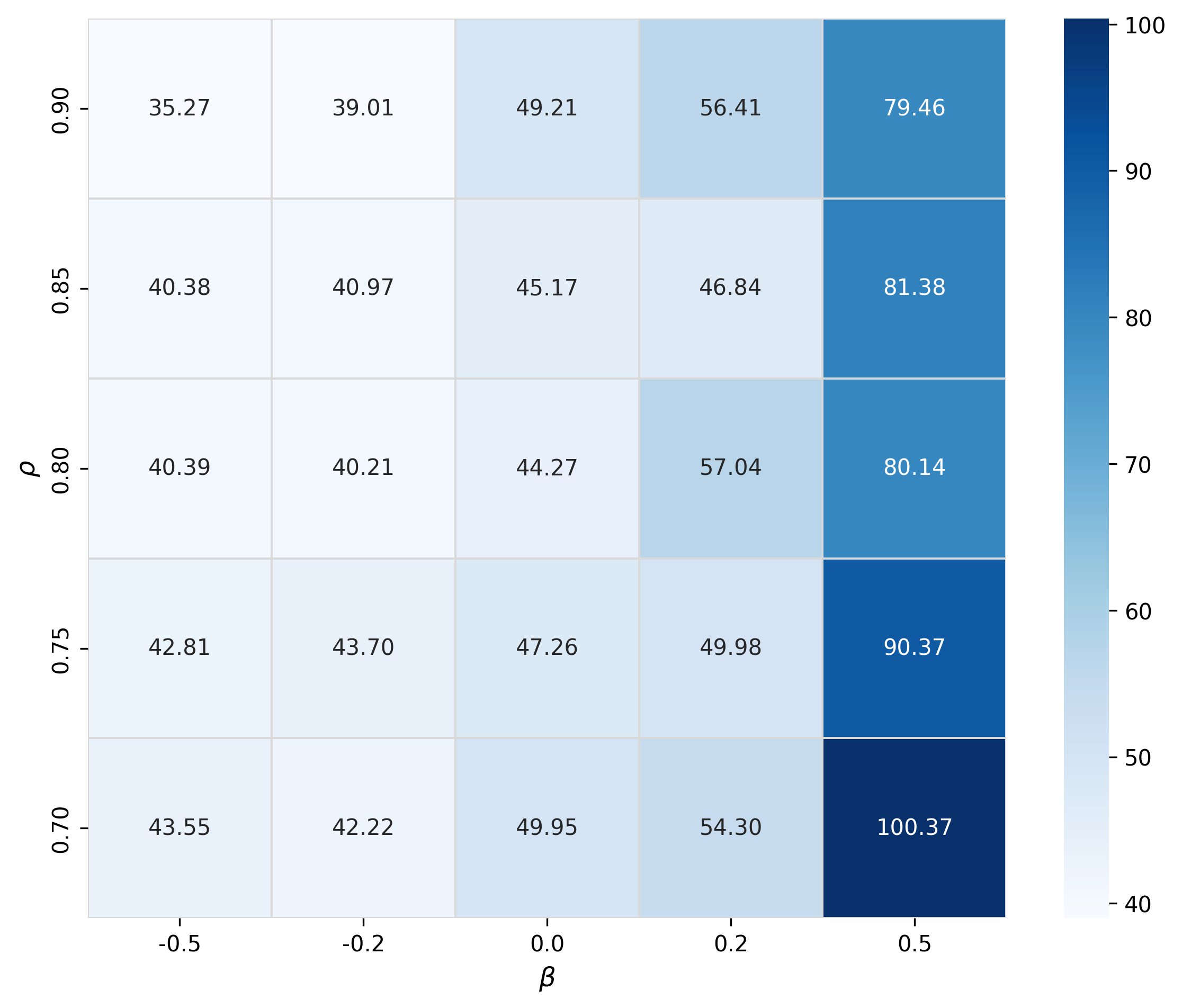}
    }
    \hfill
    \subfigure[FID for CNN-based GANs.\label{fig:heatmap_fid_cnn}]{
        \includegraphics[width=0.3\textwidth]{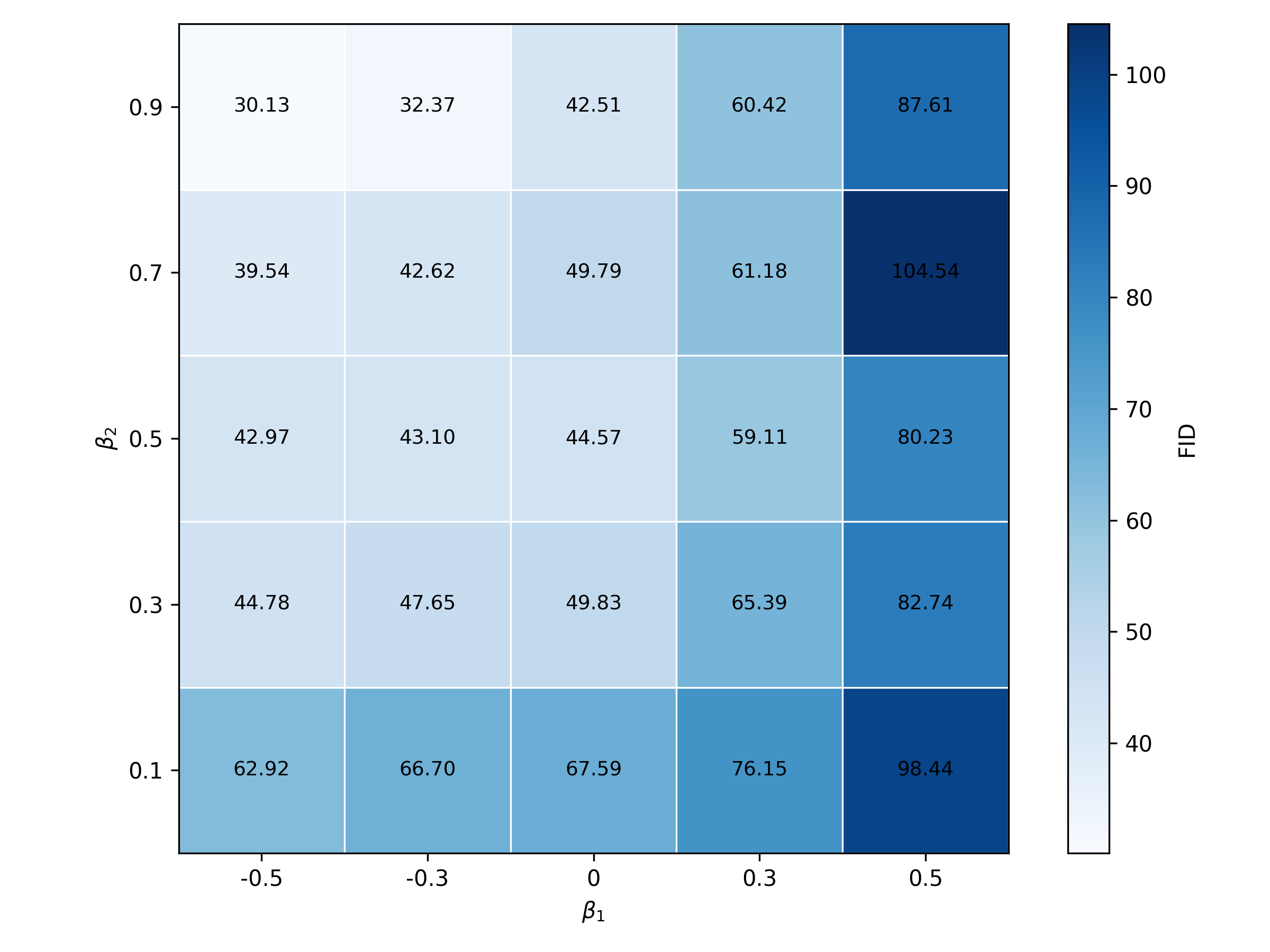}
    }
    \hfill
    \subfigure[FID for ResNet-based GANs.\label{fig:heatmap_fid_res}]{
        \includegraphics[width=0.3\textwidth]{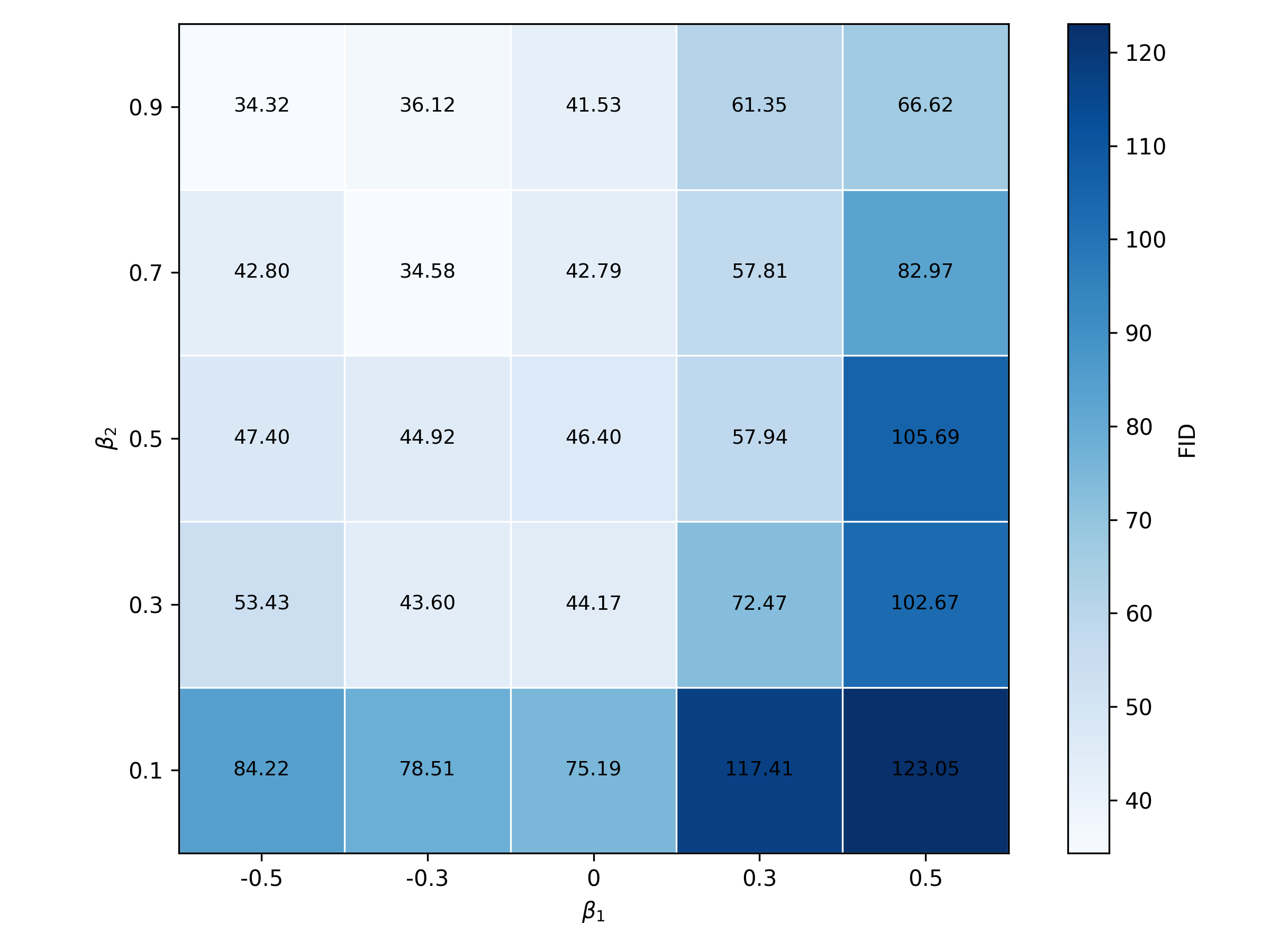}
    }

    \caption{2D sweep over $(\beta, \rho)$ jointly. Each figure represents the final cumulative average gradient norms on 25 GANs training. Each figure shows the final cumulative average gradient norms over 25 GAN training runs. We observe that the \textbf{upper-left corner of each figure exhibits smaller gradient norms than the lower-right corner}, indicating that smaller $\beta$ and larger $\rho$ guide the optimization trajectories toward flatter regions of the loss landscape. This is consistent with our findings in Section 5. Similarly, upper-left corner of each figure also exhibits smaller FID than the lower-right corner.}
\end{figure}

\newpage

\begin{figure}[htbp]
    \centering
    \subfigure[Trained by Adam, $\beta = -0.5, \rho = 0.9$\label{fig:adam_sample}]{
        \includegraphics[width=0.48\textwidth]{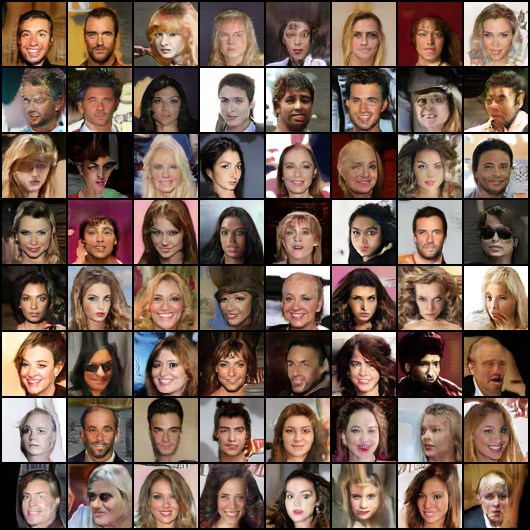}
    }
    \hfill
    \subfigure[Trained by Optimistic method\label{fig:optimistic_sample}]{
        \includegraphics[width=0.48\textwidth]{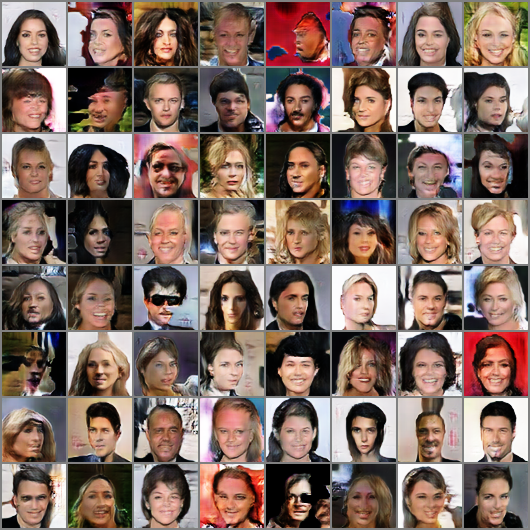}
    }
    \caption{Sample images generated by the models trained in Figure~\ref{FF1} trained for $10^4$ steps. Note that $10^4$ is fewer than the number of training steps typically required to achieve state-of-the-art GAN results. We present these images to compare Adam and the optimistic method, rather than to reproduce state-of-the-art GAN training results.}
\end{figure}

\subsection{Inception Scores}

We report the final Inception Scores from the experiments in Section~\ref{IGR}. Table~\ref{tab1} summarizes the scores across architectures and datasets when training with \ref{Adam} under different choices of $\beta$. Table~\ref{tab2} reports the corresponding results under different choices of $\rho$. Together, these tables provide the numerical values underlying Figure~\ref{Inception_score}.
\begin{table*}[h]
\centering
\caption{Inception scores under different $\rho$.}
\label{tab2}
\begin{tabular}{lcccc}
\toprule
Choice of $\rho$ & 0.9 & 0.7 & 0.5 & 0.3 \\
\midrule
ResNet, CIFAR-10 & 
\makecell{$\textbf{7.087}$ \\ {\scriptsize$\pm0.358$}} &
\makecell{$6.483$ \\ {\scriptsize$\pm0.129$}} &
\makecell{$6.308$ \\ {\scriptsize$\pm0.270$}} &
\makecell{$6.265$ \\ {\scriptsize$\pm0.198$}}\\
% &
% \makecell{$5.487$ \\ {\scriptsize$\pm0.127$}} \\
\midrule
ResNet, STL-10 & 
\makecell{$\textbf{7.187}$ \\ {\scriptsize$\pm0.457$}} &
\makecell{$6.486$ \\ {\scriptsize$\pm0.263$}} &
\makecell{$6.335$ \\ {\scriptsize$\pm0.286$}} &
\makecell{$5.571$ \\ {\scriptsize$\pm0.347$}} \\
% &
% \makecell{$3.957$ \\ {\scriptsize$\pm0.177$}} \\
\midrule
CNN, CIFAR-10 & 
\makecell{$\textbf{7.010}$ \\ {\scriptsize$\pm0.178$}} &
\makecell{$6.809$ \\ {\scriptsize$\pm0.171$}} &
\makecell{$6.685$ \\ {\scriptsize$\pm0.186$}} &
\makecell{$6.280$ \\ {\scriptsize$\pm0.135$}} \\
% &
% \makecell{$4.120$ \\ {\scriptsize$\pm0.075$}} \\
\midrule
CNN, STL-10 & 
\makecell{$\textbf{7.791}$ \\ {\scriptsize$\pm0.410$}} &
\makecell{$7.332$ \\ {\scriptsize$\pm0.426$}} &
\makecell{$6.775$ \\ {\scriptsize$\pm0.288$}} &
\makecell{$6.541$ \\ {\scriptsize$\pm0.262$}} \\
% &
% \makecell{$4.902$ \\ {\scriptsize$\pm0.158$}} \\
\bottomrule
\end{tabular}
\end{table*}

\begin{table*}[h]
\centering
\caption{Inception scores under different $\beta$ values. }
\label{tab1}
\begin{tabular}{lccccccc}
\toprule
Choice of $\beta$ & 0.5 & 0.3 & 0.2 & 0.0 & -0.2 & -0.3 & -0.5 \\
\midrule
ResNet, CIFAR-10 & 
\makecell{$4.160$ \\ {\ \ \scriptsize$\pm0.122$}} &
\makecell{$4.698$ \\ {\ \ \scriptsize$\pm0.158$}} &
\makecell{$5.217$ \\ {\ \ \scriptsize$\pm0.232$}} &
\makecell{$7.022$ \\ {\ \ \scriptsize$\pm0.282$}} &
\makecell{$7.020$ \\ {\ \ \scriptsize$\pm0.255$}} &
\makecell{$\textbf{7.087}$ \\ {\ \ \scriptsize$\pm0.358$}} &
\makecell{$7.002$ \\ {\ \ \scriptsize$\pm0.313$}} \\
\midrule
ResNet, STL-10 & 
\makecell{$4.447$ \\ {\ \ \scriptsize$\pm0.193$}} &
\makecell{$4.858$ \\ {\ \ \scriptsize$\pm0.363$}} &
\makecell{$5.565$ \\ {\ \ \scriptsize$\pm0.214$}} &
\makecell{$6.445$ \\ {\ \ \scriptsize$\pm0.289$}} &
\makecell{$6.969$ \\ {\ \ \scriptsize$\pm0.243$}} &
\makecell{$\textbf{7.181}$ \\ {\ \ \scriptsize$\pm0.457$}} &
\makecell{$6.878$ \\ {\ \ \scriptsize$\pm0.308$}} \\
\midrule
CNN, CIFAR-10 & 
\makecell{$4.942$ \\ {\ \ \scriptsize$\pm0.141$}} &
\makecell{$6.322$ \\ {\ \ \scriptsize$\pm0.169$}} &
\makecell{$6.519$ \\ {\ \ \scriptsize$\pm0.104$}} &
\makecell{$6.804$ \\ {\scriptsize$\pm0.197$}} &
\makecell{$\textbf{7.062}$ \\ {\ \ \scriptsize$\pm0.153$}} &
\makecell{$7.010$ \\ {\ \ \scriptsize$\pm0.178$}} &
\makecell{$6.761$ \\ {\ \ \scriptsize$\pm0.195$}} \\
\midrule
CNN, STL-10 & 
\makecell{$6.775$ \\ {\ \ \scriptsize$\pm0.288$}} &
\makecell{$7.178$ \\ {\ \ \scriptsize$\pm0.256$}} &
\makecell{$7.302$ \\ {\ \ \scriptsize$\pm0.227$}} &
\makecell{$7.594$ \\ {\ \ \scriptsize$\pm0.346$}} &
\makecell{$7.383$ \\ {\ \ \scriptsize$\pm0.257$}} &
\makecell{$\textbf{7.791}$ \\ {\ \ \scriptsize$\pm0.410$}} &
\makecell{$7.520$ \\ {\scriptsize$\pm0.319$}} \\
\bottomrule
\end{tabular}
\end{table*}

% \newpage
% \subsection{Evolution of Gradient Norms}
% In the main text, we use cumulative average gradient norms to improve the visualization; here, we present the evolution of gradient norms in each iteration.
% \input{Tex/figure_gradient_norm_evo}

\newpage
\subsection{Sample Images}

\begin{figure}[h]
    \centering
    \subfigure[$\beta =- 0.3, \rho=0.9$]{
        \includegraphics[width=0.25\textwidth]{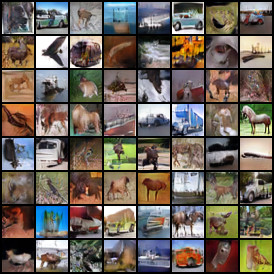}
    }
    \subfigure[$\beta =- 0.2, \rho=0.9$]{
        \includegraphics[width=0.25\textwidth]{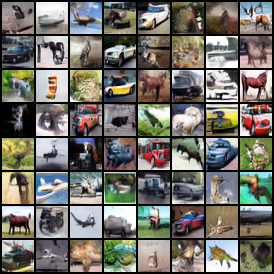}
    }
    \subfigure[$\beta = 0.0, \rho=0.9$]{
	\includegraphics[width=0.25\textwidth]{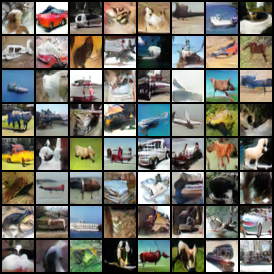}
    }\hfill
    \subfigure[$\beta = 0.2, \rho=0.9$]{
	\includegraphics[width=0.25\textwidth]{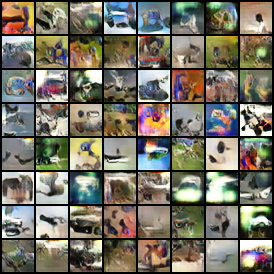}
    }
    \subfigure[$\beta = 0.3, \rho=0.9$]{
	\includegraphics[width=0.25\textwidth]{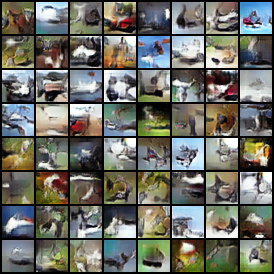}
    }
    \subfigure[$\beta = 0.5, \rho=0.9$]{
	\includegraphics[width=0.25\textwidth]{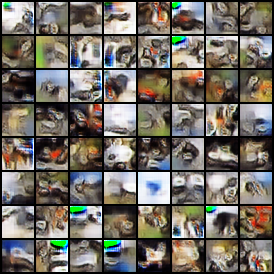}
    }
    \caption{\textit{Sample images for different $\beta$. Architecture: ResNet. Data Set: CIFAR-10. }}
    \label{betaResCIFAR}
\end{figure}

\begin{figure}[h]
    \centering
    \subfigure[$\beta = 0.0, \rho=0.5$]{
	\includegraphics[width=0.25\textwidth]{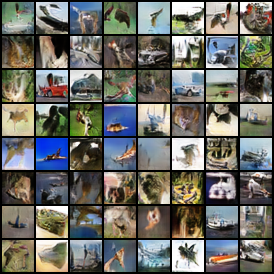}
    }
    \subfigure[$\beta = 0.0, \rho=0.7$]{
	\includegraphics[width=0.25\textwidth]{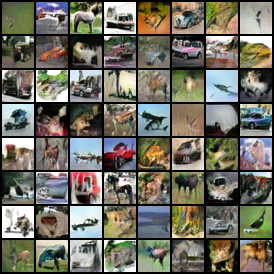}
    }
    \subfigure[$\beta = 0.0, \rho=0.9$]{
	\includegraphics[width=0.25\textwidth]{Pictures/beta00_09_res32_cifar10.png}
    }
    \caption{\textit{Sample images for different $\rho$. Architecture: ResNet. Data Set: CIFAR-10. }}
    \label{betaResCIFAR2}
\end{figure}

\begin{figure}[t]
    \centering
    \subfigure[$\beta =- 0.3, \rho=0.9$]{
        \includegraphics[width=0.25\textwidth]{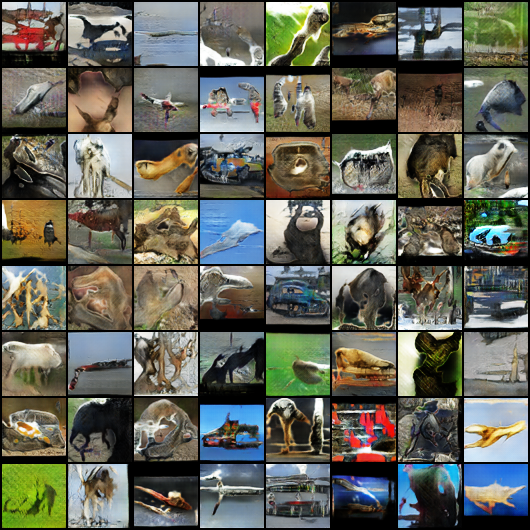}
    }
    \subfigure[$\beta =- 0.2, \rho=0.9$]{
        \includegraphics[width=0.25\textwidth]{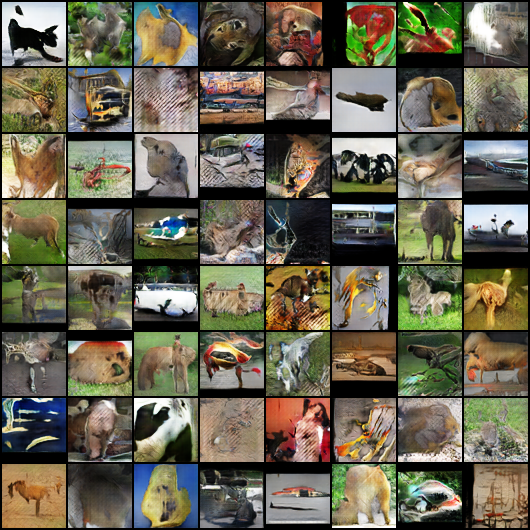}
    }
    \subfigure[$\beta = 0.0, \rho=0.9$]{
	\includegraphics[width=0.25\textwidth]{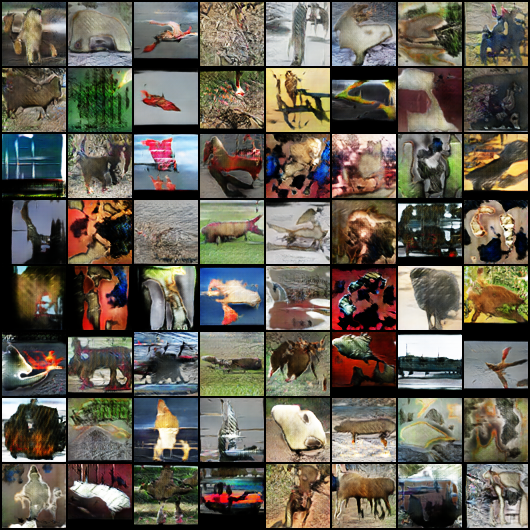}
    }\hfill
    \subfigure[$\beta = 0.2, \rho=0.9$]{
	\includegraphics[width=0.25\textwidth]{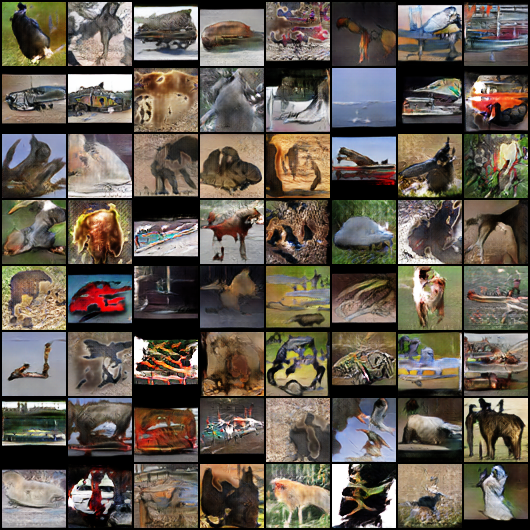}
    }
    \subfigure[$\beta = 0.3, \rho=0.9$]{
	\includegraphics[width=0.25\textwidth]{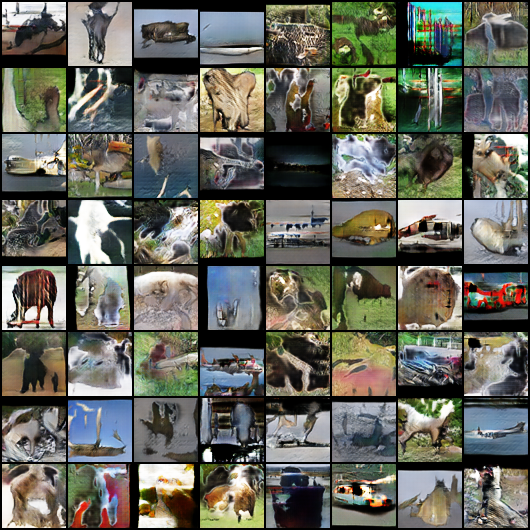}
    }
    \subfigure[$\beta = 0.5, \rho=0.9$]{
	\includegraphics[width=0.25\textwidth]{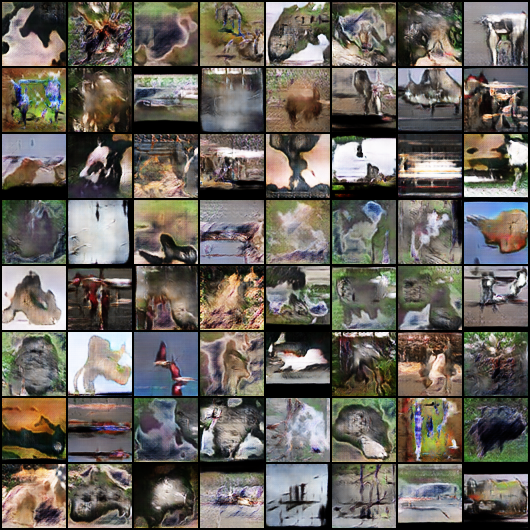}
    }
    \caption{\textit{Sample images for different $\beta$. Architecture: CNN. Data Set: STL-10. }}
    \label{betaCNNSTL}
\end{figure}
\begin{figure}[h]
    \centering
    \subfigure[$\beta = -0.3, \rho=0.5$]{
	\includegraphics[width=0.25\textwidth]{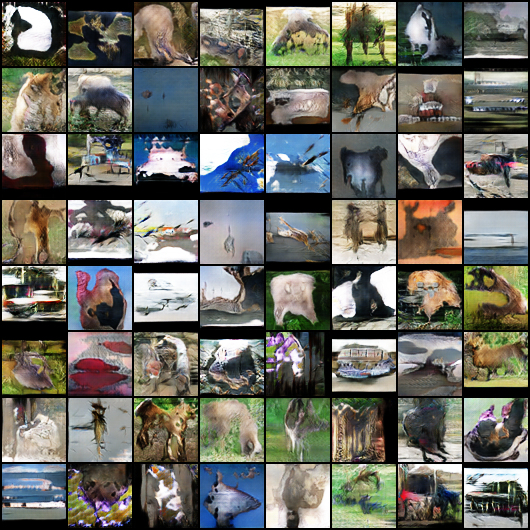}
    }
    \subfigure[$\beta = -0.3, \rho=0.7$]{
	\includegraphics[width=0.25\textwidth]{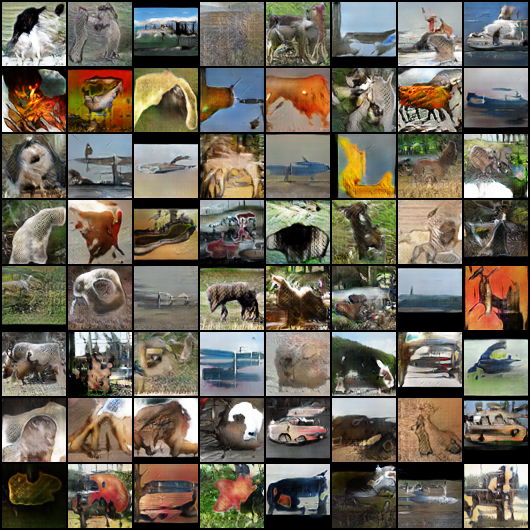}
    }
    \subfigure[$\beta = -0.3, \rho=0.9$]{
	\includegraphics[width=0.25\textwidth]{Pictures/betaneg03_09_cnn32_stl10.png}
    }
    \caption{\textit{Sample images for different $\rho$. Architecture: CNN. Data Set: STL-10. }}
    \label{betaCNNSTL2}
\end{figure}

%\input{Tex/Ap6}

%%%%%%%%%%%%%%%%%%%%%%%%%%%%%%%%%%%%%%%%%%%%%%%%%%%%%%%%%%%%%%%%%%%%%%%%%%%%%%%
%%%%%%%%%%%%%%%%%%%%%%%%%%%%%%%%%%%%%%%%%%%%%%%%%%%%%%%%%%%%%%%%%%%%%%%%%%%%%%%

\end{document}